\newtheorem{examples}[theorem]{Example}
\newtheorem{notation}[theorem]{Notation}
\newtheorem{assumption}[theorem]{Assumption}
\newtheorem{question}[theorem]{Question}
\def\semicolon{;}
\def\applytolist#1{
    \expandafter\def\csname multi#1\endcsname##1{
        \def\multiack{##1}\ifx\multiack\semicolon
            \def\next{\relax}
        \else
            \csname #1\endcsname{##1}
            \def\next{\csname multi#1\endcsname}
        \fi
        \next}
    \csname multi#1\endcsname}
\def\calc#1{\expandafter\def\csname c#1\endcsname{{\mathcal #1}}}
\def\bbc#1{\expandafter\def\csname bb#1\endcsname{{\mathbb #1}}}
\def\bfc#1{\expandafter\def\csname bf#1\endcsname{{\mathbf #1}}}
\def\sfc#1{\expandafter\def\csname s#1\endcsname{{\sf #1}}}
\def\fc#1{\expandafter\def\csname f#1\endcsname{{\mathfrak #1}}}
\newcommand{\op}[1]{\operatorname{#1}}
\DeclareMathOperator{\prob}{Prob}
\DeclareMathOperator{\id}{Id}
\DeclareMathOperator{\diam}{diam}
\newcommand{\dexp}[1][]{d_{\mathrm{R}#1}}
\newcommand{\dexpcon}[1][]{d_{\operatorname{RC}#1}}
\DeclareMathOperator{\reeb}{Reeb}
\DeclareMathOperator{\dis}{dis}
\newcommand{\set}[2]{\left\{ #1\middle| #2\right\}}
\newcommand*{\Scale}[2][4]{\scalebox{#1}{\ensuremath{#2}}}
\newcommand{\supp}{\mathrm{supp}}
\newcommand{\dWp}[1]{d_{\mathrm{W},#1}}
\newcommand{\dW}{\dWp 1}
\newcommand{\dGWp}[1]{d_{\mathrm{GW},#1}}
\newcommand{\dH}{d_\mathrm{H}}
\newcommand{\dWkernp}[1]{d_{\mathrm{W},#1}}
\newcommand{\dWkern}{\dWkernp{1}}
\newcommand{\TV}{d_{\mathrm{TV}}}
\newcommand{\con}{\cC_{\mathrm{con}}}
\newcommand{\pow}{\mathrm{Pow}}
\newcommand{\rad}{\mathrm{Rad}}
 \newcommand{\journal}[1]{}
\renewcommand{\epsilon}{\varepsilon}
\newcommand{\brantley}[1]                {{\textcolor{blue} {#1}}}
\newcommand{\bob}[1]                {{\textcolor{Green} {#1}}}
\title{Geometry and Stability of Supervised Learning Problems}
\date{}
\author{%
   \name Facundo M\'emoli \email{facundo.memoli@gmail.com}\\
   \addr{Department of Mathematics\\ Rutgers University, Piscataway, NJ 08854}
   \AND
   \name Brantley Vose \email{vose.5@osu.edu}\\
   \addr{Department of Mathematics\\ The Ohio State University, Columbus, OH 43210}
   \AND 
   \name Robert C. Williamson \email{Bob.Williamson@uni-tuebingen.de}\\
    \addr{University of T\"{u}bingen and T\"{u}bingen AI Center, 72076 T\"{u}bingen, Germany }
}
\date{}
\begin{document}

\maketitle
\begin{abstract}%
    We introduce a notion of distance between supervised learning problems, which we call the \emph{Risk distance}. This distance, inspired by optimal transport, facilitates stability results; one can quantify how seriously issues like sampling bias, noise, limited data, and approximations might change a given problem by bounding how much these modifications can move the problem under the Risk distance. With the distance established, we explore the geometry of the resulting space of supervised learning problems, providing explicit geodesics and proving that the set of classification problems is dense in a larger class of problems. We also provide two variants of the Risk distance: one that incorporates specified weights on a problem's predictors, and one that is more sensitive to the contours of a problem's risk landscape.
\end{abstract}

\begin{keywords}
  supervised learning, stability, metric geometry, optimal transport, risk landscape
\end{keywords}

\section{Introduction}\label{sec:introduction}

In machine learning, even before beginning work on a problem, we are often forced to accept discrepancies between the problem we want to solve and the problem we actually get to work with. We may put up with noise or bias in the data collection process which distorts our view of the true distribution of observations. We may replace a loss function with a surrogate loss whose computational cost or optimization properties are preferable. We may have access to only a small sample of observations. Such compromises are a necessary reality.

This brings us to our primary motivating questions.
\begin{itemize}
    \item \textbf{Question 1}: How much can such a compromise change our problem and its descriptive features?
    \item \textbf{Question 2}: How much effect can multiple compromises have in conjunction? Can we guarantee that a sequence of small changes will not have drastic effects on the problem to be solved?
\end{itemize}

\subsection{Overview of our approach.}
In this paper, we provide a comprehensive framework with which to answer such questions. While many frameworks exist to answer Question~1, these methods are concerned with quantifying changes to one or two aspects of a problem at a time, limiting their ability to answer Question~2. In real world problems, multiple simultaneous corruptions and substitutions are unavoidable.
Our framework is broad enough to handle simultaneous changes to many aspects of a problem. To begin, we give a precise definition of ``supervised learning problem'' and define a notion of distance, dubbed the \emph{Risk distance} and denoted $\dexp$, by which to compare pairs of problems.
This gives rise to the \emph{(pseudo)metric space of supervised learning problems}. The Risk distance lets us make geometric sense of Questions~1 and 2; we can measure how much a compromise affects a problem by seeing how far the problem moves under the Risk distance.

To actually define the Risk distance $\dexp$, we draw on the wisdom of metric geometry.
In 1975, Edwards constructed a metric on the set of isomorphism classes of compact metric spaces which came to be known as the Gromov-Hausdorff distance \citep{edwards_structure_1975, gromov_structures_1981}.
Similarly, the Gromov-Wasserstein distance, introduced by \citet{memoli_GH_Distance,memoli_gromovwasserstein_2011}, provides an optimal-transport-based metric on the collection of isomorphism classes of metric spaces equipped with probability measures. The Gromov-Hausdorff and Gromov-Wasserstein distances have become integral to the theory of metric geometry by facilitating a geometric understanding of spaces in which the points themselves are spaces.

Inspired by this tradition, we craft the Risk distance in the image of the Gromov-Wasserstein distance. This sets up the following analogy between supervised learning and metric geometry.
\begin{align*}
    \textbf{Supervised Learning} & \qquad\ \textbf{Metric Geometry}\\
    \text{Problem}&\longleftrightarrow\text{Metric measure space} \\
    \text{Loss function}&\longleftrightarrow\text{Metric} \\
    \text{Risk distance}&\longleftrightarrow\text{Gromov-Wasserstein distance}
\end{align*}

\subsection{Previous approaches.}
Question~1 has previously been explored for various specific kinds of changes to a problem. In our work, a \emph{supervised learning problem} (or just \emph{problem},   for short), is modeled as a 5-tuple $P=(X,Y,\eta,\ell,H)$ where $X$ and $Y$ are the \emph{input space} and \emph{response space} respectively, $\eta$ is \emph{the joint law}: a probability measure on $X\times Y$, $\ell$ is the \emph{loss function} and $H$ is the \emph{predictor set}: a collection of functions $h:X\to Y$.

\begin{itemize}
    \item \textbf{Joint law $\eta$.} A common assumption in machine learning is that the data is drawn according to an unknown underlying probability measure. Concepts such as noise and bias in data collection or data shift phenomena such as covariate shift, label shift, or concept drift \citep[Ch 8]{huyen_designing}, can be described as changes to this underlying measure. The effects of various kinds of noise \citep{zhu_class_2004, natarajan_learning_2013, menon_learning_2015, menon_learning_2018, rooyen_theory_2018, iacovissi_general_2023} and data shifts \citep{shimodaira_improving_2000}
    on supervised learning problems is a longstanding area of research. A geometric framework for understanding changes in probability measures exists as well; information geometry seeks to understand spaces of probability measures from the viewpoint of Riemannian geometry \citep{amari_information_2016, ay_information_2017}.
    \item \textbf{Loss function $\ell$.} A theoretically attractive loss may have poor optimization properties, prompting one to replace it with a so-called surrogate loss. Alternatively, an attractive loss could be expensive to exactly compute, making it a candidate for approximation, like replacing a Wasserstein-based loss with a Sinkhorn-based loss in probability estimation \citep{cuturi_sinkhorn_2013}. Replacing the loss represents a tradeoff between theoretical properties and computational efficiency, and the quantitative details of this tradeoff have been explored in many contexts \citep{lin_note_2004,zhang_statistical_2004,bartlett_convexity_2006,steinwart_how_2007,awasthi_h-consistency_2022,mao_cross-entropy_2023}.
    \item \textbf{Predictor set $H$.} Universal approximation theorems, popular in deep learning, establish that certain classes of models can approximate large classes of predictors arbitrarily well \citep{pinkus_approximation_1999}. These can be seen as theorems which compare large, intractable predictor sets to those that can be produced by a given model, aiming to show that there is effectively no difference between selecting a predictor from either set. Approximation theory more broadly is similarly concerned with the approximation power of function classes or transforms \citep{trefethen_approximation_2019}.
    \item \textbf{Input and response spaces $X$ and $Y$.} Modifications of the input and response spaces are implicit in many of the modifications listed above. Additionally, the process of feature engineering \citep[Ch 5]{huyen_designing}, for which there is little established theory, can be seen as a modification of the input space. Examples of output space modification include the common relaxation of classification to class probability estimation or, less common, the discretization of the response space into a finite set of labels \citep{langford_probing_2005} which served as a motivation for the exact results by \citet{Reid:2011aa}.
    \item \textbf{Combinations.} 
    Our notion of a learning problem comprises five separate components. There are 
    some existing results that explore how changes to some components affect
    the others. Some of this work goes under the name of ``Machine Learning
    Reductions'' \citep{Langford:2009aa}. There are results on how changes to 
    the distribution of examples $\eta$ (due to ``noise'') has the effect 
    of changing  the  loss function $\ell$ (with label noise) 
    \citep{rooyen_theory_2018} or the  model class $H$ 
    (with attribute noise) \citep{Bishop:1995aa,Dhifallah:2021aa,
    Rothfuss:2019aa,Smilkov:2017aa, williamson_information_2024}.

\end{itemize}

\begin{figure}
    \centering
    \begin{tikzpicture}
        \shade[ball color = lightgray, opacity = 0.5] (0,0,0) ellipse (5cm and 3cm);
    
        \coordinate (P1) at (-3,1.5);
        \coordinate (l1) at (-2.5,0.6);
        \coordinate (P2) at (-1.3,0.7);
        \coordinate (l2) at (-1.2,-0.5);
        \coordinate (P3) at (0.3,-1.5);
        \coordinate (l3) at (2.5,-1.3);
        \coordinate (P4) at (2.5,-0.1);
        \coordinate (l4) at (3.3,0.9);
        \coordinate (P5) at (2,1.7);

        \fill (P1) circle (2pt) node[above] {P};
        \fill (P2) circle (2pt);
        \fill (P3) circle (2pt);
        \fill (P4) circle (2pt);
        \fill (P5) circle (2pt) node[above] {P'};
    
        \draw[red, -{Latex[scale=1.5]}] (P1) -- (P2);
        \draw[red, -{Latex[scale=1.5]}] (P2) -- (P3);
        \draw[red, -{Latex[scale=1.5]}] (P3) -- (P4);
        \draw[red, -{Latex[scale=1.5]}] (P4) -- (P5);
        \draw[black, dashed] (P1) -- (P5) node[midway, above] {$\dexp(P,P')$};
            
        \node[align=left] at (l1) {Add\\ noise};
        \node[align=left] at (l2) {Add\\ bias};
        \node[align=left] at (l3) {Approximate\\ loss};
        \node[align=left] at (l4) {Restrict\\ predictors};

    \end{tikzpicture}
    \caption{A simple depiction of how a sequence of changes can push a particular problem through the space of problems. An idealized problem $P$ is modified by a sequence of four changes, each change moving the problem some distance in the space of problems. Once the sequence of corruptions has been applied, we call the resulting problem $P'$. The original and corrupted problems are connected by a dashed line representing the risk distance between them.}
    \label{fig:small-changes}
\end{figure}
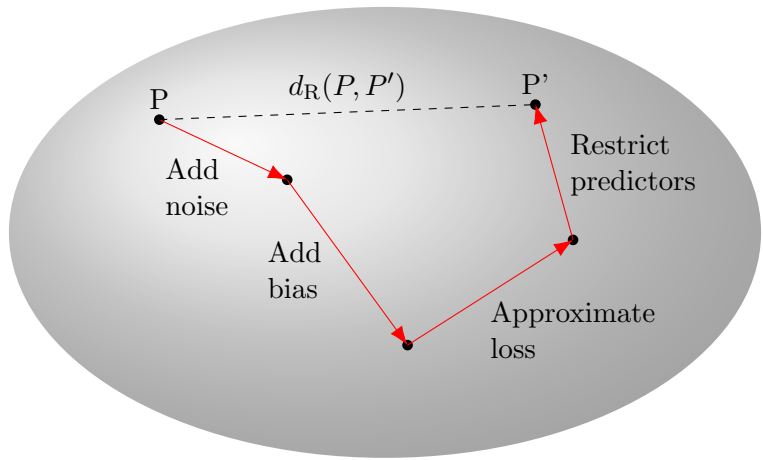
There is precedent for geometric frameworks regarding statistical learning. We have mentioned the rise of information geometry as a way to understand statistical manifolds. Also, much as we aim to do with supervised learning problems, Le Cam developed a notion of distance \citep{leCam_sufficiency, mariucci2016le} between the ``statistical experiments'' of Blackwell \citep{blackwell_comparison_1951}. A modified discrepancy incorporating a loss function was introduced by \citet[Complement 37]{torgersen_comparison_1991}. We will discuss the relationship between supervised learning problems and statistical experiments in Section~\ref{subsec:statistical-experiments}. 

While much research has sought to quantify and control the effects of various specific corruptions on supervised learning problems, no framework so far has been comprehensive enough to consider changes to all aspects of a problem. That is, while many have given answers to Question~1, to the best of our knowledge no attempt has been made at a unified answer to Question~2.

\subsection{Our contributions.}
We provide a geometric framework with which to answer both Questions~1 and 2 in the form of the Risk distance.

Our primary tool for answering Question~1 with the Risk distance is the central topic of Section~\ref{sec:stability}: stability results. A stability result is an upper bound on how much a certain change to a problem can move that problem under the Risk distance. That is, if we apply a certain change to a problem $P$ to obtain a new problem $P'$, such as a change to the loss function, the data collection process, or the set of potential predictors, a stability result is an upper bound on $\dexp(P,P')$. Such a result acts as a quantitative guarantee that a small change to some aspect of the problem will not drastically change the problem as a whole.

Comparing problems via a metric also presents advantages over more general kinds of dissimilarity scores: the triangle inequality lets one string together upper bounds on distances.
To wit, let $x_0$ be a point in a metric space $(X,d_X)$. Suppose we create a sequence of points $x_0, x_1, \dots, x_k$ by pushing each point $x_{i-1}$ a distance of at most $\epsilon$ to get the next point $x_i$ in the sequence. Then the triangle inequality assures us that
\[d_X(x_0,x_k)\leq \sum_{i=1}^k d_X(x_{i-1},x_i) \leq k\epsilon.\] 
In other words, in a metric space, a short sequence of small pushes cannot add up to a drastic change. In the context of the Risk distance, this feature lets us handle multiple simultaneous changes to a problem by applying the triangle inequality and stringing together stability results. This gives an answer to Question~2. See Figure~\ref{fig:small-changes} for an illustration. By design, the Risk distance will be a pseudometric (that is, a metric for which distinct points can be distance zero apart), but the above virtues of metric spaces apply to pseudometric spaces as well.

Along with stability of the Risk distance under various problem modifications, Section~\ref{sec:stability} will also demonstrate stability of certain \emph{descriptors of learning problems} with respect to the Risk distance. In particular, we prove in Section~\ref{subsec:loss-control} that a descriptor of a problem called its \emph{constrained Bayes risk} is stable under the Risk distance. In other words, if the Risk distance between two problems $P$ and $P'$ is less than $\epsilon$, then the optimal expected loss across all predictors for $P$ is within $\epsilon$ of that of $P'$; see Figure \ref{fig:overall} for an illustration.  We obtain this stability result by first proving stability under the Risk distance of a stronger descriptor, called the \emph{loss profile set} of a problem, which describes the possible distributions of losses incurred by a random observation. We also demonstrate stability under the Risk distance of a descriptor from statistical learning theory known as the \emph{Rademacher complexity} for arbitrarily large sample sizes.

\begin{figure}
\centering
\includegraphics[width = \textwidth]{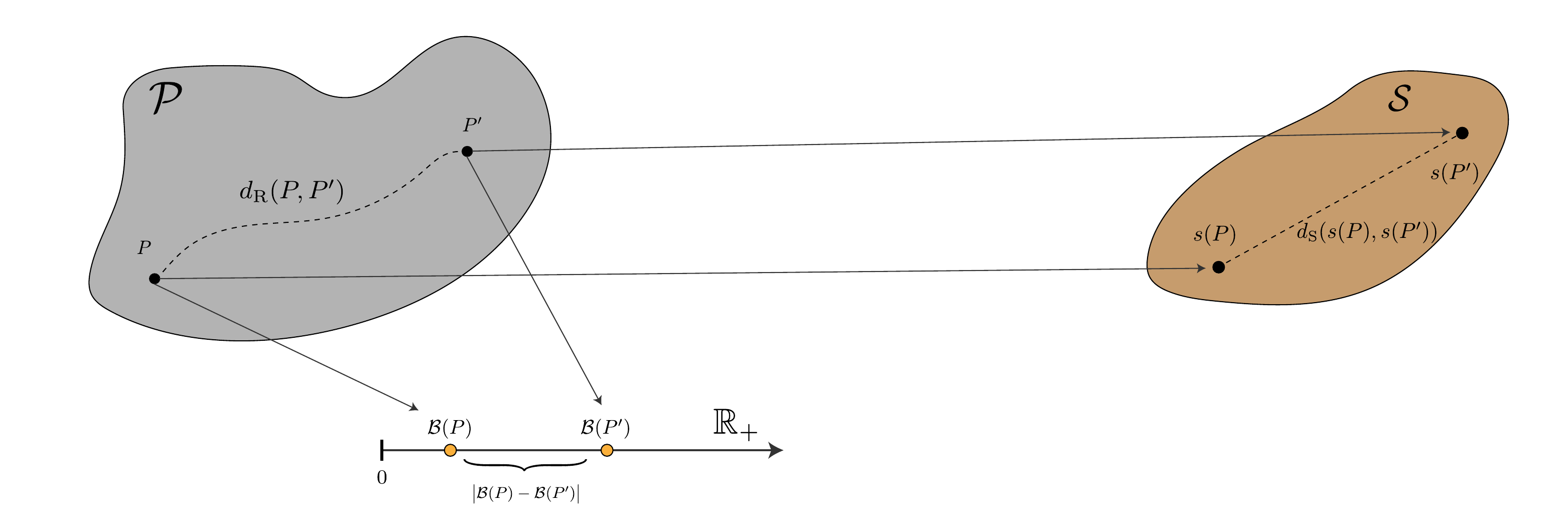}

\caption{The space $\mathcal{P}$ of all supervised learning problems and a descriptor. A descriptor is modeled as a map $s:\mathcal{P}\to \mathcal{S}$ whose codomain is some metric space $(\mathcal{S},d_{\mathcal{S}})$. A descriptor $s$ is regarded as stable  if there is some constant $C>0$ such that $C\,\dexp(P,P)\geq d_{\mathcal{S}}(s(P),s(P'))$ for all $P,P'\in\mathcal{P}$. The figure also depicts the case when $\mathcal{S}$ is the set $\mathbb{R}_{+}$ of non-negative reals and $s(P)$ is the constrained Bayes risk $\cB(P)$  of $P$. In that case, one of our stability results (Theorem \ref{thm:loss-control}) establishes that $\dexp(P,P')\geq \big|\cB(P)-\cB(P')\big|$ for all $P,P'\in\mathcal{P}.$}\label{fig:overall}
\end{figure}
In addition to facilitating stability results, the Risk distance gives us a notion of convergence for supervised learning problems. Just as it is fruitful to discuss the convergence of sequences of functions, random variables, or probability distributions, so too can it be useful to discuss the convergence of a sequence of problems. In Section~\ref{subsec:empirical-convergence}, we will demonstrate an analog of the Glivenko–Cantelli theorem by proving that the ``empirical problem,'' a problem constructed using a finite random sample of observations instead of the true distribution from which it is sampled, converges to the ``true problem'' under the Risk distance with probability~1.

Lastly, the Risk distance lets us consider the geometry of the space of problems and consequences thereof. In Sections~\ref{subsec:geodesics} and \ref{subsec:optimal-couplings}, we connect the geometric concept of a \emph{geodesic} between two problems with the probabilistic notion of couplings of distributions associated with those problems. This connection lets us write down some explicit geodesics in the space of problems. In Section~\ref{subsec:density-of-classification}, we show that classification problems, meaning those problems with a finite output space, are dense in a larger space of problems under the Risk distance. This density result proves useful when establishing the convergence of empirical problems.

In the final two sections, we also study two variants of the Risk distance. In Section~\ref{sec:probabilistic}, we define a parameterized family of distances, each called the \emph{$L^p$-Risk distance} for $p\in [1,\infty]$. The $L^p$-Risk distance incorporates weights in the form of a specified probability measure on the set of possible predictors. The use of this additional data results in improved stability when compared to the Risk distance. In Section~\ref{sec:topological}, we also define a stronger version of the Risk distance, called the \emph{Connected Risk distance}, which is designed to be more sensitive to changes in the risk landscape of a problem. In particular, we show that a topological summary of the problem called its \emph{Reeb graph} is stable under the Connected Risk distance.

\subsection{Outlook}

We regard the Risk distance as a tool to produce theoretical guarantees. To prove that problems are stable under a certain kind of change with respect to the Risk distance, or that certain descriptors are stable under the Risk distance, we need only bound the Risk distance above or below. The utility of the Risk distance lies not in explicit algorithms and associated computations, but in the theoretical landscape that it facilitates. The current landscape of machine learning is rapidly expanding, particularly in the space of problem formulations. Given the staggering growth of machine learning research (e.g., 27,000 submissions to NeurIPS 2025~\citep{neurips2025submissions}), there is clear value in providing a conceptual framework to help organize and interpret this proliferation of methods. This paper takes an initial step toward an organizing ‘map’ of this space—namely, we endow the class of supervised learning problems with a metric that induces a geometric organization, so existing and emerging methods can be situated and compared within a common landscape. While this paper focuses on supervised learning, many generative models, such as GANs, are implicitly tied to supervised learning problems via divergence measures—e.g., $f$-divergences—which are in one-to-one correspondence with certain supervised objectives~\citep{nowozin2016fgan,williamson_information_2024}.

\subsection{Other previous work.}
Kleinberg's seminal paper \citep{kleinberg} initiated a thread of research into the formal study of clustering problems \citep{carlsson2010characterization,carlsson_classifying_2013, ackerman_towards_2012, CohenAddad2018ClusteringRT}. In a spirit similar to this thread, the present paper aims to formally examine supervised learning problems.

The Risk distance introduced in this paper participates in a growing thread of research into optimal-transport-based distances, a thread which begins with the metrics discussed in Section~\ref{subsec:MG-and-OT} and runs through more recent variants including optimal transport distances for graph-based data \citep{chowdhury_gromovwasserstein_2019, chen2022weisfeiler}, matrices \citep{peyre_gromov-wasserstein_2016}, partitioned data sets \citep{chowdhury_quantized_2021}, labeled data sets \citep{alvarez-melis2020geometric}, optimal transport of both data and features jointly \citep{titouan_co-optimal_2020}, or of data with quite general structures \citep{Vayer2020FusedGD,patterson2020}, as well as approximations \citep{scetbon2022linear,sejourne_unbalanced_2021,VincentCuaz2021SemirelaxedGW} and regularizations \citep{peyre_gromov-wasserstein_2016} thereof.

\subsection{Statement of contribution}
The genesis of this project dates back to 2010-2011 when F.M. visited R.W. at NICTA (Canberra, Australia). The formulation of the results, finding the proofs, and the vast majority of the
writing of the paper was done by F.M. and B.V. R.W. contributed the initial formulation
of the question to be addressed, provided some context from the ML literature,
and reviewed and slightly revised the final document.

\tableofcontents

\section{Background}

We begin with some necessary concepts and notation from measure theory, metric geometry, and optimal transport. We refer the reader to the standard reference of \citet{villani_topics_2003} for more on measure-theoretic and optimal transport concepts, and that of \citet*{BBI} for metric geometry concepts. 

\subsection{Preliminary Notation}

We use $\pi_i$ to represent a projection map $X_1\times X_2 \times \dots \times X_n \to X_i$ onto the $i$th component of a product. More generally, $\pi_{i_1,\dots,i_k} X_1\times \dots \times X_n \to X_{i_1}\times \dots \times X_{i_k}$ represents the projection onto coordinates $i_1, \dots, i_k$.

When integrating a measurable function $f$ against a measure $\mu$, we will favor the notation
\[\int_{X\times Y} f(x,y) \, \mu(dx\times dy)\]
if $f$ is a function of two variables, and similarly for more variables. We often suppress the domain of integration under the integral symbol since it is implicit in the measure $\mu$.
Rarely, when many variables are necessary and horizontal space is at a premium, we may instead use the notation
\[\int f(x,y) \, d\mu(x,y).\]

Given a measurable space $(Z,\Sigma_Z)$, we let $\prob(Z)$ denote the space of probability measures on $(Z,\Sigma_Z)$. 
For any $z\in Z$, we let $\delta_z \in \prob(Z)$ denote the \emph{Dirac probability measure at $z$} defined, for $A\in \Sigma_Z$, as $\delta_z(A)=1$ whenever $z\in A$ and as $0$ otherwise.
Given a probability measure $\alpha \in \prob(\mathbb{R})$, we let $\mathrm{mean}(\alpha) := \int_\mathbb{R} t\,\alpha(dt)$ denote its mean whenever it exists.

\subsection{Pushforward Measures, Correspondences, and Couplings}

We recall some basic ideas from measure theory and establish notation. Given a measure space $(X,\Sigma_X,\mu)$,  a measurable space $(Y,\Sigma_Y)$, and a measurable map $f:X\to Y$, the \emph{pushforward of $\mu$ through $f$} is the measure on $(Y,\Sigma_Y)$ defined by
\[f_\sharp \mu (A):= \mu(f^{-1}(A))\]
for all $A\in \Sigma_Y$. If $\mu$ is a probability measure, $f_\sharp \mu$ will be a probability measure as well. Hence $f_\sharp$ is a function $\prob(X) \to \prob(Y)$. A pushforward has a probabilistic interpretation: if $\mathsf X$ is a random variable in a measurable space $X$ with law $\mu$, and $f:X\to Y$ is measurable, then the random variable $f(\mathsf X)$ has law $f_\sharp \mu$.

Given two sets $X$ and $Y$, a subset $R\subseteq X\times Y$ is called a \emph{correspondence} if

\begin{itemize}
  \item 
  for all $x\in X$ there is some $y\in Y$ with $(x, y)\in R$, and
  \item for all $y\in Y$ there is some $x\in X$ with $(x, y)\in R$.
\end{itemize}
We use $\cC(X,Y)$ to denote the set of all correspondences between $X$ and $Y$. 

A \emph{coupling} between two probability measure spaces $(X,\Sigma_X,\mu_X)$ and $(Y,\Sigma_Y,\mu_Y)$ is a probability measure on $X\times Y$ with marginals $\mu_X$ and $\mu_Y$. That is, a probability measure $\gamma$ on $X\times Y$ is a coupling if the pushforward measures satisfy $(\pi_1)_\sharp \gamma = \mu_X$, $(\pi_2)_\sharp \gamma = \mu_Y$. We use $\Pi(\mu_X,\mu_Y)$ to denote the set of all such measures. Couplings can be seen as a probabilistic counterpart to correspondences.

The following are standard methods for constructing correspondences between two sets $X$ and $Y$.
\begin{itemize}
    \item The entire product $X\times Y$ is always a correspondence. In particular, this shows that $\cC(X,Y)$ cannot be empty.

\item If $X=Y$, one can construct the \emph{diagonal correspondence} $\set{(x,x)}{x\in X}$.

\item If $Y=\{\bullet\}$ is the one point set, then there is a unique correspondence $X\times \{\bullet\}$.
    \item The graph of any surjection $f:X\to Y$ is a correspondence between $X$ and $Y$.
\end{itemize}

Analogous constructions exist for couplings as well. Let $(X,\mu_X)$ and $(Y,\mu_Y)$ be probability spaces.
\begin{itemize}
    \item The product measure $\mu_X\otimes \mu_Y$ is always a coupling, sometimes called the \emph{independent coupling}. This shows that $\Pi(\mu_X,\mu_Y)$ cannot be empty.
    \item Suppose $(X,\mu_X) = (Y,\mu_Y)$. If $\Delta_X:X\to X\times X$ denotes the diagonal map $\Delta_X(x) = (x,x)$, then the \emph{diagonal coupling} is the pushforward measure $(\Delta_X)_\sharp \mu_X$.
    
    \item If $Y=\{\bullet\}$ is the one point set, then forcibly $\mu_Y = \delta_\bullet$ and there is a unique coupling $\mu_X\otimes \delta_\bullet$ in this case.
    
    \item Suppose $f:X\to Y$ is a measure-preserving map, meaning $f_\sharp \mu_X = \mu_Y$. Let $i_f:X\to X\times Y$ be the map sending $X$ to the graph of $f$, meaning $i_f(x) := (x,f(x))$. Then the pushforward $(i_f)_\sharp \mu_X$ is a coupling between $\mu_X$ and $\mu_Y$.
\end{itemize}

When we discuss Markov kernels in Section~\ref{subsec:markov-kernels}, we will mention one more method for generating couplings by combining a probability measure $\alpha \in \prob(X)$ and a Markov Kernel $\beta:X\to\prob(Y)$ using reverse disintegration.

We will make use of the following standard coupling result, which can be found in any standard optimal transport text, such as that of \citet[Lemma 7.6]{villani_topics_2003}. Note that a topological space is called \emph{Polish} if it is separable and metrizable by a complete metric. Most familiar spaces are Polish, including $\mathbb R^n$ and open subsets thereof, manifolds with or without boundary, and finite discrete spaces. A Borel measurable space arising from a Polish topological space is called Polish as well.
\begin{proposition}[The Gluing Lemma]\label{lem:gluing}
    Let $\mu_1$, $\mu_2$ and $\mu_3$ be probability measures on Polish spaces $X_1$, $X_2$, and $X_3$ respectively. Let $\gamma_{1,2} \in \Pi(\mu_1, \mu_2)$ and $\gamma_{2,3} \in \Pi(\mu_2, \mu_3)$. Then there exists a probability measure $\gamma_{1,2,3}$ on $X_1\times X_2 \times X_3$ with marginals $\gamma_{1,2}$ on $X_1\times X_2$ and $\gamma_{2,3}$ on $X_2\times X_3$.
\end{proposition}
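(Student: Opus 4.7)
The plan is to build $\gamma_{1,2,3}$ by disintegrating both given couplings along their common marginal $\mu_2$ on $X_2$, and then gluing the two conditional kernels together over this shared base. Concretely, because $X_1, X_2, X_3$ are Polish, the disintegration theorem applies to any Borel probability measure on $X_i \times X_j$ with respect to its marginal on $X_j$. I would use this to obtain a Borel-measurable family of probability measures $\{K_{1|2}(x_2,\cdot)\}_{x_2\in X_2}$ on $X_1$ such that
\[\gamma_{1,2}(A_1\times A_2) \;=\; \int_{A_2} K_{1|2}(x_2, A_1)\, \mu_2(dx_2)\]
for all Borel $A_1\subseteq X_1$, $A_2\subseteq X_2$, and analogously a Borel-measurable family $\{K_{3|2}(x_2,\cdot)\}$ on $X_3$ disintegrating $\gamma_{2,3}$ along $\mu_2$.

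Next, I would define the candidate measure $\gamma_{1,2,3}$ on $X_1\times X_2 \times X_3$ by the formula
\[\gamma_{1,2,3}(A_1\times A_2 \times A_3) \;:=\; \int_{A_2} K_{1|2}(x_2, A_1)\, K_{3|2}(x_2, A_3)\, \mu_2(dx_2)\]
on measurable rectangles, and then extend to the product $\sigma$-algebra via Carath\'eodory's extension theorem; the Polish (hence standard Borel) hypothesis ensures that the product $\sigma$-algebra behaves as expected and that the integrand is jointly Borel measurable in $x_2$. Verifying that $\gamma_{1,2,3}$ is a probability measure is immediate from the fact that $K_{1|2}(x_2, X_1) = K_{3|2}(x_2, X_3) = 1$ for $\mu_2$-a.e.\ $x_2$.

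Finally, I would check the two required marginal properties. To see that $(\pi_{1,2})_\sharp \gamma_{1,2,3} = \gamma_{1,2}$, I would set $A_3 = X_3$ in the defining formula, which collapses $K_{3|2}(x_2, X_3)$ to $1$ and recovers the disintegration formula for $\gamma_{1,2}$; the analogous computation with $A_1 = X_1$ yields the marginal $\gamma_{2,3}$ on $X_2 \times X_3$.

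I expect the main obstacle to be the rigorous invocation of the disintegration theorem, since it requires the Polish hypothesis (or more generally, a standard Borel structure) to guarantee the existence and Borel measurability of the regular conditional kernels $K_{1|2}$ and $K_{3|2}$. Once those kernels are in hand, the construction and the marginal checks are straightforward applications of Fubini's theorem and monotone class arguments. A secondary technicality is ensuring joint measurability of the integrand $x_2 \mapsto K_{1|2}(x_2, A_1) K_{3|2}(x_2, A_3)$, which follows from the separate Borel measurability of each kernel in the base variable.
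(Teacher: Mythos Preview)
Your proof is correct and follows the standard disintegration-based construction. The paper does not supply its own proof of this proposition; it simply states the result as a standard fact and cites \citet[Lemma 7.6]{villani_topics_2003}, where essentially the same disintegrate-and-reassemble argument you outline is carried out.
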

We call such a $\gamma_{1,2,3}$ a \emph{gluing} of $\gamma_{1,2}$ and $\gamma_{2,3}$.
An analog of the  gluing lemma for correspondences \citep[see][Exercise 7.3.26]{BBI} instead of couplings is included below in order to highlight the parallels between couplings and correspondences.

\begin{proposition}[The Gluing Lemma for Correspondences]\label{lem:gluing-correspondences}
    Let $A_1$, $A_2$ and $A_3$ be sets, and let $R_{1,2} \in \cC(A_1, A_2)$ and $R_{2,3} \in \cC(A_2, A_3)$. Then there exists a subset $R_{1,2,3}\subseteq A_1\times A_2 \times A_3$ which projects to $R_{1,2}$ on $A_1\times A_2$ and $R_{2,3}$ on $A_2\times A_3$.
\end{proposition}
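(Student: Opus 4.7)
The plan is to exhibit $R_{1,2,3}$ explicitly as a fiber product over $A_2$, and then verify its two projections using only the defining properties of correspondences. Concretely, I would set
\[
R_{1,2,3} := \bigl\{(a_1,a_2,a_3) \in A_1 \times A_2 \times A_3 \,:\, (a_1,a_2) \in R_{1,2} \text{ and } (a_2,a_3) \in R_{2,3}\bigr\}.
\]
This is the natural analog of the gluing $\gamma_{1,2,3}$ in Proposition~\ref{lem:gluing}; indeed, the usual measure-theoretic proof of the gluing lemma ultimately disintegrates along $A_2$ and re-couples, which on the set-theoretic side collapses precisely to this fiber product.

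Next I would verify the two projection conditions. For $\pi_{1,2}(R_{1,2,3}) = R_{1,2}$, the inclusion $\subseteq$ is immediate from the definition. For $\supseteq$, given $(a_1,a_2) \in R_{1,2}$, the correspondence property of $R_{2,3}$ supplies some $a_3 \in A_3$ with $(a_2,a_3) \in R_{2,3}$, so that $(a_1,a_2,a_3) \in R_{1,2,3}$ and projects to $(a_1,a_2)$. The argument for $\pi_{2,3}(R_{1,2,3}) = R_{2,3}$ is entirely symmetric: the inclusion $\subseteq$ is immediate, and for $\supseteq$ we invoke the correspondence property of $R_{1,2}$ to produce, for any $(a_2,a_3) \in R_{2,3}$, some $a_1 \in A_1$ with $(a_1,a_2) \in R_{1,2}$.

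There is no real obstacle here; what makes the proof work is precisely the two-sided surjectivity built into the definition of a correspondence, which is what forces the fiber product to project onto (not merely into) each of $R_{1,2}$ and $R_{2,3}$. As a final remark I would note that $R_{1,2,3}$ is itself a correspondence between any pair chosen from $A_1, A_2, A_3$ (and in particular between $A_1$ and $A_3$ after projecting onto $\pi_{1,3}$), since the projections are surjective onto correspondences and the correspondence property is inherited; this underscores the parallel with Proposition~\ref{lem:gluing}, where $\gamma_{1,2,3}$ likewise induces a coupling between $\mu_1$ and $\mu_3$.
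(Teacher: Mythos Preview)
Your proposal is correct and takes exactly the same approach as the paper: the paper's proof is a single line defining $R_{1,2,3}$ as the same fiber product over $A_2$ that you write down, leaving the projection verification implicit. Your version spells out the surjectivity check using the correspondence property, which the paper omits, but the construction is identical.
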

We call such an $R_{1,2,3}$ a \emph{gluing} of $R_{1,2}$ and $R_{2,3}$.

\begin{proof}
    Selecting $R_{1,2,3} = \set{(x,y,z) \in A_1{\times}A_2{\times}A_3}{(x,y)\in R_{1,2}, (y,z)\in R_{2,3}}$ provides one such subset.
\end{proof}

While some measurable spaces can be quite poorly behaved (for instance, those that fail to exhibit a gluing lemma), the measurable spaces in this paper will all arise from relatively well-behaved topological spaces.

\begin{assumption}\label{assume:polish} We assume that any measurable space that appears will arise from a Polish topological space by equipping it with the Borel $\sigma$-algebra. Therefore, when convenient, we will suppress the $\sigma$-algebra in our notation for measurable spaces. That is, instead of writing a measurable space as a pair $(X,\Sigma_X)$ with $X$ a Polish space and $\Sigma_X$ its Borel $\sigma$-algebra, we will simply write $X$ with the understanding that the Borel $\sigma$-algebra is being used. We will similarly write measure spaces $(X, \Sigma_X, \mu)$ as pairs $(X,\mu)$.
    
\end{assumption}

Bearing Assumption~\ref{assume:polish} in mind, we define the \emph{support} of the measure $\mu$ as the closed set $$\supp[\mu]:=\{x\in X\,|\,\mu(U)>0\,\forall\,U\,\mbox{open s.t.}\, x\in U\}.$$

\subsection{Metric Geometry and Optimal Transport}\label{subsec:MG-and-OT}

Here we present four metrics from the field of metric geometry. The following diagram \citep[inspired by][Fig. 2]{memoli_gromovwasserstein_2011} illustrates the relationships between the four metrics. 
\[
\begin{tikzcd}
   & \text{Hausdorff} \ar[dl,"\substack{\text{Remove}\\\text{ambient space}}",swap] \ar[dr,"\substack{\text{Introduce}\\\text{probability measure}}"]\\
   \text{Gromov-Hausdorff}\ar[dr,"\substack{\text{Introduce}\\\text{probability measure}}",swap] & & \text{Wasserstein} \ar[dl,"\substack{\text{Remove}\\\text{ambient space}}"]\\
   & \text{Gromov-Wasserstein}
\end{tikzcd}  
\]
We do not explicitly use the Gromov-Hausdorff or Gromov-Wasserstein distances in this paper. However, these distances provide context by illustrating common threads that run through all four metrics and is shared by the Risk distance. In all four of the metrics, objects are compared by optimally ``aligning'' the two objects, whether via correspondences or couplings, and numerically scoring the distortion incurred by an optimal alignment. The Risk distance (Definition~\ref{def:metric}) follows this same template. To emphasize the continuation of this thread, we will take a moment to compare the Risk distance to the Gromov-Wasserstein distance in Section~\ref{subsec:OT-connection}.

Note that we will often suppress the underlying metric in our notation, writing a metric space $(X,d_X)$ as simply $X$.

\subsubsection{Hausdorff Distance}
The first metric is the Hausdorff distance. Introduced by \citet{hausdorff_grundzuge_1914}, it provides a notion of distance between compact subsets of a fixed metric space. Given two compact subsets $X,Y$ of a metric space $(Z,d_Z)$, the Hausdorff distance between them is typically defined to be
\begin{align*}
    \dH^Z(X,Y) := \max\left\{\sup_{x\in X} \inf_{y\in Y} d_Z(x,y), \sup_{y\in Y} \inf_{x\in X} d_Z(x,y)\right\}.
\end{align*}
To emphasize the commonalities among the four metrics of this section, we will use an alternative equivalent definition provided by \citet[Prop. 2.1]{memoli_gromovwasserstein_2011} in terms of correspondences.

\begin{proposition}
  Given two compact subsets $X$ and $Y$ of a metric space $(Z,d_Z)$, the Hausdorff distance between $X$ and $Y$ is equal to
  \begin{align*}
    \dH^Z(X,Y) = \inf_{R\in \cC(X,Y)} \sup_{(x,y) \in R} d_Z(x,y).
  \end{align*}
\end{proposition}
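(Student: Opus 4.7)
The plan is to prove the two claimed quantities agree by showing each bounds the other. Write $H$ for the classical Hausdorff distance on the right-hand side of the displayed identity, and $H'$ for the correspondence-based infimum on the left.

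For the direction $H \le H'$, I would unpack the definition of a correspondence: given any $R \in \cC(X,Y)$, every $x \in X$ has at least one partner $y \in Y$ with $(x,y) \in R$, so
\[
\inf_{y' \in Y} d_Z(x,y') \;\le\; d_Z(x,y) \;\le\; \sup_{(u,v)\in R} d_Z(u,v).
\]
Taking the supremum over $x \in X$, and applying the symmetric argument swapping the roles of $X$ and $Y$, both terms inside the max defining $H$ are bounded by $\sup_R d_Z$. Taking the infimum over $R$ then yields $H \le H'$.

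For the reverse direction $H' \le H$, I would construct one correspondence whose distortion equals $H$. Since $Y$ is compact and $d_Z(x,\cdot)$ is continuous, each infimum $\inf_{y \in Y} d_Z(x,y)$ is attained, so I may pick a map $f : X \to Y$ with $d_Z(x, f(x)) = \inf_{y \in Y} d_Z(x,y)$; by compactness of $X$ I similarly pick $g : Y \to X$. Setting
\[
R := \{(x, f(x)) : x \in X\} \cup \{(g(y), y) : y \in Y\},
\]
both defining conditions of a correspondence are immediate, and
\[
\sup_{(x,y) \in R} d_Z(x,y) = \max\Bigl\{\sup_{x \in X} d_Z(x, f(x)),\ \sup_{y \in Y} d_Z(g(y), y)\Bigr\} = H,
\]
giving $H' \le H$.

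There is essentially no deep obstacle here. The main subtlety is that the construction of $f$ and $g$ requires the infima defining $H$ to be \emph{attained}, which the compactness hypothesis on $X$ and $Y$ provides directly; if one wanted to avoid even a pointwise choice of minimizers, one could instead select $\epsilon$-near minimizers to produce correspondences with distortion at most $H + \epsilon$ and send $\epsilon \to 0$.
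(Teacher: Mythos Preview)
Your proof is correct. The paper does not actually prove this proposition; it simply states it and attributes the correspondence formulation to \citet[Prop.~2.1]{memoli_gromovwasserstein_2011}, so there is no in-paper argument to compare against. One trivial slip: you wrote that $H$ is ``on the right-hand side'' and $H'$ ``on the left,'' but in the displayed identity the classical Hausdorff distance $\dH^Z(X,Y)$ is on the left and the correspondence infimum is on the right; your subsequent argument is consistent with $H$ being the classical quantity and $H'$ the correspondence one, so this is purely a labeling typo and does not affect the mathematics.
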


\subsubsection{Gromov-Hausdorff Distance}
The second metric of this section is the Gromov-Hausdorff distance, an extension of the Hausdorff distance that removes the need for a common ambient metric space. While the Hausdorff distance compares compact subsets of a common metric space $Z$, the Gromov-Hausdorff distance compares pairs of compact metric spaces.

\begin{definition}\label{def:GH-distance}
  Given two compact metric spaces $(X,d_X)$ and $(Y,d_Y)$, the \emph{Gromov-Hausdorff distance} between them is given by
  \begin{align*}
    d_{\op{GH}}(X,Y) := \inf_{R\in \cC(X,Y)} \sup_{(x,y), (x',y') \in R} \frac{1}{2} \big|d_X(x,x') - d_Y(y,y')\big|.
  \end{align*}
\end{definition}

The function $d_{\op{GH}}$ was first discovered by \citet[Def. III.3]{edwards_structure_1975} and later rediscovered by its namesake \citet{gromov_structures_1981}, both with alternative (but equivalent) definitions different from the one given in Definition~\ref{def:GH-distance}. The formulation above in terms of correspondences is due to \citet{Kalton_Mikhail_1997}.
This formula does indeed define a metric on the set of compact metric spaces up to isometry, meaning that $d_{\op{GH}}(X,Y) = 0$ if and only if $X$ and $Y$ are isometric \citep[Thm. 7.3.30]{BBI}.

\subsubsection{Wasserstein Distance}

The third metric of this section is the Wasserstein distance.
\begin{definition}[Kantorovich, 1942]
  Let $p\in [1,\infty)$. Given two probability measures $\mu$, $\nu$ on a metric space $(Z,d_Z)$, the \emph{$p$-Wasserstein distance} between $\mu$ and $\nu$ is
  \begin{align*}
    \dWp{p}^Z(\mu,\nu) := \inf_{\gamma\in \Pi(\mu,\nu)} \left(\int_{X\times X} d_Z^p(x,y) \, \gamma(dx\times dy) \right)^{1/p}.
  \end{align*}
\end{definition}

Notice that to construct the Wasserstein distance, we can begin with the Hausdorff metric and replace the correspondences with their probabilistic counterparts: couplings. We then replace the supremum with an $L^p$-style integral, and our job is done. Moving forward, since we primarily make use of the 1-Wasserstein distance, we will refer to $\dW$ simply as the \emph{Wasserstein distance}.

The Wasserstein distance is also called the \emph{earth mover's distance} due to the following interpretation. Suppose $(X,d_X)$ represents the ground of a garden equipped with the 2-d Euclidean metric, and $\mu$ describes the distribution of a pile of soil on $X$. Suppose that $\nu$ is a desired distribution of soil. Then a coupling $\gamma \in \Pi(\mu,\nu)$ represents a plan for moving the soil from distribution $\mu$ to distribution $\nu$, and $\dW^X(\mu,\nu)$ is the amount of work that an optimal plan will require. Indeed, the Wasserstein metric seems to have first appeared in the work of Soviet mathematician and economist L. V. Kantorovich, who was studying optimal transportation of resources. For an English translation of Kantorovich's 1942 paper, see \citet{kantorovich_translocation_2006}, and for a historical survey of this often-rediscovered metric, see \citet{vershik_kantorovich_2006}.

\subsubsection{Gromov-Wasserstein Distance}
The fourth metric of this section is the Gromov-Wasserstein distance. Just as the Gromov-Hausdorff metric is an ambient-space-free variant of the Hausdorff distance, the Gromov-Wasserstein distance is an ambient-space-free variant of the Wasserstein distance. It can be used to compare probability measures regardless of the underlying metric space.
\begin{definition}
    A \emph{metric measure space} is a triple $(X,d_X,\mu_X)$ such that $(X,d_X)$ is a metric space, and $\mu_X$ is a Borel probability measure on $X$ with full support.
\end{definition}
For more on metric measure spaces, we direct the reader to Mikhael Gromov's foundational book \citep[Chapter 3$\frac12$]{gromov_convergence_2007}.

\begin{definition}[\citet{memoli_GH_Distance,memoli_gromovwasserstein_2011}]\label{def:gw}
  Let $p \in [1,\infty)$. The \emph{$p$-Gromov-Wasserstein distance} between two metric measure spaces $(X,d_X,\mu_X)$ and $(Y,d_Y,\mu_Y)$ is given by
  \begin{align*}
    \dGWp{p}(X,Y) := \inf_{\gamma \in \Pi(\mu_X,\mu_Y)} \frac{1}{2}
    \left(\int_{X\times Y \times X \times Y}\big|d_X(x,x')-d_Y(y,y')\big|^p \, \gamma(dx\times dy)\gamma(dx'\times dy')\right)^{1/p}.
  \end{align*}
\end{definition}
As with the Wasserstein distance, we will typically take $p=1$ and refer to $\dGWp{1}$ simply as the \emph{Gromov-Wasserstein distance}.

While preceded by Gromov's box distance and observable distance \citep[Ch. 3{\small $\frac12$}]{gromov_structures_1981}, the first to formulate a ``Wasserstein-type'' adaptation of the Gromov-Hausdorff distance to the setting of metric measure spaces was \citet{sturm_geometry_2006}. The definition above was introduced and developed by \citet{memoli_GH_Distance, memoli_gromovwasserstein_2011} in order to apply methods from optimal transport to the computational problem of data comparison and matching. \citet{sturm_space_2012} extensively explored the geometry of the resulting ``space of spaces''.
The Gromov-Wasserstein distance has spawned variants in many domains (some of which are referenced in Section~\ref{sec:introduction}) and applications to various problems in Machine Learning and Data Science such as unsupervised natural language translation \citep{alvarez-melis_gromov-wasserstein_2018}, imitation learning \citep{fickinger_cross-domain_2022},  generative modeling \citep{bunne_learning_2019}, and to the study of Graph Neural Networks in the form of the Weisfeiler-Lehman distance and a corresponding version of the Gromov-Wasserstein distance for comparing labelled Markov processes \citep{chen2022weisfeiler}.  See also \cite{peyre2019computational,Vayer2020FusedGD,titouan2019sliced,peyre_gromov-wasserstein_2016,scetbon2022linear,chowdhury2020gromov,sejourne_unbalanced_2021,delon2022gromov,beier2022linear,le2022entropic,dumont2024existence,arya2023gromov} for other work related to the Gromov-Wasserstein distance.

\subsection{Total Variation Distance}\label{subsec:total-variation}
There is one additional metric that we will use. The simplest method of comparing two probability measures on a common measurable space is the \emph{total variation distance}. See \citet[Sec 4.1]{markov_2017} for an introduction. 

\begin{definition}\label{def:tv-distance} 
    Let $\mu,\nu$ be probability measures on the measurable space $(\Omega, \mathcal S)$. The \emph{total variation distance between $\mu$ and $\nu$} is given by
    \[\TV(\mu,\nu) := \sup_{B \in \mathcal S}\big| \mu(B) - \nu(B) \big|.\]
\end{definition}

It is not actually necessary to take an absolute value in the definition of $\TV$, since the value $| \mu(B) - \nu(B)|$ is achieved by either $\mu(B) - \nu(B)$ or $\mu(\Omega\setminus B) - \nu(\Omega \setminus B)$. Hence,
\begin{align*}
    \TV(\mu,\nu) = \sup_{B\in \mathcal S} \big(\mu(B) - \nu(B)\big).
\end{align*}

Additionally, the total variation distance can be seen as the Wasserstein distance when $\Omega$ is equipped with the discrete metric. In other words,
\begin{align*}
    \TV(\mu,\nu) = \inf_{\gamma \in \Pi(\mu,\nu)} \int 1_{x\neq y} \, \gamma(dx{\times} dy)
    =
    \inf_{\gamma \in \Pi(\mu,\nu)} \gamma\set{(x,y)}{x\neq y}.
\end{align*}
Note that the above equalities only make sense if the diagonal $\Delta:=\set{(x,x)}{x\in \Omega}$ is measurable. This is true under Assumption~\ref{assume:polish}; if $\Omega$ is a Polish space, then $\Delta$ is closed in $\Omega \times \Omega$.

\subsection{Markov Kernels}\label{subsec:markov-kernels}
Here we establish definitions and notation surrounding Markov kernels. See \citet[Ch 14.2]{klenke_probability_2014} for a comprehensive introduction.

Let $(X, \Sigma_X)$ and $(Y,\Sigma_Y)$ be measurable spaces. A map $M:X\times \Sigma_Y \to \mathbb R$ is called a \emph{Markov kernel if $M(x,\cdot)\in \prob(Y)$ for all $x\in X$, and $M(\cdot,A)$ is measurable for all $A\in \Sigma_Y$. One often writes $M(x)(A)$ instead of $M(x,A)$ and thinks of $M$ as a map $X\to \prob(Y)$.} One can use Markov kernels to represent functions where the output is random; given an input $x$, the random output is described by the probability measure $M(x)$. Markov kernels also go by the name ``communication channel'' in information theory since they can be used to model a channel in which random noise affects the output. We will use Markov kernels to model noise in Sections~\ref{subsec:stability-noise} and \ref{subsec:improved-stability}.

There are several standard methods for constructing Markov kernels. A probability measure can be seen as a simple Markov kernel. Given $\mu \in \prob(X)$, we can construct a Markov kernel from a singleton  $\{\bullet\}\to \prob(X)$ mapping $\bullet \mapsto \mu$. Measurable functions can be seen as Markov kernels as well. Given a measurable $f:X\to Y$, we can construct the \emph{deterministic Markov kernel induced by $f$}, denoted $\delta_f$, by setting $\delta_f(x)$ to be the point mass $\delta_{f(x)}$.

We can construct more interesting Markov kernels by the process of \emph{disintegration}. Let $\mu \in \prob(X\times Y)$ with $X$-marginal $\alpha$. Given that $X$ and $Y$ are Polish spaces, there exists a Markov kernel $\beta:X\to \prob(Y)$ satisfying
\begin{align}
    \mu(A\times B) = \int_A \beta(x)(B) \, \alpha(dx).\label{eq:disintegration}
\end{align}
This characterizes $\beta$ up to an $\alpha$-null set. We call such a $\beta$ the \emph{disintegration of $\mu$ along $X$}. The existence of such a $\beta$ is shown in, for instance,  \citep[Theorem~10.4.8]{bogachev_measure_2007}.

Disintegration can be seen as a measure-theoretic formulation of conditional probability. Indeed, if $\mathsf X$ and $\mathsf Y$ are random variables in $X$ and $Y$ respectively with joint law $\mu$, then the disintegration $\beta$ of $\mu$ along $X$ gives the conditional law of $\mathsf Y$ with respect to $\mathsf X$. That is, $\beta(x)(B) = \bbP(\mathsf Y \in B|\mathsf X =x)$ for all $x\in X$ and measurable subsets $B\subseteq Y$.

One can also reverse the disintegration process. Just as a joint law of $(\mathsf X,\mathsf Y)$ can be recovered from the law of $\mathsf X$ and the conditional law of $\mathsf Y$ on $\mathsf X$, so too can we recover a coupling in a similar manner. Indeed, Equation~\eqref{eq:disintegration} shows that the coupling $\mu$ can also be recovered from $\beta$ and $\alpha$. This is another common method for producing couplings; one selects a probability measure $\alpha \in \prob(X)$ and a Markov kernel $\beta:X\to \prob(Y)$ and defines $\mu$ as in Equation~\eqref{eq:disintegration}.

Given two Markov kernels $M:X\to \prob(Y)$ and $N:Y\to \prob(Z)$, their \emph{Markov kernel composition} $M\cdot N:X\to \prob(Z)$ is given by
\[M\cdot N(x)(C) := \int_Y N(y)(C)\, M(x)(dy)\]
for all measurable $C\subseteq Z$.
Given Markov kernels $M:X\to \prob(Y)$ and $N:W\to \prob(Z)$, the \emph{independent product of Markov kernels} $M\otimes N:X\times W \to \prob(Y\times Z)$ is given by
\[M\otimes N(x,w) := M(x) \otimes N(w)\]
where the $\otimes$ on the right hand side denotes the product of measures.

\subsection{Empirical Measures and Glivenko–Cantelli Classes}\label{subsec:GC}

If a data set is sampled from an underlying probability measure $\mu$, the sample itself can be modeled as a random measure called an \emph{empirical measure}.
\begin{definition}
    Given a probability space $(Z,\mu)$, the \emph{$n$th empirical measure of $\mu$} is the random measure given by
    \[\mu_n:= \frac{1}{n}\sum_{i=1}^n \delta_{Z_i}\]
    where $Z_1,\dots, Z_n$ is an i.i.d. sample with law $\mu$.
\end{definition}
A practitioner, having access only to a random sample and not the true underlying law $\mu$ from which it is drawn, can nevertheless approximate $\mu$ with $\mu_n$.
One hopes that for large $n$, $\mu_n$ is similar enough to $\mu$ for practical purposes. For instance, one would hope that integrating against $\mu$ and $\mu_n$ yield similar results when $n$ is large. One formalization of this hope taken from statistical learning theory is the notion of a \emph{Glivenko-Cantelli class}.
\begin{definition}
    Let $(X,\mu)$ be a probability space, and let $\mathcal F$ be a collection of integrable functions $X\to \mathbb R$. Let $\mu_n$ denote the empirical measure of $\mu$. We say that $\mathcal F$ is a \emph{Glivenko-Cantelli class with respect to $\mu$} if
    \[
    \sup_{f\in \mathcal F} \left| \int f(x) \mu(dx) - \int  f(x) \mu_n(dx) \right| \xrightarrow{\text{a.s.}} 0.
    \]

    We say that $\mathcal F$ is a \emph{Universal Glivenko-Cantelli class} if it is a Glivenko-Cantelli class with respect to every $\mu \in \prob(X)$.
\end{definition}
See~\citet{van_der_vaart_weak_1996} for an overview of Glivenko-Cantelli classes and their relationships to other conditions in the theory of empirical processes.

We will employ Glivenko-Cantelli classes in our analysis of the convergence of empirical problems (Section~\ref{subsec:empirical-convergence}), and universal Glivenko-Cantelli classes in our discussion of Rademacher complexity (Section~\ref{subsec:rademacher}).

The Glivenko-Cantelli condition may be illuminated by the following sufficient condition taken from the theory of empirical processes involving the popular Vapnik-Chervonenkis (VC) dimension. Given a family of subsets $\cA$ of some set $Z$, the \emph{VC dimension of $\cA$} is a number which measures the ``flexibility'' of the family $\cA$. Briefly, the VC dimension of $\cA$ is the cardinality of the largest finite set $B\subseteq Z$ such that $\set{A\cap B}{A\in \cA} = \pow(B)$, where $\pow$ denotes the power set. If there is no largest such $n$, we say that $\cA$ has \emph{infinite VC dimension}. For an introduction to VC dimension and its place in the theory of statistical learning, see \citet[Ch 12]{devroye_probabilistic_1996}.

\begin{definition}\label{def:subgraph}
    Let $f:X\to \mathbb R$ be any function. The \emph{subgraph of $f$} is given by
    \[\mathrm{sg}(f) := \set{(x,t)}{t<f(x)}\subseteq X \times \bbR.\]
\end{definition}
\begin{proposition}\label{prop:VC-sufficient}
    Let $\mathcal F$ be a collection of measurable functions $X\to \bbR$ that is uniformly bounded below and above. Suppose $\mathcal F$ is endowed with a Polish topology, and subsequently the Borel $\sigma$-algebra, such that the map $(f,x)\mapsto f(x)$ is jointly measurable in $f$ and $x$. If the collection of subgraphs $\set{\mathrm{sg}(f)}{f\in \mathcal F}$ has finite VC dimension, then $\mathcal F$ is a universal Glivenko-Cantelli class.
\end{proposition}

    Proposition~\ref{prop:VC-sufficient} follows from standard results in the theory of empirical processes which can be found in, for instance, the reference by \citet{van_der_vaart_weak_1996}. Specifically, we use a combination of Theorems~2.6.7 and 2.4.3 along with Example~2.3.5 from that source.

\section{The Risk Distance}\label{sec:distance}

In this section, we introduce the points (problems) and distance function (the Risk distance) which will together define our pseudometric space of supervised learning problems.

\subsection{Problems}
Before we proceed, we must solidify the informal notion of ``supervised learning problem'' into a formal definition. Supervised learning problems come in many forms, including regression, classification, and ranking problems. We would like a definition that is general enough to encompass all of these examples. In all of these problem types, the goal is to examine the relationship between a random variable in an input space $X$ and another in an response space $Y$. One then aims to select an appropriate function $h:X\to Y$ that predicts the output value given an input. The effectiveness of $h$ is measured using a chosen loss function. (As an aside, even ``unsupervised'' problems, such as generating samples from a distribution,  can have a supervised problem ``under-the-hood''; for example, $f$-GANS rely upon judging suitability of a distribution by use of the $f$-divergence \citep{Nock:2017aa}, which is intimately related to a Bayes risk \citep{Reid:2011aa}).

\begin{definition}\label{def:problem}
    A \emph{supervised learning problem}, or simply \emph{problem}, is a 5-tuple
    \[P = (X,Y,\eta,\ell,H)\]
    where
    \begin{itemize}
        \item $X$ and $Y$ are Polish topological spaces called the \emph{input space} and \emph{response space} respectively,\footnote{The Polish condition is put in place to avoid pathologies. In all of the examples we have in mind, $X$ and $Y$ are either open, closed, or convex subsets of $\bbR^n$, or else discrete sets. Such spaces are Polish. This should also help assuage the reader with set-theoretic concerns. We will speak freely of the ``collection of all problems''. While this collection is technically too large to be a set, restricting ourselves to the above examples resolves this issue.}
        \item $\eta$ is a Borel probability measure on $X\times Y$ called the \emph{joint law},
        \item $\ell$ is a measurable function $Y\times Y \to \mathbb{R}_{\geq 0}$ called the \emph{loss function},
        \item and $H$ is a family of measurable functions $X\to Y$, called the \emph{predictors}.\footnote{For greater generality, one could alternatively define a supervised learning problem such that the predictors $h\in H$ are functions into some decision space $D$ which is not necessarily the same as $Y$. That is, one could define a problem to be a 6-tuple $(X,Y,\eta,\ell,H,D)$, where $H$ is a collection of functions $h:X\to D$ and $\ell$ is a function $\ell:D\times Y \to \mathbb R_{\geq 0}$. Definition~\ref{def:problem} would then cover the special case $D = Y$. The more general definition would, for instance, encompass class probability estimation problems by taking $D = \prob(Y)$. Many of the results in this paper would generalize to this more general setting. We leave this investigation to future work.}
       
        We require each $h\in H$ to exhibit finite expected loss, meaning
        \[\int_{X\times Y}\ell(h(x),y)\, \eta(dx{\times} dy) < \infty.\]
    \end{itemize}
\end{definition}
Notice that our definition does not explicitly mention the random variables in the input and output spaces. Only the most crucial information about the variables, the joint law $\eta$ that couples them, is recorded. This strengthens the analogy with metric geometry; a supervised learning problem is analogous to a metric measure space, where the metric corresponds to the loss function and the measure corresponds to the joint law.

The ``expected loss'' mentioned in Definition~\ref{def:problem} is more commonly known as the \emph{risk} of a predictor.

\begin{definition}
    Given a problem $P = (X,Y,\eta,\ell, H)$, the \emph{risk} of a predictor $h\in H$ is defined to be
    \[\cR_P(h) := \int \ell(h(x),y)\, \eta(dx\times dy).\]
    The \emph{constrained Bayes risk of $P$} is defined to be
    \[\cB(P) := \inf_{h\in H} \cR_P(h) = \inf_{h\in H} \int \ell(h(x),y)\, \eta(dx\times dy).\]
\end{definition}
The risk of a predictor is its expected loss on a randomly selected observation. If we measure the performance of a predictor by its risk, then the constrained Bayes risk of a problem represents the optimal possible performance when selecting a predictor from $H$. The constrained Bayes risk is so called by \citet{williamson_information_2024} since it is a modification of the classical notion of the Bayes risk of a predictor. Note that if the predictor set for a problem $P$ is the set $H_{X,Y}$ of all measurable functions $f:X\to Y$, then $\cB(P)$ agrees with the classical Bayes risk, which we will henceforth denote as 
\[\cB^*(P) := \inf_{h\in H_{X,Y}} \cR_P(h).\]

We emphasize that a problem does not include a method for \emph{choosing} a predictor. We think of a problem as something that a practitioner must \emph{solve} by selecting, through some algorithmic procedure, a function from the predictor set that performs well according to some useful functional built on the loss function, such as the risk $\cR_P:H\to \mathbb{R}_{\geq 0}$. In this regard, in Section \ref{sec:topological} we consider a certain compressed representation of $\cR_P:H\to \mathbb{R}_{\geq 0}$ which permits gaining insight into the complexity associated to this selection task.

\begin{notation}
To avoid tediously re-writing these 5-tuples, we establish some persistent notation. We assume, unless otherwise stated, that a problem with the name $P$ has components $(X,Y,\eta,\ell,H)$, a problem named $P'$ has components $(X',Y',\eta',\ell',H')$, and a problem named $P''$ has components $(X'',Y'',\eta'',\ell'',H'')$. Similarly, for any integer $i\geq 0$, the problem $P_i$ has components $(X_i,Y_i,\eta_i,\ell_i,H_i)$.
\end{notation}
\begin{notation}
    Given any problem $P = (X,Y,\eta,\ell,H)$, we use the shorthand $\ell_h$ for the function $\ell_h:X\times Y \to \bbR_{\geq 0}$ given by $\ell_h(x,y):= \ell(h(x),y)$.
\end{notation}

\begin{examples}\label{ex:linear-classification}
Consider the problem of predicting whether a college student will successfully graduate given their high school GPA and ACT scores. We can model this problem as follows. We represent each student with a point in $\bbR^2 \times \{0,1\}$, where the coordinates in $\bbR^2$ are determined by the student's GPA and ACT score, and their graduation status is represented with a $0$ for failure and $1$ for success. We think of students as being sampled from an unknown probability measure $\eta$ on $\bbR^2 \times \{0,1\}$. Our aim is then to select a predictor $\bbR^2\to \{0,1\}$. We might restrict ourselves to predictors with a linear decision boundary:
\begin{align*}
    h_{a,b,c}(x_1,x_2) := \begin{cases}
    1 & ax_1 + bx_2 \geq c \\
    0 & ax_1 + bx_2 < c
    \end{cases}.
\end{align*}
We may also decide that all incorrect predictions carry the same penalty. Any incorrect guess incurs a loss of 1, while a correct guess has loss 0.

We have now constructed a problem
\[P := (\bbR^2,\{0,1\},\eta,\ell,H),\]
where $\ell(y,y') = 1_{y\neq y'}$, and $H = \{h_{a,b,c}\big| a,b,c\in \bbR\}$.

The risk of an arbitrary predictor $h_{a,b,c}$ is then
\begin{align*}
    \cR_P(h_{a,b,c}) & = \int \ell(h_{a,b,c}(x),y) \,\eta(dx{\times}dy)
    \\ & =
    \int_{\bbR^2 \times \{0\}} h_{a,b,c}(x) \, \eta(dx{\times}dy)
    +
    \int_{\bbR^2 \times \{1\}} (1-h_{a,b,c}(x)) \, \eta(dx{\times}dy)
    \\ & =
    \eta\big(\set{x\in \mathbb R^2}{h_{a,b,c}(x)=1} \times \{0\}\big)
    +\eta\big(\set{x\in \mathbb R^2}{h_{a,b,c}(x)=0} \times \{1\}\big).
\end{align*}
That is, the risk of $h_{a,b,c}$ is its misclassification probability. The constrained Bayes risk $\cB(P)$ is then the infimal misclassification probability across all predictors with linear decision boundaries. Under additional normality conditions, the predictor achieving the risk $\cB(P)$ can be found with the Fisher linear discriminant \citep{fisher_use_1936}.

\end{examples}

\begin{examples}\label{ex:linear-regression}
Suppose we are faced with a linear regression problem. That is, we have a random vector $\mathsf X$ in $\mathbb R^n$ and a random number $\mathsf Y$ in $\mathbb R$ and are tasked with finding an affine linear functional $\bbR^n \to \bbR$ that most closely resembles their joint law. We can formalize this problem as follows. We let $\eta \in \prob(
\bbR^n \times \bbR)$ be the unknown joint law. We may select the squared difference $\ell(y,y') := (y-y')^2$ as our loss function.
The resulting problem is then
\begin{align*}
    P := (\bbR^n,\bbR,\eta,\ell,H)
\end{align*}
where $H$ is the collection of affine linear functions $\bbR^n \to \bbR$.

The risk of an arbitrary predictor $h(x)=a^Tx+b$ is the mean squared error of $h$:
\begin{align*}
    \cR_P(h) = \int \ell(h(x),y)\,\eta(dx{\times}dy) & = \int (a^Tx+b-y)^2\, \eta(dx{\times} dy).
\end{align*}
The constrained Bayes risk $\cB(P)$ is then the infimal mean squared error across all linear functionals $h\in H$. 

\end{examples}

\begin{examples}\label{ex:one-point-problem}
    By making trivial choices for each component, one can define the \emph{one-point problems} to be
    \[P_\bullet(c) := (\{\bullet\},\{\bullet\},\delta_{(\bullet,\bullet)}, c, \{\id_{\bullet}\})\]
    for any $c \geq 0$. We will most often set $c=0$ and use the notation $P_\bullet := P_\bullet(0)$.
    The constrained Bayes risk $\cB(P_\bullet(c))$ is easily seen to be $c$, since the loss function of $P_\bullet(c)$ is identically $c$.
\end{examples}

\subsection{Comparison with Blackwell's Statistical Experiments}\label{subsec:statistical-experiments}
    Our definition of a supervised learning problem may seem superficially similar to the established notion of a statistical experiment which is prevalent in mathematical statistics. Since there already exist established quantitative methods for comparing statistical experiments such as Le Cam's deficiency distance \citep{leCam_sufficiency, mariucci2016le}, let us take a moment to contrast statistical experiments and supervised learning problems. Originally defined by Blackwell \citep{blackwell_comparison_1951}, a statistical experiment consists of
    \begin{itemize}
    \item a measurable space $X$,
    \item a set of probability measures $\{P_\theta,\theta\in \Theta\}$ indexed by some set $\Theta$, and
    \item a distinguished $\theta\in \Theta$, thought to be the ``true'' parameter, or the parameter ``chosen by nature''.
    \end{itemize}
    A practitioner is then tasked with estimating the true value of $\theta$ and hence the underlying probability measure.

    Supervised learning can be placed into Blackwell's framework \citep[c.f.][]{williamson_information_2024} by taking $X$ to be the input space of the problem, taking $\Theta$ to be the response space $Y$, and letting the map $y \mapsto P_y$ be the disintegration of the joint law $\eta \in \prob(X\times Y)$ along $Y$.\footnote{In the context of classification, the measure $P_y$ is called the \emph{class-conditional distribution}.} When predicting the label for a particular observation $x$, the ``true'' value $y_0$ represents the unknown true response, and we assume the observed input $x$ was drawn from the probability measure $P_{y_0}$. One's goal, of course, is to then estimate $y_0$ using an observed $x\in X$. A process for doing so defines a predictor $X\to Y$. Hence the superivised learning problem is encoded as a statistical experiment $E := (X,\set{P_y}{y\in Y}, y_0)$.

    In contrast, our supervised learning problems have three additional pieces of information: 
    \begin{itemize}
    \item[(1)] the $Y$-marginal of $\eta$, 
    \item[(2)] a loss function $\ell$ on the response space $Y$, and 
    \item[(3)] an explicit family $H$ of functions $X\to Y$. 
    \end{itemize}
    Indeed, using the $Y$-marginal of $\eta$ and the disintegration $y\mapsto P_{y}$, we can recover the joint probability measure $\eta \in \prob(X\times Y)$ via reverse-disintegration (see Section~\ref{subsec:markov-kernels} for details). Hence from the statistical experiment and the listed additional information, one can recover all five components of the supervised learning problem $(X,Y,\eta,\ell,H)$.

\subsection{Motivating Properties}\label{subsec:motivation}
Our goal is to quantitatively compare problems by developing a notion of distance on the collection of all problems.
This notion will take the form of a pseudometric $\dexp$ which we will call the \emph{Risk distance}.
To  design such a distance, we might begin by deciding on some desirable properties.
When we introduce $\dexp$, we will see that it is characterized as the largest pseudometric satisfying these conditions.

The first property that one needs to decide when designing a pseudometric is what its zero set should be. In other words one needs to postulate a notion of \emph{isomorphism} that we expect the notion of distance to be compatible with. We start by introducing one such notion.
\begin{definition}\label{def:strong-iso} Two problems $P$ and $P'$ are said to be \emph{strongly isomorphic}  if there exist measurable bijections
    \[f_1:X'\to X \qquad f_2:Y'\to Y\]
    and a bijection between predictor sets
    \[\varphi: H'\to H\]
    such that the following conditions are satisfied:
    \begin{enumerate}
        \item $(f_1\times f_2)_\sharp \eta' = \eta$, 
        \item For every $h'\in H'$, the diagram
        \[
        \begin{tikzcd}
            X'\times Y'\ar[dd,"f_1\times f_2",swap] \ar[dr,"\ell'_{h'}"]
            \\
            & \bbR
            \\
            X\times Y \ar[ur,"\ell_{\varphi(h')}", swap]
        \end{tikzcd}    
        \] 
        commutes. That is, $\ell'_{h'}= \ell_{\varphi(h')} \circ (f_1\times f_2)$. 
    \end{enumerate}
 \end{definition} 
Here the notation $f_1\times f_2$ denotes the product map: $(f_1\times f_2)(x',y') := (f_1(x'), f_2(y'))$. (We choose the notation $f_1\times f_2$ over the alternative $(f_1,f_2)$ because the former emphasizes that each component of the output depends only on the corresponding component of the input.) In other words, a strong isomorhism consists of bijections between the components of a problem that preserve the structure of the problem, and in particular preserve the performance of every predictor; if $f_1(x')=x$, $f_2(y')=y$ and $\phi(h')=h$, then $\ell'_{h'}(x',y') = \ell_h(x,y)$. Despite being natural, the notion of strong isomorphism is rather rigid.  For instance, Condition~2 requires that an equality involving the predictors and loss hold for every point in $X'\times Y'$, even those outside of the support of $\eta'$. Such points could reasonably be considered irrelevant to the supervised learning problem, since an observation will appear there with probability 0. In order to motivate a more flexible notion of isomorphism which characterizes the pairs of problems that should be distance zero apart, we collect potential modifications to problems that one might reasonably consider immaterial.

\begin{examples}\label{ex:linear-regression-formulations}
    Consider the linear regression problem $P = (\mathbb R^n, \mathbb R, \eta,\ell, H)$ from Example~\ref{ex:linear-regression}. We chose $H$ to be the set of all affine linear functionals $\mathbb R^n \to \mathbb R$. As is often done, we could alternatively formulate the linear regression problem as follows. We place our data in $\mathbb R^{n+1}$ by setting the first coordinate of every observation to a constant 1. Instead of an affine linear map, our task is now to select a (non-affine) linear map $\mathbb R^{n+1} \to \mathbb R$. Formally, we construct a problem
    \[P' := (\mathbb R^{n+1}, \mathbb R, i_\sharp \eta, \ell, H')\]
    where
    \begin{itemize}
        \item $i:\mathbb R^n \to \mathbb R^{n+1}$ is given by $i(x_1,\dots, x_n) := (1,x_1,\dots, x_n)$, and
        \item $H'$ is the set of all (non-affine) linear functionals $\mathbb R^{n+1} \to \mathbb R$.
    \end{itemize}
    The problems $P$ and $P'$ represent two common formulations of linear regression, the latter stated in terms of linear functionals, the former in terms of affine linear functionals. While $P$ and $P'$ are not strongly isomorphic, the formulations they represent are considered equivalent. Whatever our notion of distance between problems, it is desirable that $P$ and $P'$ be distance zero apart in order to reflect this equivalence.
\end{examples}

\begin{examples}\label{ex:extra-label}
    Consider the problem
    \[P := (\mathbb R, \{a,b\}, \eta, \ell, H),\]
    where
    \begin{itemize}
        \item $\eta$ is any probability measure on $\mathbb R \times \{a,b\}$.
        \item $\ell$ is the 0-1 loss (also called the discrete metric) on $\{a,b\}$. Namely, $\ell(a,a)=\ell(b,b)=0$ and $\ell(a,b)=\ell(b,a)=1$.
        \item $H$ consists of functions $\mathbb R \to \{a,b\}$ with finitely many discontinuities.
    \end{itemize}
    This $P$ represents a classification problem; a predictor $h\in H$ assigns to each input either the label $a$ or the label $b$.
    Now construct a second problem $P'$ by adding to $P$ a third label, say, $c$. We design the remaining components of $P'$ so as to make the labels $b$ and $c$ functionally identical. More precisely, we define
    \[P' := (\mathbb R, \{a,b,c\}, \eta', \ell', H')\]
    where
    \begin{itemize}
        \item $\eta'$ is any measure such that $\eta$ is the pushforward of $\eta'$ under the map $\{a,b,c\}\to \{a,b\}$ sending $a\mapsto a$ and $b,c\mapsto b$.
        \item $\ell'$ extends $\ell$ by setting
        \[\ell'(b,c) = \ell'(c,b) = \ell'(c,c) = 0\]
        \[\ell'(a,c)=\ell'(c,a) = 1.\]
        That is, $\ell'$ considers $b$ and $c$ to be indistinguishable.
        \item $H'$ consists of all functions $\mathbb R \to \{a,b,c\}$ with finitely many discontinuities.
    \end{itemize}
    
    The problems $P$ and $P'$ are both classification problems. By design, $P'$ is $P$ with an extraneous label. If we solve $P$ by choosing a function $h\in H$, the same $h$ will provide a solution to $P'$ with identical performance. Conversely, if we solve $P'$ by selecting some $h'\in H'$, we can produce an identically-performing solution for $P$ by composing $h'$ with the map sending $a\mapsto a$ and $b,c\mapsto b$. While $P$ and $P'$ are not strongly isomorphic, they differ only superficially, and a meaningful distance should set $P$ and $P'$ to be distance zero apart.
\end{examples}
    
Examples~\ref{ex:linear-regression-formulations} and \ref{ex:extra-label} demonstrate that any of the five components of a problem can change without modifying the problem's essential nature. This motivates the following definition which arises as a relaxation of Definition \ref{def:strong-iso}.

\begin{definition}
    Given two problems $P$ and $P'$, we say that $P'$ is a \emph{simulation} of $P$ (or that $P'$ \emph{simulates} $P$), and denote it by $P'\rightsquigarrow P$, if there exist measurable functions
    \[f_1:X'\to X \qquad f_2:Y'\to Y\]
    such that the following are satisfied.
    \begin{enumerate}
        \item $(f_1\times f_2)_\sharp \eta' = \eta$,       
        \item For every $h\in H$, there exists an $h'\in H'$ such that the diagram
        \[
        \begin{tikzcd}
            X'\times Y'\ar[dd,"f_1\times f_2",swap] \ar[dr,"\ell'_{h'}"]
            \\
            & \bbR
            \\
            X\times Y \ar[ur,"\ell_h", swap]
        \end{tikzcd}    
        \]
        commutes (that is, $\ell'_{h'}= \ell_h \circ (f_1\times f_2)$) $\eta'$-almost everywhere.
        Similarly, for every $h' \in H'$, there exists an $h\in H$ such that the above diagram commutes $\eta'$-almost everywhere.
    \end{enumerate}
\end{definition}

Note that if $P$ and $P'$ are strongly isomorphic then $P$ and $P'$ are simulations of each other. It is, however, clear that the notion of simulation is weaker than strong isomorphism. Indeed, the former does not require the maps $f_1$ and $f_2$ to be bijections, nor does it require   Condition~2 to be satisfied through a bijection between $H$ and $H'$. Furthermore, Condition~2 encodes an additional, more subtle, relaxation of the second condition of strong isomorphism: whereas the latter requires the diagram to commute everywhere on $X'\times Y'$, the former only requires this to take place $\eta'$-almost everywhere within $X'\times Y'$. In Example~\ref{ex:linear-regression-formulations}, the problem $P'$ is a simulation of $P$ via the projection map $\mathbb R^{n+1} \to \mathbb R^n$ that drops the first coordinate, and the identity map on the output space $\mathbb R$. Likewise, in Example~\ref{ex:extra-label}, $P'$ is a simulation of $P$ via the identity map on the input space $\mathbb R$ and the map on the output space sending $a\mapsto a$ and $b,c\mapsto b$.

The term ``simulation'' is meant to suggest that when a problem $P'$ is a simulation of $P$, then $P'$ is, in a sense,  richer than $P$. The terminology is inspired by similar concepts appearing in the study of automata and Markov chains \citep{larsen1989bisimulation, milner1980calculus,milner1989communication}.

Motivated by Examples~\ref{ex:linear-regression-formulations} and \ref{ex:extra-label}, we posit that if $P'$ is a simulation of $P$, then it is reasonable to consider $P$ and $P'$ to be functionally identical. As these examples demonstrate, if $P'$ simulates $P$, then $P'$ can be solved ``in terms of $P$'' using the maps $f_1$ and $f_2$ which witness the simulation. That is, one could take an i.i.d. sample $\{(x'_i,y'_i)\}_{i=1}^n$ drawn from $\eta'$ and apply the maps $f_1$ and $f_2$ to the $x_i$'s and $y_i$'s respectively to get a sample $\{(x_i,y_i)\}_{i=1}^n$ in $X\times Y$. The pushforward condition in the definition of a simulation guarantees that the transformed sample will be an i.i.d. sample drawn from $\eta$. One could then solve $P$ by using this data to select an appropriate predictor $h\in H$, then solve $P'$ with a corresponding $h' \in H'$. Point~2 in the simulation definition implies that $\ell'(h'(x'_i),y'_i) = \ell(h(x_i), y_i)$. In other words, we are guaranteed that $h$ and $h'$ have identical performance on corresponding observations.
For instance, both problems have the same constrained Bayes risk. Indeed, if $P'$ is a simulation of $P$ via the maps $f_1:X\to X'$ and $f_2:Y \to Y'$, then
\begin{align*}
    \cB(P') &= \inf_{h'\in H'} \int \ell'_{h'}(x',y')\, \eta'(dx'{\times}dy')
    \\ & =
    \inf_{h\in H} \int \ell_h(f_1(x'),f_2(y'))\, \eta'(dx'{\times}dy')
    \\ & =
    \inf_{h\in H} \int \ell_h(x, y)\, ((f_1{\times} f_2)_\sharp\eta')(dx{\times}dy)
    \\ & =
    \inf_{h\in H} \int \ell_h(x, y)\, \eta(dx{\times}dy) = \cB(P).
\end{align*}
This sort of relationship between problems that allows one to solve one problem in terms of another is reminiscent of John Langford's ``Machine Learning Reductions'' research program \citep{Langford:2009aa}.

If $P'$ is a simulation of $P$, we would like our distance $\dexp$ to consider them identical.
\begin{tcolorbox}[colback=white]
    \textbf{Condition 1:} If $P'$ is a simulation of $P$, then $\dexp(P,P')=0$.
\end{tcolorbox}

Symmetrizing the simulation relationship produces a more general relationship which we call \emph{weak isomorphism}, modeled after \emph{weak isomorphism of networks}, defined by \citet{chowdhury_gromovwasserstein_2019,chowdhury2023distances}.

\begin{definition}\label{def:weak-iso}
    Define two problems $P$ and $P'$ to be \emph{weakly isomorphic} if there exists a third problem $\widetilde P$ that is a simulation of both $P$ and $P'$. 
\end{definition} 
In other words, weak isomorphism between problems $P$ and $P'$ arises from the existence of a simultaneous simulation of both problems, as illustrated  via the diagram\footnote{
Simulations, weak isomorphisms and their connections to the Risk distance suggest the existence of a ``category of problems'' in which simulations are morphisms, weak isomorphisms are spans, and from which the Risk distance can be derived. We leave the exploration of such a categorical perspective to future work.}

    \[\begin{tikzcd}
    & \widetilde P \ar[dl,rightsquigarrow] \ar[dr,rightsquigarrow]\\
    P  & & P'.\\
    \end{tikzcd} 
    \]

It is clear that strong isomorphism implies weak isomorphism. As we will see below (Theorem~\ref{thm:dexp-pseudometric} and Proposition~\ref{prop:drisk-zero-compact}), under fairly general conditions, the Risk distance $\dexp(P,P')$ will vanish if and only if $P$ and $P'$ are weakly isomorphic.

The second desirable condition for $\dexp$ is simpler to state. Let $P$ be a problem, and suppose that $P'$ is a problem with all the same components, except possibly for the loss function $\ell'$. If there exists $\alpha\geq 0$ such that, for all $h\in H$, we have
\[\int_{X\times Y} \big|\ell(h(x),y) - \ell'(h(x),y)\Big| \, \eta(dx\times dy) \leq \alpha,\]
then we demand $\dexp(P,P')\leq \alpha$.
\begin{tcolorbox}[colback=white]
    \textbf{Condition 2:} If
    \[P = (X,Y,\eta,\ell,H)\]
    \[P' = (X,Y,\eta,\ell',H),\]
    then
    \[\dexp(P,P') \leq \sup_{h\in H} \int_{X\times Y}\Big|\ell(h(x),y) - \ell'(h(x),y)\Big| \,\eta(dx\times dy).\]
\end{tcolorbox}

\subsection{The Risk Distance}

The following is a pseudometric that satisfies Conditions 1 and 2.
\begin{definition}\label{def:metric}
    Let $P$ and $P'$ be problems. For any coupling $\gamma \in \Pi(\eta,\eta')$ and correspondence $R\in \cC(H,H')$, define the \emph{risk distortion} of $\gamma$ and $R$ to be
    \[\dis_{P,P'}(R,\gamma):= \sup_{(h,h')\in R}
    \int \big|\ell(h(x),y) - \ell'(h'(x'),y')\big|
    \, \gamma(dx{\times} dy {\times} dx' {\times} dy').\]
    The \emph{Risk distance} between $P$ and $P'$ is then given by
    \[\dexp (P, P'):=\inf_{\substack{\gamma\in \Pi(\eta,\eta')\\R\in \cC(H,H')}} \dis_{P,P'}(R,\gamma).\]
\end{definition}

The risk distortion (and subsequently the Risk distance) is so named because of its similarity to the risk of a predictor.
If one believes that the most salient numerical descriptor of a predictor is the loss that it incurs, then it is natural to describe a predictor by its loss on an average observation. It is similarly natural to compare predictors by comparing their respective losses for an average observation.

The idea behind the Risk distance is that we should consider two problems $P$ and $P'$ to be similar if we can find a correspondence between the predictor sets and a coupling between the underlying spaces such that pairs of corresponding predictors \emph{incur similar losses} on corresponding observations.

Notably, the Risk distortion and Risk distance are not invariant to scaling and shifting of the loss functions. While we often think of loss substitutions such as $\ell(y,y')\mapsto a\ell(y,y') + b$ for $a,b\in \mathbb R$, $a>0$ as immaterial modifications in supervised learning, such changes do, in some sense, affect the problem. For instance, such a substitution will change the Bayes risk of the problem, so any notion of distance which controls the Bayes Risk (see Theorem~\ref{thm:loss-control} below) cannot be invariant to such modifications. In order to use the Risk distance on specific problems, one must select loss functions which are meaningful in their specific context. For instance, one may elect to ``normalize'' their loss functions by subtracting the Bayes risk $\cB(P)$ or the minimal possible loss $\min_{y,y'\in Y}\ell(y,y')$, or by scaling the loss to have meaningful units such as bits or nats. Our formalism supports such decisions.

By design, the Risk distance is not a proper metric, since Condition 1 implies that distinct problems could be distance zero apart. This is desirable behavior. Reformulating a problem in an equivalent way as in Example~\ref{ex:linear-regression-formulations} or adding extraneous labels as in Example~\ref{ex:extra-label} should not move a problem at all under the Risk distance since such modifications do not meaningfully change the problem.

\begin{theorem}\label{thm:dexp-pseudometric}
    The function $\dexp$ is a pseudometric on the collection $\mathcal{P}$ of all problems. Furthermore, $\dexp(P,P')=0$ whenever $P$ and $P'$ are weakly isomorphic.
\end{theorem}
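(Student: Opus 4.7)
The plan is to verify the three pseudometric axioms and then deduce the weak-isomorphism statement from the special case of simulation. Reflexivity follows by plugging in the diagonal coupling $(\Delta_{X \times Y})_\sharp \eta$ and the diagonal correspondence $\{(h,h) : h \in H\}$, which makes the integrand identically zero; symmetry is immediate from the absolute value in the definition of $\dis$ together with the obvious involutions on $\Pi(\eta,\eta')$ and $\cC(H,H')$ that swap the two factors.

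The triangle inequality is the main technical step. Given arbitrary $\gamma \in \Pi(\eta, \eta')$, $\gamma' \in \Pi(\eta', \eta'')$, $R \in \cC(H,H')$, and $R' \in \cC(H',H'')$, I use the Gluing Lemma (Proposition~\ref{lem:gluing}) on the Polish spaces $X{\times}Y$, $X'{\times}Y'$, $X''{\times}Y''$ to produce a measure $\widetilde\gamma$ on the triple product whose bivariate marginals on the first two and last two factors are $\gamma$ and $\gamma'$, and the Gluing Lemma for Correspondences (Proposition~\ref{lem:gluing-correspondences}) to produce $\widetilde R \subseteq H \times H' \times H''$ whose bivariate projections are $R$ and $R'$. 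Setting $R'' := \pi_{1,3}(\widetilde R) \in \cC(H, H'')$ and letting $\gamma''$ be the marginal of $\widetilde\gamma$ on the first and third factors, for each $(h, h'') \in R''$ I pick some witness $h' \in H'$ with $(h,h',h'') \in \widetilde R$ and apply the pointwise triangle inequality
\[
    \big|\ell_h(x,y) - \ell''_{h''}(x'',y'')\big|
    \ \leq\
    \big|\ell_h(x,y) - \ell'_{h'}(x',y')\big|
    + \big|\ell'_{h'}(x',y') - \ell''_{h''}(x'',y'')\big|
\]
inside the integral against $\widetilde\gamma$. Each summand depends on only four of the six coordinates, so the marginal conditions on $\widetilde\gamma$ reduce the two integrals to integrals against $\gamma$ and $\gamma'$ respectively, each bounded by its corresponding risk distortion. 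This yields $\dis_{P,P''}(R'', \gamma'') \leq \dis_{P,P'}(R, \gamma) + \dis_{P',P''}(R', \gamma')$, and taking infima gives the triangle inequality.

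For the second claim, it suffices by symmetry and triangle inequality to show that $\dexp(P, \widetilde P) = 0$ whenever $\widetilde P \rightsquigarrow P$; applying this to both legs of the simulation diagram from Definition~\ref{def:weak-iso} then yields $\dexp(P, P') = 0$ for weakly isomorphic $P, P'$. Given simulation witnesses $f_1: \widetilde X \to X$ and $f_2: \widetilde Y \to Y$, define $\gamma \in \Pi(\eta, \widetilde\eta)$ as the pushforward of $\widetilde\eta$ under the graph map $(\widetilde x, \widetilde y) \mapsto ((f_1(\widetilde x), f_2(\widetilde y)), (\widetilde x, \widetilde y))$; the marginal conditions follow from $(f_1 \times f_2)_\sharp \widetilde\eta = \eta$. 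Using condition~2 of the simulation definition, pick for each $h \in H$ some $\sigma(h) \in \widetilde H$ with $\widetilde\ell_{\sigma(h)} = \ell_h \circ (f_1 \times f_2)$ holding $\widetilde\eta$-almost everywhere, and analogously some $\tau(\widetilde h) \in H$ for each $\widetilde h \in \widetilde H$. The union
\[
    R := \{(h, \sigma(h)) : h \in H\} \cup \{(\tau(\widetilde h), \widetilde h) : \widetilde h \in \widetilde H\}
\]
is a correspondence in $\cC(H, \widetilde H)$ along which the integrand in $\dis_{P, \widetilde P}(R, \gamma)$ vanishes $\gamma$-almost everywhere, so $\dexp(P, \widetilde P) = 0$.

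The hardest part is the bookkeeping in the triangle inequality, especially the need to select an intermediate witness $h'$ for each $(h, h'') \in R''$; this step is harmless since no measurable selection is required, as the outer supremum in $\dis$ is taken pointwise over the correspondence.
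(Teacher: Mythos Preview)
Your proposal is correct and follows essentially the same approach as the paper: the triangle inequality is obtained by gluing couplings and correspondences and applying the pointwise triangle inequality inside the integral, and the weak-isomorphism claim is reduced via the triangle inequality to the simulation case, handled with the graph coupling of $f_1\times f_2$ and a correspondence built from the simulation data. The only cosmetic difference is that the paper packages the simulation $\Rightarrow\dexp=0$ step inside its characterization theorem (Theorem~\ref{thm:characterization}, Condition~1), whereas you prove it inline; your explicit construction of $R$ as the union of two graph-like sets is equivalent to the paper's ``all pairs making the diagram commute.''
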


That $\dexp$ is pseudometric entails that $\dexp$ vanishes on the diagonal, that is $\dexp(P,P)=0$ for all problems $P$, satisfies symmetry, that is $\dexp(P,P')=\dexp(P',P)$ for all problems $P$ and $P'$, and satisfies the triangle inequality: $\dexp(P,P'')\leq \dexp(P,P')+\dexp(P',P'')$ for all problems $P,P'$ and $P''$. 

That $\dexp$ satisfies the triangle inequality is nontrivial to prove, and the proof is relegated to Appendix~\ref{appendix:proofs}. Not only does $\dexp$ satisfy Conditions 1 and 2, it is characterized by them in the sense that it is \emph{uniquely determined by these conditions}.
\begin{theorem}\label{thm:characterization}
    The Risk distance satisfies Conditions~1 and 2. That is,
    \begin{enumerate}
        \item If $P'$ is a simulation of $P$, then $\dexp(P,P')=0$.
        \item For any problems $P=(X,Y,\eta,\ell,H)$ and $P'=(X,Y,\eta,\ell',H)$ which differ only in their loss functions, we have \[\dexp(P,P') \leq \sup_{h\in H} \int_{X\times Y}\Big|\ell(h(x),y) - \ell'(h(x),y)\Big| \,\eta(dx\times dy).\]
    \end{enumerate}
    Furthermore, the Risk distance is the largest pseudometric simultaneously satisfying Condition~1 and Condition~2.
\end{theorem}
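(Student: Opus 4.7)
The plan is to dispatch the theorem in three stages: verify Condition~1 by constructing an explicit coupling and correspondence, verify Condition~2 using diagonal constructions, and then prove maximality via an ``intermediate problem'' argument.

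For Condition~1, suppose $P'\rightsquigarrow P$ is witnessed by measurable maps $f_1:X'\to X$ and $f_2:Y'\to Y$. I would take $\gamma\in\Pi(\eta,\eta')$ to be the pushforward of $\eta'$ under the graph map $(x',y')\mapsto(f_1(x'),f_2(y'),x',y')$: the $X\times Y$-marginal is $(f_1\times f_2)_\sharp\eta'=\eta$ by the pushforward condition, and the $X'\times Y'$-marginal is $\eta'$ tautologically. Define $R:=\{(h,h')\in H\times H' : \ell_h\circ(f_1\times f_2)=\ell'_{h'}\ \eta'\text{-a.e.}\}$; this is a correspondence precisely by the two-sided second clause of the simulation definition. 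For any $(h,h')\in R$, the integrand $|\ell(h(x),y)-\ell'(h'(x'),y')|$ vanishes $\gamma$-a.e., so $\dis_{P,P'}(R,\gamma)=0$ and hence $\dexp(P,P')=0$.

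For Condition~2, we are given $P=(X,Y,\eta,\ell,H)$ and $P'=(X,Y,\eta,\ell',H)$. Take $\gamma:=(\Delta_{X\times Y})_\sharp\eta$ to be the diagonal coupling and $R$ to be the diagonal correspondence on $H$. Then $\dis_{P,P'}(R,\gamma)$ reduces to $\sup_{h\in H}\int|\ell(h(x),y)-\ell'(h(x),y)|\,\eta(dx\times dy)$, yielding the required bound.

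For maximality, let $d$ be any pseudometric on problems satisfying Conditions~1 and~2. Fix $P,P'$ and arbitrary $\gamma\in\Pi(\eta,\eta')$, $R\in\cC(H,H')$. I construct two mediating problems $\widetilde P_1$ and $\widetilde P_2$ that share input space $X\times X'$, response space $Y\times Y'$, joint law $\gamma$, and predictor set $R$ (each $(h,h')\in R$ acting as the map $(x,x')\mapsto(h(x),h'(x'))$), but with distinct loss functions
\[\widetilde\ell_1\bigl((y_1,y_1'),(y_2,y_2')\bigr):=\ell(y_1,y_2), \qquad \widetilde\ell_2\bigl((y_1,y_1'),(y_2,y_2')\bigr):=\ell'(y_1',y_2').\]
Using the coordinate projections $\pi_1:X\times X'\to X$ and $\pi_1:Y\times Y'\to Y$, together with the map $(h,h')\mapsto h$ on predictors, I would verify $\widetilde P_1\rightsquigarrow P$: the marginal condition holds because $\gamma\in\Pi(\eta,\eta')$, and the loss diagram commutes \emph{exactly} (not merely a.e.) by construction of $\widetilde\ell_1$. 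Symmetrically $\widetilde P_2\rightsquigarrow P'$. Condition~1 gives $d(P,\widetilde P_1)=d(\widetilde P_2,P')=0$, and Condition~2 applied to $\widetilde P_1$ and $\widetilde P_2$ (which share all components except loss) gives $d(\widetilde P_1,\widetilde P_2)\leq\dis_{P,P'}(R,\gamma)$. The triangle inequality for $d$ yields $d(P,P')\leq\dis_{P,P'}(R,\gamma)$, and infimizing over $R$ and $\gamma$ gives $d(P,P')\leq\dexp(P,P')$.

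The main obstacle is the mediating construction in the maximality step: one must check not only the simulation conditions but also that each $\widetilde P_i$ is a genuine supervised learning problem, i.e.\ that every predictor in $R$ has finite expected loss under $\widetilde\ell_i$ and $\gamma$. This reduces via the marginal identities to the finite-risk condition for the corresponding $h\in H$ or $h'\in H'$, so it holds for free. The triangle inequality for $\dexp$ itself, while nontrivial, is assumed from Theorem~\ref{thm:dexp-pseudometric} and is not used here directly; only the triangle inequality for the generic pseudometric $d$ is invoked.
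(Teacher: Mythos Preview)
Your proposal is correct and follows essentially the same three-part strategy as the paper: the coupling induced by the simulation maps together with the correspondence of loss-matching predictor pairs for Condition~1, the diagonal coupling and correspondence for Condition~2, and the pair of mediating ``product'' problems $\widetilde P_1,\widetilde P_2$ (the paper's $P'_1,P'_2$) for maximality. Your handling of Condition~1 is in fact slightly cleaner than the paper's, since you define $R$ directly via the almost-everywhere loss equality from the simulation definition rather than via a pointwise commutativity condition on predictors, and you also explicitly verify the finite-risk requirement for the mediating problems, which the paper leaves tacit.
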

Note that the claim from Theorem \ref{thm:dexp-pseudometric} that $\dexp(P,P')=0$ whenever $P$
 and $P'$ are weakly isomorphic can now be obtained from Theorem \ref{thm:characterization} as follows. By Definition \ref{def:weak-iso}, there exists a problem  $P''$ which is is joint simulation of both $P$ and $P'$. The claim can now be obtained through  the facts that $\dexp$ satisfies both the triangle inequality and  Condition 1 so that
$$\dexp(P,P')
 \leq \dexp(P,P'')+\dexp(P',P'') = 0. $$

The statement of Theorem~\ref{thm:characterization} implicitly asserts uniqueness of the largest such pseudometric. Indeed, if $d_1$ and $d_2$ were two different pseudometrics satisfying Conditions~1 and 2, their pointwise maximum $d':=\max(d_1,d_2)$ would also satisfy Conditions 1 and 2, would be a pseudometric, but would be larger than $d_1$ and $d_2$.\footnote{This would follow from the assumption that there exist $P,P'\in\mathcal{P}$ such that $d_1(P,P')\neq d_2(P,P').$} The proof of this theorem can also be found in Appendix~\ref{appendix:proofs}. 

While Theorem~\ref{thm:dexp-pseudometric} states that weakly isomorphic problems are distance zero apart, we can say more; weak isomorphism is equivalent to the existence of a coupling and correspondence between the problems with zero risk distortion.
\begin{proposition}\label{prop:weak-iso-optimal}
    Let $P$ and $P'$ be problems. Then $P\cong P'$ if and only if there exist a correspondence $R\in \cC(H,H')$ and a coupling $\gamma \in \Pi(\eta,\eta')$ such that \[\dis_{P,P'}(R,\gamma)=0.\]
\end{proposition}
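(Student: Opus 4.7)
The plan is to prove each direction by an explicit construction.

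For the forward direction ($\Rightarrow$), suppose $P$ and $P'$ are weakly isomorphic, so there is a problem $\widetilde P = (\widetilde X, \widetilde Y, \widetilde \eta, \widetilde \ell, \widetilde H)$ simultaneously simulating $P$ via measurable maps $f_1 : \widetilde X \to X$ and $f_2 : \widetilde Y \to Y$, and simulating $P'$ via $g_1 : \widetilde X \to X'$ and $g_2 : \widetilde Y \to Y'$. I would form the measurable map $\Phi : \widetilde X \times \widetilde Y \to X \times Y \times X' \times Y'$ by $\Phi(\widetilde x, \widetilde y) := (f_1(\widetilde x), f_2(\widetilde y), g_1(\widetilde x), g_2(\widetilde y))$, and set $\gamma := \Phi_\sharp \widetilde \eta$. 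The pushforward conditions on the two simulations give that the $(X,Y)$-marginal of $\gamma$ is $\eta$ and its $(X',Y')$-marginal is $\eta'$, so $\gamma \in \Pi(\eta, \eta')$. Next I would define $R \subseteq H \times H'$ to consist of pairs $(h, h')$ that admit a common witness $\widetilde h \in \widetilde H$ for which both $\widetilde \ell_{\widetilde h} = \ell_h \circ (f_1 \times f_2)$ and $\widetilde \ell_{\widetilde h} = \ell'_{h'} \circ (g_1 \times g_2)$ hold $\widetilde \eta$-a.e. The correspondence property of $R$ follows by chaining the two simulation conditions: given $h \in H$, the simulation $\widetilde P \rightsquigarrow P$ yields a matching $\widetilde h$, and then $\widetilde P \rightsquigarrow P'$ yields a matching $h'$; the case starting from $h' \in H'$ is symmetric. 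For each $(h, h') \in R$ with witness $\widetilde h$, a change of variables through $\Phi$ yields
\[\int \big|\ell_h(x,y) - \ell'_{h'}(x',y')\big|\, \gamma = \int \big|\widetilde \ell_{\widetilde h} - \widetilde \ell_{\widetilde h}\big|\, \widetilde \eta = 0,\]
so $\dis_{P,P'}(R, \gamma) = 0$.

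For the reverse direction ($\Leftarrow$), suppose $R \in \cC(H, H')$ and $\gamma \in \Pi(\eta, \eta')$ satisfy $\dis_{P,P'}(R, \gamma) = 0$. I would construct a joint simulation $\widetilde P$ by setting $\widetilde X := X \times X'$ and $\widetilde Y := Y \times Y'$ (Polish as products of Polish spaces), taking $\widetilde \eta$ to be $\gamma$ with coordinates rearranged to live on $\widetilde X \times \widetilde Y$, defining $\widetilde \ell((y_1, y'_1), (y_2, y'_2)) := \ell(y_1, y_2)$, and setting $\widetilde H := \{\widetilde h_{(h,h')} : (h,h') \in R\}$ with $\widetilde h_{(h,h')}(x, x') := (h(x), h'(x'))$. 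The simulation maps for $\widetilde P \rightsquigarrow P$ and $\widetilde P \rightsquigarrow P'$ are the obvious coordinate projections, and the marginal condition on $\gamma$ supplies the required pushforward identities. The simulation $\widetilde P \rightsquigarrow P$ works on the nose because $\widetilde \ell_{\widetilde h_{(h,h')}} = \ell_h \circ (f_1 \times f_2)$ holds by construction, and the correspondence property of $R$ handles both the ``for every $h$, some $\widetilde h$'' and ``for every $\widetilde h$, some $h$'' clauses. The simulation $\widetilde P \rightsquigarrow P'$ is where the zero-distortion hypothesis is decisive: it requires $\ell(h(x), y) = \ell'(h'(x'), y')$ to hold $\widetilde \eta$-a.e., which is exactly the content of $\dis_{P, P'}(R, \gamma) = 0$ for each $(h, h') \in R$.

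The main obstacle I expect is bookkeeping rather than conceptual difficulty: cleanly handling the relabeling between $(X \times Y) \times (X' \times Y')$ and $\widetilde X \times \widetilde Y$, checking finite expected loss for each $\widetilde h_{(h,h')}$ (immediate since $\int \widetilde \ell_{\widetilde h_{(h,h')}}\, d\widetilde \eta = \int \ell_h\, d\eta < \infty$), and making sure both clauses of Definition~\ref{def:weak-iso}'s simulation condition are visibly satisfied in the forward direction. No measurable selection is needed, since the definition of simulation only asserts existence of matching predictors, not any measurable dependence on $h$ or $\widetilde h$.
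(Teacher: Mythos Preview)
Your proposal is correct and takes essentially the same approach as the paper: in the forward direction you push $\widetilde\eta$ through the product of the simulation maps to get $\gamma$ and build $R$ from common witnesses in $\widetilde H$, and in the reverse direction you construct the joint simulation on $X\times X'$ and $Y\times Y'$ with loss $\ell$ and predictor set $\{h\times h' : (h,h')\in R\}$, exactly as the paper does. Your definition of $R$ in the forward direction (all pairs $(h,h')$ admitting a common witness $\widetilde h$) is in fact slightly cleaner than the paper's, which defines $R$ as the image of a chosen pair of functions $\phi,\psi:\widetilde H\to H,H'$ and asserts without further comment that this is a correspondence; your version makes the correspondence property transparent.
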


The proof is in Appendix~\ref{appendix:proofs}. Note that if $\dis_{P,P'}(R,\gamma)=0$ then $\dexp(P,P')=0$ by definition of the Risk distance, but the converse does not necessarily hold; it is possible that the infimal risk distortion is not achieved by any particular coupling and correspondence. Hence weak isomorphism between problems $P$ and $P'$ is generally stronger than saying $\dexp(P,P')=0$. We will discuss the existence of such ``optimal'' couplings and correspondences in Section~\ref{subsec:optimal-couplings}, where we will prove that, under certain conditions, weak isomorphism is equivalent to identification under the Risk distance. While the statement (Corollary~\ref{cor:compact-weak-isomorphism-equivalence}) is in terms of definitions we have not yet established, it is equivalent to the following:
\begin{proposition}\label{prop:drisk-zero-compact}
    Let $P = (X,Y,\eta,\ell,H)$ and $P' = (X',Y',\eta',\ell',H')$ be problems. Suppose that for all $h\in H$, $h'\in H'$, the functions $\ell_h$ and $\ell'_h$ are continuous almost everywhere (with respect to $\eta$ and $\eta'$ respectively), and the sets of functions
    \[\set{\ell_h}{h\in H}\subseteq L^1(\eta)\quad \text{and} \quad \set{\ell'_{h'}}{h'\in H'}\subseteq L^1(\eta')\] are compact under their respective $L^1$ norms. Then $\dexp(P,P')=0$ if and only if $P \cong P'$.
\end{proposition}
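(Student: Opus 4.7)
The forward implication is immediate from Theorem~\ref{thm:dexp-pseudometric}, so the substantive task is to show that $\dexp(P,P') = 0$ implies $P \cong P'$. My plan is to construct an explicit coupling $\gamma$ and correspondence $R$ achieving zero risk distortion, and then invoke Proposition~\ref{prop:weak-iso-optimal}.

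I start by taking a minimizing sequence $(R_n, \gamma_n)$ with $\dis_{P,P'}(R_n, \gamma_n) \to 0$. Because $\eta$ and $\eta'$ are Borel probability measures on Polish spaces, they are tight, whence $\Pi(\eta,\eta')$ is both tight (controlling $\gamma(K^c \times (X'{\times}Y')) + \gamma((X{\times}Y) \times (K')^c)$ via the marginals) and weakly closed (the marginal constraints pass through weak limits by the continuous mapping theorem). Prokhorov therefore yields a subsequence with $\gamma_n \to \gamma$ weakly for some $\gamma \in \Pi(\eta,\eta')$. I then declare the candidate correspondence to be
$$R := \left\{(h, h') \in H \times H' \;:\; \int \big|\ell_h(x,y) - \ell'_{h'}(x',y')\big|\, \gamma(dx{\times}dy{\times}dx'{\times}dy') = 0\right\},$$
which trivially achieves $\dis_{P,P'}(R, \gamma) = 0$. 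All that remains is to verify that $R$ is a correspondence.

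Fix $h \in H$. Since each $R_n$ is a correspondence, I can pick $h'_n \in H'$ with $(h, h'_n) \in R_n$. The $L^1(\eta')$-compactness of $\{\ell'_{h'} : h' \in H'\}$ provides a further subsequence along which $\ell'_{h'_n} \to \ell'_{h'_*}$ in $L^1(\eta')$ for some $h'_* \in H'$. By the triangle inequality in $L^1(\gamma_n)$, using that $\gamma_n$ has $\eta'$-marginal $\eta'$,
$$\int \big|\ell_h - \ell'_{h'_*}\big|\, d\gamma_n \;\leq\; \dis_{P,P'}(R_n, \gamma_n) + \|\ell'_{h'_n} - \ell'_{h'_*}\|_{L^1(\eta')} \;\longrightarrow\; 0.$$
Setting $F := |\ell_h - \ell'_{h'_*}|$, it therefore suffices to establish $\int F\, d\gamma_n \to \int F\, d\gamma$ along this subsequence, since then $\int F\, d\gamma = 0$ and $(h, h'_*) \in R$.

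This last convergence is where both hypotheses come into play. Since $\ell_h$ is $\eta$-a.e.\ continuous and $\eta$ is the $X{\times}Y$-marginal of $\gamma$, the lifted function $\ell_h(x,y)$ is $\gamma$-a.e.\ continuous on $X{\times}Y{\times}X'{\times}Y'$; the same holds for $\ell'_{h'_*}$, so $F$ and its truncation $F_M := F \wedge M$ are $\gamma$-a.e.\ continuous. By the Portmanteau theorem for bounded a.e.-continuous integrands, $\int F_M\, d\gamma_n \to \int F_M\, d\gamma$. The tail is controlled uniformly in $n$ by
$$\int (F - F_M)\, d\gamma_n \;\leq\; 2 \int \ell_h \cdot 1_{\ell_h > M/2}\, d\eta + 2\int \ell'_{h'_*} \cdot 1_{\ell'_{h'_*} > M/2}\, d\eta',$$
which vanishes as $M \to \infty$ by integrability of the fixed functions $\ell_h$ and $\ell'_{h'_*}$; the same bound applies with $\gamma$ in place of $\gamma_n$. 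A standard $3\epsilon$ argument closes the loop and yields $\int F\, d\gamma = 0$. A symmetric argument—swapping the roles of $H$ and $H'$ and using compactness of $\{\ell_h\}$ in $L^1(\eta)$—provides, for every $h' \in H'$, a partner $h \in H$ with $(h,h') \in R$. Hence $R$ is a correspondence, and Proposition~\ref{prop:weak-iso-optimal} gives $P \cong P'$. The principal obstacle is reconciling weak convergence of couplings with $L^1$ convergence of loss functions on different measure spaces; the continuity hypothesis enables Portmanteau, while $L^1$-compactness is exactly what produces subsequential limits for the loss functions—both are indispensable.
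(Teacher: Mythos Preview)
Your argument is correct. Both your proof and the paper's reduce to producing an optimal pair $(R,\gamma)$ with zero distortion and then invoking Proposition~\ref{prop:weak-iso-optimal}, and both rely on the same two ingredients---weak compactness of $\Pi(\eta,\eta')$, and the a.e.\ continuity hypothesis to pass integrals of loss differences through weak limits via Portmanteau plus truncation. The packaging, however, differs. The paper equips $\cC(H,H')$ with the Hausdorff metric induced by $d_{\ell,\eta}\times d_{\ell',\eta'}$, proves that $\dis_{P,P'}$ is lower semicontinuous on $\cC_c(H,H')\times\Pi(\eta,\eta')$ (Lemmas~\ref{lem:distortion-continuous} and~\ref{lem:full-distortion-continuous}), and uses Blaschke's theorem for compactness of the correspondence factor; this yields the stronger Theorem~\ref{thm:infimum-achieved} (optimal pairs exist for \emph{any} such $P,P'$, not just those at distance zero), of which the present statement is a corollary. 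You instead bypass the Hausdorff machinery entirely: after fixing a weak limit $\gamma$, you \emph{define} $R$ as the zero set of $(h,h')\mapsto\int|\ell_h-\ell'_{h'}|\,d\gamma$ and verify it is a correspondence by extracting $L^1$-subsequential limits of loss functions one $h$ at a time. This is more elementary and self-contained, but is specifically tailored to the $\dexp=0$ case; it does not directly give existence of minimizers when the infimal distortion is positive. Your truncation step is essentially the analytic content of the paper's Lemma~\ref{lem:distortion-continuous} unwound in place.
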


The continuity condition is satisfied by a typical regression problem, such as Example~\ref{ex:linear-regression}, since the predictors and loss for such a problem tend to be continuous. A classification problem can satisfy the continuity condition as well under some assumptions on $\eta$. In particular, we need only assume that a random observation falls on a decision boundary with probability zero. We will discuss the continuity condition in more detail in Section~\ref{subsec:optimal-couplings}.

The compactness condition is implied if one assumes that a problem's predictor set $H$ is parameterized by some compact parameter space $\Theta \to H$ and that the composition $\Theta \to H\to L^1(\eta)$ is continuous.
Consider, for example, the problem $P = (\bbR, \{0,1\}, \eta, \ell, H)$ where $\ell$ is the 0-1 loss and $H$ is the set of indicators $1_{(a,\infty)}$ for $a$ in the compact interval $[-\infty,\infty]$. That is, $P$ is a binary classification problem for which the predictors are given by hard cutoff values. Under modest assumptions on $\eta$ (namely that its first marginal is absolutely continuous with respect to the Lebesgue measure), the composition $[-\infty,\infty] \to H\to L^1(\eta)$ is continuous, and hence $P$ satisfies the compactness condition. Under the same assumptions, we also have that each $\ell_h$ is $\eta$-almost everywhere continuous. Hence $P$ satisfies both conditions of Proposition~\ref{prop:drisk-zero-compact}.

While classification problems are more likely to satisfy the compactness condition of Proposition~\ref{prop:drisk-zero-compact}, regression problems can be made to do so as well. Consider the linear regression problem in Example~\ref{ex:linear-regression}. The predictor set for that problem is parameterized by the non-compact space $\bbR^{n+1}$. However, it can be made compact with regularization. Indeed, regularized regression methods like ridge regression and lasso can be seen as restrictions of the parameter space to a compact region $A\subset \bbR^{n+1}$ \citep[Sec 6.2]{james_introduction_2021}. Linear regression problems with such restrictions will satisfy both the continuity and compactness conditions.

Adding a constant $\alpha>0$ to the loss function of a problem moves it a distance of $\alpha$ under the Risk distance. That is, if $P$ is a problem with loss function $\ell$, and $P'$ is the same problem but with loss function $\ell+\alpha$, then $\dexp(P,P')=\alpha$. This may appear to be undesirable behavior, since the losses $\ell$ and $\ell+\alpha$ are practically identical from an optimization point of view. However, adding the constant $\alpha$ to the loss function also increases the constrained Bayes risk of the problem by $\alpha$. From this perspective, it is reasonable to expect the distance between the two problems to be $\alpha$.

\begin{examples}\label{ex:bayes-one-point}
    Let $P$ be an arbitrary problem and $P_\bullet = P_\bullet(0)$ be the one-point problem of Example~\ref{ex:one-point-problem}. There is only one correspondence in $\cC(H,\{\id_\bullet\})$ and one coupling in $\Pi(\eta,\delta_{(\bullet,\bullet)})$, so one can compute
    \begin{align*}
        \dexp(P,P_\bullet) & =
        \dis_{P,P_\bullet}\left(H \times \{\id_{\{\bullet\}}\}, \eta \otimes \delta_{(\bullet,\bullet)}\right)
        \\ & =
        \sup_{h\in H}\int \ell(h(x),y) \, \eta(dx\times dy).
    \end{align*}
    Note the similarity with the constrained Bayes risk $\mathcal B(P)$. While $\mathcal B(P)$ is equal to the risk of an \emph{optimal} predictor, the distance $\dexp(P,P_\bullet)$ is equal to the risk of a \emph{pessimal} predictor.

    If we additionally assume that the loss $\ell$ is bounded above by a constant $\ell_{\max}$, one can similarly compute the distance from $P$ to the one-point problem $P_\bullet(\ell_{\max})$ to be
    \begin{align*}
        \dexp(P,P_\bullet(\ell_{\max})) & =
        \sup_{h\in H}\int \big(\ell_{\max} - \ell(h(x),y) \big) \, \eta(dx\times dy)
        \\ & =
        \ell_{\max} - \inf_{h\in H}\int \ell(h(x),y) \, \eta(dx\times dy)
        \\ & = \ell_{\max} - \cB(P).
    \end{align*}
    Hence $\cB(P) = \ell_{\max} - \dexp(P,P(\ell_{\max}))$, showing that the constrained Bayes risk can be written in terms of the Risk distance if the loss is bounded.
\end{examples}

\begin{remark}
The Risk distance is not designed to be explicitly computed for arbitrary pairs of problems. Much like the Gromov-Hausdorff distance, its usefulness lies in the stability results which it facilitates. However, for completeness, we briefly remark on computational considerations. Given that the computation of the structurally similar but seemingly simpler Gromov-Hausdorff distance is known to be NP-hard \citep{agarwal_computing_2015,schmiedl_computational_2017,memoli_gromov-hausdorff_2021}, 
we expect that estimation of the exact Risk distance for problems with finite input and output spaces is also NP-hard in general.  In this direction, we will soon prove, as Corollary~\ref{cor:gi-hard}, that deciding whether $\dexp(P,P')=0$ for arbitrary problems $P$ and $P'$ with finite input and output spaces is at least as hard as graph isomorphism. 
\end{remark}

In later sections, we will also explore two variants of the Risk distance that are constructed by tweaking the treatment of the correspondences $\cC(H,H')$ in Definition~\ref{def:metric}. First, by endowing the predictor sets $H$ and $H'$ with probability measures $\lambda$ and $\lambda'$, one can replace $\cC(H,H')$ with $\Pi(\lambda,\lambda')$ and the supremum over $R$ with an $L^p$-style integral over a coupling. This gives rise to the \emph{$L^p$-Risk distance} introduced in Section~\ref{sec:probabilistic}. Alternatively, by restricting the set of possible correspondences $\cC(H,H')$ to the subset $\con(H,H')$ consisting only of those satisfying a certain topological condition, we arrive at the \emph{Connected Risk distance} of Section~\ref{sec:topological}.

\subsection{Connections to the Gromov-Wasserstein Distance}\label{subsec:OT-connection}
    The Hausdorff, Gromov-Hausdorff, Wasserstein, and Gromov-Wasserstein distances all follow a common paradigm. One first considers the space of all ``alignments'' between the objects in question, whether ``alignment'' means a correspondence between sets or a coupling between measures. One then settles on a way to numerically score the distortion of a correspondence or coupling, whether through a supremum or integral. The distance between the objects is then the infimal possible score. The Risk distance follows the same paradigm. An ``alignment'' of two problems consists of a correspondence between their predictor sets and a coupling between their joint laws. We use the loss function to numerically score the coupling and correspondence, then infimize over all choices.

    Indeed, if we think of a loss function as a notion of distance, then the definition of $\dexp$ most closely resembles that of the Gromov-Wasserstein distance. The only extra component is a supremum over the predictor correspondence. To make the comparison more explicit, suppose $(X,d_X,\mu_X)$ and $(X',d_{X'},\mu_{X'})$ are metric measure spaces. We can construct the following problems:
    \begin{align*}
    P & := (X,X,\mu_X\otimes \mu_X, d_X, \{\id_X\})\\
    P' & := (X',X',\mu_{X'}\otimes \mu_{X'}, d_{X'}, \{\id_{X'}\}).
    \end{align*}
    We then write
    \begin{align*}
    \dexp(P,P') = \inf_{\gamma \in \Pi(\mu_{X}\otimes \mu_{X},\mu_{X'}\otimes \mu_{X'})} \int \big|d_X(x,y) - d_{X'}(x',y')\big|\, \gamma(dx{\times} dy {\times} dx'{\times} dy').
    \end{align*}
    Here $\otimes$ represents the product of measures.
    This is exactly twice the Gromov-Wasserstein distance (Definition~\ref{def:gw}) except for one difference: the measure $\gamma$. The infimum runs over all of
    $\Pi(\mu_{X}\otimes \mu_{X},\mu_{X'}\otimes \mu_{X'})$, while in the true Gromov-Wasserstein distance, $\gamma$ must take the specific form $\gamma = \sigma\otimes \sigma$ for some $\sigma\in \Pi(\mu_X,\mu_{X'})$. Indeed, the expression above (divided by 2) is identified as the ``second lower bound'' of the Gromov-Wasserstein distance by \citet{memoli_GH_Distance}.

    Another connection between the Gromov-Wasserstein and Risk distances can be drawn as well. Specifically, one can encode compact metric measure spaces $X$ as supervised learning problems $i(X)$ in a natural way which preserves the identifications induced by the two pseudometrics.
    \begin{proposition}\label{prop:metric-measure-embedding}
        Given a compact metric measure space $(X,d_X,\mu_X)$, define a problem
        \[i(X) := (X, X, (\Delta_X)_\sharp \mu_X, d_X, H_X),\]
        where
        \begin{itemize}
            \item $\Delta_X:X\to X\times X$ is the diagonal map, and
            \item $H_X$ is the set of all constant functions $X\to X$.
        \end{itemize}
        If $X$ and $X'$ are metric measure spaces, then
        \[\dGWp{1}(X,X') = 0 \iff \dexp(i(X),i(X')) = 0.\]
    \end{proposition}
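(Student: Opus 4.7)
Assume $\dGWp{1}(X, X') = 0$. Since $X, X'$ are compact, $\Pi(\mu_X, \mu_{X'})$ is weakly compact and the Gromov-Wasserstein integrand is weakly continuous, so the infimum is attained by some $\sigma^* \in \Pi(\mu_X, \mu_{X'})$. Continuity of $d_X, d_{X'}$ then upgrades the a.e.\ statement to $d_X(x, y) = d_{X'}(x', y')$ for every $(x, x'), (y, y') \in \supp(\sigma^*)$. Let $\gamma$ be the pushforward of $\sigma^*$ under $(x, x') \mapsto (x, x, x', x')$, which lies in $\Pi((\Delta_X)_\sharp \mu_X, (\Delta_{X'})_\sharp \mu_{X'})$. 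Identifying $H_X$ with $X$ (and $H_{X'}$ with $X'$) by the bijection constant-function $\leftrightarrow$ value, set $R := \supp(\sigma^*)$. The projections $\pi_X(\supp \sigma^*)$ and $\pi_{X'}(\supp \sigma^*)$ are closed (continuous images of a compact set) and contain the supports $\supp \mu_X = X$ and $\supp \mu_{X'} = X'$, so $R$ is a correspondence. The risk distortion $\sup_{(a, a') \in R} \int |d_X(a, x) - d_{X'}(a', x')|\, \sigma^*(dx \times dx')$ then vanishes, giving $\dexp(i(X), i(X')) = 0$.

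\textbf{Backward direction, setup.} First verify that $i(X)$ and $i(X')$ satisfy the hypotheses of Proposition~\ref{prop:drisk-zero-compact}: each loss $\ell_{h}(x, y) = d_X(h(x), y)$ for $h$ constant with value $a$ equals $d_X(a, y)$, which is continuous, and the parametrization $a \mapsto d_X(a, \cdot)$ from $X$ into $L^1((\Delta_X)_\sharp \mu_X)$ is $1$-Lipschitz by the triangle inequality, so its image is compact in $L^1$. Therefore $\dexp(i(X), i(X')) = 0$ implies $i(X) \cong i(X')$, and Proposition~\ref{prop:weak-iso-optimal} yields a correspondence $R \subseteq H_X \times H_{X'}$ (identified with a subset of $X \times X'$) and a coupling $\sigma \in \Pi(\mu_X, \mu_{X'})$ with $g_\sigma(a, a') := \int |d_X(a, x) - d_{X'}(a', x')|\, \sigma(dx \times dx') = 0$ for every $(a, a') \in R$. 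The function $g_\sigma$ is $1$-Lipschitz on $X \times X'$, so $S := g_\sigma^{-1}(0)$ is closed, and since it contains $R$ it is itself a correspondence.

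\textbf{Extracting an isometry.} For any $(a_0, a_0') \in S$, the continuous function $(x, x') \mapsto |d_X(a_0, x) - d_{X'}(a_0', x')|$ is $\sigma$-almost-everywhere zero, hence identically zero on $\supp(\sigma)$. For each $a \in X$ pick some $\psi(a) \in X'$ with $(a, \psi(a)) \in \supp(\sigma)$; this exists because $\pi_X(\supp \sigma) = X$. For any $(a, a_0') \in S$, evaluating the previous observation at $(x, x') = (a, \psi(a))$ yields $0 = d_X(a, a) = d_{X'}(a_0', \psi(a))$, forcing $a_0' = \psi(a)$. Hence every fiber $\{a' : (a, a') \in S\}$ is the singleton $\{\psi(a)\}$; a symmetric argument shows $\phi := \psi$ is a bijection and $S$ equals its graph. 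Since $\supp(\sigma) \subseteq S$, we conclude $\sigma = (\mathrm{id}, \phi)_\sharp \mu_X$, and evaluating the defining property of $S$ at pairs $(a_0, \phi(a_0)), (x, \phi(x)) \in S$ gives $d_X(a_0, x) = d_{X'}(\phi(a_0), \phi(x))$ for all $a_0, x \in X$. Thus $\phi$ is a measure-preserving isometry, and the same $\sigma$ witnesses $\dGWp{1}(X, X') = 0$.

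\textbf{Main obstacle.} The subtle step is the backward direction: the pair $(R, \sigma)$ produced by Proposition~\ref{prop:weak-iso-optimal} only certifies zero distortion along the correspondence $R$, which a priori need not cover $\supp(\sigma)$, so one cannot directly read off a Gromov-Wasserstein-optimal coupling. The resolution is to enlarge $R$ to the closed sublevel set $S = g_\sigma^{-1}(0)$, which remains a correspondence, and to exploit the two special features of $i(X)$---the diagonal joint law and the constant predictors---which make the pinning identity $d_X(a, a) = 0$ available inside the distortion integral and collapse each vertical fiber of $S$ to a point.
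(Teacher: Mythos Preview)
Your proof is correct and follows essentially the same route as the paper. Both directions hinge on the same ``pinning'' idea: from a zero-distortion pair $(R,\sigma)$ one obtains $d_X(a,x)=d_{X'}(a',x')$ for $(a,a')$ in a correspondence and $(x,x')\in\supp(\sigma)$, and then evaluating at $x=a$ collapses fibers to singletons, yielding a measure-preserving isometry.

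The differences are cosmetic. For the forward direction the paper simply invokes the known fact that $\dGWp{1}(X,X')=0$ forces an isomorphism of metric measure spaces, whereas you build the witnessing pair directly from an optimal $\sigma^*$; your version is more self-contained. For the backward direction the paper appeals to Theorem~\ref{thm:infimum-achieved} to obtain an optimal $(R,\gamma)$, then sets $R':=\supp(\sigma)$ and shows $R=R'$ via the pinning trick; you instead route through Propositions~\ref{prop:drisk-zero-compact} and~\ref{prop:weak-iso-optimal} (which is equivalent, since the former is derived from Theorem~\ref{thm:infimum-achieved}) and enlarge $R$ to $S:=g_\sigma^{-1}(0)$ before running the same pinning argument. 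One small point you left implicit: the inclusion $\supp(\sigma)\subseteq S$ is not automatic from $R\subseteq S$, but it follows by reapplying the pinning step once more---for $(x_0,x_0')\in\supp(\sigma)$, use $(x_0,\psi(x_0))\in S$ and evaluate at $(x_0,x_0')$ to get $x_0'=\psi(x_0)$. With that line added, the argument is complete.
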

    The proof can be found in Appendix~\ref{app:conn-dr-dgw} where we also discuss a connection with a variant of the Gromov-Wasserstein distance by \citet[Section 4.1]{hang2019topological} that is recovered as $\dexp(i(X),i(Y))$. 
    It is a folklore result in optimal transport that deciding whether $\dGWp{1}(X,X')=0$ for finite metric measure spaces $X$ and $X'$ is at least as hard as the graph isomorphism problem.
    This is not difficult to prove; let $G = (V,E)$ and $G'=(V',E')$ be finite simple graphs. We can encode $G$ as a metric measure space $V = (V,d_V,\mu_V)$, where $d_V$ is the shortest path metric and $\mu_V$ is the uniform probability measure. Similarly construct $V' = (V',d_{V'},\mu_{V'})$. Then $\dGWp{1}(V,V')=0$ if and only if $G$ and $G'$ are isomorphic. Indeed, if $G$ and $G'$ are isomorphic then the isomorphism induces the requisite coupling between $\mu_V$ and $\mu_{V'}$. Conversely, if $\dGWp{1}(V,V')=0$, then
    $V$ and $V'$ are isomorphic as metric measure spaces, and in particular are isometric as metric spaces. An isometry between $V$ and $V'$ that preserves the shortest path metric is in fact a graph isomorphism.
    
    Hence Proposition~\ref{prop:metric-measure-embedding} yields the following corollary.
    \begin{corollary}\label{cor:gi-hard}
        The problem of deciding whether $\dexp(P,P')=0$ for two problems with finite input and response spaces is at least as hard as graph isomorphism.
    \end{corollary}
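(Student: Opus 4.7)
The plan is straightforward: all the real work has already been done in the paragraph immediately preceding the corollary and in Proposition~\ref{prop:metric-measure-embedding}. I just need to compose the two reductions into a single polynomial-time reduction from graph isomorphism to the decision problem ``is $\dexp(P,P')=0$?'' over problems with finite input and response spaces.

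First, given two finite simple graphs $G=(V,E)$ and $G'=(V',E')$ that we wish to test for isomorphism, I would form the finite metric measure spaces $(V,d_V,\mu_V)$ and $(V',d_{V'},\mu_{V'})$ where $d_V,d_{V'}$ are shortest-path metrics and $\mu_V,\mu_{V'}$ are uniform probability measures. This construction takes time polynomial in $|V|+|V'|$ (e.g.\ via Floyd--Warshall for the metric). By the folklore argument sketched in the text just before the corollary, $G\cong G'$ if and only if $\dGWp{1}(V,V')=0$.

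Second, I would apply the embedding from Proposition~\ref{prop:metric-measure-embedding} to these metric measure spaces, obtaining problems
\[
i(V) = (V,V,(\Delta_V)_\sharp \mu_V, d_V, H_V), \qquad i(V') = (V',V',(\Delta_{V'})_\sharp \mu_{V'}, d_{V'}, H_{V'}),
\]
whose input and response spaces are the finite sets $V$ and $V'$ and whose predictor sets $H_V$, $H_{V'}$ consist of the $|V|$ (respectively $|V'|$) constant functions. All of the data defining these problems is of size polynomial in $|V|+|V'|$. Proposition~\ref{prop:metric-measure-embedding} then gives $\dGWp{1}(V,V')=0 \iff \dexp(i(V),i(V'))=0$.

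Chaining the two equivalences yields $G\cong G' \iff \dexp(i(V),i(V'))=0$, which is exactly the polynomial-time reduction asserted. There is no genuine obstacle; the only minor wrinkle is handling disconnected input graphs, for which the shortest-path ``metric'' takes infinite values. One can avoid this by adjoining a single universal apex vertex to each graph before forming the metric measure space (this preserves isomorphism and keeps the construction polynomial), or simply assume connected inputs, since graph isomorphism restricted to connected graphs is already as hard as the general problem.
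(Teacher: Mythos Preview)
Your proposal is correct and follows essentially the same approach as the paper: chain the folklore reduction from graph isomorphism to ``$\dGWp{1}(V,V')=0$?'' with the equivalence of Proposition~\ref{prop:metric-measure-embedding} to obtain the reduction to ``$\dexp(i(V),i(V'))=0$?'' over finite problems. Your additional remarks on polynomial-time computability and the handling of disconnected inputs are sensible elaborations, but the core argument is identical to the paper's.
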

    
    It is desirable to draw a stronger connection between the Risk distance and Gromov-Wasserstein distance by finding a function $i$ such that $d_{\mathrm{GW},1}(X,X') = \dexp(i(X),i(X'))$. We will nearly accomplish this goal in Section~\ref{subsec:OT-connection-weighted} by linking the $L^p$-Risk distance with a slightly modified Gromov-Wasserstein distance which is known in the literature. 

\section{Stability}\label{sec:stability} 
One major use of the Risk distance is to facilitate stability results.
By ``stability result,'' we mean a result of one of the two following forms. Suppose we modify a problem $P$ via some process to produce another problem $f(P)$. We say that \emph{$\dexp$ is stable under $f$} if every problem $P$ satisfies an inequality of the form
\[\dexp(P,f(P)) \leq J(f)\]
for some fixed functional $J$ which measures how dissimilar $f$ is from the identity function $\mathrm{id}:\mathcal{P}\to \mathcal{P}$ and satisfies $J(\mathrm{id})=0$. We would say that ``the Risk distance is stable under $f$''.
Stability results of this form assure us that small modifications to a problem do not move that problem far under $\dexp$. For instance, Condition~2 from Section~\ref{subsec:motivation} is a stability result. In particular, Condition~2 implies that $\dexp$ is stable under changes to the loss function.

Alternatively, a ``stability result'' could mean the following. Suppose that, from any problem $P$, we can compute a descriptor $g(P)$ which lies in a metric space $(Z,d_Z)$. We say that $g$ is \emph{stable under $\dexp$} if there is some non-negative constant $C$ satisfying
\[d_Z(g(P),g(P')) \leq C \dexp(P,P')\]
for all problems $P$ and $P'$. Stability results of this form act as quantitative guarantees that problems ``close'' under $\dexp$ have similar descriptors.

\subsection{Stability of the Constrained Bayes Risk and Loss Profile}\label{subsec:loss-control}
From a machine learning perspective, the most important feature of a predictor is the loss that it incurs. This information can be captured by a predictor's \emph{loss profile}, a probability measure on the real line describing the loss that a predictor incurs on a random observation. If $P$ is a problem and $h \in H$ is a predictor, the \emph{loss profile of $h$} is the pushforward $(\ell_h)_\sharp \eta \in \prob(\bbR)$.

Note that the loss profile of $h$ is the distribution of the random variable $\ell(h(\mathsf X),\mathsf Y)$, where $\mathsf X$ and $\mathsf Y$ are distributed according to $\eta$. This is closely related to the random variable $\ell(h(\mathsf X),\mathsf Y) - \cB(P)$, called the \emph{excess loss random variable} in the literature, differing only by an additive constant. The distribution of the excess loss is related to convergence rates of empirical risk minimization \citep{van_erven_fast_2015,grunwald_fast_2020}.

Letting $h$ range over all of $H$, we get an informative description of a problem.
\begin{definition}
    Given a problem $P$, the \emph{loss profile set of $P$} is the set
    \[L(P) := \set{(\ell_{h})_\sharp\eta}{h\in H}.\]
\end{definition}

Note that the loss profile $L(P)$ of $P$ contains enough information to recover the constrained Bayes risk. This follows immediately from the definition (we omit the proof).
\begin{proposition}\label{prop:bayes-from-profile}
Given a problem $P$, one has
$$\cB(P) = \inf_{\alpha \in L(P)}\mathrm{mean}(\alpha).$$
\end{proposition}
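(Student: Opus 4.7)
The plan is to unwind both sides using the definition of the loss profile set and the change of variables formula for pushforward measures. The core observation is that every element of $L(P)$ is tagged by some $h \in H$ and that the mean of $(\ell_h)_\sharp \eta$ is precisely $\cR_P(h)$. Once that identification is made, the equality of the two infima is essentially automatic.

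First I would fix $h \in H$ and compute $\mathrm{mean}((\ell_h)_\sharp \eta)$. By the change of variables formula for pushforwards,
\[
\mathrm{mean}\big((\ell_h)_\sharp \eta\big) = \int_{\mathbb R} t \,\big((\ell_h)_\sharp \eta\big)(dt) = \int_{X\times Y} \ell_h(x,y)\, \eta(dx\times dy) = \cR_P(h).
\]
Note that this mean exists because of the integrability condition built into Definition~\ref{def:problem}, which guarantees $\ell_h \in L^1(\eta)$ for every $h \in H$.

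Next, since the map $h \mapsto (\ell_h)_\sharp \eta$ surjects $H$ onto $L(P)$ by the very definition of the loss profile set, taking the infimum of the previous identity over $h \in H$ yields
\[
\inf_{\alpha \in L(P)} \mathrm{mean}(\alpha) = \inf_{h \in H} \mathrm{mean}\big((\ell_h)_\sharp \eta\big) = \inf_{h \in H} \cR_P(h) = \cB(P),
\]
where the last equality is the definition of the constrained Bayes risk.

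There is no real obstacle here; the only subtlety is to ensure the mean is well-defined so that the two infima are taken over the same collection of real numbers. This is handled by the integrability requirement in the definition of a problem, which makes every $\cR_P(h)$ a finite non-negative real. Hence the statement reduces to a one-line change of variables followed by taking infima.
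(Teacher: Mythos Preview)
Your proposal is correct and matches the paper's intent: the paper explicitly says the statement ``follows immediately from the definition (we omit the proof),'' and your change-of-variables computation $\mathrm{mean}((\ell_h)_\sharp\eta)=\cR_P(h)$ followed by taking infima is exactly the immediate argument being alluded to.
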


Since probability measures on the real line, such as loss profiles, can be compared using the Wasserstein metric $\dW$, \emph{collections} of probability measures on $\mathbb R$ form subsets of the metric space $(\prob(\mathbb R), \dW)$, and therefore can be compared using the Hausdorff distance $\dH$ with underlying metric $\dW$. That is, if $A,B \subseteq \prob(\mathbb R)$, the Hausdorff distance between $A$ and $B$ is given by
\begin{align*}
    \dH^{\dW}(A,B) = \inf_{R\in \cC(A,B)} \sup_{(\mu,\nu)\in R} \dW(\mu,\nu).
\end{align*}
Now that we have a way to compare loss profile sets, we can prove their stability under $\dexp$. The theorem below establishes this stability property, and in addition, also proves that the constrained Bayes risk is  stable in the sense of the Risk distance.

\begin{theorem}[Stability of the Constrained Bayes Risk and Loss Profile]\label{thm:loss-control}
    If $P$ and $P'$ are problems, then
    \[|\cB(P)-\cB(P')|\leq \dH^{\dW}(L(P),L(P')) \leq \dexp(P,P')\]
\end{theorem}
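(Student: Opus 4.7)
The plan is to prove the two inequalities separately, first extracting a purely functional-analytic statement for the leftmost bound and then constructing an explicit induced correspondence/coupling for the rightmost bound.

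For the first inequality $|\cB(P)-\cB(P')|\leq \dH^{\dW}(L(P),L(P'))$, I would begin by observing that the mean functional $\mathrm{mean}:(\prob(\bbR),\dW)\to \bbR$ is $1$-Lipschitz. This follows from Kantorovich–Rubinstein duality, since the identity $t\mapsto t$ is a $1$-Lipschitz test function. Combining this with Proposition~\ref{prop:bayes-from-profile}, which writes $\cB(P)=\inf_{\alpha\in L(P)}\mathrm{mean}(\alpha)$, reduces the inequality to a general metric-space lemma: if $f:Z\to\bbR$ is $1$-Lipschitz on a metric space $(Z,d_Z)$ and $A,B\subseteq Z$ are non-empty, then $|\inf_A f-\inf_B f|\leq \dH(A,B)$. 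I would prove this lemma by fixing any $\epsilon>\dH(A,B)$; then for any $a\in A$ there is $b\in B$ with $d_Z(a,b)<\epsilon$, whence $f(b)\leq f(a)+\epsilon$ and thus $\inf_B f\leq\inf_A f+\epsilon$; symmetry and taking $\epsilon\downarrow\dH(A,B)$ close the argument.

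For the second inequality $\dH^{\dW}(L(P),L(P'))\leq \dexp(P,P')$, I would fix an arbitrary coupling $\gamma\in \Pi(\eta,\eta')$ and correspondence $R\in \cC(H,H')$, and exhibit a correspondence on loss profiles whose $\dW$-distortion is bounded by $\dis_{P,P'}(R,\gamma)$. The natural choice is
\[
R' := \{\,((\ell_h)_\sharp\eta,\,(\ell'_{h'})_\sharp\eta')\,:\,(h,h')\in R\,\}\ \subseteq\ L(P)\times L(P'),
\]
which is seen to be a correspondence immediately from the two surjectivity-style conditions defining $R$. The crux is to bound, for each $(h,h')\in R$, the quantity $\dW((\ell_h)_\sharp\eta,(\ell'_{h'})_\sharp\eta')$ by $\int |\ell_h(x,y)-\ell'_{h'}(x',y')|\,\gamma(dx{\times}dy{\times}dx'{\times}dy')$. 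I would achieve this by pushing $\gamma$ forward through the map $\Psi_{h,h'}:(x,y,x',y')\mapsto (\ell_h(x,y),\ell'_{h'}(x',y'))$ into $\prob(\bbR\times\bbR)$; a quick marginal computation shows $(\Psi_{h,h'})_\sharp\gamma\in\Pi((\ell_h)_\sharp\eta,(\ell'_{h'})_\sharp\eta')$, and its cost under $|s-t|$ equals the desired integral, providing an admissible coupling in the definition of $\dW$. Taking $\sup_{(h,h')\in R}$ gives $\sup_{(\alpha,\beta)\in R'}\dW(\alpha,\beta)\leq \dis_{P,P'}(R,\gamma)$, and infimizing over $R'$ on the left and over $(R,\gamma)$ on the right yields the claim.

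The main obstacle is essentially organizational rather than technical: the second inequality requires the right choice of induced correspondence $R'$ on loss-profile sets, together with the realization that a coupling of joint laws automatically pushes forward to a coupling of loss-profile measures via $\Psi_{h,h'}$. Once this conversion between the ``predictor/observation'' level and the ``loss distribution'' level is in hand, the proof reduces to bookkeeping and the standard fact that any valid coupling provides an upper bound for $\dW$. The first inequality is then a painless corollary of the Lipschitz behavior of the mean and a general monotonicity of infima under Hausdorff distance.
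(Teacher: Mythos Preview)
Your proof is correct. The first inequality matches the paper's argument essentially verbatim: both use that $\mathrm{mean}$ is $1$-Lipschitz for $\dW$ and that infima over subsets are $1$-Lipschitz for the Hausdorff distance, combined with Proposition~\ref{prop:bayes-from-profile}.

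For the second inequality you take a genuinely different route from the paper. The paper does not construct an induced correspondence on loss profiles at all; instead it observes that $d(P,P'):=\dH^{\dW}(L(P),L(P'))$ is itself a pseudometric on problems, checks that $d$ satisfies Conditions~1 and~2 of Section~\ref{subsec:motivation}, and then invokes the maximality clause of Theorem~\ref{thm:characterization} to conclude $d\leq\dexp$. Your argument is the direct one: from any $(R,\gamma)$ you push $R$ forward to a correspondence $R'\subseteq L(P)\times L(P')$ and push $\gamma$ forward through $\Psi_{h,h'}$ to a coupling of loss profiles, obtaining $\dH^{\dW}(L(P),L(P'))\leq\dis_{P,P'}(R,\gamma)$ and then infimizing. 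Your approach is more elementary and self-contained, requiring no appeal to the characterization theorem; the paper's approach is shorter once Theorem~\ref{thm:characterization} is available and illustrates that theorem's utility as a general-purpose tool for proving descriptor stability. One small wording quibble: ``infimizing over $R'$ on the left'' is slightly misleading, since $R'$ is determined by $R$; what you are really using is that $R'$ is \emph{one} admissible correspondence in the Hausdorff infimum, so $\dH^{\dW}(L(P),L(P'))\leq\sup_{(\alpha,\beta)\in R'}\dW(\alpha,\beta)$ automatically.
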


The proof, which is relegated to Appendix~\ref{appendix:proofs}, proceeds in two steps. We establish the first inequality using the fomulation of the constrained Bayes risk presented in Proposition~\ref{prop:bayes-from-profile}. To prove the second inequality, we notice that the quantity in the middle defines a pseudometric on the collection of all problems, then apply the maximality claim in Theorem~\ref{thm:characterization} to show that the Risk distance is larger.
By ignoring the middle quantity in Theorem~\ref{thm:loss-control}, we obtain stability the constrained Bayes risk. 
\begin{examples}[Constrained Bayes Risk and Bounded Loss]
Here we remark that if the problems $P$ and $P'$ have bounded loss functions, then the stability of the constrained Bayes risk, namely that 
if $P$ and $P'$ are problems, then
    \[|\cB(P) - \cB(P')| \leq \dexp(P,P'),\]
can be easily and directly obtained through arguments involving the metric properties of the Risk distance. Indeed, if $c>\max(\sup \ell,\sup \ell')$, then,   by Example \ref{ex:bayes-one-point} and the triangle inequality for the Risk distance, we have that
$$c-\cB(P) = \dexp(P,P_\bullet(c)) \leq \dexp(P,P') + \dexp(P',P_\bullet(c)) = \dexp(P,P') + c-\cB(P')$$
so that $\cB(P')-\cB(P)\leq \dexp(P,P').$ By swapping the roles of $P$ and $P'$ we obtain the inequality $\cB(P)-\cB(P')\leq \dexp(P',P)$ which then, together with the fact that $\dexp(P',P) = \dexp(P,P')$,  implies the stability claim. Note that Theorem \ref{thm:loss-control} implies the same claim without assuming the boundedness condition on the loss functions.
\end{examples}

\subsection{Stability of Rademacher Complexity}\label{subsec:rademacher}
In statistical learning theory, the \emph{Rademacher complexity} quantifies the flexibility of a predictor set and controls generalization error \citep{mohri_foundations_2018}. We establish the standard definition of Rademacher complexity of a function class, along with a related version stated in the language of supervised learning problems. In what follows, if $\mu$ is a measure on a measurable space $X$, let $\mu^{\otimes m}$ represent the $m$-fold product measure $\mu\otimes \dots \otimes \mu \in \prob(X^m)$.
\begin{definition}
    Let $m\geq 1$. Given a probability space $(X,\mu)$ and a family $\mathcal F$ of measurable real-valued functions on $X$, the classical \emph{$m$th Rademacher complexity of $\mathcal F$} is given by
    \[R_m(\mathcal F) := \int\int \sup_{f\in \mathcal F} \frac{1}{m} \sum_{i=1}^m \sigma_i f(x_i)
        \, \rad^{\otimes m}(d\sigma)
        \,\mu^{\otimes m}(d\overline x),\]
    where $\rad$ is the Rademacher distribution (that is, the uniform distribution on $\{-1,1\}$), $\sigma = (\sigma_1,\dots, \sigma_m)$, $\overline x = (x_1,\dots, x_m)$.
    Similarly, the \emph{$m$th Rademacher complexity of a problem $P$} is given by
    \[
    R_m(P) := \int\int \sup_{h\in H} \frac{1}{m} \sum_{i=1}^m \sigma_i \ell_h(x_i,y_i)
    \, \rad^{\otimes m}(d\sigma)
    \,\eta^{\otimes m}(d\overline x \times d\overline y).
    \]
    That is, $R_m(P)$ is the $m$th Rademacher complexity of the function class $\{\ell_h | h\in H\}$.
\end{definition} 
The Rademacher complexity of a problem $P$ is high if, given a random sample $((x_i, y_i))_{i=1}^m$ of $m$ observations and a vector $\sigma$ of random $\pm 1$ noise, we can typically find some predictor $h$ whose vector of losses $(\ell_h(x_i,y_i))_{i=1}^m$ is highly correlated with the noise $\sigma$. This would suggest that $H$ is quite flexible, and possibly that empirical risk minimization over $H$ runs the risk of overfitting on a random sample of size $m$.

For a fixed $m$, the Rademacher complexity $R_m$ need not be stable under the Risk distance.
\begin{examples}\label{ex:rademacher-discontinuous}
    For $n\geq 2$, define the problem
    \[P_n = ([n], \{0,1\}, \mathrm{unif}([n]) \otimes \delta_0, \ell, H_n)\]
    where $[n] := \{1,\dots,n\}$, $\ell$ is the 0-1 loss, and $H_n$ is the set of indicator functions of singletons $1_{k}:[n]\to \{0,1\}$ for all $k\in [n]$. Recall also the one-point problem $P_\bullet = P_\bullet(0)$ from Example~\ref{ex:one-point-problem}.

    We claim that $R_1(P_\bullet) = 0$, $R_1(P_n) = 1/2$ for all $n$, but $\dexp(P_n,P_\bullet)\to 0$ as $n\to \infty$. This shows that the Rademacher complexity  $R_1$ is not continuous with respect to the Risk distance.  
    One can easily see that $R_1(P_\bullet)=0$, since the loss function of $P_\bullet$ is identically 0. We can also easily compute
    \begin{align*}
        R_1(P_n) & = \int \left(\frac{1}{2} \sup_{k\in [n]} \ell(1_{k}(x),y) - \frac{1}{2}\inf_{k\in [n]} \ell(1_{k}(x),y) \right)\, (\mathrm{unif}([n])\otimes \delta_0)(dx\times dy)
        \\ & =
        \int \left(\frac{1}{2} (1) - \frac{1}{2}(0) \right)\, (\mathrm{unif}([n])\otimes \delta_0)(dx\times dy) = \frac{1}{2}.
    \end{align*}
    Meanwhile, since there is only one coupling with the point mass $\delta_{(\bullet,\bullet)}$ and only one correspondence with the singleton $\{\id_{\{\bullet\}}\}$, we can compute
    \begin{align*}
        \dexp(P_n,P_\bullet) & =
        \dis_{P_n,P_\bullet}(H_n \times \{\id_{\{\bullet\}}\}, \mathrm{unif}([n]) \otimes \delta_0\otimes \delta_{(\bullet,\bullet)})
        \\ & =
        \sup_{k\in[n]}\int |\ell(1_{k}(x),0) - 0| \, \mathrm{unif}([n])(dx)
        \\ & =
        \sup_{k\in[n]}\int 1_{k}(x) \, \mathrm{unif}([n])(dx)
        = \sup_{k\in [n]} \frac{1}{n} = \frac{1}{n}.
    \end{align*}
    Hence, even though $P_n \to P$ in the Risk distance, the Rademacher complexities $R_1(P_n)$ do not converge to $R_1(P_\bullet)$.
\end{examples}

Example~\ref{ex:rademacher-discontinuous} shows that $R_1$ is not even continuous with respect to the Risk distance. This suggests more generally that we should not hope for straightforward stability of $R_m$ under the Risk distance when $m$ is small. If we instead take $m$ to be arbitrarily large, we can still obtain a kind of stability.
\begin{theorem}[Stability of Rademacher Complexity]\label{thm:Rademacher-control}
    Let $P$ and $P'$ be problems and consider the family
    \[\mathcal F := \set{\big| \ell_h(-,-) - \ell'_{h'}(-,-) \big|\,}{h\in H, h'\in H'}\]
    of functions $X\times Y \times X' \times Y' \to \mathbb R$.
    Then \[\big|R_m(P) - R_m(P')\big| \leq \dexp(P,P') + 2R_m(\mathcal F).\]
	If $P$ and $P'$ have bounded loss functions and $\mathcal F$ forms a universal Glivenko-Cantelli class, then
    \[
    \limsup_m \big|R_m(P) - R_m(P')\big| \leq \dexp(P,P').
    \]
\end{theorem}
The proof can be found in Appendix~\ref{appendix:proofs}. The conditions under which $R_m(\mathcal F)\to 0$ as $m\to \infty$, and the rate of said convergence, are classical subjects of research in statistical learning theory. Results in this direction typically provide convergence rates of order  $O(\sqrt{1/m})$ \citep{bartlett_rademacher_2002} or $O(\sqrt{\ln(m)/m})$ \citep[Chapter 3]{mohri_foundations_2018} under specific assumptions.

One can apply Proposition~\ref{prop:VC-sufficient} to produce at a weaker form of Theorem~\ref{thm:Rademacher-control} which does not reference the Glivenko-Cantelli property. Specifically, we can replace the universal Glivenko-Cantelli condition Theorem~\ref{thm:Rademacher-control} with the following pair of sufficient conditions:
\begin{enumerate}
    \item The sets $H$ and $H'$ can be endowed with Polish topologies, and subsequently Borel $\sigma$-algebras, such that the maps $(h,x,y)\mapsto \ell_h(x,y)$ and $(h',x',y')\mapsto \ell'_{h'}(x',y')$ are measurable.
    \item The family of subgraphs $\set{\mathrm{sg}(g_{h,h'})}{h \in H, h'\in H'}$ (see Definition \ref{def:subgraph}) has finite VC dimension, where
    \[g_{h,h'}(x,y,x',y') := \big|\ell_h(x,y) - \ell'_{h'}(x',y')\big|.\]
\end{enumerate}

In fact, we can state an even simpler sufficient condition involving the two problems $P$ and $P'$ separately.
\begin{corollary}\label{cor:rademacher-control-subspace}
    Let $P$ and $P'$ be problems with bounded loss functions. Assume that $H$ and $H'$ are equipped with Polish topologies such that the maps $(h,x,y)\mapsto \ell_h(x,y)$ and $(h',x',y')\mapsto \ell'_{h'}(x',y')$ are measurable.
    
    If both $\set{\ell_h}{h\in H}$ and $\set{\ell'_{h'}}{h'\in H'}$ lie in finite-dimensional subspaces of $L^1(X\times Y)$ and $L^1(X'\times Y')$ respectively, then
    \[
    \limsup_m \big|R_m(P) - R_m(P')\big| \leq \dexp(P,P').
    \]
\end{corollary}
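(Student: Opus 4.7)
The plan is to reduce Corollary~\ref{cor:rademacher-control-subspace} to Theorem~\ref{thm:Rademacher-control}, so the task becomes verifying that the family of functions
\[
g_{h,h'}(x,y,x',y') := \bigl|\ell_h(x,y) - \ell'_{h'}(x',y')\bigr|
\]
is a universal Glivenko--Cantelli class on $X\times Y \times X' \times Y'$. Since $\ell$ and $\ell'$ are bounded, this family is uniformly bounded, so by Proposition~\ref{prop:VC-sufficient} it suffices to show that the family of subgraphs $\{\mathrm{sg}(g_{h,h'}) : h\in H, h'\in H'\}$ has finite VC dimension.

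First I would exploit the finite-dimensional subspace hypothesis as follows. Let $V \subseteq L^1(X\times Y)$ and $V' \subseteq L^1(X'\times Y')$ be finite-dimensional subspaces containing $\{\ell_h\}_{h\in H}$ and $\{\ell'_{h'}\}_{h'\in H'}$. The signed differences $f_{h,h'}(x,y,x',y') := \ell_h(x,y) - \ell'_{h'}(x',y')$ all lie in the vector space on $X\times Y \times X' \times Y'$ obtained by lifting $V$ and $V'$ along the two projections, which has dimension at most $\dim V + \dim V'$. Lifting further to $X\times Y\times X'\times Y'\times \mathbb{R}$ and adjoining the coordinate function $(z,t)\mapsto t$ on $\mathbb{R}$, the two-parameter family $\{f_{h,h'}(z) - t\}$ (and likewise $\{-f_{h,h'}(z)-t\}$) lies in a finite-dimensional vector space of functions on this enlarged space.

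The key geometric step, verified by a simple case split on the sign of $t$, is the identity
\[
\mathrm{sg}(g_{h,h'}) = \{(z,t) : f_{h,h'}(z) - t > 0\} \;\cup\; \{(z,t) : -f_{h,h'}(z) - t > 0\},
\]
where $z = (x,y,x',y')$. Hence every subgraph in the family is a union of two positivity sets of functions drawn from the finite-dimensional vector space identified above. By the standard Dudley lemma, the class of positivity sets of a $k$-dimensional function space has VC dimension at most $k$, and a pairwise union of two classes of finite VC dimension is itself a class of finite VC dimension by a routine application of Sauer's lemma. This produces the desired finite VC bound.

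The main, if modest, obstacle is the case analysis required to represent $\mathrm{sg}(|f|)$ as a union of two positivity sets on the enlarged parameter space, since the subgraph of an absolute value is not itself a positivity set of a single linear functional. Once this representation is in hand, Proposition~\ref{prop:VC-sufficient} delivers the universal Glivenko--Cantelli property for $\{g_{h,h'}\}$, and Theorem~\ref{thm:Rademacher-control} concludes.
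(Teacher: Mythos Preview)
Your proposal is correct and follows essentially the same route as the paper: both observe that the differences $\ell_h-\ell'_{h'}$ span a finite-dimensional space, express $|f|$ via $\max(f,-f)$ (equivalently, decompose $\mathrm{sg}(|f|)$ as a union of two positivity sets), and then invoke the standard VC facts for finite-dimensional vector spaces and for unions/maxima before applying Proposition~\ref{prop:VC-sufficient} and Theorem~\ref{thm:Rademacher-control}. The only cosmetic difference is that the paper cites the van der Vaart--Wellner lemmas directly for the $\max$ step, while you unpack that step by hand via Dudley's lemma and Sauer's lemma.
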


The subspace condition in Corollary~\ref{cor:rademacher-control-subspace}, while simple, may seem strange. We claim the condition is not as odd as it may at first appear.
\begin{examples}
    Consider the generic linear regression problem $P$ from Example~\ref{ex:linear-regression}. For that problem, the functions $\ell_h$ are of the form
    \[\ell_h(x,y) = (a^Tx + b - y)^2\]
    for $a\in \bbR^n$, $b\in \bbR$. These functions are polynomials in $x$ and $y$ of degree at most 2. Hence the collection $\set{\ell_h}{h\in H}\subseteq L^1(\bbR^n\times \bbR)$ lies within the finite-dimensional subspace of polynomials of degree at most 2. Hence $P$ satisfies the subspace condition of Corollary~\ref{cor:rademacher-control-subspace}.

    More generally, any problem
    \[P = (\bbR^n, \bbR^m, \eta, \ell, H)\]
    will satisfy the subspace condition if $\ell$ is a polynomial loss function and $H$ is of the form
    \[H = \set{a_1 f_1(x) + \dots + a_k f_k(x)}{a_1,\dots,a_k \in \bbR}\]
    for some fixed $f_1,\dots f_k:\bbR^n\to \bbR^m$. This is because the set $\set{\ell_h}{h\in H}$ will fall within a finite-dimensional space of polynomials in the variables $f_1,\dots f_k$ and $y$.
\end{examples}

\begin{proof}[Corollary~\ref{cor:rademacher-control-subspace}]
    Let
    \[\cS:= \mathrm{span}(\set{\ell_h(-,-) - \ell'_{h'}(-,-)}{h\in H, h'\in H'}),\]
    where the span is taken in the space of all measurable functions $X\times Y \times X' \times Y' \to \bbR$. Consider
    \[\max(\cS,\cS) := \set{\max(f,g)}{f,g\in \cS},\]
    where the maximum $\max(f,g)$ is taken pointwise. Since $\cS$ is finite-dimensional, it follows \citep[Lemma 2.6.15 and Lemma 2.6.18(ii)]{van_der_vaart_weak_1996} that the collection of subgraphs of functions in $\max(\cS,\cS)$ has finite VC dimension.  The set $\set{g_{h,h'}}{h\in H, h'\in H'}$ lies in $\max(\cS,\cS)$, so Proposition~\ref{prop:VC-sufficient} finishes the proof.
\end{proof}

\subsection{Stability under Sampling Bias}\label{subsec:stability-bias}
        A fundamental source of error in learning problems of any type is bias in the collection of training data. Suppose $\eta$ is the true joint law that we are trying to approximate with a predictor. While we would like to sample from $\eta$ directly, an imperfect collection process may over-represent certain regions of the data space. One would hope that minor bias would only have a minor effect on the problem. More formally, we might ask if the Risk distance is stable under small modifications to a problem's joint law. This section provides conditions for such stability results.

    The Risk distance is stable under changes to the joint law with respect to the Wasserstein distance with a certain underlying metric. Under certain conditions, it is also stable with respect to the total variation distance (Definition~\ref{def:tv-distance}).
    Define the pseudometric $s_{\ell,H}$ on $X\times Y$ by
    \[s_{\ell,H}((x,y),(x',y')) := \sup_{h\in H} \big | \ell(h(x),y) - \ell(h(x'),y') \big |.\]
    \begin{proposition}\label{prop:H-wasserstein-bound}
        Let $P$ be a problem, and let $P'$ be a problem that is identical to $P$ except possibly for its joint law $\eta'$. Then
        \begin{align*}
            \dexp(P,P') \leq \dW^{s_{\ell,H}}(\eta,\eta').
        \end{align*}

        Suppose furthermore that the loss function $\ell$ is bounded above by some constant $\ell_{\sf{max}}$. Then
        \begin{align*}
            \dexp(P,P') \leq \ell_{\sf{max}}\TV(\eta,\eta').
        \end{align*}
    \end{proposition}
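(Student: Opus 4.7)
The plan is to bound $\dexp(P,P')$ by exhibiting a single concrete choice of correspondence and then infimizing over couplings only, making the infimum that defines $\dexp$ no larger than the Wasserstein quantity on the right. Since $P$ and $P'$ share the same predictor set $H$ and the same loss function $\ell$, the natural candidate is the \emph{diagonal correspondence} $\Delta_H := \{(h,h) : h \in H\} \in \cC(H,H)$. For any coupling $\gamma \in \Pi(\eta,\eta')$ we then have, directly from Definition~\ref{def:metric},
\[
\dexp(P,P') \;\leq\; \dis_{P,P'}(\Delta_H,\gamma) \;=\; \sup_{h\in H}\int \bigl|\ell(h(x),y)-\ell(h(x'),y')\bigr|\,\gamma(dx\,dy\,dx'\,dy').
\]

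For the first inequality, I would next pull the supremum inside the integral using the elementary fact that $\sup_h \int f_h\,d\gamma \leq \int \sup_h f_h\,d\gamma$ for any nonneg family. Applied to $f_h(x,y,x',y') := |\ell(h(x),y)-\ell(h(x'),y')|$, this gives exactly
\[
\dis_{P,P'}(\Delta_H,\gamma) \;\leq\; \int s_{\ell,H}\bigl((x,y),(x',y')\bigr)\,\gamma(dx\,dy\,dx'\,dy').
\]
Taking the infimum over $\gamma \in \Pi(\eta,\eta')$ on the right yields $\dW^{s_{\ell,H}}(\eta,\eta')$, and the bound $\dexp(P,P')\leq \dW^{s_{\ell,H}}(\eta,\eta')$ follows.

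For the second inequality, I would derive it as a corollary of the first. If $\ell$ is bounded above by $\ell_{\sf{max}}$, then each of $\ell(h(x),y)$ and $\ell(h(x'),y')$ lies in $[0,\ell_{\sf{max}}]$, so $s_{\ell,H}((x,y),(x',y'))\leq \ell_{\sf{max}}$ pointwise, while $s_{\ell,H}$ obviously vanishes on the diagonal $(x,y)=(x',y')$. Hence $s_{\ell,H}\leq \ell_{\sf{max}}\cdot \mathbf{1}_{\{(x,y)\neq (x',y')\}}$. Invoking the identification of the total variation distance with the Wasserstein distance under the discrete metric (recorded in Section~\ref{subsec:total-variation}), we conclude
\[
\dW^{s_{\ell,H}}(\eta,\eta') \;\leq\; \ell_{\sf{max}}\cdot \inf_{\gamma\in\Pi(\eta,\eta')} \gamma\{(z,z') : z\neq z'\} \;=\; \ell_{\sf{max}}\,\TV(\eta,\eta').
\]

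The argument is essentially a direct unpacking of the definitions, so I do not anticipate a substantive obstacle. The only subtle point worth stating carefully is that $s_{\ell,H}$ really is a (pseudo)metric on $X\times Y$, which justifies writing $\dW^{s_{\ell,H}}$; this is immediate from the pointwise definition once one observes that each map $(x,y)\mapsto |\ell(h(x),y)-\ell(h(x'),y')|$ satisfies the triangle inequality and that the supremum of pseudometrics is a pseudometric. (Strictly speaking one would need $s_{\ell,H}$ to be measurable for the Wasserstein integral to make sense; in the settings we care about this follows from mild regularity assumptions on $\ell$ and $H$, and otherwise the first bound can be read as a bound in terms of any coupling-based integral.)
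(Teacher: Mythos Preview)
Your argument is correct and follows the same route as the paper: choose the diagonal correspondence on $H$, pull the supremum over $h$ inside the integral to obtain $s_{\ell,H}$, infimize over couplings to get $\dW^{s_{\ell,H}}$, and then bound $s_{\ell,H}$ by $\ell_{\sf{max}}$ times the discrete metric to reduce to total variation. The paper's proof is essentially identical, only omitting your side remarks on $s_{\ell,H}$ being a pseudometric and on measurability.
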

    \begin{proof}
        One can bound $\dexp(P,P')$ by choosing the diagonal correspondence between $H$ and itself to get
        \begin{align*}
            \dexp(P,P')
            & \leq
            \inf_{\gamma \in \Pi(\eta,\eta')} \sup_{h\in H} \int \big|\ell(h(x),y) - \ell(h(x'),y') \big| \, \gamma(dx {\times} dy {\times} dx' {\times} dy')
            \\ & \leq
            \inf_{\gamma \in \Pi(\eta,\eta')} \int \sup_{h\in H} \big|\ell(h(x),y) - \ell(h(x'),y') \big| \, \gamma(dx {\times} dy {\times} dx' {\times} dy')
            = \dW^{s_{\ell,H}}(\eta,\eta'),
        \end{align*}
        proving the first part of the proposition. Now, if $\ell$ is bounded by $\ell_{\sf{max}}$ then $s_{\ell,H}$ is bounded by $\ell_{\sf{max}}d$, where $d$ is the discrete metric. Hence
        \begin{equation*}
            \dW^{s_{\ell,H}}(\eta,\eta') \leq \dW^{\ell_{\sf{max}}d}(\eta,\eta') = \ell_{\sf{max}}\TV(\eta,\eta') 
        \end{equation*}
        as desired.
    \end{proof}

    In the presence of sampling bias, the data collection process gives inordinate weight to some regions of the data space $X\times Y$ at the cost of other regions. We can think of this biased data collection as sampling not from the true probability measure $\eta$, but instead from a scaled probability measure given by
    \[f\eta(A) := \int_A f(x,y)\, \eta(dx\times dy)\]
    for all measurable $A$, where $f:X\times Y \to \mathbb R_{\geq 0}$ is a density function with respect to $\eta$, meaning $f$ is a measurable function with $\int f(x,y)\,\eta(dx{\times} dy) = 1$. Here $f$ is larger than 1 in regions of $X\times Y$ that are overrepresented by the data collection process, and less than 1 in regions underrepresented.

    In extreme cases of sampling bias, some regions of the data space $X\times Y$ may be completely missing from the training data. In this case, instead of being sampled from the true joint law $\eta$, the data is sampled from some subset $A\subseteq X\times Y$ with $\eta(A)>0$ according to the probability measure
    \[\eta|_A(B) := \frac{1}{\eta(A)} \eta(A \cap B).\]

    Using Proposition~\ref{prop:H-wasserstein-bound}, we can show that $\dexp$ is stable under the above formulations of sampling bias.
    \begin{corollary}[Stability Under Sampling Bias]\label{cor:stability-bias}
        Let $P$ be a problem.
        \begin{enumerate}[(1)]
            \item Let $f:X\times Y \to \mathbb R_{\geq 0}$ be a density function with respect to $\eta$. Define
            \[P_f := (X,Y, f\eta, \ell, H).\]
            Then
            \begin{align*}
                \dexp(P,P_f) \leq \frac{1}{2} \int \big|1-f(x,y)\big| \, \eta(dx{\times} dy).
            \end{align*}
            \item Let
            $A\subseteq X\times Y$ be a measurable subset with $\eta(A)>0$. Define
            \[P|_A := (X,Y, \eta|_A, \ell, H).\]
            Then
            \begin{align*}
            \dexp(P,P|_A) \leq 1- \eta(A).
            \end{align*}
        \end{enumerate}
    \end{corollary}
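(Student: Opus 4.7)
My plan is to derive both bounds directly from Proposition~\ref{prop:H-wasserstein-bound} by recognizing the right-hand sides as (constants times) the total variation distances $\TV(\eta,f\eta)$ and $\TV(\eta,\eta|_A)$. The work splits into (a) invoking the TV form of Proposition~\ref{prop:H-wasserstein-bound}, and (b) an elementary computation to rewrite each TV distance in the claimed form.

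First I would prove the standard identity
\[
\TV(\eta, f\eta) \;=\; \tfrac{1}{2}\int \big|1-f(x,y)\big|\,\eta(dx{\times}dy).
\]
By Definition~\ref{def:tv-distance}, $\TV(\eta,f\eta)=\sup_B\big|\int_B (1-f)\,d\eta\big|$. Since $f$ is an $\eta$-density, $\int(1-f)\,d\eta = 0$, so the positive and negative parts of $1-f$ have equal $\eta$-mass, each equal to $\tfrac{1}{2}\int|1-f|\,d\eta$; the supremum is attained at $B=\{f\leq 1\}$ (or its complement), yielding the identity.

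For part (1), the problems $P$ and $P_f$ share every component except their joint laws, so Proposition~\ref{prop:H-wasserstein-bound} applies and, in the bounded-loss (TV) form, gives
\[
\dexp(P,P_f)\;\leq\;\TV(\eta,f\eta)
\]
(with the appropriate constant $\ell_{\sf{max}}$ from the proposition absorbed into the normalization). Combining this with the identity above immediately produces the stated bound
\[
\dexp(P,P_f)\;\leq\;\tfrac{1}{2}\int\big|1-f(x,y)\big|\,\eta(dx{\times}dy).
\]

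Part (2) then reduces to part (1). I would observe that $\eta|_A = f_A\,\eta$ for the density $f_A := \mathbf{1}_A/\eta(A)$, so $P|_A$ is precisely $P_{f_A}$ and part (1) applies. It remains to compute $\tfrac{1}{2}\int|1-f_A|\,d\eta$ by splitting along $A$ and $A^c$: on $A^c$, $f_A=0$, contributing $\eta(A^c)=1-\eta(A)$; on $A$, $|1-f_A|=1/\eta(A)-1$, contributing $\eta(A)\cdot(1/\eta(A)-1)=1-\eta(A)$. Adding and halving gives exactly $1-\eta(A)$, as claimed. There is no genuine obstacle here: Proposition~\ref{prop:H-wasserstein-bound} does the heavy lifting, and everything else is bookkeeping with the TV identity.
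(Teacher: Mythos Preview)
Your approach matches the paper's: invoke the total-variation form of Proposition~\ref{prop:H-wasserstein-bound}, establish $\TV(\eta,f\eta)=\tfrac{1}{2}\int|1-f|\,d\eta$, and then specialize to $f=\mathbf{1}_A/\eta(A)$ for part~(2). One caveat: your remark that $\ell_{\sf{max}}$ is ``absorbed into the normalization'' is not a real justification---there is no normalization to absorb it into---but the paper's stated bound and proof likewise omit this constant, so your argument is at the same level of completeness as the original.
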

    \begin{proof}
        First recall that
        \begin{align*}
            \TV(\eta,\eta') = \sup_{B\subseteq X\times Y} \eta(B) - f\eta(B) = \sup_{B\subseteq X\times Y} f\eta(B) - \eta(B),
        \end{align*}
        where the infima range over all measurable subsets of $X\times Y$. Then
        \begin{align*}
            \TV(\eta,\eta')
            & = \frac{1}{2}\left(
                \sup_{B\subseteq X\times Y} \eta(B) - f\eta(B)
                +
                \sup_{B\subseteq X\times Y} f\eta(B) - \eta(B)
            \right)
            \\ &
            \displaybreak[1]
            = \frac{1}{2}\left(
                \sup_{B\subseteq X\times Y} \int_B 1-f(x,y)\, \eta(dx{\times} dy)
                +
                \sup_{B\subseteq X\times Y} \int_B f(x,y) - 1\, \eta(dx {\times} dy)
            \right)
            \\ & =\frac{1}{2}\left(
                \int_{\{f<1\}} 1-f(x,y)\, \eta(dx{\times} dy)
                +
                \int_{\{f>1\}} f(x,y) - 1\, \eta(dx {\times} dy)
            \right)
            \\ &
            = \frac{1}{2} \int \big|1-f(x,y) \big| \, \eta(dx{\times}dy).
        \end{align*}
        To finish the proof, apply part~(1) when $f = \frac{1}{\eta(A)}1_{A}$.
    \end{proof}

\subsection{Stability under Sampling Noise}\label{subsec:stability-noise}
    Another source of error in sampling is the presence of noise in the process of collecting data. While we would like to sample from $\eta$ directly, our data is rarely collected with perfect fidelity. One would hope that a small amount of noise would not change the problem much. In this section, we model noise with a \emph{noise kernel} and show that we can often guarantee that the Risk distance is stable under the application of such noise.

    First we consider the case where the loss function $\ell$ is bounded by a metric on $Y$. Our bounds are in terms of a generalization of the Wasserstein metric for Markov kernels.
    \begin{definition}\label{def:markov-coupling}
        Let $X$ be a measure space and let $M:X\to \prob(Y)$, $N:X\to \prob(Z)$ be Markov kernels. A \emph{coupling} between $M$ and $N$ is a Markov kernel $\tau:X \to \prob(Y\times Z)$ such that for a.e. $x\in X$,
        \begin{align*}
            (\pi_1)_\sharp (\tau(x)) &= M(x),\\
            (\pi_2)_\sharp (\tau(x)) &= N(x).
        \end{align*}
        Just as we use $\Pi(\mu,\nu)$ to denote the set of all couplings of probability measures $\mu$ and $\nu$, we use $\Pi(M,N)$ to denote the set of all couplings of Markov kernels $M$ and $N$.
    \end{definition}
    Note that if $X$ is a single point, we recover the classical definition of a coupling of probability measures.
    \begin{definition}[Wasserstein Distance for Markov Kernels]\label{def:wass-dist-for-markov-kernels}
        Let $(X,\mu)$ be a measure space and let $(Y,d_Y)$ be a metric space. Let $M,N:X\to \prob(Y)$ be a pair of Markov kernels. For $1\leq p < \infty$, the \emph{$p$-Wasserstein distance between $M$ and $N$} is given by
        \[\dWkernp{p}(M,N) := \left(\inf_{\tau\in \Pi(M,N)}\int_X \int_{Y\times Y} d_Y^p(y,y') \tau(x)(dy\times dy') \mu(dx) \right)^{1/p}.\]
    \end{definition}
    Definitions~\ref{def:markov-coupling} and \ref{def:wass-dist-for-markov-kernels} are due to \citet[Defs 5.1 and 5.2]{patterson2020}. Patterson notes that terminology conflicts between ``products'' and ``couplings'' of Markov kernels are present in the literature.
    As with the Wasserstein distance, we will have little use for $\dWkernp{p}$ for $p\neq 1$, and so establish the convention that the \emph{Wasserstein distance for Markov kernels} refers to $\dWkern$.
    An alternative, equivalent definition of the metric $\dWkern$ may be more intuitive:
    \[\dWkern(M,N) = \int_X \dW^{d_Y}(M(x),N(x)) \, \mu(dx).\]
    Equivalence of these definitions was noted by \citet{patterson2020}, and follows from results in the theory of optimal transport \citep[Corollary 5.22]{Villani_old_and_new_2008}. A distance based on a generalization of the above alternative definition was independently discovered and studied by \citet{kitagawa_two_2023} under the name ``disintegrated Monge-Kantorovich distance''.

    We are abusing notation by using the same symbol for the Wasserstein distances between measures and between Markov kernels. We justify this by noting that the former is a special case of the latter. Indeed, recall that if $\mu,\nu \in \prob(X)$ are probability measures on a metric space $X$, we can think of $\mu$ and $\nu$ as Markov kernels from the one-point measure space $(\{\bullet\}, \delta_\bullet)$. The Wasserstein distance between these Markov kernels then agrees with the Wasserstein distance between the measures $\mu$ and $\nu$.

    The Wasserstein distance for Markov kernels also generalizes the $L^1$ distance on measurable functions. Indeed, if we have measurable functions $f,g:X\to Y$, then the distance $\dW(\delta_f,\delta_g)$ between the induced deterministic Markov kernels is just the $L^1$ distance between $f$ and $g$.
    
    A simple kind of noise can be modeled with a Markov kernel $M: Y\to \prob(Y)$. We interpret this noise as follows. When we make an observation that would have label $y\in Y$, we instead observe a random corrupted label with law $M(y)$. More specifically, let $P$ be a problem whose joint law has $X$-marginal $\alpha$ and $X$-disintegration $\beta:X\to \prob(Y)$. Without the presence of noise, an observation has a random input component $x$ chosen with law $\alpha$, and a random label chosen with law $\beta(x)$. The presence of noise represented by $M$ creates a new problem where the labels are instead chosen according to the law $(\beta \cdot M)(x)$. In other words, we get a problem with the same $X$-marginal $\alpha$, but with disintegration $\beta':X\to \prob(Y)$ given by
    \[\beta'(x) := (\beta\cdot M)(x).\]
    We might call this type of noise ``$X$-independent $Y$-noise,'' (otherwise known as ``simple label noise'') since the noise $M$ is applied in the $Y$ direction, and the same noise kernel $M$ is applied for all $x\in X$.
    
    A more general kind of noise would allow for dependence on $X$. Instead of a Markov kernel $M:Y\to \prob(Y)$, we model our noise as a Markov kernel $N:X\times Y \to \prob(Y)$. We write $N_x$ to mean the Markov kernel $N(x,-): Y\to \prob(Y)$. In the presence of this kind of noise, the problem $P$ becomes a new problem $P'$ with the same $X$-marginal $\alpha$, but $X$-disintegration $\beta'$ now given by
    \begin{align*}
        \beta'(x):=(\beta \cdot N_x)(x).
    \end{align*}
    We might call this kind of noise ``$X$-dependent $Y$-noise''.
    
    Consider a noise kernel $N = \delta_{\pi_2}$, which is the Markov kernel sending $(x,y)$ to the point mass $\delta_y$. Then for all $x\in X$, we have $N_x(y) = \delta_y$ and hence $\beta \cdot N_x = \beta$. In other words, $N$ represents no noise at all, and applying $N$ to a problem would leave the problem unchanged. Our next stability result states that, if any kernel $N$ is close to this ``no noise'' case under $\dWkern$, then applying $N$ will not move the problem much in the Risk distance, provided that $\ell$ is well-controlled by some metric on $Y$.

    \begin{lemma}\label{lem:disintegration-bound}
    Suppose that $P$ and $P'$ are problems given by
    \[P = (X,Y,\eta,\ell,H)\]
    \[P' = (X,Y,\eta',\ell,H)\]
    where $\eta$ and $\eta'$ have the same $X$-marginal $\alpha \in \prob(X)$, but different disintegrations along $X$, called $\beta$ and $\beta'$ respectively. Suppose also that $Y$ is equipped with a metric $d_Y$ with respect to which $\ell$ is Lipschitz in the second argument, meaning there is a constant $C>0$ such that
    \begin{align*}
        \big| \ell(z,y) - \ell(z,y') \big| \leq C d_Y(y,y')
    \end{align*}
    for all $y,y',z\in Y$. Then
    \[\dexp(P,P') \leq C\dWkern^{d_Y}(\beta,\beta').\]
    \end{lemma}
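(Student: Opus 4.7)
My plan is to bound $\dexp(P,P')$ directly by constructing an explicit correspondence $R \in \cC(H,H')$ and coupling $\gamma \in \Pi(\eta,\eta')$ and estimating the risk distortion. Since $P$ and $P'$ share the predictor set, the natural choice is the diagonal correspondence $R := \{(h,h) : h \in H\}$. The coupling $\gamma$ will be built by sampling $x \sim \alpha$, coupling the conditional label distributions $\beta(x)$ and $\beta'(x)$ with a near-optimal Markov-kernel coupling, and forcing the two $X$-coordinates to agree.

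More precisely, fix $\epsilon > 0$ and select $\tau \in \Pi(\beta,\beta')$ such that
\[\int_X \int_{Y\times Y} d_Y(y,y')\, \tau(x)(dy\times dy')\, \alpha(dx) \leq \dWkern^{d_Y}(\beta,\beta') + \epsilon.\]
Define $\gamma \in \prob(X\times Y \times X \times Y)$ by
\[\gamma(A) := \int_X \int_{Y\times Y} 1_A(x,y,x,y')\, \tau(x)(dy\times dy')\, \alpha(dx).\]
A routine check shows the first two coordinates push forward to $\int \beta(x)(dy)\alpha(dx) = \eta$ and the last two to $\int \beta'(x)(dy')\alpha(dx) = \eta'$, so $\gamma \in \Pi(\eta,\eta')$. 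By construction, $\gamma$ is supported on the set where the two $X$-coordinates coincide.

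The bound on the risk distortion is then immediate from the Lipschitz hypothesis on $\ell$:
\begin{align*}
\dis_{P,P'}(R,\gamma) &= \sup_{h\in H}\int \bigl|\ell(h(x),y) - \ell(h(x'),y')\bigr|\, \gamma(dx\times dy\times dx'\times dy')\\
&= \sup_{h\in H}\int \bigl|\ell(h(x),y) - \ell(h(x),y')\bigr|\, \gamma(dx\times dy\times dx'\times dy')\\
&\leq C\int d_Y(y,y')\, \gamma(dx\times dy\times dx'\times dy')\\
&= C\int_X \int_{Y\times Y} d_Y(y,y')\, \tau(x)(dy\times dy')\, \alpha(dx)\\
&\leq C\bigl(\dWkern^{d_Y}(\beta,\beta') + \epsilon\bigr).
\end{align*}
Since $\dexp(P,P') \leq \dis_{P,P'}(R,\gamma)$ for this particular $R$ and $\gamma$, and $\epsilon$ is arbitrary, the lemma follows.

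I do not anticipate a real obstacle here: the only delicate point is verifying that $\gamma$ is a genuine coupling of $\eta$ and $\eta'$, which reduces to checking marginals via the reverse-disintegration formula discussed in Section~\ref{subsec:markov-kernels}. The Lipschitz hypothesis lets us pass from control in the $L^1(\alpha)$-averaged Wasserstein cost on the $Y$-disintegrations to control on the losses, and the fact that the predictors are shared between the two problems is what lets the single variable $h$ appear in both terms after collapsing $x' = x$.
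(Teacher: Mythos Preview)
Your proof is correct and follows essentially the same route as the paper: both select the diagonal correspondence on $H$, build a coupling in $\Pi(\eta,\eta')$ that forces the two $X$-coordinates to coincide (the paper does this via the map $\Delta(x,y,y')=(x,y,x,y')$ pushed forward from a gluing of $\eta,\eta'$ along $X$, you do it directly from a near-optimal $\tau\in\Pi(\beta,\beta')$), and then apply the Lipschitz bound on $\ell$ in its second argument. The only cosmetic difference is that the paper infimizes over the gluing set directly while you pass through an $\epsilon$-approximation.
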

    \begin{theorem}[Stability under $X$-dependent $Y$-noise]\label{thm:metric-noise-stability}
        Let $P$ be a problem whose output space $Y$ is equipped with a metric $d_Y$, with respect to which the loss function $\ell$ is $C$-Lipschitz in the second argument. Let $P'$ be the problem obtained by applying the noise kernel $N:X\times Y \to \prob(Y)$ to $P$. Then
        \[\dexp(P,P') \leq C \dWkern^{d_Y}(N,\delta_{\pi_2}).\]
    \end{theorem}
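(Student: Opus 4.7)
The plan is to reduce Theorem~\ref{thm:metric-noise-stability} to the already-established Lemma~\ref{lem:disintegration-bound} by comparing the disintegration of the noisy joint law to the original. Let $\alpha$ denote the $X$-marginal of $\eta$; this is also the $X$-marginal of $\eta'$, since applying a $Y$-valued noise kernel only alters the conditional laws along $X$. Let $\beta, \beta' : X \to \prob(Y)$ denote the disintegrations of $\eta$ and $\eta'$ along $X$ respectively. By construction of $P'$, for each $x$ we have $\beta'(x)(B) = \int_Y N(x,y)(B)\,\beta(x)(dy)$. Lemma~\ref{lem:disintegration-bound} gives $\dexp(P,P') \leq C\,\dWkern^{d_Y}(\beta,\beta')$, so it suffices to show $\dWkern^{d_Y}(\beta,\beta') \leq \dWkern^{d_Y}(N,\delta_{\pi_2})$.

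The main step is to build, for each $x \in X$, a coupling $\sigma(x) \in \Pi(\beta(x),\beta'(x))$ directly from $N$. The idea is to sample $y$ from $\beta(x)$ and then a corrupted $y'$ from $N(x,y)$. Formally, set $\sigma(x)(A) := \int_Y (\delta_y \otimes N(x,y))(A)\,\beta(x)(dy)$. A routine check of marginals shows $(\pi_1)_\sharp\sigma(x) = \int_Y \delta_y\,\beta(x)(dy) = \beta(x)$ and $(\pi_2)_\sharp\sigma(x) = \int_Y N(x,y)\,\beta(x)(dy) = \beta'(x)$. Using $\sigma(x)$ to upper bound the Wasserstein distance and recalling that $\dW^{d_Y}(\delta_y,N(x,y)) = \int_Y d_Y(y,y')\,N(x,y)(dy')$ (the only coupling with a Dirac marginal), we obtain
\[
\dW^{d_Y}(\beta(x),\beta'(x)) \;\leq\; \int_Y \dW^{d_Y}(\delta_y,N(x,y))\,\beta(x)(dy).
\]

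Integrating this pointwise bound against $\alpha$ and invoking the reverse-disintegration identity $\beta(x)(dy)\,\alpha(dx) = \eta(dx \times dy)$ yields
\[
\dWkern^{d_Y}(\beta,\beta') \;\leq\; \int_{X \times Y} \dW^{d_Y}(\delta_y, N(x,y))\,\eta(dx \times dy) \;=\; \dWkern^{d_Y}(\delta_{\pi_2},N),
\]
where the final equality is the alternative formulation of $\dWkern$ noted after Definition~\ref{def:wass-dist-for-markov-kernels}, applied to Markov kernels with domain $(X\times Y, \eta)$. Combined with Lemma~\ref{lem:disintegration-bound}, this gives the stated inequality. The only delicate point in executing the plan is bookkeeping which reference measure ($\alpha$ on $X$, or $\eta$ on $X\times Y$) is used in each instance of $\dWkern$; the disintegration identity is precisely what bridges the two, and this is where the ``$X$-dependence'' of the noise kernel is absorbed into an integral over $\eta$ rather than $\alpha$.
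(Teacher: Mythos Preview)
Your argument is correct. Both your proof and the paper's begin identically by invoking Lemma~\ref{lem:disintegration-bound} to reduce to showing $\dWkern^{d_Y}(\beta,\beta') \leq \dWkern^{d_Y}(\delta_{\pi_2},N)$, but diverge in how that inequality is established. The paper rewrites $\beta$ and $\beta'$ as Markov-kernel compositions,
\[
\beta = \Delta_X \cdot (\delta_{\id_X}\otimes \beta) \cdot \delta_{\pi_2}, \qquad
\beta' = \Delta_X \cdot (\delta_{\id_X}\otimes \beta) \cdot N,
\]
observes that $\alpha \cdot \Delta_X \cdot (\delta_{\id_X}\otimes \beta) = \eta$, and then appeals to an external contractivity result (Proposition~6.1 of \citet{patterson2020}) for ``measure non-increasing'' Markov kernels to conclude. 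Your route is more hands-on: you build the explicit coupling $\sigma(x) = \int (\delta_y \otimes N(x,y))\,\beta(x)(dy)$ between $\beta(x)$ and $\beta'(x)$, bound $\dW^{d_Y}(\beta(x),\beta'(x))$ pointwise, and integrate against $\alpha$, using the disintegration identity to pass from $\alpha$ to $\eta$. Your approach is fully self-contained and makes transparent exactly where the change of reference measure happens; the paper's approach is more conceptual and situates the inequality as an instance of a general contractivity principle for the Markov-kernel Wasserstein distance. Either is a valid proof.
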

    The proofs are relegated to Appendix~\ref{appendix:proofs}.
    We illustrate Theorem~\ref{thm:metric-noise-stability} with an example regarding label noise in binary classification.

    \begin{examples}\label{ex:binary-label-noise}
        Let $P = (X,\{0,1\}, \eta,\ell,H)$ be a binary classification problem where $\ell$ is the 0-1 loss $\ell(y,y')=1_{y\neq y'}$. Suppose we apply noise to $P$ by taking each observation $(x,y)$ and doing the following:
        \begin{itemize}
            \item With probability $\epsilon$, re-assign its label $y$ uniformly at random.
            \item With probability $(1-\epsilon)$, leave the label $y$ as it is.
        \end{itemize}
        Through this process we are effectively applying the noise kernel
        \[N_\epsilon := \epsilon u + (1-\epsilon)\delta_{\pi_2},\]
        where $u$ is the uniform measure on $\{0,1\}$. Let $P'_\epsilon$ be the problem $P$ with the noise kernel $N_\epsilon$ applied.
        Since the 0-1 loss is a metric, we can apply Theorem~\ref{thm:metric-noise-stability} with $d_Y = \ell$ and $C=1$ to get
        \begin{align*}
            \dexp(P,P'_\epsilon) & \leq \dW(\epsilon u + (1-\epsilon)\delta_{\pi_2}, \delta_{\pi_2})
            \\ & = \int \dW^{\ell}(\epsilon u + (1-\epsilon) \delta_y, \delta_y)\, \eta(dx{\times}dy).
        \end{align*}
        Recall (Section~\ref{subsec:total-variation}) that when the underlying metric is the discrete metric, the Wasserstein distance is the same as the total variation distance. Furthermore, one can show that $\TV(\epsilon u + (1-\epsilon) \delta_y, \delta_y) = \epsilon \TV(u,\delta_y) = \epsilon/2$. Hence we get
        \[\dexp(P,P'_\epsilon) \leq \epsilon/2.\]

        One could generalize further; suppose that the probability $\epsilon$ of reassignment \emph{depends on $x$}. That is, instead of selecting a constant $\epsilon$, we select a measurable function $\epsilon:X\to [0,1]$. Then, for an observation $(x,y)$, with probability $\epsilon(x)$ we re-assign $y$ uniformly at random. Then the same argument above instead gives us the bound
        \[\dexp(P,P'_\epsilon) \leq \frac{1}{2}\int \epsilon(x)\alpha(dx),\]
        where $\alpha$ is the $X$-marginal of $\eta$. That is, the risk distance between $P$ and the noised problem $P'_\epsilon$ is bounded by half the \emph{average} value of $\epsilon(x)$.
    \end{examples}

    In Section~\ref{subsec:improved-stability}, we will consider an even more general noise model, allowing for noise in the $X$ and $Y$ directions simultaneously. We will show that a modification of the Risk distance, called the $L^p$-Risk distance, is stable under such noise in Theorem~\ref{thm:non-metric-bound}, which is analogous to Theorem~\ref{thm:metric-noise-stability}.

\subsection{Stability under Modification of Predictor Set}\label{subsec:stability-H}

    We have shown that $\dexp$ is stable under changes to the loss function and certain changes to $\eta$. We round out this section by demonstrating stability of $\dexp$ under changes to the predictor set of a problem.
    \begin{figure}
    \centering \includegraphics[width=0.3\linewidth]{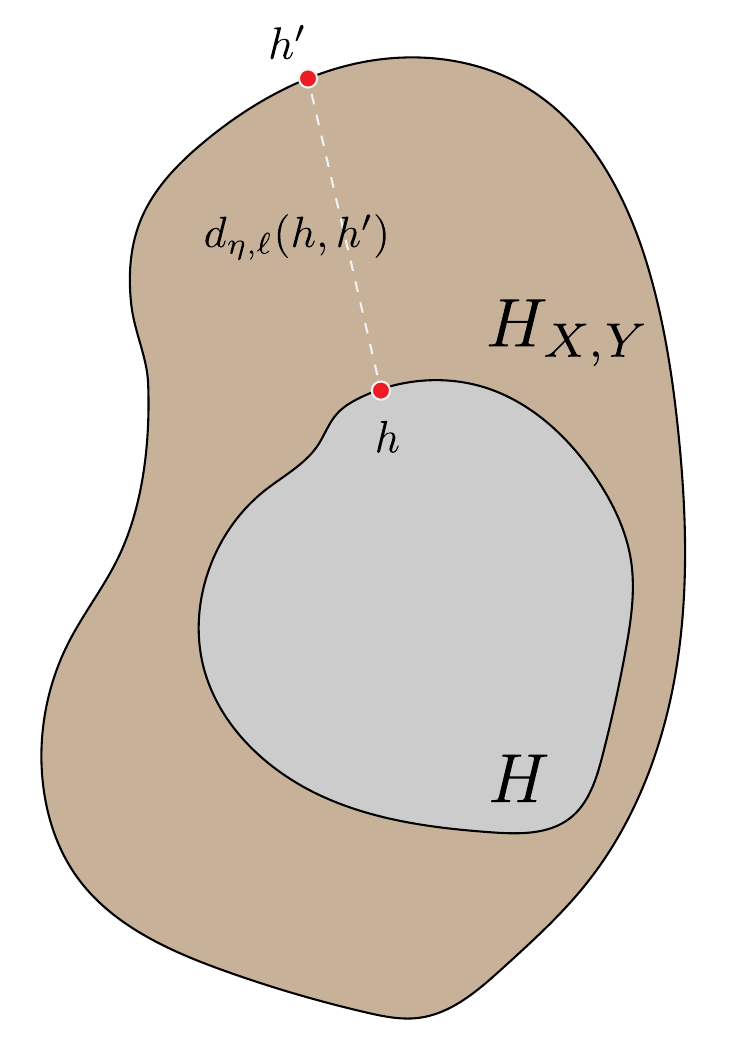}
    \caption{Distance between $H$ and $H_{X,Y}$; see Example \ref{ex:HXY}.}
    \label{fig:HXY}
    \end{figure}
    To state such a stability result, we first need a way to quantify changes to the predictor set. The most salient aspect of a predictor $h$ is its performance with respect to the chosen loss function. In other words, more crucial than the predictor $h$ itself is the function $\ell_h(x,y) := \ell(h(x),y)$. With this perspective in mind, we can map $H$ into $L^1(\eta)$ via the map $h \mapsto \ell_h$, and compare predictors using the $L^1$ norm there. That is, we compare two predictors $h$ and $h'$ via the quantity
    \begin{align*}
        d_{\ell,\eta}(h,h') := \|\ell_h - \ell_{h'}\|_{L^1(\eta)} = \int \big|\ell_h(x,y)-\ell_{h'}(x,y)\big|\, \eta(dx{\times}dy).
    \end{align*}
    
    This defines a pseudometric on the set of all measurable functions $h:X\to Y$ for which $\|\ell_h\|_{L^1(\eta)}<\infty$, which any predictor satisfies since the definition of a problem mandates that predictors have finite risk. The map $h\mapsto \ell_h$ identifies two predictors $h$ and $h'$ if they $\eta$-almost always incur the same loss on a random observation in $X\times Y$.

    We can now state a stability theorem in terms of the Hausdorff distance on predictor sets with underlying metric $d_{\ell,H}$, i.e. the metric
    \[\dH^{d_{\ell,\eta}}(H,H') = \inf_{R\in \cC(H,H')} \sup_{(h,h')\in R} \|\ell_h - \ell_{h'}\|_{L^1(\eta)}.\]

    \begin{theorem}\label{thm:H-stability}
        Let $P$ be a problem, and let $P'$ be a problem that is identical to $P$ except possibly for the predictor set $H'$. Then
        \begin{equation*}
            \dexp(P,P') \leq \dH^{d_{\ell,\eta}}\left(H,H'\right).
        \end{equation*}
    \end{theorem}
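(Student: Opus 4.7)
The plan is to bound $\dexp(P,P')$ by exhibiting a single convenient coupling and then letting the correspondence vary, showing the resulting infimum is precisely $\dH^{d_{\ell,\eta}}(H,H')$. Because $P$ and $P'$ share the same input space $X$, response space $Y$, joint law $\eta$, and loss $\ell$, the natural choice is the \emph{diagonal coupling}
\[
\gamma_\Delta := (\Delta_{X\times Y})_\sharp\, \eta \in \Pi(\eta,\eta),
\]
where $\Delta_{X\times Y}:X\times Y \to (X\times Y)\times (X\times Y)$ is the diagonal map. This is a valid coupling between $\eta$ and $\eta' = \eta$ by the standard construction listed in the background (Section 2.2).

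Next I would substitute $\gamma_\Delta$ into the formula for the risk distortion. For any correspondence $R \in \cC(H,H')$, the change of variables induced by the diagonal pushforward collapses the quadruple integral to an integral over $X\times Y$, yielding
\[
\dis_{P,P'}(R,\gamma_\Delta) \;=\; \sup_{(h,h')\in R} \int_{X\times Y} \bigl|\ell(h(x),y) - \ell(h'(x),y)\bigr|\, \eta(dx{\times}dy) \;=\; \sup_{(h,h')\in R} d_{\ell,\eta}(h,h').
\]
Here I used $\ell' = \ell$ and the fact that the diagonal coupling assigns all mass to points of the form $((x,y),(x,y))$, so the integrand $|\ell(h(x),y) - \ell'(h'(x'),y')|$ reduces to $|\ell_h(x,y) - \ell_{h'}(x,y)|$.

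Finally I would pass to the infimum over $R \in \cC(H,H')$. Since $\dexp(P,P')$ is the infimum of $\dis_{P,P'}(R,\gamma)$ over \emph{all} pairs $(R,\gamma)$ with $\gamma \in \Pi(\eta,\eta')$ and $R \in \cC(H,H')$, restricting to the single coupling $\gamma_\Delta$ gives the upper bound
\[
\dexp(P,P') \;\leq\; \inf_{R \in \cC(H,H')} \sup_{(h,h')\in R} d_{\ell,\eta}(h,h') \;=\; \dH^{d_{\ell,\eta}}(H,H'),
\]
which is exactly the claimed inequality. There is no substantive obstacle here: the proof is a direct unwinding of definitions, and the only point requiring any care is confirming that $\gamma_\Delta$ lies in $\Pi(\eta,\eta')$, which is immediate.
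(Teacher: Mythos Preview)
Your proof is correct and follows exactly the same approach as the paper: pick the diagonal coupling $\gamma_\Delta \in \Pi(\eta,\eta)$, observe that the risk distortion collapses to $\sup_{(h,h')\in R}\|\ell_h - \ell_{h'}\|_{L^1(\eta)}$, and then infimize over $R \in \cC(H,H')$ to recover $\dH^{d_{\ell,\eta}}(H,H')$.
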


    \begin{examples}\label{ex:HXY}
    Note that if $P=(X,Y,\eta,\ell,H)$ and $P'=(X,Y,\eta,\ell,H_{X,Y})$, where $H_{X,Y}$ stands for the set of all measurable functions from $X$ to $Y$,  Theorem \ref{thm:loss-control} implies that $$\cB^*(P)\leq \cB(P)\leq \cB^*(P)+\sup_{h'\in H_{X,Y}}\inf_{h\in H} d_{\ell,\eta}(h,h').$$
    The right-hand side contains the worst case $d_{\ell,\eta}$ distance between an arbitrary measurable  function $h'$ and the (constrained) predictor set $H$. In other words, this quantity expresses how far  $H$ is from the superclass $H_{X,Y}$ and therefore controls the \emph{approximation error}: the difference between the constrained and unconstrained Bayes risks; see Figure \ref{fig:HXY}.
    \end{examples}

    \begin{proof}[Theorem \ref{thm:H-stability}]
        We bound $\dexp(P,P')$ by choosing the diagonal coupling between $\eta$ and itself. That is,
        \begin{align*}
            \dexp(P,P') & \leq \inf_{R\in \cC(H,H')} \sup_{(h,h')\in R} \int \big| \ell(h(x),y) - \ell(h'(x),y)\big | \, \eta(dx{\times} dy)
            \\ &
            = \inf_{R\in \cC(H,H')} \sup_{(h,h')\in R} \| \ell_h - \ell_{h'}\|_{L^1(\eta)} = \dH^{d_{\ell,\eta}}(H,H'), 
        \end{align*}
        which is the desired bound.
    \end{proof}

    We illustrate the metric $\dH^{d_{\ell,\eta}}$ and Theorem~\ref{thm:H-stability} with a toy example.
    \begin{examples}
        Let $P = (\bbR^2, \{0,1\}, \eta, \ell, H)$ be the classification problem from Example~\ref{ex:linear-classification}. Recall that the predictors in $H$ were the functions of the form
        \begin{align*}
            h_{a,b,c}(x_1,x_2) := \begin{cases}
            1 & ax_1 + bx_2 \geq c \\
            0 & ax_1 + bx_2 < c
            \end{cases}.
        \end{align*}
        We modify $H$ by adding to each predictor a small area on which they always predict class 1. That is, we define a new predictor set $H'_t := \set{h'_{a,b,c,t}}{a,b,c\in\bbR}$ where $h'_{a,b,c,t}:\bbR^2\to \{0,1\}$ is given by
        \begin{align*}
            h'_{a,b,c,t}(x_1,x_2) := \begin{cases}
                1 & x\in B_t(0)\\
                h_{a,b,c}(x) & \text{otherwise} 
            \end{cases}.
        \end{align*}
        Here $B_t(0)$ represents the ball of radius $t$ centered at 0.
        We can construct a correspondence between $H$ and $H'_t$ by coupling each $h_{a,b,c}$ with $h'_{a,b,c,t}$. That is, we define
        \[R := \set{(h_{a,b,c},h'_{a,b,c,t})}{a,b,c \in \bbR}.\]
        Then $R\in \cC(H,H'_t)$, so we can write the bound
        \begin{align*}
            \dH^{d_{\ell,\eta}}(H,H'_t)
            & \leq \sup_{(h_{a,b,c},h'_{a,b,c,t})\in R} \| \ell_{h_{a,b,c}} - \ell_{h_{a,b,c,t}}\|_{L^1(\eta)}
            \\ & =
            \sup_{a,b,c \in \bbR} \int \big| \ell(h_{a,b,c}(x),y) - \ell(h_{a,b,c,t}(x),y) \big| \,\eta(dx{\times}dy)
            \\ & =
            \sup_{a,b,c \in \bbR} \int_{B_t(0)\times \{0,1\}} \big| \ell(h_{a,b,c}(x),y) - \ell(h_{a,b,c,t}(x),y) \big| \,\eta(dx{\times}dy)
            \\ & \leq \eta(B_t(0)\times\{0,1\}). 
        \end{align*}
        Furthermore, assuming $\eta({0}\times \{0,1\})=0$, we can take the limit as $t\to 0$ to get $\lim_{t\to0} \dH^{d_{\ell,\eta}}(H,H'_t) = 0$. If we define $P_t := (\bbR^2, \{0,1\}, \eta, \ell, H'_t)$ for all $t>0$, By Theorem~\ref{thm:H-stability}, $P_t$ converges to $P$ under the Risk distance as $t\to 0$.
    \end{examples}

    We now provide a more serious application of Theorem~\ref{thm:H-stability}. Different classes of predictors are often compared in the context of \emph{universal approximation theorems}, which generally state that a certain kind of function can be approximated arbitrarily well using a certain class of model. Using Theorem~\ref{thm:H-stability}, one can translate universal approximation theorems into statements about the risk distance. While there are many universal approximation theorems \citep{pinkus_approximation_1999}, we will demonstrate with a classical example \citep[see][]{leshno_multilayer_1993}.
    
    \begin{proposition}
        Let $\phi:\bbR \to \bbR$ be a continuous non-polynomial function. Let $P = (X, \bbR, \eta, \ell, H)$ be a problem where
        \begin{itemize}
            \item $X$ is a compact subset of $\bbR^m$,
            \item $\eta$ is any probability distribution on $X$,
            \item $\ell$ is a uniformly continuous loss function, and
            \item $H$ is the set of all functions $X\to \bbR$ which can come from a feedforward network with a single hidden layer of arbitrary width and with activation function $\phi$. That is, $H$ is the set of all functions of the form
            \[(x_1,\dots,x_m) \mapsto \sum_{i=1}^M a_i \phi\left(\sum_{j=1}^m w_{ij} x_j + b_i\right)\]
            where the parameters $a_i$, $w_{ij}$, and $b_j$ range over $\bbR$, and $M$ ranges over the positive integers.
        \end{itemize}
        Let $P'$ be the same problem, but with predictor set $H'$ the set of all continuous functions $X \to \bbR$. Then $\dexp(P,P') = 0$.
    \end{proposition}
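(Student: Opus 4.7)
The plan is to reduce the claim directly to Theorem~\ref{thm:H-stability}, which yields the bound $\dexp(P,P') \leq \dH^{d_{\ell,\eta}}(H,H')$. It therefore suffices to show $\dH^{d_{\ell,\eta}}(H,H')=0$, that is, to exhibit, for every $\epsilon>0$, a correspondence $R \in \cC(H,H')$ with $\sup_{(h,h') \in R} \|\ell_h - \ell_{h'}\|_{L^1(\eta)} < \epsilon$. Equivalently, using the sup-inf formulation of Hausdorff distance, I need to check both $\sup_{h \in H} \inf_{h' \in H'} d_{\ell,\eta}(h,h') = 0$ and $\sup_{h' \in H'} \inf_{h \in H} d_{\ell,\eta}(h,h') = 0$.

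The first equality is immediate: since $\phi$ is continuous, every $h \in H$ is a finite composition of affine maps and $\phi$, hence continuous on $X$, so $H \subseteq H'$. Thus for $h \in H$ one may take $h' = h$, giving distance zero. The non-trivial half is the second equality, and here I would invoke the Leshno-Lin-Pinkus-Schocken universal approximation theorem \citep{leshno_multilayer_1993}: because $\phi$ is continuous and non-polynomial, $H$ is dense in $C(X)$ in the topology of uniform convergence on compacta. Since $X \subseteq \mathbb{R}^m$ is compact, this means that for every $h' \in H'$ and every $\delta>0$ there exists $h \in H$ with $\sup_{x \in X} |h(x) - h'(x)| < \delta$.

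It remains to convert uniform approximation on $X$ into approximation under $d_{\ell,\eta}$. Given $\epsilon>0$, uniform continuity of $\ell$ on $\mathbb{R}\times\mathbb{R}$ supplies a $\delta>0$ such that $|u-u'|<\delta$ implies $|\ell(u,y)-\ell(u',y)| < \epsilon$ for every $y \in \mathbb{R}$. Pick $h \in H$ uniformly $\delta$-close to $h'$ on $X$; then pointwise $|\ell_h(x,y) - \ell_{h'}(x,y)| < \epsilon$ for every $(x,y) \in X \times \mathbb{R}$, and integrating against the probability measure $\eta$ yields $d_{\ell,\eta}(h,h') \leq \epsilon$. (A quick side check: such an $h$ actually belongs to the predictor set of $P$ since $\cR_P(h) \leq \cR_P(h') + \epsilon < \infty$, so nothing in Definition~\ref{def:problem} is violated.) Consequently $\inf_{h \in H} d_{\ell,\eta}(h,h') = 0$ for every $h' \in H'$, so the second supremum vanishes as well.

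The main obstacle is conceptually mild but worth spelling out carefully: the universal approximation theorem only guarantees uniform approximation in the $Y$-variable, while we need an $L^1(\eta)$ estimate on the composed loss function on $X \times \mathbb{R}$. The uniform continuity hypothesis on $\ell$ is precisely what lets the approximation pass through the loss, uniformly in the (potentially unbounded) response variable $y$, so that the bound survives integration against $\eta$. No quantitative estimate on how large the network must be enters; one just needs the qualitative density statement combined with Theorem~\ref{thm:H-stability}.
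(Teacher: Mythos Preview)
Your proposal is correct and follows essentially the same argument as the paper: reduce to Theorem~\ref{thm:H-stability}, note $H\subseteq H'$ for one direction of the Hausdorff bound, and for the other invoke the Leshno--Lin--Pinkus--Schocken density theorem together with uniform continuity of $\ell$ to pass from uniform approximation in $h$ to an $L^1(\eta)$ bound on $\ell_h$. Your extra remark verifying finiteness of $\cR_P(h)$ is a nice touch the paper omits.
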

    \begin{proof}
        Let $h':X\to \bbR$ be continuous and let $\epsilon >0$ be arbitrary. By uniform continuity of $\ell$, there exists a $\delta > 0$ such that, if $|y-y'|<\delta$, then $|\ell(y,y'')-\ell(y',y'')|<\epsilon$ for any $y''\in \bbR$. By \cite[Thm 1]{leshno_multilayer_1993}, there exists an $h\in H$ with $|h(x) - h'(x)| < \delta$ for all $x\in X$. Then
        \begin{align*}
            d_{\ell,\eta}(h,h') & = \int \big| \ell(h(x),y) - \ell(h'(x),y) \big| \, \eta(dx {\times} dy)
            \leq \int \epsilon\, \eta(dx{\times}dy) = \epsilon.
        \end{align*}
        Since $H \subseteq H'$, this implies the Hausdorff distance $d_H^{d_{\ell,\eta}}(H,H')$ is less than $\epsilon$. By Theorem~\ref{thm:H-stability}, $\dexp(P,P')<\epsilon$ as well.
    \end{proof}

    In other words, under certain conditions, the risk distance does not distinguish between a model and the class of functions in which it is dense.

\section{Geometry of the Space of Problems}
\label{sec:geometry}
We now shift our attention from stability properties of $\dexp$ to its geometric properties. We will explore geodesics and their connection to optimal couplings and correspondences. We also show that classification problems (that is, those with finite output spaces) are dense in a larger class of problems. We will use this density to study the effect of limited sampling by studying the convergence of ``empirical problems,'' which are problems whose joint laws are given by empirical measures gleaned from finite random samples.

\begin{figure}
    \centering
    \begin{tikzpicture}
        \shade[ball color = lightgray, opacity = 0.75] (0,0,0) ellipse (3cm and 1.5cm);

        \coordinate (P) at (-2,0.5);
        \coordinate (Q) at (2,-0.5);

        \draw[thick, red] (P) .. controls (0,0.75) and (1,0.5) .. (Q);

        \fill (P) circle (2pt) node[above] {P};
        \fill (Q) circle (2pt) node[below] {P'};
    \end{tikzpicture}
    \caption{A simple depiction of two problems $P$ and $P'$ in the space of problems, joined by a geodesic.}
\end{figure}
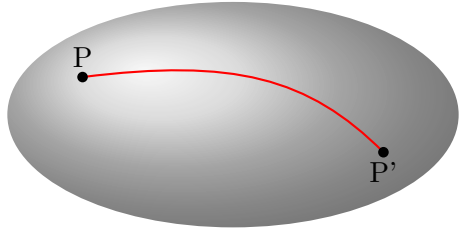

\subsection{Geodesics}\label{subsec:geodesics}
    We begin our exploration of the geometry of the space of problems by searching for geodesics. A \emph{geodesic} in a metric space $(X,d_X)$ is a continuous function from the unit interval $f:[0,1]\to X$ such that, for all $s,t\in [0,1]$,
    \[d_X(f(t),f(s)) = d_X(f(0),f(1))|s-t|.\]
    Some would call such an $f$ a ``shortest path'' \citep[Chapter 2.5]{BBI} and reserve the term ``geodesic'' for a path that \emph{locally} satisfies the above property. However, the above definition is standard in the field of optimal transport \citep[Section 9.2, Eq. 9.4]{ambrosio_lectures_2021}.

    While valuable in illuminating the geometry of a space, geodesics are not solely of geometric interest. A geodesic between two points gives us a way to interpolate or average between them. If a family $\{P_t\}_{t\in[0,1]}$ of problems defines a geodesic in the space of problems, then $P_{1/2}$ could reasonably be considered an average of $P_0$ and $P_1$. If we would rather take a weighted average, we could choose smaller or larger values of $t$ to give more weight to $P_0$ or to $P_1$ respectively.

    We will show that geodesics can be supplied by certain couplings and correspondences.
    
    \begin{definition}
        Let $P$ and $P'$ be problems.
        If there exists a coupling $\gamma\in \Pi(\eta,\eta')$ and a correspondence $R\in \cC(H,H')$ such that
        \begin{align*}
            \dexp(P,P') = \dis_{P,P'}(R,\gamma),
        \end{align*}
        then we call $\gamma$ an \emph{optimal coupling} and $R$ an \emph{optimal correspondence} for $P$ and $P'$. That is, $\gamma$ and $R$ are called optimal if they achieve the infima in the definition of $\dexp(P,P')$.
    \end{definition}
    In Section~\ref{subsec:optimal-couplings}, we will explore sufficient conditions for the existence of optimal couplings and correspondences. Presently, we motivate our interest in optimal couplings and correspondences by showing that they are closely related to the geometry of the space of problems; optimal couplings and correspondences provide us with explicit geodesics in the space of problems.

    \begin{theorem}\label{thm:geodesic-existence}
        Suppose $P_0$ and $P_1$ admit an optimal coupling $\gamma$ and an optimal correspondence $R$. Then the following family of problems is a geodesic between $P_0$ and $P_1$:
        \[P_t := (X_0\times X_1,Y_0\times Y_1,\gamma,\ell_t,R_\times)\]
        for all $t\in [0,1]$, where
        \begin{itemize}
            \item $\ell_t((y_0,y_1),(y'_0,y'_1)) := (1-t)\ell_0(y_0,y'_0) + t\ell_1(y_1,y'_1)$.
            \item $R_\times := \{h_0\times h_1 | h_0\in H_0, h_1\in H_1\}$.
        \end{itemize}
    \end{theorem}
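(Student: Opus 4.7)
The plan is to establish the geodesic equation $\dexp(P_s, P_t) = |s-t| \cdot \dexp(P_0, P_1)$ for all $s, t \in [0,1]$ by deriving matching upper and lower bounds. First I would check that each $P_t$ is a valid problem, for which the only nontrivial point is that every $h_0 \times h_1 \in R_\times$ has finite risk: by the marginal property of $\gamma \in \Pi(\eta_0, \eta_1)$, the definition of $\ell_t$ yields
\[ \cR_{P_t}(h_0 \times h_1) = (1-t)\,\cR_{P_0}(h_0) + t\,\cR_{P_1}(h_1), \]
which is finite by hypothesis.

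For the upper bound $\dexp(P_0, P_t) \leq t \cdot \dexp(P_0, P_1)$, I would exhibit an explicit coupling-correspondence pair. Define $\mu_0 \in \Pi(\eta_0, \gamma)$ as the pushforward of $\gamma$ under the map $(x_0, x_1, y_0, y_1) \mapsto ((x_0, y_0), (x_0, x_1, y_0, y_1))$ (whose marginals are $\eta_0$ and $\gamma$ by the coupling property of $\gamma$), and take the correspondence $R'_0 := \{(h_0, h_0 \times h_1) : (h_0, h_1) \in R\} \in \cC(H_0, R_\times)$, which inherits the correspondence property from $R$. The algebraic identity
\[ \ell_0(h_0(x_0), y_0) - \ell_t\bigl((h_0(x_0), h_1(x_1)), (y_0, y_1)\bigr) = t\bigl[\ell_0(h_0(x_0), y_0) - \ell_1(h_1(x_1), y_1)\bigr] \]
collapses the integrand, so $\dis_{P_0, P_t}(R'_0, \mu_0) = t \cdot \dis_{P_0, P_1}(R, \gamma) = t \cdot \dexp(P_0, P_1)$. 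A symmetric construction gives $\dexp(P_t, P_1) \leq (1-t)\, \dexp(P_0, P_1)$.

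For arbitrary $s, t \in [0, 1]$, the problems $P_s$ and $P_t$ share the same input space, output space, joint law $\gamma$, and predictor set $R_\times$, differing only in their loss functions. Applying Condition~2 of Theorem~\ref{thm:characterization} directly and invoking the same algebraic identity yields $\dexp(P_s, P_t) \leq |s-t| \cdot \dexp(P_0, P_1)$. For the reverse inequality, the triangle inequality handles $0 \leq s \leq t \leq 1$ via
\[ \dexp(P_0, P_1) \leq \dexp(P_0, P_s) + \dexp(P_s, P_t) + \dexp(P_t, P_1), \]
and substituting the upper bounds just established forces $\dexp(P_s, P_t) \geq (t-s) \cdot \dexp(P_0, P_1)$. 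Continuity of $t \mapsto P_t$ then follows automatically from the equation $\dexp(P_s, P_t) = |s-t|\, \dexp(P_0,P_1)$.

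The main subtlety lies in correctly setting up the coupling $\mu_0$ between $\eta_0$ (on $X_0 \times Y_0$) and $\gamma$ (on $X_0 \times X_1 \times Y_0 \times Y_1$) so that both marginals work out, and in recognizing that the correspondence $R'_0$ must be indexed by pairs from $R$ rather than all of $H_0 \times H_1$ for the sup over $R'_0$ to collapse to the sup defining $\dexp(P_0, P_1)$. The pleasant trick is that for the general $s, t$ case one never needs to construct an intermediate coupling at all: Condition~2 from Theorem~\ref{thm:characterization} short-circuits what would otherwise be a proliferation of gluings.
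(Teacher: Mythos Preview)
Your proof is correct and follows essentially the same route as the paper's: an upper bound on $\dexp(P_s,P_t)$ via the diagonal coupling/correspondence (your invocation of Condition~2 is exactly this), combined with the triangle inequality to force equality. The only cosmetic difference is that where the paper dispatches the endpoint identification $\dexp(P_0^{\text{orig}},P_0^{\text{param}})=0$ by citing the simulation argument in the paragraph preceding the proof, you instead build the coupling $\mu_0\in\Pi(\eta_0,\gamma)$ by hand---which amounts to the same thing, since that coupling is precisely the one induced by the simulation maps.
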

    It may at first appear that we have given $P_0$ two conflicting definitions in the above statement, once as the given
    \[P_0 := (X_0,Y_0,\eta_0,\ell_0,H_0)\]
    and again by setting $t=0$ in our parameterized family, yielding
    \[P_0 := (X_0\times X_1,Y_0\times Y_1,\gamma,\ell_0,R_\times).\]
    However, the latter problem is a simulation of the former via the projection maps $X_0\times X_1 \to X_0$ and $Y_0\times Y_1 \to Y_0$. Hence the two problems are distance zero apart under the Risk distance and can be safely identified. A similar line of thought applies to $P_1$.

    The proof that this family is a geodesic is very similar to the proofs of analogous results for the Gromov-Hausdorff and Gromov-Wasserstein distances \citep{chowdhury_explicit_2018,sturm_space_2012}.
    \begin{proof}
        Define $C:= \dexp(P_0,P_1)$. Let $0\leq s\leq t \leq 1$. If we can show
        \[\dexp(P_s,P_t) \leq C(t-s),\]
        then equality will follow. Indeed, if the inequality were strict for some $0\leq s_0 \leq t_0 \leq 1$, then we could use the triangle inequality to write
        \begin{align*}
            C = \dexp(P_0,P_1)
            & \leq 
            \dexp(P_0,P_{s_0}) + \dexp(P_{s_0},P_{t_0}) + \dexp(P_{t_0},P_1)
            \\ & <
            s_0 C + (t_0 - s_0) C + (1-s_0)C = C,
        \end{align*}
        which is a contradiction.

        Now, we can bound $\dexp(P_s,P_t)$ by selecting the diagonal coupling from $\Pi(\gamma,\gamma)$ and diagonal correspondence from $\cC(R_\times,R_\times)$ to get
        \begin{align*}
            & \dexp(P_s,P_t)\\
            \leq &
            \sup_{h_0\times h_1 \in R_\times}
            \int \big| \ell_s((h_0(x_0),h_1(x_1)), (y_0,y_1)) - \ell_t((h_0(x_0),h_1(x_1)), (y_0,y_1))\big | \, \gamma(dx_0{\times} dy_0 {\times} dx_1 {\times}  dy_1)
            \\ = &
            \sup_{h_0\times h_1 \in R_\times}
            \int \big| (t-s)  \big(\ell_0(h_0(x_0),y_0) + \ell_1(h_1(x_1),y_1) \big )\big | \, \gamma(dx_0{\times} dy_0 {\times} dx_1 {\times}  dy_1)
            \\ &
            = (t-s) \sup_{(h_0,h_1) \in R} 
            \int \big|\ell_0(h_0(x_0),y_0) + \ell_1(h_1(x_1),y_1) \big | \, \gamma(dx_0{\times} dy_0 {\times} dx_1 {\times}  dy_1)
            \\ &
            = (t-s) C.
        \end{align*}
        The last equality is because $\gamma$ and $R$ are an optimal coupling and correspondence of $P_0$ and $P_1$.
    \end{proof}

\subsection{Existence of Optimal Couplings and Correspondences}\label{subsec:optimal-couplings}
    Motivated by Theorem~\ref{thm:geodesic-existence}, we search for conditions under which optimal couplings and correspondences exist. We will achieve this by imposing some continuity and compactness conditions on our problems.
    
    \begin{definition}
    A problem $P$ is called \emph{continuous} if, for all $h\in H$, $\ell_h$ is continuous $\eta$-almost everywhere.
    \end{definition}

    Note that the above condition applies to many relevant problems. A typical linear regression problem will be continuous. For instance, the linear regression problem in Example~\ref{ex:linear-regression} is continuous, since the loss is continuous, as is every predictor. Under reasonable assumptions, a classification problem can be continuous as well; one only needs the possible decision boundaries to each have measure zero in the $X$-marginal. Take the classification problem $P$ in Example~\ref{ex:linear-classification}. In that problem, $\ell_h$ is discontinuous for every predictor $h$. However, if we assume that any line in $\bbR^2$ has measure zero with respect to the $X$-marginal of $\eta$ (as would be the case if the $X$-marginal of $\eta$ is given by a continuous probability distribution), then any linear decision boundary has measure zero. Consequently, $\ell_h$ will be continuous (indeed, constant) $\eta$-almost everywhere.
    
    Let $P,P'$ be continuous problems. Recall that the risk distortion is the function
    \[
        \dis_{P,P'}: \cC(H,H') \times \Pi(\eta,\eta') \to \mathbb R
    \]
    \[
        \dis_{P,P'}(R,\gamma) := \sup_{(h,h')\in R} \int \big| \ell_h(x,y)-\ell_{h'}(x',y')\big| \, \gamma(dx{\times} dy {\times} dx'{\times} dy').
    \]
    We would like to discuss the potential continuity of this function, and hence require a suitable topology on its domain. We endow $\Pi(\eta,\eta')$ with the weak topology. We give the set $\set{\ell_h}{h\in H}$ the $L^1(\eta)$ topology, and endow $H$ with the initial topology induced by the map $h\mapsto \ell_h$. Equivalently, we endow $H$ with the pseudometric $d_{\ell,\eta}$ defined in Section~\ref{subsec:stability-H} and similarly equip $H'$ with $d_{\ell',\eta'}$.
    We then endow $H\times H'$ with the product metric $\delta((g,g'),(h,h')) := \max\{d_{\ell,\eta}(g,h), d_{\ell',\eta'}(g',h')\}$, and place the resulting Hausdorff distance $\dH^\delta$ on $\cC(H,H')$.

    Define
    \[
        \widetilde\dis_{P,P'}: H\times H' \times \Pi(\eta,\eta') \to \bbR
    \]
    \[
        \widetilde\dis_{P,P'}(h,h',\gamma) := \int \big| \ell_h(x,y)-\ell_{h'}(x',y')\big| \, \gamma(dx{\times} dy {\times} dx'{\times} dy').
    \]
    Hence
    \[\dis_{P,P'}(R,\gamma) = \sup_{(h,h')\in R} \widetilde\dis_{P,P'}(h,h',\gamma).\]

    We proceed with a sequence of lemmas. We begin by showing that, under certain conditions, $\widetilde\dis_{P,P'}$ is lower semi-continuous, then leverage this result to obtain lower semi-continuity of $\dis_{P,P'}$. Recall that a function $f:S\to \bbR$ on a topological space $S$ is \emph{lower semi-continuous} if, for all $x_0\in S$, we have $\liminf_{x\to x_0} f(x) \geq f(x_0)$. Lower semi-continuity is a useful notion because such functions satisfy a limited version of the extreme value theorem. Specifically, a lower semi-continuous function on a compact space achieves its minimum.

    \begin{lemma}\label{lem:distortion-continuous}
        Let $P$ and $P'$ be continuous problems. Then $\widetilde\dis_{P,P'}$ is lower semi-continuous.
    \end{lemma}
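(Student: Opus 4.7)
My plan is to verify sequential lower semi-continuity, which is equivalent to lower semi-continuity in this setting since $\Pi(\eta,\eta')$ endowed with the weak topology is metrizable (Polish marginals) and $H$, $H'$ are (pseudo)metrized by $d_{\ell,\eta}$ and $d_{\ell',\eta'}$. Fix a convergent sequence $(h_n,h'_n,\gamma_n)\to (h,h',\gamma)$ in the product topology and aim to show
\[
\liminf_{n\to\infty}\widetilde\dis_{P,P'}(h_n,h'_n,\gamma_n) \;\geq\; \widetilde\dis_{P,P'}(h,h',\gamma).
\]
The first step is to absorb the convergence of the predictors via the pointwise reverse triangle inequality
\[
\big|\ell_{h_n}(x,y)-\ell'_{h'_n}(x',y')\big| \geq \big|\ell_h(x,y)-\ell'_{h'}(x',y')\big| - \big|\ell_{h_n}-\ell_h\big|(x,y) - \big|\ell'_{h'_n}-\ell'_{h'}\big|(x',y').
\]
Integrating against $\gamma_n$ and using that $\gamma_n\in\Pi(\eta,\eta')$, the two subtracted integrals collapse to $d_{\ell,\eta}(h_n,h)$ and $d_{\ell',\eta'}(h'_n,h')$ respectively, both of which tend to $0$ by the very definition of the topologies on $H$ and $H'$.

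The substance of the proof then reduces to proving the weak-convergence inequality
\[
\liminf_{n\to\infty}\int \big|\ell_h - \ell'_{h'}\big|\,d\gamma_n \;\geq\; \int \big|\ell_h - \ell'_{h'}\big|\,d\gamma.
\]
Set $f:=|\ell_h - \ell'_{h'}|$ on $X\times Y\times X'\times Y'$. Continuity of the problems means that $\ell_h$ is continuous off an $\eta$-null set $D\subseteq X\times Y$ and $\ell'_{h'}$ is continuous off an $\eta'$-null set $D'\subseteq X'\times Y'$. Since the marginals of $\gamma$ are $\eta$ and $\eta'$, the lifted exceptional set $(D\times X'\times Y')\cup(X\times Y\times D')$ is $\gamma$-null, so $f$ is $\gamma$-almost everywhere continuous on the product.

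The main obstacle is that $f$ need not be bounded, and so the Portmanteau theorem does not apply to $f$ directly. I would handle this by truncation: for each $M>0$, the function $f\wedge M$ is bounded and $\gamma$-a.e.\ continuous (composition with the continuous map $t\mapsto t\wedge M$), hence Portmanteau gives $\int (f\wedge M)\,d\gamma_n \to \int (f\wedge M)\,d\gamma$. Combining with the trivial bound $\int f\,d\gamma_n \geq \int (f\wedge M)\,d\gamma_n$ yields $\liminf_n \int f\,d\gamma_n \geq \int (f\wedge M)\,d\gamma$ for every $M$, and then monotone convergence as $M\to\infty$ on the right-hand side finishes the argument.
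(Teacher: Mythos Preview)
Your argument is correct. Both your proof and the paper's reduce to the same sequential statement and use the same triangle-inequality manoeuvre to absorb the convergence $h_n\to h$ and $h'_n\to h'$ via the $L^1(\eta)$ and $L^1(\eta')$ norms (exploiting that every $\gamma_n$ has the fixed marginals $\eta,\eta'$). The only substantive difference is in how the weak-convergence inequality
\[
\liminf_{n}\int |\ell_h-\ell'_{h'}|\,d\gamma_n \;\geq\; \int |\ell_h-\ell'_{h'}|\,d\gamma
\]
is obtained. The paper replaces $|\ell_h-\ell'_{h'}|$ by its lower semi-continuous envelope $f$, observes that $f$ agrees with the original integrand off a set depending only on the marginals (hence $\gamma$-null \emph{and} $\gamma_n$-null), and then invokes the Portmanteau inequality for nonnegative l.s.c.\ functions \citep[Lemma~4.3]{Villani_old_and_new_2008}. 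You instead truncate to $f\wedge M$, use the bounded-a.e.-continuous form of Portmanteau (continuous mapping theorem) to get actual convergence at level $M$, and pass $M\to\infty$ by monotone convergence. Your route is slightly more elementary in that it avoids constructing the l.s.c.\ envelope; the paper's route avoids the truncation limit but relies on the implicit observation that the null set where $f$ differs from $|\ell_h-\ell'_{h'}|$ is null for every coupling, not just the limit $\gamma$. Both are standard devices and yield the result with no extra hypotheses.
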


    \begin{lemma}\label{lem:full-distortion-continuous}
        Let $P$ and $P'$ be continuous problems. Then $\dis_{P,P'}$ is lower semi-continuous.
    \end{lemma}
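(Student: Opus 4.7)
The plan is to pass lower semi-continuity from $\widetilde\dis_{P,P'}$ (which Lemma~\ref{lem:distortion-continuous} already gives us) to the supremum $\dis_{P,P'}(R,\gamma) = \sup_{(h,h')\in R} \widetilde\dis_{P,P'}(h,h',\gamma)$, using the fact that convergence of correspondences in Hausdorff distance lets us lift any point in the limit correspondence to a sequence of points in the converging ones.

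Concretely, suppose $(R_n,\gamma_n)\to (R_0,\gamma_0)$ in $\cC(H,H')\times \Pi(\eta,\eta')$, meaning $\dH^\delta(R_n,R_0)\to 0$ and $\gamma_n\to \gamma_0$ weakly. Fix any pair $(h_0,h'_0)\in R_0$. By definition of Hausdorff distance, for each $n$ we can choose $(h_n,h'_n)\in R_n$ with
\[\delta\bigl((h_n,h'_n),(h_0,h'_0)\bigr) \leq \dH^\delta(R_n,R_0) + \tfrac{1}{n},\]
so $(h_n,h'_n)\to (h_0,h'_0)$ in the product pseudometric. Applying Lemma~\ref{lem:distortion-continuous} to the convergent triple $(h_n,h'_n,\gamma_n)\to (h_0,h'_0,\gamma_0)$ gives
\[\liminf_n \widetilde\dis_{P,P'}(h_n,h'_n,\gamma_n) \ \geq\ \widetilde\dis_{P,P'}(h_0,h'_0,\gamma_0).\]
Since $(h_n,h'_n)\in R_n$, the left-hand side is bounded above by $\liminf_n \dis_{P,P'}(R_n,\gamma_n)$, so we obtain
\[\liminf_n \dis_{P,P'}(R_n,\gamma_n) \ \geq\ \widetilde\dis_{P,P'}(h_0,h'_0,\gamma_0).\]

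To finish, we take a supremum on the right over $(h_0,h'_0)\in R_0$. If $\dis_{P,P'}(R_0,\gamma_0)$ is finite, for any $\epsilon>0$ we pick $(h_0,h'_0)\in R_0$ achieving $\widetilde\dis_{P,P'}(h_0,h'_0,\gamma_0)>\dis_{P,P'}(R_0,\gamma_0)-\epsilon$ and let $\epsilon\downarrow 0$; if it is $+\infty$, we instead pick a sequence of pairs driving $\widetilde\dis_{P,P'}$ to infinity and apply the same bound. Either way,
\[\liminf_n \dis_{P,P'}(R_n,\gamma_n) \ \geq\ \dis_{P,P'}(R_0,\gamma_0),\]
which is lower semi-continuity.

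The main subtlety, and the only place where we use anything nontrivial, is the application of Lemma~\ref{lem:distortion-continuous} with the \emph{joint} convergence of $(h_n,h'_n,\gamma_n)$: the points $(h_n,h'_n)$ are chosen depending on $n$, so we really need the lower semi-continuity of $\widetilde\dis_{P,P'}$ in all three arguments simultaneously (which is what Lemma~\ref{lem:distortion-continuous} provides). The lifting of $(h_0,h'_0)\in R_0$ to a convergent sequence $(h_n,h'_n)\in R_n$ is immediate from the definition of Hausdorff distance and requires no closedness hypothesis on the $R_n$.
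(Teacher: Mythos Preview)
Your argument is correct and is, at heart, the same idea as the paper's: lift a near-optimal point $(h_0,h'_0)\in R_0$ to a sequence $(h_n,h'_n)\in R_n$ via Hausdorff convergence, then invoke the joint lower semi-continuity of $\widetilde\dis_{P,P'}$ from Lemma~\ref{lem:distortion-continuous}. The paper packages this step differently: it first proves an abstract lemma stating that $A\mapsto\sup_{a\in A}f(a)$ is lower semi-continuous on $\pow(X)$ under the Hausdorff distance whenever $f$ is lower semi-continuous, and then factors $\dis_{P,P'}$ as the composition of an isometric embedding $(R,\gamma)\mapsto R\times\{\gamma\}$ into $\pow(H\times H'\times\Pi(\eta,\eta'))$ followed by this sup-map. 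Your direct argument is shorter and avoids the somewhat fussy verification that the embedding is isometric; the paper's decomposition isolates a reusable lemma but otherwise buys nothing extra here. Both rely on exactly the same ingredients, and your observation that joint lower semi-continuity in all three variables is the crucial input is spot on.
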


    Using the fact that a lower semi-continuous function on a compact set achieves its minimum, we can prove the following Theorem.

    \begin{theorem}\label{thm:infimum-achieved}
        If $P$ and $P'$ are continuous problems with compact predictor sets, then there exists an optimal coupling and correspondence for $P$ and $P'$. That is, there exists $R\in \cC(H,H')$ and $\gamma \in \Pi(\eta,\eta')$ such that
        \begin{align*}
        \dexp(P,P') = \dis_{P,P'}(R,\gamma).
        \end{align*}
    \end{theorem}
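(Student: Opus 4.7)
The plan is to exploit the lower semi-continuity of $\dis_{P,P'}$ provided by Lemma~\ref{lem:full-distortion-continuous} together with a compactness argument: a lower semi-continuous function on a compact space achieves its infimum, so it suffices to realize $\dexp(P,P')$ as an infimum over a compact subset of $\cC(H,H') \times \Pi(\eta,\eta')$.

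First I would argue that, without loss of generality, we may restrict the infimum defining $\dexp(P,P')$ to \emph{closed} correspondences. Given any $R\in\cC(H,H')$, its closure $\overline{R} \subseteq H\times H'$ is again a correspondence: for each $h\in H$, we already have some $(h,h')\in R \subseteq \overline{R}$, so $\pi_1(\overline{R}) = H$, and symmetrically for $\pi_2$. Moreover, $\dis_{P,P'}(\overline{R},\gamma) = \dis_{P,P'}(R,\gamma)$ for every $\gamma$: the inequality $\geq$ is immediate since $R \subseteq \overline{R}$, while for the reverse, any $(h,h')\in\overline{R}$ is the limit of a sequence $(h_n,h'_n)\in R$, and Lemma~\ref{lem:distortion-continuous} gives
\[
\widetilde{\dis}_{P,P'}(h,h',\gamma) \leq \liminf_n \widetilde{\dis}_{P,P'}(h_n,h'_n,\gamma) \leq \dis_{P,P'}(R,\gamma).
\]

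Next I would verify compactness of the relevant spaces. Since $H$ and $H'$ are compact and equipped with the pseudometrics $d_{\ell,\eta}$ and $d_{\ell',\eta'}$, the product $H\times H'$ is a compact pseudometric space. By the Blaschke selection theorem, its hyperspace of closed subsets under the Hausdorff distance is compact. Within this hyperspace, the subset of closed correspondences is itself closed: if closed correspondences $R_n$ converge to a closed set $R$ in Hausdorff distance, then for each fixed $h\in H$ we may choose $(h,h'_n)\in R_n$, extract a convergent subsequence $(h,h'_{n_k})\to (h,h')$ using compactness of $H'$, and observe that Hausdorff convergence forces $(h,h')\in R$; hence $\pi_1(R)=H$, and $\pi_2(R) = H'$ by the symmetric argument. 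So the space of closed correspondences is compact. Meanwhile, $\Pi(\eta,\eta')$ is weakly compact by Prokhorov's theorem, since the marginals $\eta$ and $\eta'$ are tight Borel probability measures on Polish spaces, which forces tightness of every coupling.

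Finally, I would combine these ingredients. The product of the space of closed correspondences and $\Pi(\eta,\eta')$ is compact; on this product, $\dis_{P,P'}$ is lower semi-continuous by Lemma~\ref{lem:full-distortion-continuous}, so it attains its infimum at some $(R,\gamma)$. By the first step this infimum equals $\dexp(P,P')$, so $(R,\gamma)$ is the desired optimal correspondence and coupling. The step I expect to require the most care is the closure argument and the verification that the space of closed correspondences is itself closed in the hyperspace; both rely essentially on the compactness of $H,H'$ and the lower semi-continuity established in Lemma~\ref{lem:distortion-continuous}, but they are the places where the combined geometric and measure-theoretic hypotheses do the real work.
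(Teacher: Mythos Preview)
Your proposal is correct and follows essentially the same approach as the paper: restrict to closed correspondences (showing this does not change the infimum via lower semi-continuity), establish compactness of the space of closed correspondences via Blaschke and of $\Pi(\eta,\eta')$ via Prokhorov, and then apply the extreme value principle for lower semi-continuous functions on compact sets using Lemma~\ref{lem:full-distortion-continuous}. The only cosmetic difference is that the paper obtains $\dis_{P,P'}(\overline{R},\gamma)=\dis_{P,P'}(R,\gamma)$ by noting $R$ and $\overline{R}$ are Hausdorff distance zero and invoking lower semi-continuity of $\dis_{P,P'}$ directly, whereas you argue pointwise via Lemma~\ref{lem:distortion-continuous}; both are equivalent.
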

    The proofs can be found in Appendix~\ref{appendix:proofs}.
    In turn, Theorem~\ref{thm:infimum-achieved} and the characterization of weak isomorphism provided by Theorem~\ref{prop:weak-iso-optimal} give us conditions under which weak isomorphism is equivalent to being Risk distance zero apart.
    \begin{corollary}\label{cor:compact-weak-isomorphism-equivalence}
        If $P$ and $P'$ are continuous problems with compact predictor sets, then $P$ and $P'$ are weakly isomorphic if and only if $\dexp(P,P')=0$.
    \end{corollary}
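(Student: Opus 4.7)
The plan is to derive this corollary by combining two results already established in the excerpt: the characterization of weak isomorphism in terms of zero-distortion couplings and correspondences (Proposition~\ref{prop:weak-iso-optimal}), and the existence of an optimal coupling and correspondence under the continuity and compactness hypotheses (Theorem~\ref{thm:infimum-achieved}). The statement is a biconditional, so I would handle each direction separately, but both follow essentially immediately from these earlier results.

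For the forward direction, I would invoke Theorem~\ref{thm:dexp-pseudometric} (or equivalently the triangle inequality together with Condition~1 from Theorem~\ref{thm:characterization}), which already guarantees that weakly isomorphic problems are distance zero apart. This direction requires neither continuity nor compactness of the predictor sets. Alternatively, one can observe that if $P$ and $P'$ are weakly isomorphic, Proposition~\ref{prop:weak-iso-optimal} produces a pair $(R,\gamma)$ with $\dis_{P,P'}(R,\gamma)=0$, and then $\dexp(P,P')=0$ by definition of the infimum.

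For the reverse direction, which is where the hypotheses actually do work, I would argue as follows. Assume $\dexp(P,P')=0$. By Theorem~\ref{thm:infimum-achieved}, applicable because $P$ and $P'$ are continuous and have compact predictor sets, the infimum defining $\dexp(P,P')$ is attained, so there exist $R \in \cC(H,H')$ and $\gamma \in \Pi(\eta,\eta')$ with $\dis_{P,P'}(R,\gamma) = 0$. Proposition~\ref{prop:weak-iso-optimal} then yields weak isomorphism of $P$ and $P'$.

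There is no real obstacle here since the main technical work has been done in the lemmas leading to Theorem~\ref{thm:infimum-achieved} (lower semi-continuity of the distortion functional and compactness of the space of couplings and correspondences) and in the proof of Proposition~\ref{prop:weak-iso-optimal}. The only thing to verify is that the two cited results chain together without any hidden gap, which amounts to noting that the attaining pair $(R,\gamma)$ from Theorem~\ref{thm:infimum-achieved} is precisely the type of object consumed by Proposition~\ref{prop:weak-iso-optimal}. The proof is therefore essentially a one-line deduction.
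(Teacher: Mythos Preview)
Your proposal is correct and matches the paper's approach exactly: the corollary is stated immediately after Theorem~\ref{thm:infimum-achieved} as a direct consequence of combining that theorem with Proposition~\ref{prop:weak-iso-optimal}, with the forward direction already handled by Theorem~\ref{thm:dexp-pseudometric}. The paper gives no separate proof beyond the sentence introducing the corollary, and your write-up spells out precisely the chaining the paper has in mind.
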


\subsection{Density of Classification Problems}\label{subsec:density-of-classification}
    One benefit of equipping a set with a pseudometric is that, in return, we receive a notion of convergence. One can ask whether a sequence of points converges, how quickly it does so, and what the limit might be. Density results pave the way for limiting arguments. It is often possible, by a limiting argument, to transfer knowledge about a ``simple'' dense subspace to the space as a whole.

    In this section, we show that classification problems (that is, problems whose output space is a finite collection of labels) are dense in a broader class of functions under $\dexp$. We will leverage this information in Section~\ref{subsec:empirical-convergence} to prove the main result of the section: the convergence of empirical problems (Theorem~\ref{thm:empirical-convergence}).

    \begin{definition}[Classification Problem]\label{def:classification-and-compact-problems}
        A problem is called a \emph{classification problem} if its response space is finite. A problem is called \emph{compact} if its response space is compact and its loss function is continuous.
    \end{definition}
    A classification problem is so called because we can think of its response space as a finite collection of labels. In Appendix~\ref{appendix:weak-iso}, we characterize the problems that are weakly isomorphic to classification problems.

    \begin{figure}
        \begin{minipage}{0.35\textwidth}
        \includegraphics[width=\textwidth]{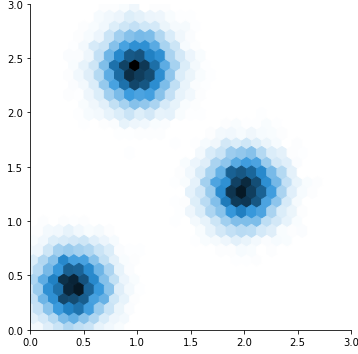}
        \end{minipage}
        $\qquad\underset{\text{Coarsening}}{\Scale[4]{\rightsquigarrow}}\qquad$
        \begin{minipage}{0.35\textwidth}
        \includegraphics[width=\textwidth]{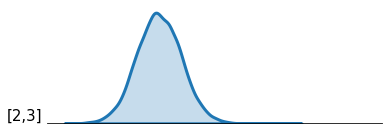}
        \includegraphics[width=\textwidth]{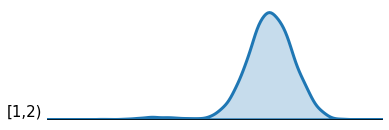}
        \includegraphics[width=\textwidth]{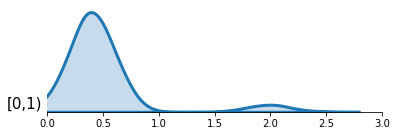}
        \end{minipage}
        \caption{ A depiction of the coarsening process in Example~\ref{ex:coarsening}. On the left is a heat map depicting the joint law $\eta$ of a problem $P$ whose input and output spaces are both the interval $X=Y=[0,3]$. We consider the partition $Q=\{[0,1), [1,2), [2,3]\}$ of $Y=[0,3]$. The coarsened problem $P_Q := (X,Q,\eta_Q,\ell_Q,H_Q)$ has three points in the output space $Q$, represented by the vertical axis. The joint law $\eta_Q$ is then a probability measure on three copies of $[0,3]$. We depict $\eta_Q$ on the right with density plots of the three class-conditional probability measures. 
        }
        \label{fig:coarsening}
    \end{figure}

    Given a compact problem $P$, we can produce a related classification problem $P_Q$ by clustering the response space into a finite partition $Q$ and applying a procedure we call \emph{coarsening}.
    The definition of a coarsening makes use of a certain map associated to a partition. Given a set $A$ and a partition $Q$ of $A$, consider the map $\pi_Q:A\to Q$ which sends each $a\in A$ to the unique partition block $q\in Q$ which contains $a$. That is, $\pi_Q$ is defined such that $a\in \pi_Q(a)$ for all $a\in A$. This map $\pi_Q$ is often called the \emph{quotient map} associated to $Q$.
    \begin{definition}[Coarsening] \label{def:coarsening}
        Let $P$ be a problem, let $Q$ be a finite partition of $Y$ into measurable sets, and let $\pi_Q:Y\to Q$ be the quotient map associated to $Q$.
        
        The \emph{coarsening of $P$ with respect to $Q$} is the \emph{classification} problem
        \[P_Q = (X,Q,\eta_Q,\ell_Q,H_Q)\]
        where
        \begin{itemize}
          \item $\eta_Q := (\id_X \times \pi_Q)_\sharp \eta$
          \item $\ell_Q(q,q') := \sup_{y\in q, y'\in q'} \ell(y,y')$,
          \item $H_Q := \set{\pi_Q\circ h}{h\in H}$.
        \end{itemize}
    \end{definition}

Note that if $Q$ is a finite partition of $Y$, then $Q$ is itself a finite set of partition blocks. The response space $Q$ of the coarsened problem $P_Q$ is a finite set with one element for each block of the partition $Q$, and hence $P_Q$ is indeed a classification problem.

\begin{examples}\label{ex:coarsening}
    An example of the coarsening procedure is depicted in Figure~\ref{fig:coarsening}. We imagine a problem
    \[P = ([0,3], [0,3], \eta,\ell,H)\]
    where $\eta$ is as pictured, $\ell(y_1,y_2):=|y_1 - y_2|$ is the Euclidean distance, and $H$ is the set of increasing continuous bijections $[0,3] \to [0,3]$. We choose the partition $Q = \{[0,1), [1,2), [2,3]\}$ of $P$ and consider the coarsened problem
    \[P_Q = ([0,3], Q, \eta_Q, \ell_Q, H_Q).\]
    Then $P_Q$ is a classification problem with three points in its output space $Q = \{q_1,q_2,q_3\}$, where $q_1:=[0,1), q_2:=[1,2),$ and $q_3:=[2,3]$.
    Then $\eta_Q$ is a probability measure on $[0,3]\times Q$, a space which consists of three line segments: $[0,3]\times \{q_1\}$, $[0,3]\times \{q_2\}$, and $[0,3]\times \{q_3\}$. 
    
    The coarsened loss $$\ell_Q(q_i,q_j) := \sup_{y\in q_i, y'\in q_j} \ell(y,y') \,\,\mbox{for $i,j=1,2,3$},$$ can be written in the form of a table:
    \begin{center}
        \begin{tabular}{|c|c c c| } 
        \hline
        $\ell_Q$ & $q_1$ & $q_2$ & $q_3$ \\
        \hline
        $q_1$ & 1 & 2 & 3 \\ 
        $q_2$ & 2 & 1 & 2 \\ 
        $q_3$ & 3 & 2 & 1 \\ 
        \hline
        \end{tabular}
    \end{center}
    The predictor set $H_Q$ is $\set{\pi_Q \circ h}{h\in H}$, where $\pi_Q : Y\to Q$ is the quotient map. Then $H_Q$ can also be written as the set of all functions $g:[0,3] \to Q$ of the form
    \[g(t) = \begin{cases}
        q_1 & t < a\\
        q_2 & a \leq t < b \\
        q_3 & b \leq t
    \end{cases}\]
    for all $0<a \leq b <3$. 
\end{examples}
    
    Coarsening is reminiscent of the ``probing reduction'' \citep{langford_probing_2005}, which takes a probability estimation problem and solves it with an ensemble of classifier learners, hence ``reducing'' the problem of probability estimation to the problem of binary classification. Indeed, one can view the probing reduction as a practical method of approximating a problem whose output space is $Y=[0,1]$ by iteratively constructing finer and finer partitions of $Y$ and solving the resulting coarsened problems with binary classifiers.

    \begin{theorem}\label{thm:coarsen-approx}
    Let $P$ be a compact problem. For any
    $\epsilon>0$, there is a partition $Q$ of the response space such that
    \[\dexp(P,P_Q)<\epsilon.\]
    \end{theorem}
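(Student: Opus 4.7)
The plan is to construct an explicit coupling and correspondence witnessing that $\dexp(P,P_Q) < \epsilon$, where the partition $Q$ is chosen using compactness and uniform continuity of $\ell$.

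First I would set up the partition. Since $Y$ is a compact Polish space, fix a compatible metric $d_Y$. Then $\ell \colon Y\times Y\to \mathbb{R}_{\geq 0}$ is continuous on a compact space and hence uniformly continuous, so there is $\delta>0$ such that whenever $d_Y(y_1,y_1')<\delta$ and $d_Y(y_2,y_2')<\delta$, one has $|\ell(y_1,y_2) - \ell(y_1',y_2')| < \epsilon/2$. By total boundedness, cover $Y$ by finitely many open balls of radius $\delta/2$, and disjointify them (dropping empty sets) to obtain a finite Borel partition $Q = \{q_1,\dots,q_N\}$ of $Y$ with $\mathrm{diam}(q_i) < \delta$ for each $i$.

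Next I would exhibit the witness. Let $\gamma$ be the pushforward of $\eta$ along the map $F\colon (x,y) \mapsto ((x,y),(x,\pi_Q(y)))$. Its marginal on $X\times Y$ is $\eta$ (since the first two coordinates of $F$ are the identity on $X\times Y$), and its marginal on $X\times Q$ is $(\id_X\times \pi_Q)_\sharp \eta = \eta_Q$, so $\gamma\in \Pi(\eta,\eta_Q)$. Set $R:=\{(h,\pi_Q\circ h) : h\in H\}$; this is visibly a correspondence in $\cC(H,H_Q)$ by the definition $H_Q = \{\pi_Q\circ h : h\in H\}$.

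Finally I would bound the distortion. Because $\gamma$ is concentrated on the graph of $F$, for every $(h,\pi_Q\circ h)\in R$ we have
\[
\int \bigl|\ell(h(x),y) - \ell_Q(\pi_Q(h(x)),\pi_Q(y))\bigr| \, \gamma(dx{\times} dy{\times} dx'{\times} dq) = \int_{X\times Y}\bigl|\ell(h(x),y) - \ell_Q(\pi_Q(h(x)),\pi_Q(y))\bigr|\,\eta(dx{\times} dy).
\]
For any $(x,y)$, the pair $(h(x),y)$ belongs to $\pi_Q(h(x))\times \pi_Q(y)$, so $\ell(h(x),y) \leq \ell_Q(\pi_Q(h(x)),\pi_Q(y))$ by definition of $\ell_Q$. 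Conversely, any $(y_1,y_2)\in \pi_Q(h(x))\times \pi_Q(y)$ satisfies $d_Y(y_1,h(x))<\delta$ and $d_Y(y_2,y)<\delta$, so $\ell(y_1,y_2) < \ell(h(x),y) + \epsilon/2$ by the uniform continuity choice; taking the supremum over such $(y_1,y_2)$ gives $\ell_Q(\pi_Q(h(x)),\pi_Q(y)) \leq \ell(h(x),y) + \epsilon/2$. Hence the integrand is pointwise bounded by $\epsilon/2$, uniformly in $h$, and so $\dis_{P,P_Q}(R,\gamma)\leq \epsilon/2$, giving $\dexp(P,P_Q)\leq \epsilon/2 < \epsilon$.

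The arguments are mostly routine once the key idea---combining uniform continuity with partition refinement---is in place; the main thing to be careful about is aligning the ``sup'' in the definition of $\ell_Q$ with the uniform continuity modulus so that both coordinates of $\ell$ are controlled simultaneously by the block diameter.
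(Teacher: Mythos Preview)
Your proof is correct and follows essentially the same approach as the paper: both use the coupling induced by $\id_X\times\pi_Q$ and the correspondence $\{(h,\pi_Q\circ h):h\in H\}$, then invoke uniform continuity of $\ell$ on the compact space $Y\times Y$ to choose a partition into small-diameter blocks. The only cosmetic difference is order of presentation---the paper first derives the block-wise bound $\max_{q,q'}\bigl[\sup_{q\times q'}\ell-\inf_{q\times q'}\ell\bigr]$ for an arbitrary partition and then chooses $Q$, whereas you fix $Q$ first and bound the integrand directly; your pointwise estimate using $\ell(h(x),y)\le \ell_Q(\pi_Q(h(x)),\pi_Q(y))\le \ell(h(x),y)+\epsilon/2$ is in fact slightly sharper.
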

    The proof is relegated to Appendix~\ref{appendix:proofs}.
    \begin{remark}\label{remark:coarsening-distribution-independent}
        One notable feature of the proof of Theorem~\ref{thm:coarsen-approx} is that the partition $Q$ provided by the theorem does not depend on the joint law $\eta$ of $P$. More precisely, suppose $P$ is a problem. For any probability measure $\eta'\in \prob(X\times Y)$, we can create a new problem
        \[P_{\eta'} := (X,Y,\eta', \ell, H).\]
        The proof of Theorem~\ref{thm:coarsen-approx} shows that, not only can we find a partition $Q$ of $Y$ such that $\dexp(P,P_Q)<\epsilon$, but we can choose $Q$ such that for all $\eta'\in \prob(X\times Y)$, we have $\dexp((P_{\eta'})_Q,P_{\eta'})<\epsilon$. We will use this stronger fact to prove Theorem~\ref{thm:empirical-convergence}.
    \end{remark}

    Theorem~\ref{thm:coarsen-approx} immediately gives us a density result.

    \begin{corollary}[Classification Density Theorem]\label{cor:finite-dense-in-compact}
    Under $\dexp$, the set of classification problems is dense in the space of compact problems.
    \end{corollary}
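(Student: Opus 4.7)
The plan is to obtain this density statement as a direct corollary of Theorem~\ref{thm:coarsen-approx}. Density of classification problems in compact problems under $\dexp$ means: for every compact problem $P$ and every $\epsilon > 0$, there exists a classification problem $P^\sharp$ with $\dexp(P, P^\sharp) < \epsilon$. So the task reduces to producing, for each $(P,\epsilon)$, a classification problem in an $\epsilon$-ball around $P$.

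First I would fix a compact problem $P = (X,Y,\eta,\ell,H)$ and an arbitrary $\epsilon > 0$. Applying Theorem~\ref{thm:coarsen-approx} yields a finite measurable partition $Q$ of the response space $Y$ such that $\dexp(P, P_Q) < \epsilon$, where $P_Q = (X,Q,\eta_Q,\ell_Q,H_Q)$ is the coarsening from Definition~\ref{def:coarsening}. The second step is simply to observe that $P_Q$ qualifies as a classification problem: by construction, the response space of $P_Q$ is the partition $Q$ itself, and $Q$ is finite, which is exactly the condition in Definition~\ref{def:classification-and-compact-problems}. Consequently $P_Q$ is a classification problem within $\epsilon$ of $P$ under $\dexp$, and since $P$ and $\epsilon$ were arbitrary, the set of classification problems is dense in the set of compact problems.

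There is no real obstacle here; the corollary is essentially a translation of Theorem~\ref{thm:coarsen-approx} into topological language, together with a one-line appeal to Definition~\ref{def:classification-and-compact-problems} to verify that coarsenings are indeed classification problems. The entire content lies in Theorem~\ref{thm:coarsen-approx}, whose proof is handled in Appendix~\ref{appendix:proofs}. For expository cleanliness, the write-up should emphasize that the approximation by classification problems is performed purely on the response-space side (coarsening $Y$ into finitely many bins), leaving the input space, joint law on $X$, and predictor family essentially intact up to the pushforward through $\pi_Q$.
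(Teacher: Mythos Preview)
Your proposal is correct and matches the paper's approach exactly: the paper states that Theorem~\ref{thm:coarsen-approx} ``immediately gives us a density result'' and provides no further argument, which is precisely the one-line deduction you describe.
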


\subsection{Convergence of Empirical Problems}\label{subsec:empirical-convergence}
    With the density of classification problems in hand, we are ready to examine the effect that limited sampling can have on a problem. One often does not have direct knowledge of the underlying joint law from which data is sampled; one only has access to a finite data set. In terms of our supervised learning problems, instead of accessing the joint law $\eta$ of a problem $P$ directly, we can only access the empirical measure $\eta_n$ of $\eta$. Namely, given an i.i.d. set $\{(x_i,y_i)\}_{i=1}^n$ sampled from the probability measure $\eta$, we set $$\eta_n := \frac{1}{n}\sum_{i=1}^n \delta_{(x_i,y_i)}.$$
    The resulting problem with joint law $\eta_n$ is called the \emph{empirical problem}. 

    \begin{definition}
        If $P$ is a problem, the \emph{$n$th empirical problem induced by $P$} is the random problem defined by
        \[P_n := (X,Y,\eta_n,\ell,H).\]
    \end{definition}

    Being dependent on a random sample, $\eta_n$ is a random measure. Consequently, $P_n$ is a random problem. One would hope that as the size of the sample grows, the empirical problem $P_n$ converges to the true problem $P$ with probability 1. Under certain conditions, this is indeed true.

    \begin{theorem}[Convergence of the Empirical Problem]\label{thm:empirical-convergence}
        Let $P$ be a problem satisfying the following:
        \begin{enumerate}
            \item $H$ is compact with respect to $d_{\ell,\eta}$.
            \item $Y$ is compact.
            \item $\ell$ is continuous.
            \item The collection $\{|\ell_h - \ell_{h'}|\}_{h,h'\in H}$ forms a Glivenko-Cantelli class with respect to $\eta$. 
        \end{enumerate}
        Let $P_n$ be the $n$th empirical problem induced by $P$. Then $\dexp(P,P_n) \to 0$ almost surely.
    \end{theorem}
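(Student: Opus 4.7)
The plan is to use compactness of $H$ (Condition~1) to reduce to a finite predictor set, then control the residual terms via Theorem~\ref{thm:H-stability} and a classical Wasserstein convergence of empirical measures in Euclidean space. Given $\epsilon > 0$, I pick a finite $\epsilon$-net $\tilde H = \{h_1, \ldots, h_k\} \subseteq H$ in the pseudometric $d_{\ell, \eta}$ and form intermediate problems $\tilde P := (X, Y, \eta, \ell, \tilde H)$ and $\tilde P_n := (X, Y, \eta_n, \ell, \tilde H)$. By the triangle inequality,
\[\dexp(P, P_n) \leq \dexp(P, \tilde P) + \dexp(\tilde P, \tilde P_n) + \dexp(\tilde P_n, P_n),\]
and I will bound each term separately.

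The two outer terms involve only a change of predictor set, so Theorem~\ref{thm:H-stability} applies. Directly, $\dexp(P, \tilde P) \leq \dH^{d_{\ell, \eta}}(H, \tilde H) \leq \epsilon$ by the net property. For the symmetric term, $\dexp(\tilde P_n, P_n) \leq \dH^{d_{\ell, \eta_n}}(\tilde H, H) = \sup_{h \in H} \min_i d_{\ell, \eta_n}(h, h_i)$, since one side of the Hausdorff distance vanishes because $\tilde H \subseteq H$. Here the Glivenko--Cantelli hypothesis (Condition~4) does the key work: it yields $\sup_{h, h' \in H} |d_{\ell, \eta_n}(h, h') - d_{\ell, \eta}(h, h')| \to 0$ almost surely, so $\tilde H$ remains a $2\epsilon$-net in $d_{\ell, \eta_n}$ for all sufficiently large $n$ almost surely, and $\dexp(\tilde P_n, P_n) \leq 2\epsilon$ eventually.

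For the middle term, both $\tilde P$ and $\tilde P_n$ share the finite predictor set $\tilde H$. Using the identity correspondence on $\tilde H$ together with the elementary inequality $\sup_i \int f_i \, d\gamma \leq \int \sup_i f_i \, d\gamma$ gives
\[\dexp(\tilde P, \tilde P_n) \leq \inf_{\gamma \in \Pi(\eta, \eta_n)} \int \|L(x,y) - L(x',y')\|_\infty \, \gamma(dx{\times} dy {\times} dx' {\times} dy') = \dW^{\|\cdot\|_\infty}(L_\sharp \eta, L_\sharp \eta_n),\]
where $L: X \times Y \to \mathbb{R}^k$ is the feature map $L(x, y) := (\ell_{h_1}(x, y), \ldots, \ell_{h_k}(x, y))$ and the equality is the standard pullback identity for optimal transport on Polish spaces (obtained by disintegrating any coupling of $L_\sharp\eta, L_\sharp\eta_n$ against the disintegrations of $\eta,\eta_n$ with respect to $L$). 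By Conditions~2 and~3, $Y$ is compact and $\ell$ is continuous, so $\ell$ is bounded and $L_\sharp \eta$ is supported in a bounded subset of $\mathbb{R}^k$. Since $L_\sharp \eta_n$ is precisely the empirical measure of $n$ i.i.d.\ samples $L(x_i, y_i)$ of $L_\sharp \eta$, the classical a.s.\ Wasserstein convergence of empirical measures for compactly supported distributions in Euclidean space gives $\dW^{\|\cdot\|_\infty}(L_\sharp \eta, L_\sharp \eta_n) \to 0$ almost surely.

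Combining the three estimates yields $\limsup_n \dexp(P, P_n) \leq 3\epsilon$ almost surely; since $\epsilon$ was arbitrary, $\dexp(P, P_n) \to 0$ almost surely. The main obstacle, I expect, is that all three bounds must interlock correctly: the Glivenko--Cantelli hypothesis is essential to transfer the covering property from $d_{\ell, \eta}$ to $d_{\ell, \eta_n}$ uniformly, which is the step that allows Theorem~\ref{thm:H-stability} to tame the two outer terms; without it, the finite net constructed with respect to $\eta$ could degenerate under $\eta_n$, and the reduction would collapse. The middle term then rests on the more routine fact that bounded support in $\mathbb{R}^k$ is enough for empirical Wasserstein convergence.
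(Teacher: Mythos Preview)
Your proof is correct and takes a genuinely different route from the paper's. Both arguments begin identically, passing to a finite $\epsilon$-net $\tilde H\subseteq H$ and handling the two ``outer'' terms via Theorem~\ref{thm:H-stability} together with the Glivenko--Cantelli hypothesis (your bound $\dexp(\tilde P_n,P_n)\leq 2\epsilon$ for large $n$ is exactly the paper's term~(7)). The divergence is in the middle term. The paper invokes the coarsening machinery of Theorem~\ref{thm:coarsen-approx} to replace $Y$ by a finite partition $Q_Y$, then further partitions $X$ into a finite $Q_X$ adapted to the net, and finally reduces the convergence step to total-variation convergence of empirical measures on the finite set $Q_X\times Q_Y$ via the law of large numbers; this produces a seven-term triangle inequality. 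You instead push the pair $(\eta,\eta_n)$ forward along the finite-dimensional feature map $L=(\ell_{h_1},\dots,\ell_{h_k})$ and invoke almost-sure $W_1$-convergence of empirical measures for compactly supported laws on $\mathbb{R}^k$; the lifting via disintegration that you sketch is the standard argument showing that transport costs depending only on $L$ descend to the pushforward. Your approach is shorter and avoids the coarsening step entirely; the paper's approach, while longer, is deliberately structured to showcase the Classification Density Theorem (Corollary~\ref{cor:finite-dense-in-compact}), which is a stated theme of Section~\ref{sec:geometry}. One minor remark: for the middle term you only need the inequality $\inf_{\gamma\in\Pi(\eta,\eta_n)}\int\|L-L'\|_\infty\,d\gamma\leq d_{\mathrm W}^{\|\cdot\|_\infty}(L_\sharp\eta,L_\sharp\eta_n)$, not the full equality, so the disintegration step is sufficient and the reverse (trivial) direction is unnecessary.
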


    The proof, found in Appendix~\ref{appendix:proofs}, uses Theorem~\ref{thm:coarsen-approx} to reduce to the case where $P$ is a classification problem. We briefly illuminate the Glivenko-Cantelli condition in Theorem~\ref{thm:empirical-convergence} by providing simpler sufficient conditions.
    
    \begin{proposition}\label{prop:GC-sufficient}
        Define $g_{h,h'}(x,y) := | \ell_h(x,y) - \ell_{h'}(x,y)|$.
        Theorem~\ref{thm:empirical-convergence} still holds if Condition~4 is replaced with either of the following conditions.
        \begin{enumerate}
            \item[4a.] The collection of subgraphs $\set{\mathrm{sg}(g_{h,h'})}{h,h'\in H}$ (see Definition \ref{def:subgraph})
            has finite VC dimension. 
            \item[4b.] For any $h_0,h'_0\in H$ and $\epsilon>0$, there exists an open neighborhood $(h_0,h'_0)\in U \subseteq H\times H$ such that
            \begin{align*}
                \int \sup_{(h,h')\in U} g_{h,h'}(x,y) \eta(dx{\times} dy) -\epsilon
                & <
                \int g_{h_0,h'_0}(x,y) \eta(dx{\times} dy)
                \\&<
                \int \inf_{(h,h')\in U} g_{h,h'}(x,y) \eta(dx{\times} dy) + \epsilon.
            \end{align*}
        \end{enumerate}
    \end{proposition}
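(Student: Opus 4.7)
It suffices to show that each of Condition 4a and Condition 4b implies the original Condition 4 of Theorem~\ref{thm:empirical-convergence}: that the family $\{g_{h,h'}\}_{h,h'\in H}$ is Glivenko--Cantelli with respect to $\eta$. Once this reduction is in place, Theorem~\ref{thm:empirical-convergence} applies verbatim. Throughout I write $g = g_{h,h'}$ when the dependence is clear.

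\textbf{Case 4a.} The plan is a direct appeal to Proposition~\ref{prop:VC-sufficient}. Each $g_{h,h'}$ is non-negative, being an absolute value, and integrable, since the triangle inequality yields $g_{h,h'} \le \ell_h + \ell_{h'}$ with each summand integrable by the definition of a problem. Uniform boundedness of the family follows from compactness of $Y$ (Condition 2) and continuity of $\ell$ (Condition 3): the continuous function $\ell$ attains a maximum $M$ on the compact set $Y\times Y$, whence $g_{h,h'}\le 2M$ uniformly in $h,h'$. Together with the finite VC dimension hypothesis of 4a, Proposition~\ref{prop:VC-sufficient} yields that the family is universally Glivenko--Cantelli, which in particular gives the Glivenko--Cantelli property with respect to $\eta$.

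\textbf{Case 4b.} The plan is a finite-bracketing / finite-cover argument. By Condition 1, $H$ is compact under $d_{\ell,\eta}$, so $H\times H$ is compact under the product pseudometric. Fix $\epsilon>0$. For each $(h_0,h'_0)\in H\times H$, condition 4b furnishes an open neighborhood $U$ satisfying
\[
\int \sup_{U} g_{h,h'}\, d\eta - \int \inf_{U} g_{h,h'}\, d\eta < 2\epsilon.
\]
By compactness of $H\times H$, extract a finite subcover $U_1,\dots,U_k$. For $(h,h')\in U_i$ and any probability measure $\mu$ on $X\times Y$ one has $\int \inf_{U_i} g\, d\mu \le \int g_{h,h'}\, d\mu \le \int \sup_{U_i} g\, d\mu$; applied to $\mu = \eta_n$ and $\mu = \eta$ and combined with the bracket bound, this yields
\[
\int g_{h,h'}\, d\eta_n - \int g_{h,h'}\, d\eta \le \left(\int \sup_{U_i} g\, d\eta_n - \int \sup_{U_i} g\, d\eta\right) + 2\epsilon,
\]
and a symmetric bound in the reverse direction using $\inf_{U_i} g$. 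The strong law of large numbers, applied separately to each of the $2k$ bounded measurable functions $\sup_{U_i} g$ and $\inf_{U_i} g$, drives the parenthesized term to $0$ almost surely. Consequently, with probability one,
\[
\limsup_{n\to\infty}\sup_{h,h'\in H}\left|\int g_{h,h'}\, d\eta_n - \int g_{h,h'}\, d\eta\right| \le 2\epsilon.
\]
Taking $\epsilon$ along a countable sequence decreasing to $0$ delivers the Glivenko--Cantelli property.

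\textbf{Main obstacle.} The subtlety in case 4b is the measurability of the pointwise suprema and infima $\sup_{U_i} g_{h,h'}$ and $\inf_{U_i} g_{h,h'}$ taken over an uncountable index set. Condition 4b implicitly assumes that these envelopes are integrable (hence measurable), but to justify this one can use separability of $H$ under $d_{\ell,\eta}$ (a compact pseudometric space is separable) together with the Lipschitz-type estimate $\int|g_{h,h'} - g_{h_0,h'_0}|\,d\eta \le d_{\ell,\eta}(h,h_0) + d_{\ell,\eta}(h',h'_0)$, which allows one to replace each $U_i$ with a countable dense subset without altering the relevant integrals. For case 4a the technical work is fully absorbed into Proposition~\ref{prop:VC-sufficient}; only the uniform boundedness hypothesis must be verified, and that is immediate from Conditions 2 and 3.
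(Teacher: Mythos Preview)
Your treatment of Condition 4a is exactly the paper's: bounded loss from compactness of $Y$ and continuity of $\ell$, then Proposition~\ref{prop:VC-sufficient}. For Condition 4b, the paper simply appeals to a result of Gaenssler (1975), saying that Conditions 1 and 4b together imply Condition 4, whereas you supply a direct finite-bracketing argument: cover the compact $H\times H$ by the neighborhoods $U_i$, bracket each $g_{h,h'}$ between $\inf_{U_i}g$ and $\sup_{U_i}g$, and invoke the strong law on the finitely many brackets. Your argument is essentially a self-contained proof of the relevant instance of the cited theorem, so the two routes are aligned in spirit; yours is more explicit and avoids an external reference, while the paper's is shorter. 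Your measurability caveat is appropriate and your reading is the natural one: the very statement of 4b writes $\int\sup_U g\,d\eta$ and $\int\inf_U g\,d\eta$, so integrability (hence measurability) of the envelopes is already part of the hypothesis, and boundedness (from Conditions 2 and 3) makes the SLLN step unproblematic.
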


    Conditions~2 and 4a together imply condition 4, since the compactness of $Y$ implies that the $g_{h,h'}$ are uniformly bounded, and hence that Proposition~\ref{prop:VC-sufficient} applies. Note that Condition~4a does not depend on the joint law $\eta$. Conditions~1 and 4b together imply condition 4 as well by a theorem of \citet{gaenssler_1975}.

    Theorem~\ref{thm:empirical-convergence} is a convergence result. A quantitative result about the \emph{rate} of convergence would be more useful. Certainly the convergence rate will depend on properties of $P$ and $P'$; the proof of Theorem~\ref{thm:empirical-convergence} suggests that the proximity of $P$ and $P'$ to less complex problems is crucial.

    \begin{question}\label{q:conv-thm}
        How does the rate of the convergence given by Theorem~\ref{thm:empirical-convergence} depend on the properties of $P$ and $P'$? Can one bound the convergence rate for specific classes of problems?
    \end{question}

\section{The $L^p$-Risk Distance: Weighting Problems}\label{sec:probabilistic}

In this section we introduce a variant of the Risk distance called the \emph{$L^p$-Risk distance} which incorporates probability measures on the predictor sets. This additional information lets us prioritize certain predictors over others by weighting them more heavily, subsequently softening the Risk distance and improving its stability. The weighting can be interpreted as a Bayesian prior, so the perspective adopted in this section is broadly aligned with the principles of Bayesian statistics. While approaches such as Bayesian decision theory \citep{berger1985statistical} incorporate elements like loss functions, prior work does not operate on a space of problems, nor does it attempt to endow such a space with a metric structure as we do in this paper.

\subsection{Weighting Predictor Sets and the $L^p$-Risk Distance}

Many of our results thus far have required assumptions of compact predictor sets or bounded loss functions.
Without these assumptions, the Risk distance tends to be poorly-behaved.
For instance, problems such as the linear regression in Example~\ref{ex:linear-regression} tend to be infinitely far away from problems of bounded loss under the Risk distance.
One reason for this phenomenon is the supremum in the definition of the Risk distance.
The supremum treats unreasonable predictors, such as those whose graphs are far from the support of the joint law, with the same consideration as more reasonable options.
One might reasonably complain about this design decision.

One solution to this problem is to \emph{weight} the predictors, de-prioritizing extreme options by giving them less weight.
These weights could come from prior knowledge or belief about which predictors might perform well or from a measure of complexity of each predictor (so that more complex predictors receive lower weights). Indeed, considering these weights as part of the specification of a problem can be seen as a  proxy for adding a regularization term to the constrained Bayes risk.

More formally, we provide weights on the predictor set in the form of a probability measure.
This lets us replace the supremum in the definition of $\dexp$ with an $L^p$-type integral, effectively softening the Risk distance.

\begin{definition}
    A \emph{weighted problem} is a pair $(P,\lambda)$ where
    \begin{itemize}
        \item $P$ is a problem whose predictor set $H$ comes equipped with a Polish topology, and
        \item $\lambda$ is a probability measure on $H$.
    \end{itemize}
    When there is no chance of confusion, we may suppress the measure $\lambda$ and denote the weighted problem by $P$. 
\end{definition}
\begin{assumption}
    We make the following measurability assumption. Given a weighted problem $(P,\lambda)$, consider the map 
    \begin{align*}
        H\times X\times Y & \to \bbR_{\geq 0}
        \shortintertext{given by}
        (h,x,y) &\mapsto \ell_h(x,y).
    \end{align*}
    Since $H$, $X$, and $Y$ are topological spaces, we can equip each space with the corresponding Borel $\sigma$-algebra. We can then equip the product $H\times X \times Y$ with the product $\sigma$-algebra. For any weighted problem $(P,\lambda)$, we henceforth assume that the above map $(h,x,y)\mapsto \ell_h(x,y)$ is measurable with respect to this product $\sigma$-algebra.
\end{assumption}

\begin{definition}
    Let $(P,\lambda)$ and $(P',\lambda')$ be weighted problems. For $p\in[1,\infty]$ and  
    for any couplings $\gamma \in \Pi(\eta,\eta')$ and $\rho \in \Pi(\lambda,\lambda')$, define the \emph{$L^p$-risk distortion} of $\gamma$ and $\rho$ to be
    \begin{align*}
         \dis_{P,P',p}(\rho,\gamma) &:= \left(\int \left(\int \big|\ell(h(x),y) - \ell'(h'(x'),y')\big| \, d\gamma(x,y, x',y') \right)^{p}d\rho(h,h')\right)^{1/p}&\mbox{ $p\in[1,\infty)$,}\\
         \dis_{P,P',\infty}(\rho,\gamma) &:= \sup_{(h,h')\in\supp[\rho]} \int \big|\ell(h(x),y) - \ell'(h'(x'),y')\big| \, d\gamma(x,y, x',y')  & \mbox{$p=\infty$.}
    \end{align*}
       The \emph{$L^p$-Risk distance} between weighted problems $(P,\lambda)$ and $(P',\lambda')$ is then defined to be
    \begin{align*}
        \dexp[,p]((P,\lambda),(P',\lambda')) & := \inf_{\rho, \gamma} \dis_{P,P',p}(\rho,\gamma)
    \end{align*}
    where $\gamma$ ranges over all couplings $\Pi(\eta,\eta')$ and $\rho$ ranges over all couplings $\Pi(\lambda,\lambda')$.
    
\end{definition}
\begin{theorem}
The $L^p$-Risk distance is a pseudometric on the collection of all weighted problems. Furthermore, we have 
\begin{itemize}
    \item[(i)] $\dexp[,p]\leq \dexp[,q]\leq \dexp[,\infty]$ for all $1\leq p\leq q \leq \infty$;
    \item[(ii)] $\dexp\leq \dexp[,\infty]$.
\end{itemize}
\end{theorem}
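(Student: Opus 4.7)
My plan is to verify each pseudometric axiom in turn, handle (i) by a coupling-by-coupling comparison, and handle (ii) by viewing the support of a coupling as a correspondence. Non-negativity is immediate; symmetry follows by pushing forward any admissible $\gamma,\rho$ through the coordinate swap, since the integrand $|\ell_h(x,y) - \ell'_{h'}(x',y')|$ is itself symmetric; vanishing on the diagonal follows by taking the diagonal couplings $\gamma = (\Delta_{X\times Y})_\sharp \eta$ and $\rho = (\Delta_H)_\sharp \lambda$, which make the integrand identically zero.

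The central step is the triangle inequality. Given weighted problems $(P_i,\lambda_i)$ for $i=1,2,3$ and $\varepsilon>0$, I would first pick near-optimal $\gamma_{12}\in\Pi(\eta_1,\eta_2)$, $\gamma_{23}\in\Pi(\eta_2,\eta_3)$, $\rho_{12}\in\Pi(\lambda_1,\lambda_2)$, $\rho_{23}\in\Pi(\lambda_2,\lambda_3)$. Two applications of the Gluing Lemma (Proposition~\ref{lem:gluing}) produce $\gamma_{123}$ on $(X_1{\times}Y_1){\times}(X_2{\times}Y_2){\times}(X_3{\times}Y_3)$ and $\rho_{123}$ on $H_1\times H_2\times H_3$ with the prescribed two-factor marginals. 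Projecting under $\pi_{13}$ yields admissible couplings $\gamma_{13},\rho_{13}$ for the distance $\dexp[,p]((P_1,\lambda_1),(P_3,\lambda_3))$. Introducing the auxiliary functions $u(h_1,h_2):=\int|\ell_{h_1}-\ell_{h_2}|\,d\gamma_{12}$ and $v(h_2,h_3):=\int|\ell_{h_2}-\ell_{h_3}|\,d\gamma_{23}$, the pointwise triangle inequality for $|\ell_{h_1}-\ell_{h_3}|$ (by inserting $\ell_{h_2}$) together with the fact that $\gamma_{13}$ and $\rho_{13}$ are pushforwards of $\gamma_{123}$ and $\rho_{123}$ lets me rewrite $\dis_{P_1,P_3,p}(\rho_{13},\gamma_{13})$ as an $L^p(\rho_{123})$-norm of an integral bounded pointwise by $u+v$; Minkowski's inequality in $L^p(\rho_{123})$ then delivers the subadditivity on $\dis_{P_1,P_3,p}$. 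Sending $\varepsilon\to 0$ finishes the finite-$p$ case. The $p=\infty$ case follows the same template with Minkowski replaced by the pointwise observation that any $(h_1,h_3)\in\pi_{13}(\supp[\rho_{123}])$ admits a witnessing $h_2$ with $(h_1,h_2)\in\supp[\rho_{12}]$ and $(h_2,h_3)\in\supp[\rho_{23}]$, after which the supremum over $\supp[\rho_{13}]$ is controlled by the sum of the two $\dis_{\cdot,\cdot,\infty}$ values.

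For claim (i), both inequalities reduce, at the level of a fixed pair $(\rho,\gamma)$, to monotonicity of $L^r$-norms on the probability space $(H\times H',\rho)$, i.e.\ to H\"older's inequality: for $1\le p\le q\le\infty$ we have $\dis_{P,P',p}(\rho,\gamma)\le\dis_{P,P',q}(\rho,\gamma)\le\dis_{P,P',\infty}(\rho,\gamma)$, and infimizing preserves the chain. For (ii), observe that for any $\rho\in\Pi(\lambda,\lambda')$, the set $\supp[\rho]$ projects onto $\supp[\lambda]$ and $\supp[\lambda']$ (a standard fact about supports of couplings), so $\supp[\rho]$ is a correspondence between the effective predictor sets; under the convention that $\lambda$ and $\lambda'$ have full support on $H$ and $H'$ (or by passing to the obvious simulation that restricts to the supports), $\supp[\rho]\in\cC(H,H')$. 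Directly from the definitions, $\dis_{P,P'}(\supp[\rho],\gamma) = \dis_{P,P',\infty}(\rho,\gamma)$, and infimizing over $(\rho,\gamma)$ yields $\dexp(P,P')\le\dexp[,\infty]((P,\lambda),(P',\lambda'))$.

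The main obstacle I anticipate is the $p=\infty$ case of the triangle inequality, where controlling $\supp[\rho_{13}]$ relative to $\pi_{13}(\supp[\rho_{123}])$ requires a short closure argument together with the fact that $(h_1,h_3)\mapsto \int|\ell_{h_1}-\ell_{h_3}|\,d\gamma_{13}$ is upper semicontinuous enough to pass to the closed support. The remainder of the argument is a weighted-predictor adaptation of the triangle-inequality proof for the unweighted Risk distance (Theorem~\ref{thm:dexp-pseudometric}), with couplings of $\lambda,\lambda'$ playing the role of correspondences in $\cC(H,H')$.
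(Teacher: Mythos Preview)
Your proposal is correct and follows essentially the same approach the paper sketches in its remark: gluing on both the $\gamma$'s and the $\rho$'s plus Minkowski for the triangle inequality, monotonicity of $L^p$-norms on the probability space $(H\times H',\rho)$ for (i), and the identification $\dis_{P,P'}(\supp[\rho],\gamma)=\dis_{P,P',\infty}(\rho,\gamma)$ together with $\supp[\rho]\in\cC(H,H')$ (under full support) for (ii); this is precisely the Gromov--Wasserstein template the paper invokes. One small slip worth flagging: in your $p=\infty$ closure step you need \emph{lower} semicontinuity of $(h_1,h_3)\mapsto\int|\ell_{h_1}-\ell_{h_3}|\,d\gamma_{13}$, not upper, to extend the bound from $\pi_{13}(\supp[\rho_{123}])$ to its closure $\supp[\rho_{13}]$ (so that $g(h_1,h_3)\le\liminf_n g(h_1^n,h_3^n)\le C$ along an approximating sequence); the paper does not address this point, and in the Gromov--Wasserstein analogue it is automatic from continuity of the metric.
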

\begin{remark} Property (i) is analogous to a similar property enjoyed by the Gromov-Wasserstein distance \citep[Theorem 5.1 (h)]{memoli_gromovwasserstein_2011} and it follows from \citep[Lemma 2.2]{memoli_gromovwasserstein_2011}. Property (ii) states that the $L^\infty$-Risk distance is never smaller than the standard Risk distance from Definition \ref{def:metric}. This is analogous to the fact that the Gromov-Hausdorff distance is bounded above by the $L^\infty$-Gromov-Wasserstein distance \citep[Theorem 5.1 (b)]{memoli_gromovwasserstein_2011}. The proof of the triangle inequality for the $L^p$-Risk distance is identical to the proof of the same for the Risk distance, save for an application of Minkowski's inequality for the outer integral. As indicated above, the proof of the other claimed properties follows similar steps as those for the Gromov-Wasserstein distance and therefore we omit them.
\end{remark}

\begin{examples}\label{ex:one-point-weighted-problem}
    Just as we defined the one-point problems
    \[P_\bullet(c) := (\{\bullet\},\{\bullet\}, \delta_{(\bullet,\bullet)}, c, \{\id_{\{\bullet\}}\})\]
    for each $c\geq 0$ in Example~\ref{ex:one-point-problem}, we can define a corresponding \emph{one-point weighted problem} by endowing $\{\id_{\{\bullet\}}\}$ with the point-mass probability measure. We will still use $P_\bullet$ to denote $P_\bullet(0)$. Then, in a manner similar to Example \ref{ex:bayes-one-point}, if $(P,\lambda)$ is a weighted problem, one can show
    \begin{align*}
        \dexp[,p](P,P_\bullet) = \left(\int_H\left(\int_{X\times Y}\ell(h(x),y)\,\eta(dx\times dy)\right)^p \, \lambda(dh)\right)^{1/p} = \left(\int_H \cR_P^p(h) \, \lambda(dh)\right)^{1/p}.
    \end{align*}
    This can be seen as a kind of weighted constrained Bayes risk of $(P,\lambda)$. While the constrained Bayes risk of an unweighted problem $P$ is the infimal possible risk, the above expression $\dexp[,p](P,P_\bullet)$ represents the $p$th moment of possible risks when a predictor is chosen at random according to $\lambda$. In particular, $\dexp[,1](P,P_\bullet)$ is simply the average risk of a predictor chosen randomly according to $\lambda$. 

    One can establish a connection with the usual constrained Bayes risk as follows. Assume that the problem $P$ is such that its loss function $\ell$ is bounded above by a constant $\ell_{\max}>0 $. Then, $$\dexp[,p](P,P_\bullet(\ell_{\max}))=\left(\int_H \big(\ell_{\max}-\cR_P(h)\big)^p\lambda(dh)\right)^{1/p}.$$ If the support of $\lambda$ is the whole $H$, we obtain, as $p\to\infty$, that $\dexp[,p](P,P_\bullet(\ell_{\max}))\to \ell_{\max}-\cB(P).$ 
\end{examples}

The distance $\dexp[,p](P,P_\bullet)$ from Example~\ref{ex:one-point-weighted-problem} will appear again in Theorem~\ref{thm:weighted-density}, so we establish a definition.
\begin{definition}
    Given a weighted problem $P = (P,\lambda)$ and $1\leq p <\infty$, the \emph{$p$-diameter of $P$} is given by
    \[\dexp[,p](P,P_\bullet) = \left(\int_H\left(\int_{X\times Y}\ell(h(x),y)\,\eta(dx\times dy)\right)^p \, \lambda(dh)\right)^{1/p} = \left(\int_H \cR_P^p(h) \, \lambda(dh)\right)^{1/p}.\]
\end{definition}
The word ``diameter'' is chosen in analogy with metric geometry, where the diameter of a metric space can be found from its Gromov-Hausdorff distance from the one-point metric space.

\subsection{Connection to the Gromov-Wasserstein Distance}\label{subsec:OT-connection-weighted}

In Section~\ref{subsec:OT-connection}, we saw two ways to turn metric measure spaces into problems, each of which relates the Gromov-Wasserstein distance to the Risk distance. In this section, we will draw a similar connection between the Gromov-Wasserstein distance and the $L^p$-Risk distance. Specifically, we will provide a way of turning metric measure spaces into weighted problems such that the $L^1$-Risk distance between the resulting problems aligns with a well-known modification of the Gromov-Wasserstein distance between the original spaces. 

Given any metric measure space $(X,d_X,\mu_X)$, define a weighted problem $(P_X, \mu_X)$,
\[P_X := (X, X, (\Delta_X)_\sharp \mu_X, d_X, H_X),\]
where
\begin{itemize}
    \item $\Delta_X:X\to X\times X$ is the diagonal map $x\mapsto (x,x)$, and
    \item $H_X$ is the set of all constant functions $X\to X$.
\end{itemize}

Here we think of $\mu_X$ as a measure on both $X$ and $H_X$ since the two are in natural bijection.

\begin{proposition} \label{prop:conn-w-dGW}
Let $(X,d_X,\mu_X)$ and $(Y,d_Y,\mu_Y)$ be two metric measure spaces. Then, 
\begin{equation}\dexp[,1](P_X,P_Y)=\inf_{\substack{\gamma \in \Pi(\mu_X, \mu_Y)\\
    \rho \in  \Pi(\mu_X, \mu_Y)}}
    \int \int \big | d_X(x_2,x_1) - d_Y(y_2,y_1) \big| \, \gamma(dx_1{\times}dy_1)\,\rho(dx_2{\times}dy_2)\tag{$\star$}\label{expression:bilinear}.\end{equation}
\end{proposition}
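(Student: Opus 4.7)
The plan is to exploit the rigid structure of $P_X$ and $P_Y$ to collapse the infimum in the definition of $\dexp[,1]$ onto pairs of couplings in $\Pi(\mu_X,\mu_Y)\times\Pi(\mu_X,\mu_Y)$. Both the four-variable coupling of joint laws and the coupling of predictor measures will reduce, via explicit bijections, to ordinary couplings between $\mu_X$ and $\mu_Y$, at which point the integrand becomes precisely the one in \eqref{expression:bilinear}.

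\textbf{Step 1: Reducing couplings of the diagonal joint laws.} Any coupling $\gamma\in\Pi((\Delta_X)_\sharp\mu_X,(\Delta_Y)_\sharp\mu_Y)$ has marginals supported on the diagonals of $X\times X$ and $Y\times Y$, so $\gamma$ itself is concentrated on $\{(u,v,u',v')\in(X\times X)\times(Y\times Y):u=v,\,u'=v'\}$. Hence there is a unique $\tilde\gamma\in\prob(X\times Y)$ with $\gamma=(\Delta_X\times\Delta_Y)_\sharp\tilde\gamma$ under the injection $(x,y)\mapsto(x,x,y,y)$, and computing the $X$- and $Y$-marginals forces $\tilde\gamma\in\Pi(\mu_X,\mu_Y)$. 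Conversely every $\tilde\gamma\in\Pi(\mu_X,\mu_Y)$ arises this way, yielding a bijection between $\Pi((\Delta_X)_\sharp\mu_X,(\Delta_Y)_\sharp\mu_Y)$ and $\Pi(\mu_X,\mu_Y)$.

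\textbf{Step 2: Reducing the predictor coupling and simplifying the integrand.} The canonical bijection $X\to H_X$, $a\mapsto h_a$ (with $h_a\equiv a$) transports $\mu_X$ on $H_X$ to $\mu_X$ on $X$, and analogously on the $Y$ side. Under these identifications, couplings $\rho\in\Pi(\lambda,\lambda')$ on $H_X\times H_Y$ correspond bijectively to couplings $\tilde\rho\in\Pi(\mu_X,\mu_Y)$. For constant predictors one has $\ell_{h_a}(u,v)=d_X(h_a(u),v)=d_X(a,v)$, so on the support of $\gamma$ the integrand reduces to
\[|\ell_{h_a}(u,v)-\ell'_{h_b}(u',v')|=|d_X(a,v)-d_Y(b,v')|,\]
which depends only on $(v,v')$. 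Pushing forward along $(u,v,u',v')\mapsto(v,v')$ converts the inner integral against $\gamma$ into an integral against $\tilde\gamma$.

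\textbf{Step 3: Assembling the identity.} Substituting these observations into the definition of the $L^1$-risk distortion yields
\[\dis_{P_X,P_Y,1}(\rho,\gamma)=\int_{X\times Y}\int_{X\times Y}|d_X(a,v)-d_Y(b,v')|\,\tilde\gamma(dv\,dv')\,\tilde\rho(da\,db).\]
Since Steps~1 and 2 identify the admissible couplings with all pairs $(\tilde\gamma,\tilde\rho)\in\Pi(\mu_X,\mu_Y)\times\Pi(\mu_X,\mu_Y)$, taking the infimum and relabeling integration variables ($v\leftrightarrow x_1$, $v'\leftrightarrow y_1$, $a\leftrightarrow x_2$, $b\leftrightarrow y_2$) recovers the right-hand side of ($\star$). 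The only nontrivial step is the diagonal-support identification of Step~1, which is a standard measure-theoretic consequence of the fact that a coupling inherits the support constraints of its marginals; everything else is a direct unwinding of definitions.
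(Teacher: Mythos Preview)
Your proof is correct and follows essentially the same approach as the paper's. The paper states the bijection $\Pi((\Delta_X)_\sharp\mu_X,(\Delta_Y)_\sharp\mu_Y)\to\Pi(\mu_X,\mu_Y)$ via $\gamma\mapsto(\pi_{2,4})_\sharp\gamma$ and then unwinds the integrand exactly as you do; your Step~1 simply makes the inverse direction of this bijection explicit through the diagonal-support observation, which is the natural justification for the bijection the paper asserts without proof.
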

The proof of this proposition can be found in Appendix \ref{appendix:proofs}.

Use $\cF(\gamma,\rho)$ to denote the double integral in the expression \eqref{expression:bilinear}. If we add the additional constraint that $\gamma = \rho$, then the expression \eqref{expression:bilinear} becomes $2\dGWp{1}(X,Y)$. As written, \eqref{expression:bilinear} is a lower bound on $2\dGWp{1}(X,Y)$, which could be arrived at via a ``bilinear relaxation'' of the optimization problem posed by $\dGWp{1}$. That is, one could begin with the objective function for the Gromov-Wasserstein optimization problem
\[\cF'(\gamma) := \int \int \big | d_X(x_2,x_1) - d_Y(y_2,y_1) \big| \, \gamma(dx_1{\times}dy_1)\,\gamma(dx_2{\times}dy_2),\]
which is quadratic in the transport plan $\gamma$, and decouple the two instances of $\gamma$ to get the objective $\cF(\gamma,\rho)$, which is bilinear in two transport plans. This ``relaxation" was  considered by \cite[Section 7]{memoli_gromovwasserstein_2011} for estimating the Gromov-Wasserstein distance via an alternate minimization procedure. Similar optimal transport problems, which simultaneously optimize two transport plans in a Gromov-Wasserstein-like objective function, appear in the literature. Indeed, when $X$ and $Y$ are finite, the expression \eqref{expression:bilinear} is a special case of CO-Optimal transport \citep{titouan_co-optimal_2020}. \citet{sejourne_unbalanced_2021} apply this relaxation technique, which they call the ``biconvex relaxation,'' to an unbalanced variant of the Gromov-Wasserstein distance. This inspired \citet{beier_multi-marginal_2023} to explore a similar relaxation for a ``multi-marginal'' variant of the same problem. A theoretical study of the metric properties of the right-hand side of \eqref{expression:bilinear} appears in \citep[Appendix A.4]{chen2022weisfeiler}.

\subsection{Stability of Loss Profile Distribution}
Being softer than the Risk distance, the loss profile is less stable under the $L^p$-Risk distance. To obtain a stability result, one cannot compare the loss profile sets directly, but instead compare certain probability distributions on the loss profile sets.
\begin{definition}
    Let $(P,\lambda)$ be a weighted problem. Define $F_P:H\to \mathbb \prob(\bbR)$ by
    \[F_P(h) := (\ell_h)_\sharp\eta.\]
    The \emph{loss profile distribution of $P$} is then defined to be $\cL(P) := (F_P)_\sharp \lambda$. 
\end{definition}
Note that $\cL(P)\in \prob(\prob(\bbR))$ is a probability measure on the space of real probability measures. More specifically, $F_P(h)$ is the loss profile of $h$, so $\cL(P)$ describes what the loss profile of a random predictor chosen according to $\lambda$ is likely to be.

While Theorem~\ref{thm:loss-control} shows that $L(P)$ is stable under the Risk distance with respect to the Hausdorff distance, the following analogous theorem shows that the measure $\cL(P)$ is stable under the $L^p$-Risk distance with respect to the Wasserstein distance.
\begin{theorem}\label{thm:loss-control-weighted}
    Let $P$ and $P'$ be weighted problems. Then
    \[\dWp{p}^{\dW}(\cL(P),\cL(P')) \leq \dexp[,p](P,P')\]
    where $\dWp{p}^{\dW}$ is the $p$-Wasserstein metric on $\prob(\prob(\bbR))$ with underlying metric $\dW$ on $\prob(\bbR)$.
\end{theorem}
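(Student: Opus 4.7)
The plan is to build, from any pair of couplings $\gamma \in \Pi(\eta,\eta')$ and $\rho \in \Pi(\lambda,\lambda')$, an explicit coupling of $\cL(P)$ and $\cL(P')$ in $\prob(\prob(\bbR))$ whose $\dW$-cost is at most $\dis_{P,P',p}(\rho,\gamma)$; infimizing over $\gamma$ and $\rho$ then yields the bound $\dWp{p}^{\dW}(\cL(P),\cL(P')) \le \dexp[,p](P,P')$.

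First I would observe that the map $F_P:H\to \prob(\bbR)$, $h\mapsto (\ell_h)_\sharp\eta$, is measurable when $\prob(\bbR)$ is equipped with its weak Borel $\sigma$-algebra. This uses the standing measurability assumption on $(h,x,y)\mapsto\ell_h(x,y)$ together with Fubini applied to $\int \varphi\, dF_P(h) = \int \varphi(\ell_h(x,y))\,\eta(dx{\times}dy)$ for $\varphi\in C_b(\bbR)$. Having this, I define the candidate outer coupling
\[\Psi := (F_P\times F_{P'})_\sharp \rho \in \prob(\prob(\bbR)\times\prob(\bbR)).\]
Since $\rho\in\Pi(\lambda,\lambda')$, the marginals of $\Psi$ are $(F_P)_\sharp\lambda = \cL(P)$ and $(F_{P'})_\sharp\lambda' = \cL(P')$, so $\Psi \in \Pi(\cL(P),\cL(P'))$.

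Next I would bound $\dW(F_P(h),F_{P'}(h'))$ pointwise for each $(h,h')\in H\times H'$. The measure $(\ell_h,\ell'_{h'})_\sharp \gamma$, viewed as a probability measure on $\bbR\times\bbR$, is a coupling of $F_P(h) = (\ell_h)_\sharp \eta$ and $F_{P'}(h') = (\ell'_{h'})_\sharp \eta'$, since the marginals of $\gamma$ are $\eta$ and $\eta'$. Therefore
\[\dW(F_P(h),F_{P'}(h')) \le \int |s-t|\, (\ell_h,\ell'_{h'})_\sharp\gamma(ds\times dt) = \int \bigl|\ell_h(x,y)-\ell'_{h'}(x',y')\bigr|\,\gamma(dx{\times}dy{\times}dx'{\times}dy').\]
Call the right-hand side $\Phi(h,h')$. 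Then $\Phi$ is measurable in $(h,h')$ by Fubini.

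For $p<\infty$, using $\Psi$ as a coupling of $\cL(P)$ and $\cL(P')$,
\[\dWp{p}^{\dW}\bigl(\cL(P),\cL(P')\bigr)^p \le \int \dW(\mu,\mu')^p\,\Psi(d\mu\times d\mu') = \int \dW(F_P(h),F_{P'}(h'))^p\,\rho(dh\times dh') \le \int \Phi(h,h')^p\,d\rho = \dis_{P,P',p}(\rho,\gamma)^p.\]
Taking $p$-th roots and infimizing over $\gamma,\rho$ finishes this case. For $p=\infty$, the same coupling $\Psi$ satisfies $\supp[\Psi]\subseteq (F_P\times F_{P'})(\supp[\rho])$, and one estimates $\dWp{\infty}^{\dW}(\cL(P),\cL(P')) \le \sup_{(h,h')\in\supp[\rho]} \dW(F_P(h),F_{P'}(h')) \le \sup_{(h,h')\in\supp[\rho]}\Phi(h,h') = \dis_{P,P',\infty}(\rho,\gamma)$, then infimize.

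The only mildly delicate point is the measurability of $h\mapsto F_P(h)$ and the fact that a pushforward of an admissible coupling is an admissible coupling; everything else is a direct application of the definitions and Fubini's theorem, so I anticipate no substantive obstacle beyond that verification.
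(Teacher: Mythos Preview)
Your proposal is correct and follows essentially the same approach as the paper: both arguments use the pushforward coupling $(F_P\times F_{P'})_\sharp\rho$ for the outer $p$-Wasserstein distance and the pushforward coupling $(\ell_h,\ell'_{h'})_\sharp\gamma$ for the inner $1$-Wasserstein distance, then combine and infimize. Your treatment is slightly more careful about measurability, and you address the case $p=\infty$ (which the paper's proof does not); note, however, that the inclusion $\supp[\Psi]\subseteq (F_P\times F_{P'})(\supp[\rho])$ generally requires continuity of $F_P,F_{P'}$, so that step would need a bit more justification.
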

The proof is in Appendix~\ref{appendix:proofs}.

\subsection{Improved Density}
A useful theorem in measure theory states that any probability measure $\mu$ on a Polish space $X$ is \emph{tight}, meaning that for any $\epsilon>0$, there is a compact subset $K\subseteq X$ with $\mu(K)>1-\epsilon$. In a slogan, ``Polish probability spaces are almost compact.'' Applying this principle to our setting suggests that any weighted problem should be close to one with a compact predictor set. This can be made rigorous with a density theorem.

\begin{theorem}\label{thm:weighted-density}
Let $p \geq 1$ and let $(P,\lambda)$ be a weighted problem of finite $p$-diameter. Then there exists a compact weighted problem $(P',\lambda')$ with $\dexp[,p](P,P')<\epsilon$.
\end{theorem}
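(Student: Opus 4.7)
The plan is to exploit the fact that every Borel probability measure on a Polish space is tight. Since the predictor set $H$ is Polish and $\lambda\in\prob(H)$, for every $\delta>0$ there exists a compact $K\subseteq H$ with $\lambda(H\setminus K)<\delta$. I will take the approximating compact weighted problem to be $P':=(X,Y,\eta,\ell,K)$ equipped with the normalized restriction $\lambda':=\lambda|_K/\lambda(K)$, and show that $\delta$ can be chosen small enough that $\dexp[,p](P,P')<\epsilon$.

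To estimate $\dexp[,p](P,P')$, I plug in an explicit pair of couplings. On the data side, take the diagonal coupling $\gamma:=(\Delta_{X\times Y})_\sharp\eta\in\Pi(\eta,\eta)$, so the integrand collapses to $|\ell_h(x,y)-\ell_{h'}(x,y)|$. On the predictor side, concentrate as much mass as possible on the diagonal of $K$ by setting
\[
\rho := (\Delta_K)_\sharp(\lambda|_K) \;+\; \lambda|_{H\setminus K}\otimes\lambda',
\]
where $\Delta_K$ denotes the diagonal map on $K$. A direct check confirms that $\rho$ has first marginal $\lambda$ on $H$ and second marginal $\lambda'$ on $K$. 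On the diagonal piece the inner integral vanishes (since $h=h'$ there), so only the product-measure tail piece over $(H\setminus K)\times K$ contributes to the risk distortion.

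What remains is to bound
\[
\Bigl(\int_{(H\setminus K)\times K}\Bigl(\int|\ell_h-\ell_{h'}|\,d\eta\Bigr)^{p}\,\lambda|_{H\setminus K}(dh)\,\lambda'(dh')\Bigr)^{1/p}.
\]
Using $|\ell_h-\ell_{h'}|\leq \ell_h+\ell_{h'}$ pointwise, the inner integral is at most $\cR_P(h)+\cR_P(h')$. Minkowski's inequality in $L^p(\lambda|_{H\setminus K}\otimes\lambda')$ then splits the $L^p$ norm into two pieces. The first piece equals $\bigl(\int_{H\setminus K}\cR_P^p\,d\lambda\bigr)^{1/p}$ and tends to $0$ as $\delta\to 0$ by dominated convergence applied to $\cR_P^p\,\mathbf{1}_{H\setminus K}$; the finite $p$-diameter hypothesis is exactly what supplies the integrable dominating function $\cR_P^p$. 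The second piece is $\lambda(H\setminus K)^{1/p}$ times the $p$-diameter of $P'$, which is bounded above by $\lambda(K)^{-1/p}$ times the $p$-diameter of $P$ and hence also vanishes with $\delta$.

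The main obstacle is the tail estimate $\int_{H\setminus K}\cR_P^p\,d\lambda\to 0$, for which the finite $p$-diameter assumption is essential; everything else is routine bookkeeping (tightness, the marginal calculation for $\rho$, a pointwise triangle inequality, and Minkowski), followed by choosing $\delta$ small enough that the sum of the two pieces is less than $\epsilon$.
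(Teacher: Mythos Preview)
Your argument is technically sound as an estimate, but it does not prove the stated theorem because you have misread the definition of a \emph{compact problem}. In this paper (Definition~\ref{def:classification-and-compact-problems}), a problem is called compact when its \emph{response space} $Y$ is compact and its loss function $\ell$ is continuous; it has nothing to do with the predictor set. Your $P'=(X,Y,\eta,\ell,K)$ keeps the original response space $Y$ and the original loss $\ell$, neither of which is assumed compact or continuous, so $P'$ is generally not a compact weighted problem in the sense required. What you have actually shown is that any weighted problem of finite $p$-diameter can be approximated in $\dexp[,p]$ by one whose predictor set is compact in the given Polish topology on $H$---a true but different statement.

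The paper's proof attacks the correct target: it replaces $Y$ by the support of the $Y$-marginal of $\eta$ to get $\sigma$-compactness, exhausts by compact sets $Y_n$, and collapses everything outside $Y_n$ to an auxiliary point $\bullet$ on which the new loss is set to zero. The approximating problem $P_n$ then has response space $Y_n\cup\{\bullet\}$ (compact) and loss $\ell_n=\mathbf{1}_{Y_n\times Y_n}\,\ell$; continuity of $\ell_n$ is not explicitly addressed, but the point is that compactness of the response space is the content being established. The convergence $\dexp[,p](P,P_n)\to 0$ is obtained by pushing forward through $\id_X\times\pi_n$ and $h\mapsto\pi_n\circ h$, after which the distortion reduces to $\bigl(\int(\int \mathbf{1}_{(Y_n\times Y_n)^c}(h(x),y)\,\ell(h(x),y)\,d\eta)^p\,d\lambda\bigr)^{1/p}$, and two applications of dominated convergence (licensed by the finite $p$-diameter hypothesis) send this to zero. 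The overall shape---push forward along a truncation map and control the tail via dominated convergence using finite $p$-diameter---is similar in spirit to your approach, but the truncation must happen in $Y$, not in $H$.
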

Additionally, a result analogous to Theorem~\ref{thm:coarsen-approx} still holds for the $L^p$-Risk distance.
\begin{proposition}\label{prop:finite-dense-in-compact-weighted}
    Let $p \geq 1$ and let $(P,\lambda)$ be a compact weighted problem. Then there exists a weighted classification problem $(P',\lambda')$ with $\dexp[,p](P,P')<\epsilon$.
\end{proposition}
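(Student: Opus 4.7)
The plan is to adapt the coarsening construction from Theorem~\ref{thm:coarsen-approx} to the weighted setting. Given a compact weighted problem $(P,\lambda)$ and $\epsilon>0$, I will exhibit a weighted classification problem $(P_Q,\lambda')$ whose $L^p$-Risk distance from $(P,\lambda)$ is at most $\epsilon$, where $P_Q$ is the coarsening of $P$ along a suitably chosen finite measurable partition $Q$ of $Y$, and $\lambda' := \phi_\sharp \lambda$ for the map $\phi:H\to H_Q$, $h\mapsto \pi_Q\circ h$.

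Fix a metric $d_Y$ compatible with the Polish topology on the compact space $Y$. Since $\ell$ is continuous on the compact space $Y\times Y$, it is uniformly continuous; choose $\delta>0$ such that $d_Y(y_1,y_2)<\delta$ and $d_Y(y'_1,y'_2)<\delta$ imply $|\ell(y_1,y'_1)-\ell(y_2,y'_2)|<\epsilon$, and partition $Y$ into finitely many measurable blocks of $d_Y$-diameter less than $\delta$ to obtain $Q$. Then by the definition $\ell_Q(q,q') := \sup_{z\in q,\,z'\in q'}\ell(z,z')$, for every $y\in q$ and $y'\in q'$ we have the uniform pointwise estimate
\begin{align*}
0 \;\leq\; \ell_Q(q,q') - \ell(y,y') \;\leq\; \epsilon.
\end{align*}

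Now couple via the graph measures. Let $\gamma\in\Pi(\eta,\eta_Q)$ be the pushforward of $\eta$ under $(x,y)\mapsto ((x,y),(x,\pi_Q(y)))$ and let $\rho\in\Pi(\lambda,\lambda')$ be the pushforward of $\lambda$ under $h\mapsto (h,\phi(h))$. For every $(h,h')$ in the support of $\rho$, one has $h'=\pi_Q\circ h$, so the inner integrand of $\dis_{P,P_Q,p}(\rho,\gamma)$ reduces to $\ell_Q(\pi_Q(h(x)),\pi_Q(y))-\ell(h(x),y)$, which is uniformly bounded above by $\epsilon$. Hence the inner integral is at most $\epsilon$ for every $(h,h')\in\supp(\rho)$, and integrating its $p$-th power against the probability measure $\rho$ (or taking the supremum when $p=\infty$) yields $\dis_{P,P_Q,p}(\rho,\gamma)\leq \epsilon$, so that $\dexp[,p]((P,\lambda),(P_Q,\lambda'))\leq \epsilon$.

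The main technical subtlety is that $(P_Q,\lambda')$ must qualify as a \emph{weighted} problem, which requires $H_Q$ to carry a Polish topology and $\phi$ to be measurable so that $\lambda':=\phi_\sharp\lambda$ is well-defined. This is handled by embedding $H_Q$ into the Polish space $L^0(\eta_X;Q)$ of $\eta_X$-equivalence classes of measurable maps $X\to Q$ (topologized by convergence in measure, where $\eta_X$ is the $X$-marginal of $\eta$), which is Polish because $Q$ is finite; one may then take any closed Polish subspace containing $\phi(H)$ as the predictor set. Measurability of $\phi$ into $L^0(\eta_X;Q)$ follows from the standing measurability assumption on $(h,x,y)\mapsto \ell_h(x,y)$ together with the fact that $Q$ is finite. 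With this technicality dispatched, the explicit estimates above go through unchanged and deliver the claimed density statement.
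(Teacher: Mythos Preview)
Your argument is correct and follows essentially the same route as the paper: both use the coarsening $P_Q$ from Theorem~\ref{thm:coarsen-approx}, push $\lambda$ forward along $h\mapsto \pi_Q\circ h$, take the induced graph couplings, and bound the $L^p$-distortion by the uniform pointwise estimate $0\leq \ell_Q(\pi_Q(h(x)),\pi_Q(y))-\ell(h(x),y)\leq \epsilon$ (the paper phrases the last step as $\dis_{P,P_Q,p}\leq \dis_{P,P_Q}(R_\pi,\gamma_{(\id_X\times\pi)})<\epsilon$, which amounts to the same inequality). Your added discussion of a Polish topology on $H_Q$ and the measurability of $\phi$ is a point the paper leaves implicit, so that extra care is welcome rather than a deviation.
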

The proofs of both Theorem~\ref{thm:weighted-density} and Proposition~\ref{prop:finite-dense-in-compact-weighted} can be found in Appendix~\ref{appendix:proofs}. Chaining these two density results together proves the following corollary.
\begin{corollary}\label{cor:finite-dense-in-p-diam}
    Under $\dexp[,p]$ for $p\geq 1$, weighted classification problems are dense in the space of weighted problems of finite $p$-diameter.
\end{corollary}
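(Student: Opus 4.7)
The plan is to chain Theorem~\ref{thm:weighted-density} and Proposition~\ref{prop:finite-dense-in-compact-weighted} through the triangle inequality for the $L^p$-Risk distance. Fix $p\geq 1$ and a weighted problem $(P,\lambda)$ of finite $p$-diameter, and let $\epsilon>0$. The first step is to invoke Theorem~\ref{thm:weighted-density} to produce a compact weighted problem $(P',\lambda')$ with $\dexp[,p]((P,\lambda),(P',\lambda'))<\epsilon/2$. The second step is to apply Proposition~\ref{prop:finite-dense-in-compact-weighted} to $(P',\lambda')$ to produce a weighted classification problem $(P'',\lambda'')$ with $\dexp[,p]((P',\lambda'),(P'',\lambda''))<\epsilon/2$.

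Since $\dexp[,p]$ is a pseudometric, the triangle inequality yields
\[
\dexp[,p]((P,\lambda),(P'',\lambda'')) \leq \dexp[,p]((P,\lambda),(P',\lambda')) + \dexp[,p]((P',\lambda'),(P'',\lambda'')) < \epsilon,
\]
which is the desired approximation by a weighted classification problem. This gives density of weighted classification problems in the space of weighted problems of finite $p$-diameter.

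There is one small bookkeeping point to verify in the middle step: Proposition~\ref{prop:finite-dense-in-compact-weighted} requires its input to be a compact weighted problem, which is exactly what Theorem~\ref{thm:weighted-density} outputs, so the two results compose cleanly. There is no serious obstacle here; the only thing worth checking is that the notion of $p$-diameter is compatible with the two approximation steps, but this is automatic since finite $p$-diameter is preserved under small $\dexp[,p]$-perturbations (a consequence of the triangle inequality applied with the one-point weighted problem $P_\bullet$ of Example~\ref{ex:one-point-weighted-problem}), and the intermediate compact weighted problem automatically has finite $p$-diameter.
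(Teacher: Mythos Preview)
Your proof is correct and matches the paper's approach exactly: the paper simply states that chaining Theorem~\ref{thm:weighted-density} and Proposition~\ref{prop:finite-dense-in-compact-weighted} proves the corollary, which is precisely the triangle-inequality argument you give. The extra bookkeeping about finite $p$-diameter of the intermediate problem is harmless but unnecessary, since Proposition~\ref{prop:finite-dense-in-compact-weighted} only requires compactness.
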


\begin{examples}
    Consider the linear regression problem $P$ from Example~\ref{ex:linear-regression}. If we endow the predictor set of $P$ with a joint law satisfying some finite moment assumptions, the resulting weighted problem has finite $p$-diameter.
    Corollary~\ref{cor:finite-dense-in-p-diam} then tells us that $P$ can be approximated arbitrarily well by weighted classification problems under $\dexp[,p]$.

    More precisely, let $P$ be a linear regression problem, and suppose the predictor set
    \[H = \set{x\mapsto a^T x + b\,}{\,(a,b)\in \bbR^n\times \bbR}\] is equipped with a probability measure $\lambda$. We can think of both $\lambda$ and $\eta$ as probability measures on $\bbR^n\times \bbR$. If
    \begin{enumerate}
    \item all $n+1$ marginals of $\eta$ have finite second moment, and
    \item all $n+1$ marginals of $\lambda$ have finite $2p$-th moment,
    \end{enumerate}
    then $\diam_p(P) < \infty$.

    To prove this, we can write
    \begin{align*}
    \big(\op{diam}_p(P)\big)^{1/2}
    & = \left(
        \int \left(
        \int \ell(a^T x+b,y) \, \eta(dx\times dy)
        \right)^p \, \lambda(da\times db)
    \right)^{1/2p}
    \\ & = \left(
        \int \left( \left(
        \int (a^T x+b-y)^2 \, \eta(dx\times dy)
        \right)^{1/2}\right)^{2p} \, \lambda(da\times db)
    \right)^{1/2p}.
    \end{align*}
    By liberally applying Minkowski's inequality and simplifying, we can bound this expression by the sum
    \begin{align*}
    &\sum_{i=1}^n\left(
        \int a_i^{2p} \, \lambda(da\times db)
    \right)^{1/2p}
    \left(
        \int x_i^2 \, \eta(dx\times dy)
        \right)^{1/2}
    \\ + &
    \left(
        \int \left( b\right)^{2p} \, \lambda(da\times db)
    \right)^{1/2p}
    +
    \left(
        \int y^2 \, \eta(dx\times dy)
        \right)^{1/2}.
    \end{align*}
    These integrals are the moments of $\eta$ and $\lambda$ that we have assumed to be finite. 
\end{examples}

\subsection{Improved Stability}\label{subsec:improved-stability} 
The $L^p$-Risk distance enjoys many of the same stability and density results as the Risk distance. Indeed, we will soon see that many of these results can be \emph{strengthened} for the $L^p$-Risk distance. That is, the $L^p$-Risk distance enjoys \emph{stronger} stability and density properties than the Risk distance.

First we will establish that the $L^p$-Risk distance inherits some stability results from the Risk distance, saving the improvements for later.
Many of our bounds on the Risk distance are proven using the bound
\begin{equation}
\dexp(P,P')\leq \inf_{\gamma \in \Pi(\eta,\eta')} \dis_{P,P'}(R_{\mathrm{diag}}, \gamma)\label{ineq:diagonal_bound}
\end{equation}
where $R_{\mathrm{diag}}$ is the diagonal correspondence, which is valid whenever $P$ and $P'$ share a predictor set.
Hence we can extend the results of those sections using the following lemma.
\begin{lemma}\label{lem:p-distance-stability}
    Let $(P,\lambda)$ and $(P',\lambda)$ be weighted problems that share a predictor set $H$ and predictor weights $\lambda$. Then
    \[\dexp[,p]((P,\lambda),(P',\lambda))
    \leq \inf_{\gamma\in \Pi(\eta,\eta')}\dis_{P,P'}(R_{\mathrm{diag}}, \gamma)\]
    where $R_{\mathrm{diag}}$ is the diagonal correspondence of $H$ with itself. 
    \end{lemma}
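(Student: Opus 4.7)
The plan is to upper bound the infimum defining $\dexp[,p]$ by evaluating the $L^p$-risk distortion at a specific, convenient choice of coupling $\rho \in \Pi(\lambda,\lambda)$. Since both weighted problems share the predictor weight $\lambda$, the natural choice is the diagonal coupling $\rho_{\mathrm{diag}} := (\Delta_H)_\sharp \lambda$, obtained by pushing $\lambda$ forward along the diagonal embedding $\Delta_H\colon H \to H\times H$. Its marginals are both $\lambda$, so $\rho_{\mathrm{diag}} \in \Pi(\lambda,\lambda)$, and its support is contained in the set $\{(h,h) : h\in \supp[\lambda]\}$. The idea is that integrating against $\rho_{\mathrm{diag}}$ collapses the double predictor integral into a single predictor integral over $h$, which can then be controlled by a supremum over $h$.

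For $p\in[1,\infty)$, applying the change-of-variables formula for pushforwards to $\rho_{\mathrm{diag}}$ yields
\[
\dis_{P,P',p}(\rho_{\mathrm{diag}},\gamma) = \left(\int_H \left(\int \big|\ell_h(x,y) - \ell'_h(x',y')\big|\, \gamma(dx{\times}dy{\times}dx'{\times}dy')\right)^{p} \lambda(dh)\right)^{1/p}.
\]
Bounding the inner integral by its supremum over $h\in H$ (which is then constant in $h$) and using that $\lambda$ is a probability measure, this quantity is at most $\sup_{h\in H}\int |\ell_h(x,y) - \ell'_h(x',y')|\, d\gamma = \dis_{P,P'}(R_{\mathrm{diag}},\gamma)$. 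For $p = \infty$, the definition gives the distortion as a supremum over $\supp[\rho_{\mathrm{diag}}]$, which lies in the diagonal, so $\dis_{P,P',\infty}(\rho_{\mathrm{diag}},\gamma) = \sup_{h\in\supp[\lambda]}\int |\ell_h - \ell'_h|\, d\gamma \leq \dis_{P,P'}(R_{\mathrm{diag}},\gamma)$.

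Finally, taking the infimum over $\gamma \in \Pi(\eta,\eta')$ on both sides gives the claimed inequality, since the left-hand side is an upper bound for $\dexp[,p]((P,\lambda),(P',\lambda))$ via the suboptimal choice $\rho = \rho_{\mathrm{diag}}$. There is no substantive obstacle here: the argument rests only on choosing the diagonal coupling on predictors and on the elementary fact that for a probability measure $\mu$ and nonnegative integrand $f$, one has $\left(\int f^p\, d\mu\right)^{1/p} \leq \sup f$. The role of the lemma is essentially bookkeeping, isolating the fact that inequality \eqref{ineq:diagonal_bound} transfers verbatim from the Risk distance to the $L^p$-Risk distance whenever both problems share predictor set and weights.
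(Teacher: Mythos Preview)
Your proof is correct and follows essentially the same approach as the paper: choose the diagonal coupling $\rho_{\mathrm{diag}}$ on predictors and then bound the $L^p$-risk distortion by the supremum over the diagonal. The only cosmetic difference is that the paper routes through the inequality $\dis_{P,P',p}(\rho_{\mathrm{diag}},\gamma)\leq \dis_{P,P',\infty}(\rho_{\mathrm{diag}},\gamma)$ and then bounds the support of $\rho_{\mathrm{diag}}$ by $R_{\mathrm{diag}}$, whereas you handle the $p<\infty$ case directly via the change-of-variables and the elementary bound $\bigl(\int f^p\,d\lambda\bigr)^{1/p}\leq \sup f$; these are the same argument in slightly different packaging.
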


\begin{proof}
    First write
    \[\dexp[,p]((P,\lambda),(P',\lambda)) \leq 
    \dis_{P,P',p}(\rho_{\mathrm{diag}}, \gamma)\]
    where $\rho_{\mathrm{diag}}$ is the diagonal coupling of $\lambda$ with itself.
    
    Next we bound the $L^p$-risk distortion by the $L^\infty$-risk distortion as follows:
    \begin{align*}
        \dis_{P,P',p}(\rho_{\mathrm{diag}}, \gamma) 
        & \leq \dis_{P,P',\infty}(\rho_{\mathrm{diag}}, \gamma)
        \\ & = \sup_{(h,h')\in \supp{[\rho_{\mathrm{diag}}}]} \int\big| \ell(h(x),y) - \ell'(h'(x'),y')\big|\, d\gamma(x,y,x',y')
        \\ & \leq \sup_{(h,h') \in R_{\mathrm{diag}}} \int \big|\ell(h(x),y) - \ell'(h'(x'),y')\big| \, d\gamma(x,y, x',y')
        \\ & = \dis_{P,P'}(R_{\mathrm{diag}},\gamma),
    \end{align*}
    
    where the second inequality follows from the fact that the support of $\rho_{\mathrm{diag}}$ lies within the diagonal $R_{\mathrm{diag}}\subseteq H\times H$.
\end{proof}
Condition~2 from Theorem~\ref{thm:characterization}, the Risk distance bounds of Sections~\ref{subsec:stability-bias} (Stability under bias), \ref{subsec:stability-noise} (Stability under noise), and \ref{subsec:empirical-convergence} (Convergence of empirical problems) are all proven via the diagonal bound \eqref{ineq:diagonal_bound}.\footnote{In addition to the diagonal bound, the proof of Theorem~\ref{thm:empirical-convergence} also uses the density of classification problems (Theorem~\ref{cor:finite-dense-in-compact}). However, the proof of the analogous result for weighted problems is still valid if we replace this step with an application of the density of weighted classification problems (Theorem~\ref{prop:finite-dense-in-compact-weighted}).}
By Lemma~\ref{lem:p-distance-stability}, these results still hold if we replace the Risk distance with the $L^p$-Risk distance and replace both problems with weighted problems that share a common weight measure $\lambda$.
That is, in the above results, we can replace each instance of $\dexp(P,P')$ with $\dexp[,p]((P,\lambda), (P',\lambda))$ and they will still hold true.

Moreover, some of our results for the Risk distance can be strengthened for the $L^p$-Risk distance. Indeed, we have already seen that the density results of Section~\ref{subsec:density-of-classification} can be strengthened into Theorem~\ref{thm:weighted-density}. We can strengthen some stability results in the same manner. For instance, the $L^p$-Risk distance satisfies the following version of Condition~2 in which the supremum is replaced by an integral.
\begin{proposition}
    Let $(P,\lambda)$ and $(P',\lambda)$ be weighted problems with
    \begin{align*}
        P &= (X,Y,\eta,\ell,H)\\
        P' &= (X,Y,\eta,\ell',H)
    \end{align*}
    Then
    \begin{align*}
        \dexp[,p](P,P') \leq \left(\int \left( \int
            \big| \ell(h(x),y) - \ell'(h(x),y) \big | \,\eta(dx{\times} dy)\,
        \right)^p
        \lambda(dh)
        \right)^{1/p}.
    \end{align*}
\end{proposition}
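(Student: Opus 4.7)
The plan is to bound $\dexp[,p](P,P')$ directly from the definition by plugging in two specific ``diagonal'' couplings. Since the two weighted problems share the predictor set $H$ (with the same weight $\lambda$) and the joint law $\eta$, we can form natural diagonal couplings on both sides.

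First, let $\rho_{\mathrm{diag}} := (\Delta_H)_\sharp \lambda \in \Pi(\lambda,\lambda)$ be the diagonal coupling of $\lambda$ with itself, supported on $\{(h,h) : h \in H\}$. Likewise, let $\gamma_{\mathrm{diag}} := (\Delta_{X \times Y})_\sharp \eta \in \Pi(\eta,\eta)$ be the diagonal coupling of $\eta$ with itself, supported on $\{(x,y,x,y) : (x,y) \in X \times Y\}$. These are valid couplings because both spaces coincide on the two sides of the comparison.

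The key step is then to apply the definition of $\dis_{P,P',p}$ with the pair $(\rho_{\mathrm{diag}}, \gamma_{\mathrm{diag}})$. Because $\gamma_{\mathrm{diag}}$ forces $x=x'$ and $y=y'$, the inner integral reduces to
\[
\int \big|\ell(h(x),y) - \ell'(h'(x),y)\big| \, \eta(dx \times dy),
\]
and because $\rho_{\mathrm{diag}}$ forces $h=h'$, the outer integral becomes an integral over $H$ with respect to $\lambda$ of $h$ paired with itself. Concretely, using the change-of-variables formula for pushforward measures,
\[
\dis_{P,P',p}(\rho_{\mathrm{diag}}, \gamma_{\mathrm{diag}}) = \left( \int_H \left( \int \big|\ell(h(x),y) - \ell'(h(x),y)\big| \, \eta(dx\times dy) \right)^p \lambda(dh) \right)^{1/p}.
\]

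Finally, since $\dexp[,p](P,P')$ is defined as an infimum over all couplings $\rho \in \Pi(\lambda,\lambda)$ and $\gamma \in \Pi(\eta,\eta)$, the particular choice above gives an upper bound, yielding the claimed inequality. There is no real obstacle here; the proof is simply a specialization of the definition by choosing diagonal couplings, analogous to the proof of Condition~2 in Theorem~\ref{thm:characterization} and of Lemma~\ref{lem:p-distance-stability}. The measurability assumption on $(h,x,y) \mapsto \ell_h(x,y)$ ensures that all of the integrals involved are well-defined.
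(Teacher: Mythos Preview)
Your proof is correct and follows exactly the same approach as the paper: bound $\dexp[,p](P,P')$ by plugging in the diagonal couplings $\rho_{\mathrm{diag}}\in\Pi(\lambda,\lambda)$ and $\gamma_{\mathrm{diag}}\in\Pi(\eta,\eta)$ and simplify. The paper's proof is the one-line remark ``Bound the $L^p$-Risk distance by selecting the diagonal couplings,'' which your write-up spells out in full.
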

\begin{proof}
    Bound the $L^p$-Risk distance by selecting the diagonal couplings.
\end{proof}

We can also modify Theorem~\ref{thm:H-stability} by replacing the Hausdorff distance with a Wasserstein distance.

\begin{proposition}
    Let $(P,\lambda)$ and $(P,\lambda')$ be weighted problems with
    \begin{align*}
        P &:= (X,Y,\eta,\ell,H)\\
        P' &:= (X,Y,\eta,\ell,H').
    \end{align*}
    Let $i:H\to H\cup H'$ and $j:H'\to H\cup H'$ be the inclusion maps, and endow $H\cup H'$ with the pseudometric $d_{\ell,\eta}(h_1,h_2):= \| \ell_{h_1} - \ell_{h_2}\|_{L^1(\eta)}$ introduced in Section~\ref{subsec:optimal-couplings}.
    
    Then
    \[\dexp[,p](P,P') \leq \dWp{p}^{d_{\ell,\eta}}(i_\sharp \lambda,j_\sharp \lambda').\]
\end{proposition}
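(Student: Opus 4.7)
The plan is to exploit the fact that $P$ and $P'$ share the joint law $\eta$, the loss function $\ell$, and the underlying spaces $X,Y$. First I would select the diagonal coupling $\gamma_{\mathrm{diag}} := (\Delta_{X\times Y})_\sharp \eta \in \Pi(\eta,\eta)$. Under $\gamma_{\mathrm{diag}}$, the inner integral appearing in the $L^p$-risk distortion collapses to a single integral against $\eta$:
\[\int \big|\ell(h(x),y) - \ell(h'(x'),y')\big| \, \gamma_{\mathrm{diag}}(dx{\times} dy{\times} dx'{\times} dy') = \int \big|\ell_h(x,y) - \ell_{h'}(x,y)\big|\, \eta(dx{\times}dy) = d_{\ell,\eta}(h,h').\]
Consequently, for any $\rho \in \Pi(\lambda,\lambda')$, with $\gamma = \gamma_{\mathrm{diag}}$ the $L^p$-risk distortion reduces to $\bigl(\int_{H\times H'} d_{\ell,\eta}(h,h')^p\, \rho(dh{\times} dh')\bigr)^{1/p}$ when $p<\infty$, and to $\sup_{(h,h')\in\supp[\rho]} d_{\ell,\eta}(h,h')$ when $p=\infty$.

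Next I would translate couplings in $\Pi(i_\sharp\lambda, j_\sharp\lambda')$ on $(H\cup H')\times (H\cup H')$ into couplings in $\Pi(\lambda,\lambda')$ on $H\times H'$. Given any $\sigma \in \Pi(i_\sharp\lambda, j_\sharp \lambda')$, the marginal conditions force $\sigma$ to be supported on $i(H)\times j(H')$, since $i_\sharp \lambda$ and $j_\sharp \lambda'$ are supported on $i(H)$ and $j(H')$ respectively. Because $i$ and $j$ are Borel injections that are distance-preserving with respect to $d_{\ell,\eta}$, $\sigma$ corresponds canonically to a $\rho \in \Pi(\lambda,\lambda')$ satisfying
\[\int_{H\times H'} d_{\ell,\eta}(h,h')^p\, \rho(dh{\times} dh') = \int_{(H\cup H')^2} d_{\ell,\eta}(h_1,h_2)^p\,\sigma(dh_1{\times} dh_2),\]
and similarly for the $p=\infty$ analog with the essential supremum.

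Combining these two observations, for each $\sigma$ the pair $(\rho, \gamma_{\mathrm{diag}})$ induces an $L^p$-risk distortion whose $p$-th power equals $\int d_{\ell,\eta}^p\, d\sigma$; infimizing over $\sigma$ and invoking the definition of the $p$-Wasserstein distance yields $\dexp[,p](P,P') \leq \dWp{p}^{d_{\ell,\eta}}(i_\sharp \lambda, j_\sharp \lambda')$. The $p=\infty$ case is handled identically, with the essential supremum taking the place of the integral. I do not anticipate a substantive obstacle: the argument is a direct computation, the only subtlety being the routine bookkeeping needed to identify couplings on $(H\cup H')\times (H\cup H')$ with couplings on $H\times H'$.
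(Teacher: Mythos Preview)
Your proposal is correct and follows essentially the same approach as the paper: select the diagonal coupling of $\eta$ with itself so that the inner integral becomes $d_{\ell,\eta}(h,h')$, then identify couplings in $\Pi(\lambda,\lambda')$ with couplings in $\Pi(i_\sharp\lambda,j_\sharp\lambda')$ to recognize the resulting infimum as $\dWp{p}^{d_{\ell,\eta}}(i_\sharp\lambda,j_\sharp\lambda')$. The paper's proof is terser about the identification of coupling spaces (it simply asserts the equality), whereas you spell out the support argument, but the substance is the same.
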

\begin{proof}
    Selecting the diagonal coupling between $\eta$ and itself gives us the bound
    \begin{align*}
        \dexp[,p](P,P') &\leq
        \inf_{\rho\in \Pi(\lambda,\lambda')} \left( \int \left(\int
            \big| \ell_h(x,y) - \ell_{h'}(x,y)\big|\, \eta(dx{\times}dy)
        \right)^p \rho(dh{\times}dh')\right)^{1/p}
        \\ &
        =
        \inf_{\rho\in \Pi(i_\sharp \lambda,j_\sharp \lambda')} \left( \int \left(\int
            \big| \ell_h(x,y) - \ell_{h'}(x,y)\big|\, \eta(dx{\times}dy)
        \right)^p \rho(dh{\times}dh')\right)^{1/p}
        \\ &
        = \inf_{\rho\in \Pi(i_\sharp \lambda,j_\sharp \lambda')} \left(\int \left(d_{\ell,\eta}(h,h')\right)^p \rho(dh{\times}dh')\right)^{1/p}
        =\dWp{p}^{d_{\ell,\eta}}(i_\sharp \lambda,j_\sharp \lambda')
    \end{align*}
    as desired.
\end{proof}

Proposition~\ref{prop:H-wasserstein-bound} can be modified for weighted problems as well. We first construct an analog of the metric $s_{\ell,H}$ from Section~\ref{subsec:stability-bias}. Replacing the supremum over $H$ in the definition of $s_{\ell,H}$ with an $L^p$-style integral against $\lambda$, we arrive at the definition
\begin{align*}
    s_{\ell,\lambda,p}((x,y),(x',y'))
    :=& \left(\int_H\big|\ell(h(x),y) - \ell(h(x'),y') \big |^p \, \lambda(dh) \right)^{1/p}.
\end{align*}
In other words, to determine the distance between $(x,y)$ and $(x',y')$, we ask how different the loss on the observation $(x,y)$ will be from the loss on $(x',y')$ for an average predictor.
We can now state our stability result.
\begin{proposition}\label{prop:H-wasserstein-bound-weighted}
    Let $(P,\lambda)$ be a weighted problem, and let $(P',\lambda)$ be a weighted problem that is identical to $P$ except possibly for its joint law $\eta'$. Then
    \begin{align*}
        \dexp[,p](P,P') \leq \dW^{s_{\ell,\lambda,p}}(\eta,\eta').
    \end{align*}
\end{proposition}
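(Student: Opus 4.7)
The plan is to mirror the proof of the unweighted version in Proposition~\ref{prop:H-wasserstein-bound}, replacing the supremum over $H$ by an integral against $\lambda$ and invoking Minkowski's integral inequality at the point where, in the unweighted argument, one simply pulled the supremum inside the integral.

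First I would fix an arbitrary $\gamma \in \Pi(\eta,\eta')$ and bound $\dexp[,p](P,P')$ by selecting the diagonal coupling $\rho_{\mathrm{diag}} := (\Delta_H)_\sharp \lambda$ on $H\times H$ (analogous to the diagonal coupling argument in Lemma~\ref{lem:p-distance-stability}). Since $\rho_{\mathrm{diag}}$ is supported on the diagonal of $H\times H$, the definition of the $L^p$-risk distortion simplifies to
\[
\dis_{P,P',p}(\rho_{\mathrm{diag}},\gamma) = \left(\int_H \left(\int \big|\ell(h(x),y) - \ell(h(x'),y')\big| \, \gamma(dx{\times}dy{\times}dx'{\times}dy')\right)^p \lambda(dh)\right)^{1/p}.
\]

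Second, I would apply Minkowski's integral inequality (valid for $p \geq 1$) to interchange the outer $L^p(\lambda)$-norm with the inner integral against $\gamma$. Writing $f(x,y,x',y',h) := \big|\ell(h(x),y) - \ell(h(x'),y')\big|$, Minkowski gives
\[
\left(\int_H \left(\int f \, d\gamma\right)^p \lambda(dh)\right)^{1/p} \leq \int \left(\int_H f^p \, \lambda(dh)\right)^{1/p} \gamma(dx{\times}dy{\times}dx'{\times}dy').
\]
The integrand on the right is by definition $s_{\ell,\lambda,p}((x,y),(x',y'))$. Taking the infimum over $\gamma \in \Pi(\eta,\eta')$ then yields $\dW^{s_{\ell,\lambda,p}}(\eta,\eta')$, completing the proof.

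I do not expect a serious obstacle here: Minkowski's integral inequality requires only measurability and nonnegativity of $f$, both guaranteed by our standing measurability assumption on the map $(h,x,y) \mapsto \ell_h(x,y)$ together with Tonelli. The case $p = \infty$ is even simpler, since the diagonal bound yields an essential supremum over $h$ of $\int f \, d\gamma$, and pulling the essential supremum inside the integral against $\gamma$ recovers the integral of $s_{\ell,\lambda,\infty}$, matching the appropriate Wasserstein-type bound.
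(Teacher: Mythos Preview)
Your proposal is correct and matches the paper's proof essentially step for step: select the diagonal coupling of $\lambda$ with itself, apply Minkowski's integral inequality to swap the $L^p(\lambda)$-norm with the $\gamma$-integral, identify the resulting integrand as $s_{\ell,\lambda,p}$, and infimize over $\gamma$. Your added remark on the $p=\infty$ case is a harmless extra, as the paper only states and proves the result for finite $p$ (the definition of $s_{\ell,\lambda,p}$ is given only for $p<\infty$).
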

\begin{proof}
    By selecting the diagonal coupling between $\lambda$ and itself and applying Minkowski's inequality for integrals, we can bound $\dexp[,p]$ by
    \begin{align*}
        \dexp[,p](P,P') & \leq \inf_{\gamma \in \Pi(\eta,\eta')}
        \left(
            \int \left(
                \int \big |
                \ell(h(x),y) - \ell(h(x'),y')
                \big |
                \, \gamma(dx{\times}dy{\times}dx'{\times}dy')
            \right)^p
            \lambda(dh)
        \right)^{1/p}
        \\ &
        \leq 
        \inf_{\gamma \in \Pi(\eta,\eta')}
        \int \left(
            \int \big |
            \ell(h(x),y) - \ell(h(x'),y')
            \big |^p
            \, \lambda(dh)
        \right)^{1/p}
        \gamma(dx{\times}dy{\times}dx'{\times}dy')
        \\&
        = \dW^{s_{\ell,\lambda,p}}(\eta,\eta'), 
    \end{align*}
    giving us the desired bound.
\end{proof}

Lastly, we turn our attention to stability under noise.
We have shown in Section~\ref{subsec:stability-noise} that the Risk distance is stable with respect to noise in the joint law when $\ell$ is a metric.
By contrast, for a general $\ell$, the Risk distance $\dexp$ can be highly sensitive to changes in the joint law. The following example demonstrates this sensitivity.
\begin{examples}
    Define a problem
    \[P = (\{\bullet\}, \mathbb R, \delta_0, \ell, H)\]
    where $\ell(y,y') = (y-y')^2$, and $H$ is the collection of all functions $\{\bullet\} \to \mathbb R$, making $H$ itself homeomorphic to $\mathbb R$.\footnote{In other words, we can identify $H$ with $\mathbb{R}$.} Now, for any $\epsilon > 0$, define
    \[P_\epsilon = \left(\{\bullet\}, \mathbb R, \frac{1}{2}(\delta_0 + \delta_\epsilon), \ell, H\right).\]
    We can think of $P_\epsilon$ as $P$ with some noise; when making an observation, there is a 50\% of observing the correct $y$-value of 0, and a 50\% chance of making a small error and reading a $y$-value of $\epsilon$. We would hope that $P_\epsilon$ would converge to $P$ under $\dexp$ as $\epsilon \to 0$, but this is not true. There is only one coupling between $\delta_0$ and $1/2(\delta_0 + \delta_{\epsilon})$, so the Risk distance is
    \begin{align*}
        \dexp(P,P_\epsilon)
        & =
        \inf_{R\in \cC(H,H)} \sup_{(a,b) \in R}
        \int\int\big| (a-y)^2 - (b-y')^2|\, \delta_0(dy)\left(\frac{1}{2}\delta_0 + \frac{1}{2}\delta_{\epsilon}\right)(dy')
        \\ &
        =\inf_{R\in \cC(H,H)} \sup_{(a,b) \in R}\left[
            \frac{1}{2}\big|
            a^2 - b^2
            \big|
            +
            \frac{1}{2}\big|
            a^2 - (b-\epsilon)^2
            \big|
        \right]
        \displaybreak[1]
        \\ & \geq 
        \frac{1}{2}\inf_{R\in \cC(H,H)} \sup_{(a,b) \in R}
            \big|
            b^2
            - (b-\epsilon)^2
            \big|
        \\ & =
        \frac{1}{2}\sup_{b\in \mathbb R} |2b\epsilon-\epsilon^2| = \infty.
    \end{align*}
\end{examples}

The metric $\dexp[,p]$ enjoys a stronger stability than $\dexp$ with respect to noise in a problem's joint law.
We use a more general noise model than the one outlined in Section~\ref{subsec:stability-noise}, allowing for noise in the $X$ direction as well as the $Y$ direction. More specifically, we let our noise be given by any Markov kernel $N:X\times Y \to \prob(X\times Y)$. That is, given a problem $P$, define $P_N$ to be the same problem but with joint law on $X\times Y$ given by
\begin{align*}
    \eta\cdot N(A) := \int_{X\times Y} N(x,y)(A) \, \eta(dx\times dy)
\end{align*}
for all measurable $A\subseteq X\times Y$. We justify the use of Markov kernel composition notation by thinking of the measure $\eta$ as a Markov kernel $\eta:\{\bullet\} \to \prob(X\times Y)$.

We can now state our improved result about stability under noise.
\begin{theorem}\label{thm:non-metric-bound}
    Let $(P,\lambda)$ be a weighted problem, and $N:X\times Y \to \prob(X\times Y)$ a Markov kernel. Then
    \begin{align*}
    \dexp[,p](P,P_N)
    \leq
    \dWkern^{s_{\ell,\lambda,p}}(\delta_{\id_{X\times Y}}, N).
    \end{align*}
    Here $\delta_{\id_{X\times Y}}$ is the Markov kernel sending $(x,y)$ to $\delta_{(x,y)}$, and $\dWkern^{s_{\ell,\lambda,p}}$ is the Wasserstein metric on Markov kernels $X\times Y \to \prob(X\times Y)$ when $X\times Y$ is equipped with the joint law $\eta$ and the metric $s_{\ell,\lambda,p}$.
\end{theorem}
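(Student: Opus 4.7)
The approach is to bound the left-hand side by choosing explicit couplings and then invoke Minkowski's integral inequality; the right-hand side simplifies considerably because one of the Markov kernels is deterministic. Since $\delta_{\id_{X\times Y}}(x,y) = \delta_{(x,y)}$ is a point mass, the unique element of $\Pi(\delta_{(x,y)}, N(x,y))$ is $\delta_{(x,y)} \otimes N(x,y)$, so via the alternative formula $\dWkern(M,M') = \int \dW(M(z),M'(z))\,\eta(dz)$ noted after Definition~\ref{def:wass-dist-for-markov-kernels}, one obtains
\[
\dWkern^{s_{\ell,\lambda,p}}(\delta_{\id_{X\times Y}}, N) = \int \int s_{\ell,\lambda,p}\big((x,y),(x',y')\big)\, N(x,y)(dx'\times dy')\, \eta(dx\times dy).
\]

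To bound the left-hand side, I would take $\rho$ to be the diagonal coupling of $\lambda$ with itself and take $\gamma \in \Pi(\eta, \eta\cdot N)$ to be the reverse disintegration of $\eta$ along $N$ as in Section~\ref{subsec:markov-kernels}, namely the measure on $(X\times Y)\times(X\times Y)$ characterized by
\[
\int f\, d\gamma \;=\; \int\int f(x,y,x',y')\, N(x,y)(dx'\times dy')\, \eta(dx\times dy);
\]
its first marginal is clearly $\eta$, and its second marginal is $\eta\cdot N$ by the very definition of the latter. Plugging these choices into the definition of the $L^p$-risk distortion and using that $\rho$ is concentrated on the diagonal of $H\times H$ yields
\[
\dexp[,p](P,P_N) \;\leq\; \left(\int_H \left(\int\int |\ell(h(x),y) - \ell(h(x'),y')|\, N(x,y)(dx'\times dy')\, \eta(dx\times dy)\right)^{\!p} \lambda(dh)\right)^{\!1/p}.
\]

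The main (and essentially only) substantive step is Minkowski's integral inequality, applied with $\lambda$ as the outer measure and with the joint measure obtained from $\eta$ and $N$ by reverse disintegration as the inner one. This moves the $L^p(\lambda)$ norm inside the other two integrals, producing the upper bound
\[
\int\int \left(\int_H |\ell(h(x),y) - \ell(h(x'),y')|^p\, \lambda(dh)\right)^{1/p} N(x,y)(dx'\times dy')\, \eta(dx\times dy),
\]
whose integrand is precisely $s_{\ell,\lambda,p}((x,y),(x',y'))$, matching the simplified right-hand side from the first paragraph. Measurability of the integrand in $h$ needed to apply Minkowski follows from the standing measurability assumption on $(h,x,y) \mapsto \ell_h(x,y)$. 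I expect no real difficulty beyond bookkeeping: the deterministic nature of $\delta_{\id_{X\times Y}}$ aligns the kernel-Wasserstein and the $L^p$-risk distortion integrals into the same pattern, leaving only Minkowski to bridge the gap.
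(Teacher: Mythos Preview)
Your proposal is correct and follows essentially the same route as the paper: choose the diagonal coupling on $\lambda$, take the explicit coupling $\gamma\in\Pi(\eta,\eta\cdot N)$ built from $N$ by reverse disintegration, and apply Minkowski's integral inequality to produce $s_{\ell,\lambda,p}$ inside the integral. The paper merely packages the first part of this as a citation of Proposition~\ref{prop:H-wasserstein-bound-weighted} (whose proof is exactly your diagonal-plus-Minkowski step) and then bounds $\dW^{s_{\ell,\lambda,p}}(\eta,\eta\cdot N)$ by the kernel Wasserstein distance via the same coupling you construct; your observation that $\delta_{\id_{X\times Y}}$ forces a unique kernel coupling makes that last step an equality rather than an inequality, but the argument is otherwise identical.
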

The proof can be found in Appendix~\ref{appendix:proofs}.

\begin{examples} 
    Let $(P,\lambda)$ be a weighted problem with $P = (\mathbb R, \mathbb R, \eta, \ell, H)$, where
    \begin{itemize}
        \item $\ell(y,y') := (y-y')^2$,
        \item $H := \{f_a|a\in \mathbb R\}$, where $f_a(x): = ax$,
        \item $\lambda$ is the standard normal distribution.
    \end{itemize}
    Let $N:\mathbb R\times \mathbb R \to \prob(\mathbb R\times \mathbb R)$ be a noise kernel that applies $x$-dependent noise in the vertical direction. That is, if $i_x^y:\mathbb R \to \mathbb R^2$ is the function $i_x^y(y') = (x,y+y')$, set
    \[N(x,y) = (i_x^y)_\sharp M(x)\]
    for some Markov kernel $M:\mathbb R \to \mathbb R$. Assume that for all $x$, $M(x)$ is a symmetric measure with finite 2nd moment $v(x)$ and 4th moment $k(x)$.
    Theorem~\ref{thm:non-metric-bound} gives us $d_{R,2}(P,P_N) \leq \dWkern^{s_{\ell,\lambda,2}}(\delta_{\id_{\mathbb R^2}}, N).$ The right hand side is difficult to exactly compute, so we will compute the larger quantity $\dWkernp{2}^{s_{\ell,\lambda,2}}(\delta_{\id_{\mathbb R^2}},N)$.

    We first produce an explicit formula for $s_{\ell,\lambda,2}((x,y),(x',y'))$.
    \begin{align*}
    s_{\ell,\lambda,2}^2((x,y),(x',y'))
    & =
    \int\left(
        \ell_h(x,y) - \ell_h(x',y')
    \right)^2 \, \lambda(dh)
    \\ & =
    \int_{-\infty}^\infty\left(
        (ax-y)^2 - (ax'-y')^2
    \right)^2 \frac{e^{-a^2/2}}{\sqrt{2\pi}} da.
    \end{align*}
    After expanding the polynomial term, the integral can be split into five terms that represent various moments of the standard normal distribution. We arrive at the polynomial
    \[
    s_{\ell,\lambda,2}^2((x,y),(x',y')) = 3(x^2-x'^2)^2 + 4(xy-x'y')^2 + (y^2-y'^2)^2 + 2(x^2-x'^2)(y^2-y'^2).
    \]
    Next we can compute $d_{W,2}^{s_{\ell,\lambda,2}}(\delta_{(x,y)}, N(x,y))$ for any fixed $(x,y) \in \mathbb R^2$. Indeed, there is only one coupling between $\delta_{(x,y)}$ and  $N(x,y)$, so we can write
    \begin{align*}
    \left(d_{W,2}^{s_{\ell,\lambda,2}}\right)^2(\delta_{(x,y)}, N(x,y))
    & =
    \int s_{\ell,\lambda,2}^2((x,y),(x',y'))\, N(x,y)(dx'{\times}dy')
    \\ & =
    \int s_{\ell,\lambda,2}^2((x,y),(x',y'))\, (i_x^y)_\sharp M(x)(dx'{\times}dy')
    \\ & =
    \int s_{\ell,\lambda,2}^2((x,y),(x,y'+y))\, M(x)(dy')
    \\ & =
    \int 4x^2y'^2 + y'^2(y'+2y)^2\, M(x)(dy').
    \end{align*}
    Again, expanding the polynomial terms and splitting the integral reduces the problem to moment computations. Recalling that $M(x)$ is symmetric and hence has vanishing odd moments, we arrive at the expression
    \[
    \left(d_{W,2}^{s_{\ell,\lambda,2}}\right)^2(\delta_{(x,y)}, N(x,y)) = 4v(x)x^2+ 4v(x)y^2 + k(x).
    \]
    Integrating against $\eta$ gives us an expression for $\left(\dWkernp{2}^{s_{\ell,\lambda,2}}\right)^2(\delta_{\id_{\mathbb R^2}},N)$, and hence an upper bound for $\dexp[,p]^2(P,P_N)$.
    \[
    \dexp[,p]^2(P,P_N) \leq \int \left[4 v(x) \left(\int y^2 \, \beta(x)(dy)\right)  + 4 x^2v(x) + k(x) \right]\, \alpha(dx)
    \]
    where $\alpha$ is the first marginal of $\eta$, and $\beta$ is the disintegration. We take a moment to interpret the integrand of the outer integral. The integral $\int y^2 \beta(x)(dy)$ is the second moment of $\beta(x)$. Hence the term $v(x)\int y^2 \beta(x)(dy)$ measures the correlation between the spread of $\beta(x)$ and the noise $\sigma^2(x)$ applied at $x$. The term $x^2 \sigma^2(x)$ measures the application of noise far from the origin. Hence this bound suggests that noise has an effect on the squared Risk distance proportional to its 4th moment, with an additional effect proportional to its 2nd moment if applied at $x$-values far from zero, or where $\beta(x)$ is already spread out.
\end{examples}

\section{The Connected Risk Distance: Risk Landscapes}\label{sec:topological}

Having discussed the weighted variant of the Risk distance, we now introduce and study one more variant, called the \emph{Connected Risk distance}. This variant is motivated by the observation that the Risk distance is insensitive to the contours of a problem's risk landscape. We will show that by strengthening the Risk distance into the Connected Risk distance, a topological descriptor of the risk landscape called the \emph{Reeb graph} can be made to exhibit stability under the Connected Risk distance without breaking many of the stability results that the Risk distance enjoys. The Reeb graph of the risk landscape of a problem $P$ encodes information about the inherent \emph{complexity} associated to solving $P$.

\subsection{Insensitivities of the Risk distance}

We saw in section \ref{subsec:loss-control} that the constrained Bayes risk of a problem is stable under the Risk distance. The constrained Bayes risk $\cB(P)$ of a problem $P$ is a sufficiently descriptive invariant if we plan to solve $P$ by exhaustively searching the predictor set $H$ for a predictor with minimal risk. If we instead wish to use a more practical search method like gradient descent, then $\cB(P)$ is a woefully incomplete descriptor. The specific structure of the risk function $\cR_P:H\to \mathbb R_{\geq 0}$ becomes crucially important, rather than just its infimum. Is $\cR_P$ rife with local minima? What are the risks of the local minima? What are their basins like? The Risk distance is not sensitive to these kinds of concerns.

\begin{examples}\label{ex:risk-landscapes}
    Define a problem
    \[P := ([0,1],\{0,1\},\eta, \ell, H)\]
    and a family of problems
    \[P_t := ([0,1],\{0,1\},\eta,\ell, H_t), t\in (0,1],\]
    where
    \begin{enumerate}
        \item $\ell$ is the 0-1 loss $\ell(y,y'):= 1_{y\neq y'}$.
        \item $\eta$ is the uniform distribution on $[0,1]\times \{0\}$.
        \item $H:= \set{1_{[0,a)}}{a \in [0,1]} \cup \set{1_{(a,1]}}{a \in [0,1]}$.
        \item $H_t := \set{1_{[0,a)}}{a \in [t,1]} \cup \set{1_{(a,1]}}{a \in [0,1]}$.
    \end{enumerate}
    Here we take the convention that $[0,0) = (1,1] = \emptyset$.
    We claim that $\dexp(P,P_t) \leq t$. Indeed, by Theorem~\ref{thm:H-stability},
    \begin{align*}
        \dexp(P,P_t) & \leq \dH^{L^1(\eta)}\left(\set{\ell_h}{h\in H}, \set{\ell_{h'}}{h'\in H_t}\right)
        \\ &
        = \sup_{h\in H} \inf_{h'\in H_t} \| \ell_h - \ell_{h'}\|_{L^1(\eta)}
        \\ &
        = \sup_{h \in H\setminus H_t} \inf_{h'\in H_t}
        \int_0^1 \big| \ell(h(x),0) - \ell(h'(x),0) \big | \,dx
        \\ &
        = \sup_{a\in [0,t)} \inf_{h'\in H_t} \int_0^1 \big| 1_{1_{[0,a)}(x) \neq 0} - 1_{h'(x) \neq 0} \big | \,dx
        \\ &
        = \sup_{a \in [0,t)} \inf_{h'\in H_t} \int_0^1 \big| 1_{[0,a)}(x)- h'(x) \big | \,dx.
        \shortintertext{Choosing $h' = 0$ gives us the upper bound bound}
        &
        \leq \sup_{a \in [0,t)} \int_0^1 1_{[0,a)}(x)\,dx = t.
    \end{align*}
    Hence $\dexp(P_t,P) \to 0$ as $t\to 0$.
    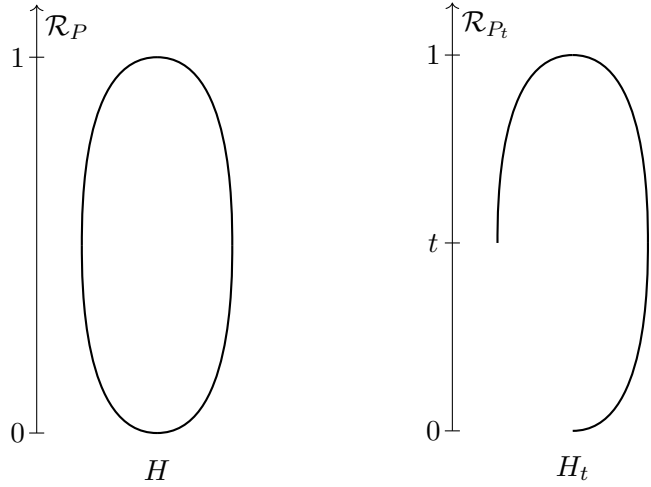
\begin{figure}
        \centering
        \begin{tikzpicture}
            \draw [thick] (0,0) .. controls (-1,0) and (-1,2) .. (-1,2.5);
            \draw [thick] (0,0) .. controls (1,0) and (1,2) .. (1,2.5);
            \draw [thick] (0,5) .. controls (-1,5) and (-1,3) .. (-1,2.5);
            \draw [thick] (0,5) .. controls (1,5) and (1,3) .. (1,2.5);

        \draw[->] (-1.6,0) -- (-1.6,5.7);
        \draw (-1.7,0) -- (-1.5,0);
        \draw (-1.7,5) -- (-1.5,5);
        \node at (-1.2,5.4) {$\cR_P$};
        \node at (-1.85,0) {$0$};
        \node at (-1.85,5) {$1$};
        \node at (0,-0.5) {$H$};
        \end{tikzpicture}
        \qquad\qquad\qquad
        \begin{tikzpicture}
            \draw [thick] (0,0) .. controls (1,0) and (1,2) .. (1,2.5);
            \draw [thick] (0,5) .. controls (-1,5) and (-1,3) .. (-1,2.5);
            \draw [thick] (0,5) .. controls (1,5) and (1,3) .. (1,2.5);

        \draw[->] (-1.6,0) -- (-1.6,5.7);
        \draw (-1.7,0) -- (-1.5,0);
        \draw (-1.7,5) -- (-1.5,5);
        \draw (-1.7,2.5) -- (-1.5,2.5);
        \node at (-1.15,5.4) {$\cR_{P_t}$};
        \node at (-1.85,0) {$0$};
        \node at (-1.85,5) {$1$};
        \node at (-1.85,2.5) {$t$};
        \node at (0,-0.5) {$H_t$};
        \end{tikzpicture}

        \caption{The risk landscapes for $P$ and $P_t$ in Example~\ref{ex:risk-landscapes}.}\label{fig:risk-landscape-example}
    \end{figure}
    At the same time, the risk landscapes of $P$ and $P_t$ look very different for any $t>0$. These risk landscapes are depicted in Figure~\ref{fig:risk-landscape-example}. Under the metric $d_{\ell,\eta}$, $H$ is topologically a circle. The risk function $\cR_P:H \to \mathbb R_{\geq 0}$ has one local minimum, which is achieved at the function $h(x) \equiv 0$. For $t>0$, $H_t$ is formed from $H$ by removing a small segment of the circle, making $H_t$ topologically a closed interval under $d_{\ell,\eta}$. Then $\cR_{P_t}$ has a local minimum at each end of the endpoints of the interval, one at $h(x)=0$ and the other at $h(x) = 1_{[0,t)}$.
\end{examples}

In this section, we strengthen the Risk distance to exert more control over the risk landscape. Inspired by \citet{bauer_reeb_2021}, we restrict the correspondences in the definition of $\dexp$ to only those satisfying a certain connectivity property.
\subsection{The Connected Risk distance}
    If $H,H'$ are topological spaces, define $\con(H,H')$ to be the set of correspondences $R\subseteq H\times H'$ such that the projections $R\to H$ and $R\to H'$ are \emph{inverse connected}, meaning the preimage of any connected set is connected.
    \begin{definition}[Connected Risk distance]
        \begin{align*}\dexpcon(P,P')
        :=&
        \inf_{\substack{\gamma \in \Pi(\eta,\eta')\\ R\in \con(H,H')}} \dis_{P,P'}(R,\gamma)
        \\ =&
        \inf_{\substack{\gamma \in \Pi(\eta,\eta')\\ R\in \con(H,H')}} \sup_{(h,h')\in R}
        \int \big| \ell_h(x,y) - \ell'_{h'}(x',y')\big| \, \gamma(dx{\times} dy{\times} dx'{\times} dy').
        \end{align*}
    \end{definition}
   
    Note that the definition of $\dexpcon$ is the same as that of $\dexp$ (Definition~\ref{def:metric}), except that the correspondence $R$ is chosen from $\con(H,H')$.
    We claim $\dexpcon$ is a pseudometric. The only nontrivial property is the triangle inequality, which requires a modified version of the gluing lemma.
    \begin{lemma}\label{lem:connected-gluing-lemma}
        Let $H_1,H_2,H_3$ be topological spaces, and let $R_{1,2}\in \con(H_1,H_2)$, $R_{2,3}\in \con(H_2,H_3)$. Then there exists an $R \subseteq H_1\times H_2 \times H_3$ such that the projection maps onto each of the three coordinates are all inverse connected, and such that $\op{proj_{1,2}}(R) = R_{1,2}$ and  $\op{proj_{2,3}}(R) = R_{2,3}$.
    \end{lemma}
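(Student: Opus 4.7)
The plan is to use the same set-theoretic construction from Proposition~\ref{lem:gluing-correspondences}, namely
\[R \;:=\; \bigl\{(h_1,h_2,h_3) \in H_1 \times H_2 \times H_3 \,:\, (h_1,h_2) \in R_{1,2},\ (h_2,h_3) \in R_{2,3}\bigr\},\]
equipped with the subspace topology from $H_1 \times H_2 \times H_3$. The projection identities $\mathrm{proj}_{1,2}(R) = R_{1,2}$ and $\mathrm{proj}_{2,3}(R) = R_{2,3}$ are immediate, since any pair in $R_{1,2}$ extends to a triple in $R$ by choosing a partner for $h_2$ in $R_{2,3}$ (available because $R_{2,3}$ is a correspondence), and symmetrically. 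That $R$ is itself a three-way correspondence follows from chaining these extensions.

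The substance of the lemma is verifying that each projection $\pi_i : R \to H_i$ is inverse connected. The key structural observation is that $R$ is the topological fibre product $R_{1,2} \times_{H_2} R_{2,3}$, in which both legs to $H_2$ are inverse connected by hypothesis. I would first handle $\pi_2$: for connected $C \subseteq H_2$, the preimage $\pi_2^{-1}(C)$ is naturally the fibre product over $C$ of $A := (\mathrm{proj}_{H_2})^{-1}(C) \cap R_{1,2}$ and $B := (\mathrm{proj}_{H_2})^{-1}(C) \cap R_{2,3}$, both connected by hypothesis on $R_{1,2}$ and $R_{2,3}$. For $\pi_1$ (and symmetrically $\pi_3$), given connected $C_1 \subseteq H_1$, I would introduce the auxiliary set $C_2 := \mathrm{proj}_{H_2}\bigl((\mathrm{proj}_{H_1})^{-1}(C_1) \cap R_{1,2}\bigr) \subseteq H_2$, observe that $C_2$ is connected as the continuous image of a connected set, and then present $\pi_1^{-1}(C_1)$ as a fibre product of two connected sets over $C_2$.

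The main obstacle is that a continuous surjection onto a connected space whose fibres are connected need not have connected total space, so inverse connectedness cannot be extracted from these fibre-product reductions by a purely abstract argument. My plan is to prove connectivity of each preimage by an explicit stitching: given two triples $(h_1,h_2,h_3)$ and $(h_1',h_2',h_3')$ in the preimage under $\pi_i$, use the inverse-connectedness of $R_{1,2}$ to produce a connected subset $\Gamma_{1,2}$ of $R_{1,2}$ joining $(h_1,h_2)$ to $(h_1',h_2')$ inside the appropriate $H_1$- or $H_2$-slice; use the inverse-connectedness of $R_{2,3}$ similarly to get a connected $\Gamma_{2,3}$ joining $(h_2,h_3)$ to $(h_2',h_3')$; and lift both into $R$ using that the fibres $R_{2,3}(h_2) := \{h_3 : (h_2,h_3) \in R_{2,3}\}$ and $R_{1,2}^{-1}(h_2) := \{h_1 : (h_1,h_2) \in R_{1,2}\}$ are themselves connected (being preimages of the connected singleton $\{h_2\}$). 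The two lifted connected subsets of $R$ can be arranged to meet at a common triple with matching $H_2$-coordinate, and the elementary fact that the union of two connected sets sharing a point is connected gives the desired connectivity of the preimage.
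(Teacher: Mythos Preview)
Your construction of $R$ as the set-theoretic fibre product matches the paper's, and your reduction of each preimage $\pi_i^{-1}(C)$ to a fibre product of connected spaces over a connected base is correct. You are also right to flag the obstacle that connected fibres over a connected base do not force a connected total space. The paper's proof glosses over exactly this point: it asserts that the preimage of a connected $A\subseteq H_1$ in $R$ equals the literal product $A\times B\times C$ (with $B=p_2(p_1^{-1}(A))$, $C=q_2(q_1^{-1}(B))$) and concludes connectedness from that, but the preimage is only \emph{contained in} $A\times B\times C$ --- equality fails whenever, for instance, $R_{1,2}$ is the graph of a non-constant map.

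Your stitching argument, however, cannot be completed, because the lemma as stated is false. Take $H_2=[0,1]$ and
\[
H_1:=\{(x,\sin(1/x)):x\in(0,1]\}\cup\{(0,0)\},\qquad H_3:=\{(x,\sin(1/x)):x\in(0,1]\}\cup\{(0,1)\},
\]
each with the subspace topology from $\mathbb{R}^2$, and let $f:H_1\to[0,1]$, $g:H_3\to[0,1]$ be the first-coordinate projections. Both $f$ and $g$ are inverse connected (the preimage of any interval $[0,\epsilon]$ is a tail of the sine curve together with a single limit point lying in its closure), so the graph of $f$ and the flipped graph of $g$ lie in $\con(H_1,H_2)$ and $\con(H_2,H_3)$ respectively. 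Since these correspondences are graphs of functions, \emph{any} $R$ with the prescribed $\mathrm{proj}_{1,2}$ and $\mathrm{proj}_{2,3}$ images is forced to be the fibre product $H_1\times_{[0,1]}H_3$ (with the redundant $H_2$-coordinate reinserted), namely
\[
\bigl\{\bigl((x,\sin(1/x)),\,(x,\sin(1/x))\bigr):x\in(0,1]\bigr\}\ \cup\ \bigl\{\bigl((0,0),\,(0,1)\bigr)\bigr\}.
\]
The singleton is isolated: a basic neighbourhood $U\times V$ with $U$ small around $(0,0)$ in $H_1$ and $V$ small around $(0,1)$ in $H_3$ would require $\sin(1/x)$ to be simultaneously near $0$ and near $1$, hence contains no curve points. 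Thus $R$ is disconnected and $\pi_1:R\to H_1$ is not inverse connected (take the connected set $H_1$ itself). Your two ``lifted'' connected subsets can never be arranged to meet at this isolated fibre, so the union-of-overlapping-connected-sets strategy breaks down exactly where it is needed.
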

    With this lemma in hand, we can prove the triangle inequality in exactly the same way that we did for $\dexp$.
    The proof of Lemma~\ref{lem:connected-gluing-lemma} can be found in Appendix~\ref{appendix:proofs}.

    \begin{remark}\label{rem:dexpcon-inherited-results}
    Note that $\dexpcon$ is a larger metric than $\dexp$, since $R$ is infimized over $\con(H_1,H_2)\subseteq \cC(H_1,H_2)$. Therefore we would expect $\dexpcon$ to be less stable than $\dexp$. Some of the stability results of $\dexp$ were proven by selecting $R$ to be the diagonal coupling between a predictor set $H$ and itself. Since $\con(H,H)$ still contains the diagonal coupling, these proofs apply just as well to $\dexpcon$, just as we saw with the $L^p$-Risk distance in Section~\ref{subsec:improved-stability}. Specifically, Condition~2 from Section~\ref{subsec:motivation}, as well as the results of Sections~\ref{subsec:stability-bias} (stability under bias) and \ref{subsec:stability-noise} (stability under noise) still apply to the Connected Risk distance.
    \end{remark}

    Additionally, since $\dexpcon$ is larger than $\dexp$, the problem descriptors stable under the former are also stable under the latter. In particular, the results of Section~\ref{subsec:loss-control} and Section~\ref{subsec:rademacher}
    still hold for $\dexpcon$; still stable under the Connected Risk distance are the loss profile set and consequently the Bayes risk, as well as the Rademacher complexity for an arbitrarily large sample size.

\subsection{Risk Landscapes and Reeb Graphs}

    We will show that a certain topological summary of the risk landscape of a problem, known as the \emph{Reeb graph}, is stable under the Connected Risk distance.

    \begin{definition}[Induced Reeb Graph]\label{def:reeb-graph}
        Let $X$ be a topological space and $f:X\to \mathbb R$ a continuous function. Define an equivalence relation on $X$ by declaring that $x\sim y$ if and only if $x$ and $y$ belong to the same connected component of some level set of $f$. Define $F:= X/{\sim}$. Let $q:F\to \mathbb R$ be the unique continuous function that factors through $f$.        The pair $(F,q)$ is called the \emph{Reeb graph induced by $f$}.
    \end{definition}
    \begin{figure}
        \includegraphics[width=\textwidth]{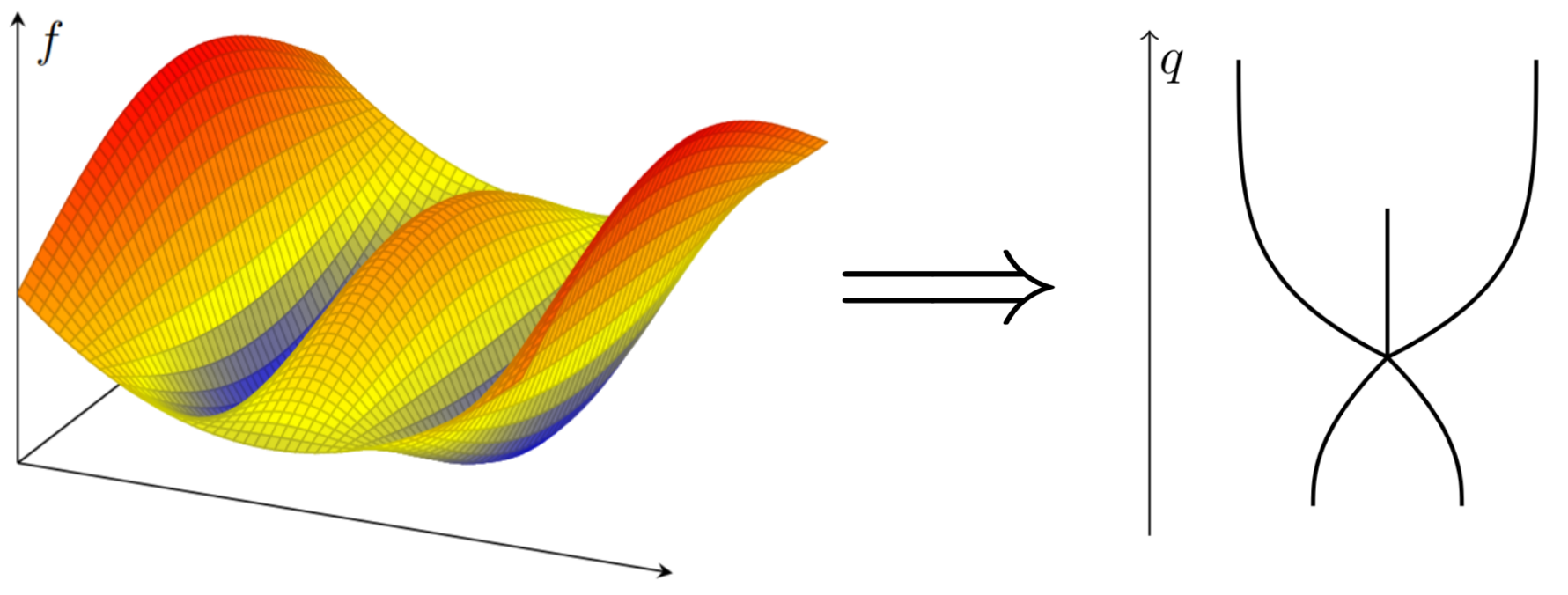}
        \caption{An illustration of a function $f$ on the unit square and its induced Reeb graph. The two local minima of the function produce downward-pointing leaves of equal height on the Reeb graph. Similarly, the three local maxima produce upward-pointing leaves, with the lower maximum producing a lower leaf.}\label{fig:reeb}
    \end{figure}
    Typically, we picture a Reeb graph $(F,q)$ by thinking of the function $q$ as a ``height'' function; we imagine that $F$ is situated in space so that $q(x)$ is the altitude of the point $x$. See Figure~\ref{fig:reeb}. In the computer science literature, many authors define the Reeb graph by declaring that $x\sim y$ if and only if $x$ and $y$ lie in the same \emph{path-connected} component instead of the same \emph{connected} component. While these definitions are not equivalent in general, they are equivalent under reasonable topological assumptions, such as when the level sets of $f$ are locally path-connected.
    
    Reeb graphs are used as summaries of landscapes induced by real functions. While originally designed by Georges Reeb to study functions on smooth manifolds \citep{reeb_1946} and independently rediscovered by \citet{Kro50}, Reeb graphs were first popularized as a computational tool by \citet{shinagawa_surface_1991}. Since then, Reeb graphs have grown in popularity, finding applications in the comparison \citep{hilaga_topology_2001,escolano_complexity_2013}, parameterization \citep{patane_para-graph_2004,zhang_feature-based_2005}, and denoising \citep{wood_removing_2004} of shapes, in symmetry detection \citep{thomas_symmetry_2011}, and in sketching of static \citep{chazal_gromov-hausdorff_2013,ge_data_2011} and time-varying \citep{edelsbrunner_time-varying_2004} data, among many others. See \citet{biasotti_reeb_2008,yan_scalar_2021} for surveys. 
    
    While it is easiest to imagine ``Reeb graphs" as \emph{topological graphs}\footnote{The following definition will suffice: A \emph{topological graph} is a topological space produced from a graph by realizing the vertices as isolated points and the edges as arcs connecting them; see \citep[Chapter 0]{hatcher2005algebraic} for details.} with attached height functions, one should be wary that, in general, the Reeb graph of a function need not admit the structure of a topological graph. Thankfully, many classes of spaces and functions are known to induce Reeb graphs which are also topological graphs. One list of sufficient conditions is provided by \citet[Example 2.2]{deSilva2016}.

    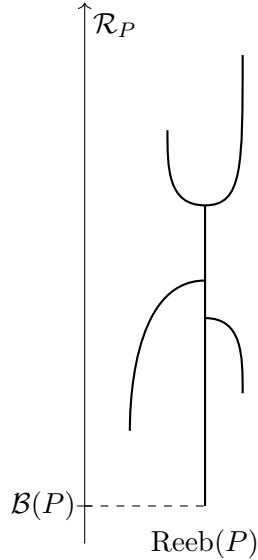
\begin{figure}
        \centering
        \begin{tikzpicture}
            \draw [thick] (0,0) -- (0,4);
            \draw [thick] (-1,1) .. controls (-1,1.2) and (-1,3) .. (0,3);
            \draw [thick] (0.5,1.5) .. controls (0.5,2) and (0.5,2.5) .. (0,2.5);
            \draw [thick] (0,4) .. controls (-0.5,4) and (-0.5,4.5) .. (-0.5,5);
            \draw [thick] (0,4) .. controls (0.5,4) and (0.5,4.5) .. (0.5,6);

            \draw[->] (-1.6,-0.5) -- (-1.6,6.7);
            \draw (-1.7,0) -- (-1.5,0);
            \node at (-1.2,6.4) {$\cR_P$};
            \node at (-2.15,0) {$\cB(P)$};
            \draw[dashed] (-1.6,0) -- (0,0);
            \node at (0,-0.5) {$\reeb(P)$};
        \end{tikzpicture}
        \caption{The Reeb graph of some problem $P$. Based on the downward-pointing leaves of the Reeb graph, this problem's risk landscape seems to have three basins. The height of the lowest point in the Reeb graph corresponds to the constrained Bayes risk $\cB(P)$.}\label{fig:problem-reeb-space}
    \end{figure}

    \begin{definition}
        Let $P$ be a problem. The \emph{Reeb graph of $P$}, denoted $\reeb(P)$, is the Reeb graph induced by the risk function $\cR_P:H\to \mathbb R_{\geq 0}$.
    \end{definition}

    The Reeb graph of a problem $P$ is a summary of the risk landscape. (See Figure~\ref{fig:problem-reeb-space}.) For instance, the height of the lowest point of $\reeb(P)$ is the constrained Bayes risk. If $\cR_P$ has a unique minimizer, then $\reeb(P)$ will have a unique lowest point. Conversely, if $\reeb(P)$ has a unique lowest point, then the minimizers of $\cR_P$ form a connected set. Local minima appear as downward-pointing leaves, with the height of the edges corresponding to the depth of the basins around those minima. Similarly, local maxima are represented by upward-pointing leaves. The Reeb graph of $P$  therefore captures information that can be useful for quantifying the complexity of the problem $P$.

    Reeb graphs can be compared via any of several related metrics. The largest such metric in popular use is called the \emph{universal distance} \citep{bauer_reeb_2021}.

    \begin{definition}[Universal Distance on Reeb graphs]
        The \emph{universal distance} between two Reeb graphs $(F,f)$ and $(G,g)$ is given by
        \[d_{\mathrm{U}}((F,f),(G,g)) := \inf_{Z,\widehat f,\widehat g:Z \to \mathbb R} \|\widehat f - \widehat g\|_{\infty}\]
        where the infimum ranges over all spaces $Z$ and functions $\widehat f, \widehat g:Z\to \bbR$ such that $\widehat f$ induces the Reeb graph $(F,f)$ and $\widehat g$ induces the Reeb graph $(G,g)$.
    \end{definition}

    Even with respect to the universal distance, the Reeb graph of a problem is stable under the Connected Risk distance.
    \begin{theorem}\label{thm:reeb-control}
        Let $P$ and $P'$ be problems. Then
        \begin{align*}
            |\cB(P) - \cB(P')| \leq d_{\mathrm{U}}(\reeb(P),\reeb(P')) \leq  \dexpcon(P,P').
        \end{align*}
    \end{theorem}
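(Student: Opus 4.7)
The plan is to prove the two inequalities separately, with the second being the substantive one. For the first inequality, note that if $\widehat{f},\widehat{g}\colon Z\to\mathbb{R}$ are any pair of continuous functions that induce $\reeb(P)$ and $\reeb(P')$ respectively, then by construction each descends through a quotient $Z\to F$ to the height function on the corresponding Reeb graph. Hence $\inf_Z \widehat{f}$ equals the infimum of the height function on $\reeb(P)$, which in turn equals $\inf_H \cR_P = \cB(P)$; similarly $\inf_Z \widehat{g} = \cB(P')$. The elementary estimate $|\inf\widehat{f}-\inf\widehat{g}|\leq\|\widehat{f}-\widehat{g}\|_\infty$ and infimization over admissible triples $(Z,\widehat{f},\widehat{g})$ yields the first inequality.

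For the second inequality, the natural candidate for $Z$ is a correspondence $R\in\con(H,H')$ itself, regarded as a topological subspace of $H\times H'$. Given $R$ and a coupling $\gamma\in\Pi(\eta,\eta')$, define
\[
\widehat{f}(h,h') := \cR_P(h) = \cR_P\circ \pi_1(h,h'),\qquad \widehat{g}(h,h') := \cR_{P'}(h') = \cR_{P'}\circ \pi_2(h,h').
\]
The bound on $\|\widehat{f}-\widehat{g}\|_\infty$ is straightforward: using $\gamma$ as a coupling we can write
\[
|\cR_P(h)-\cR_{P'}(h')| = \left|\int\bigl[\ell_h(x,y)-\ell'_{h'}(x',y')\bigr]\,\gamma(dx{\times}dy{\times}dx'{\times}dy')\right|\le \int\bigl|\ell_h(x,y)-\ell'_{h'}(x',y')\bigr|\,\gamma,
\]
and taking the supremum over $(h,h')\in R$ yields $\|\widehat{f}-\widehat{g}\|_\infty \leq \dis_{P,P'}(R,\gamma)$. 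Infimizing over $R\in\con(H,H')$ and $\gamma\in\Pi(\eta,\eta')$ then produces the desired bound $d_{\mathrm{U}}(\reeb(P),\reeb(P'))\leq\dexpcon(P,P')$, provided we can justify that $\widehat{f}$ and $\widehat{g}$ genuinely induce $\reeb(P)$ and $\reeb(P')$.

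The main obstacle, and the place where the restriction to $\con(H,H')$ plays its role, is verifying the inducement claim. Consider the projection $\pi_1\colon R\to H$. I need to show that the Reeb quotient $R/{\sim_{\widehat{f}}}$ coincides with $H/{\sim_{\cR_P}} = F_P$. The composition $R\to H\to F_P$ descends to a map $R/{\sim_{\widehat{f}}}\to F_P$: if $(h_1,h'_1)$ and $(h_2,h'_2)$ lie in the same connected component of a level set of $\widehat{f}$, then $h_1,h_2$ lie in the common connected image in $\cR_P^{-1}(c)$, hence in the same component there. Surjectivity follows because $\pi_1$ is surjective (by the correspondence condition). For injectivity, if $h_1,h_2$ lie in the same component $C\subseteq \cR_P^{-1}(c)$, inverse connectedness of $\pi_1$ forces $\pi_1^{-1}(C)\subseteq \widehat{f}^{-1}(c)$ to be connected and to contain both $(h_1,h'_1)$ and $(h_2,h'_2)$, so they are $\widehat{f}$-equivalent. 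A symmetric argument applies to $\widehat{g}$ via $\pi_2$.

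The one subtlety I would flag is that the definition of an ``induced Reeb graph'' in Definition~\ref{def:reeb-graph} requires a genuine graph structure on the quotient. The argument above produces a homeomorphism of Reeb quotients $R/{\sim_{\widehat f}} \cong F_P$, so as long as $\reeb(P)$ and $\reeb(P')$ exist as Reeb graphs, the induced objects will satisfy the requisite axioms. This is the analogue for $\dexpcon$ of the key observation underpinning the Reeb graph stability result of \citet{bauer_reeb_2021}, from which we draw the inverse-connectedness restriction on $\con(H,H')$; the whole reason for introducing $\dexpcon$ was to enable exactly this line of argument.
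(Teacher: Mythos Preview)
Your proof is correct and follows essentially the same approach as the paper: both arguments use a correspondence $R\in\con(H,H')$ as the common space $Z$ with $\widehat f=\cR_P\circ\pi_1$ and $\widehat g=\cR_{P'}\circ\pi_2$, bound $\|\widehat f-\widehat g\|_\infty$ by the distortion, and infimize. You in fact supply more detail than the paper on why $\widehat f$ induces $\reeb(P)$---the paper simply asserts this holds because $\pi_1$ is a continuous surjection with connected fibers, whereas you spell out the bijection at the level of Reeb quotients using the full inverse-connectedness hypothesis.
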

    Hence the Reeb graph of a problem is a stronger descriptor than the constrained Bayes risk while remaining stable under the Connected Risk distance. The proof can be found in Appendix~\ref{appendix:proofs}.

    \begin{examples}
        Let $P$ be a problem whose loss $\ell$ is a metric, such as a 0-1 loss in a classification problem, mean absolute error in a regression problem, or a Wasserstein distance in a probability estimation problem. Suppose that we expect noise in the labels of our data such that the average distance under $\ell$ between the true label and the observed label is $\epsilon_1$. Write $\eta'$ for the noisy distribution from which the observed data is drawn. Suppose also that the loss function $\ell$ is difficult to exactly compute or has poor optimization properties, but there is an approximation $\ell'$ that does not share the same issues and is $\epsilon_2$-close to $\ell$ in the supremum norm. Replacing $\eta$ with the noisy $\eta'$ and $\ell$ with the approximation $\ell'$ could introduce new local minima to the risk function. We would like to understand how bad these local minima can be. Let $P' = (X,Y,\eta',\ell,H)$ and $P'' = (X,Y,\eta',\ell',H)$. We aim to bound the universal distance between the Reeb graphs of the idealized problem $P$ and the corrupted problem $P''$. Since Theorem~\ref{thm:metric-noise-stability} applies to the Connected Risk distance (Remark~\ref{rem:dexpcon-inherited-results}), we have
        \[\dexpcon(P,P')\leq \epsilon_1.\]
        Since $\dexpcon$ also satisfies Condition~2 from Section~\ref{subsec:motivation}, we also have
        \[\dexpcon(P',P'') \leq \max_{h\in H} \int \big|\ell(h(x),y) - \ell'(h(x),y)\big|\, \eta(dx{\times}dy) \leq \epsilon_2.\]
        Then, using the triangle inequality for $\dexpcon$ and Theorem~\ref{thm:reeb-control}, we can write the bound
        \begin{align*}
            d_{\mathrm{U}}(\reeb(P),\reeb(P')) & \leq \dexpcon(P,P'') \leq \dexpcon(P,P') + \dexpcon(P',P'') \leq \epsilon_1 + \epsilon_2.
        \end{align*}
        In other words, if we replace the idealized problem $P$ with the corrupted problem $P''$, the Reeb graph can change by at most $\epsilon_1 + \epsilon_2$ in the universal distance. Hence any downward-pointing leaf of $\reeb(P)$ of depth more than $2(\epsilon_1 + \epsilon_2)$ from top to bottom has a corresponding downward-pointing leaf in $\reeb(P'')$, and the difference in the heights of the tips of two corresponding leaves is at most $\epsilon_1 + \epsilon_2$. We conclude that a basin of depth greater than $2(\epsilon_1 + \epsilon_2)$ in the risk landscape of the original problem corresponds to some basin in the corrupted problem, and the risk at the bottom of two corresponding basins differ by at most $\epsilon_1 + \epsilon_2$. Similarly, any new downward-pointing leaves in the Reeb graph created by corrupting $P$ to get $P''$ can be of length at most $2(\epsilon_1 + \epsilon_2)$ from top to bottom, so any new basins introduced by the corruption can be of depth at most $2(\epsilon_1 + \epsilon_2)$ from the deepest to highest point of the basin. 
    \end{examples}

\section{Discussion}
Machine learning is currently engaged in a rapid and wide-ranging exploration of its problem space. This paper proposes a guiding and organizing principle, focusing on supervised learning. As the field grows explosively, such structure is increasingly needed to make sense of emerging approaches. In this work, we argue that, given a notion of distance between supervised learning problems, various corruptions and modifications of problems can be seen from a geometric perspective. We propose the Risk distance and two variants thereof as reasonable choices for this distance notion, bolstering our claims with a garniture of stability results.
The development in this paper can be seen as a direct descendent of earlier work in the machine 
learning community on reductions between learning problems, surrogate 
losses and different forms of regularization.

The present work has opened the way for further understanding of machine 
learning analogous to several productive, related research programs. The idea of the \emph{function 
space} lies at the center of the abstract theory of functional analysis \citep[page 259]{birkhoff1984establishment}, the development of which was highly productive for 
20th century mathematics \citep{dieudonne1981history}. Similarly productive for metric geometry was the work of \citet{gromov_structures_1981}, which created the metric space of all metric spaces.
Other similar examples include category theory, which is concerned with putting relational structures on collection of objects \citep{mac2013categories}, and the theories of optimal transport \citep{Villani_old_and_new_2008} and information geometry \citep{amari_information_2016}, which both propose geometric spaces of probability measures.

Common among all of these examples is the viewpoint that \emph{to understand an object, it is productive to place it in a larger space of similar objects}.
Following this viewpoint, the present work opens the door to similar techniques by establishing candidates for \emph{spaces of supervised learning problems}.
In particular, the definition of the Risk distance directly echoes the motifs of an established lineage of distances from metric geometry and optimal transport (Section~\ref{subsec:MG-and-OT}).
We have shown that the geometry is meaningful; the geometric notion of a geodesic is related to optimal couplings and correspondences (Section~\ref{subsec:geodesics}), and that the topological density of classification problems (Section~\ref{subsec:density-of-classification}) is useful in proving the convergence of empirical problems (Section~\ref{subsec:empirical-convergence}).
Throughout Section~\ref{sec:probabilistic}, the weighting $\lambda$ placed on the set of predictors can be interpreted as a Bayesian prior---an approach aligned with the principles of Bayesian statistics. However, classical Bayesian frameworks do not consider a space of problems or attempt to endow it with a metric structure, as we do here. While Bayesian decision theory~\citep{berger1985statistical} incorporates elements such as loss functions, it does not address the geometry of the problem space. It is conceivable that the present work may offer new perspectives in that direction.

 More generally, we are optimistic that further geometric, topological, and even categorical studies of the space of problems can shed light on questions of interest in supervised learning. For example, the following concrete research directions appear to be natural candidates for exploration through the lens of the ideas developed in this paper:\begin{itemize}
\item \emph{Convergence Rates for the Empirical Problem}. We posed Question~\ref{q:conv-thm} aimed at making Theorem~\ref{thm:empirical-convergence}, on the convergence of the empirical problem, more explicit and quantitative.
\item \emph{Leverage the framework to design broader, practical reductions between learning problems}, allowing more components of ML tasks to vary. As shown in Section~\ref{subsec:density-of-classification}, classification problems are dense—in the sense of the Risk distance—within the space of all compact problems. This suggests a structural connection between classification and regression problems, despite their apparent differences. Conversely, it is natural to investigate reductions in the opposite direction: for example, reducing a classification problem with severe label noise to a regression problem with biased sampling, where both problems are close in Risk distance. Such a reduction would enable robust regression techniques to be repurposed for noisy classification tasks, facilitating the transfer of insights across distinct problem classes.
\item \emph{Analyze the robustness of other theoretical results—e.g., fast rates of convergence—through the lens of the Risk distance (or suitable variants thereof)}. In particular, is “fast convergence” a stable property under appropriate structural assumptions on the problems?
\item \emph{Assess the empirical value of the optimal couplings and correspondences produced by our theory.} An optimal coupling and correspondence (as studied in Section~\ref{subsec:optimal-couplings}) between two problems provide a kind of dictionary relating the two problems. What new insights can be drawn from such a dictionary?
\item \emph{Investigate concrete models of \emph{sampling bias} within our framework}, such as distribution shift~\citep{quinonero-candela_2009_dataset} and selection bias~\citep{meng2018statistical}, which arise in many practical learning scenarios. 
\end{itemize}

\acks{The authors thank Dr. Mark Reid for interesting discussions on topics related to this work that took place during 2010-2011. The authors also thank Dr. Nan Lu for
comments on a draft.

F.M. and B.V. were partially supported by the NSF under grants  CCF-1740761, DMS-1547357, CCF-2310412 and RI-1901360. R.W. was supported by NICTA and the Deutsche Forschungsgemeinschaft under Germany’s Excellence Strategy — EXC number 2064/1 — Project number 390727645.}

\appendix

\section{Relegated Proofs}\label{appendix:proofs}
\subsection{Proofs and Details from Section \ref{sec:distance}}

\begin{proof}[Theorem~\ref{thm:dexp-pseudometric}]
    The only nontrivial pseudometric axiom to prove is the triangle inequality. Furthermore, the claim that $\dexp$ vanishes on weakly isomorphic problems will follow from the triangle inequality and the fact that $\dexp$ satisfies Condition~1 (Theorem~\ref{thm:characterization}). That is, if $P''$ is common simulation of $P$ and $P'$, then $\dexp(P,P') \leq \dexp(P,P'') + \dexp(P'',P') = 0+0$. Hence we need only prove that $\dexp$ satisfies the triangle inequality.

    Let $P_i = (X_i,Y_i,\eta_i,\ell_i,H_i)$ for $i=1,2,3$.
    Let $\gamma_{1,2} \in \Pi(\eta_1,\eta_2)$, $\gamma_{2,3}\in \Pi(\eta_2,\eta_3)$, $R_{1,2}\in \cC(H_1,H_2)$, and $R_{2,3}\in \cC(H_2,H_3)$ be arbitrary.
    We apply the the gluing lemma (Lemma~\ref{lem:gluing}) and its counterpart for correspondences (Lemma~\ref{lem:gluing-correspondences}). Let $\gamma$ be a gluing of $\gamma_{1,2}$ and $\gamma_{1,3}$, and let $\gamma_{1,3}$ be its marginal on the first and third coordinates.
    Similarly, let $R$ be a gluing of $R_{1,2}$ and $R_{2,3}$, and let $R_{1,3}$ be its image on its first and third components.
    We can then write
    \begin{align*}
    \dis_{P,P'}(R_{1,3},\gamma_{1,3}) =
    \sup_{(h_1,h_3)\in R_{1,3}}
    & \int
    \big|\ell_1(h_1(x_1),y_1) - \ell_3(h_3(x_3),y_3)\big|
    \, \gamma_{1,3}(dx_1{\times} dy_1 {\times} dx_3 {\times} dy_3)
    \\ =
    \sup_{(h_1,h_2,h_3)\in R}
    & \int
    \big|\ell_1(h_1(x_1),y_1) - \ell_3(h_3(x_3),y_3)\big|
    \, \gamma(dx_1{\times} dy_1 {\times} dx_2 {\times} dy_2 {\times} dx_3 {\times} dy_3)
    \displaybreak[1]
    \\ \leq
    \sup_{(h_1,h_2,h_3)\in R}
    & \int
    \big|\ell_1(h_1(x_1),y_1) - \ell_2(h_2(x_2),y_2)\big|
    \, \gamma(dx_1{\times} dy_1 {\times} dx_2 {\times} dy_2 {\times} dx_3 {\times} dy_3)
    \\ +
    \sup_{(h_1,h_2,h_3)\in R}
    & \int
    \big|\ell_2(h_2(x_2),y_2) - \ell_3(h_3(x_3),y_3)\big|
    \, \gamma(dx_1{\times} dy_1 {\times} dx_2 {\times} dy_2 {\times} dx_3 {\times} dy_3)
    \displaybreak[1]
    \\ =
    \sup_{(h_1,h_2)\in R_{1,2}}
    & \int
    \big|\ell_1(h_1(x_1),y_1) - \ell_2(h_2(x_2),y_2)\big|
    \, \gamma_{1,2}(dx_1{\times} dy_1 {\times} dx_2 {\times} dy_2)
    \\ +
    \sup_{(h_2,h_3)\in R_{2,3}}
    & \int
    \big|\ell_2(h_2(x_2),y_2) - \ell_3(h_3(x_3),y_3)\big|
    \, \gamma_{2,3}(dx_2 {\times} dy_2 {\times} dx_3 {\times} dy_3)
    \\ 
    & \mkern-100mu = \dis_{P,P'}(R_{1,2},\gamma_{1,2})  + \dis_{P,P'}(R_{2,3},\gamma_{2,3}).
    \end{align*}
    Since we can construct such a $\gamma_{1,3}$ and $R_{1,3}$ from any given $\gamma_{1,2},\gamma_{2,3},R_{1,2},R_{2,3}$, we can take the infimum of each side to get the desired inequality.
\end{proof}

\begin{proof}[Theorem~\ref{thm:characterization}]
    First we demonstrate that the Risk distance does indeed satisfy Conditions~1 and 2. Let $P'$ be a simulation of $P$ via the maps $f_1:X' \to X$ and $f_2:Y'\to Y$. Define $f:= f_1\times f_2$. Since $f_\sharp \eta' = \eta$, $f$ induces a coupling $\gamma_f\in \Pi(\eta',\eta)$. Define $R\subseteq H'\times H$ to be all the pairs $(h',h)$ such that $h\circ f_1 = f_2\circ h'$. Since $f$ induces a simulation, $R$ is a correspondence. By definition of the Risk distance, we can now write
    \begin{align*}
        \dexp(P,P') & \leq 
        \sup_{(h',h)\in R}
        \int \big|
            \ell'(h'(x'),y') - \ell(h(x),y)
        \big|\, \gamma_f(dx' {\times} dy' {\times} dx {\times} dy)
        \\ & =
        \sup_{(h',h)\in R}
        \int \big|
            \ell'(h'(x'),y') - \ell(h\circ f_1(x'),f_2(y'))
        \big|\, \eta'(dx' {\times} dy').
    \end{align*}
    We claim the integrand is identically zero. Indeed,
    \begin{align*}
        \ell(h\circ f_1(x'),f_2(y')) = \ell(f_2\circ h'(x'),f_2(y')) = \ell'(h'(x'),y').
    \end{align*}
    Hence $\dexp(P,P') = 0$.

    Let $P = (X,Y,\eta,\ell,H)$ and $P' = (X,Y,\eta,\ell',H)$ be any pair of problems that differ only in their loss functions.
    We can bound $\dexp(P,P')$ by selecting the diagonal coupling between $\eta$ and itself, and the diagonal correspondence between $H$ and itself. This directly yields Condition~2.

    Now suppose $d$ is a pseudometric satisfying Conditions~1 and 2, and let $P_1$ and $P_2$ be problems. Let $\gamma \in \Pi(\eta_1,\eta_2)$ and $R\in \cC(H_1,H_2)$ be arbitrary. Define two new problems
    \begin{align*}
        P'_1 & := (X_1\times X_2, Y_1\times Y_2, \gamma, \ell'_1, R_\times)\\
        P'_2 & := (X_1\times X_2, Y_1\times Y_2, \gamma, \ell'_2, R_\times),
    \end{align*}
    where
    \begin{itemize}
        \item $\ell'_1((y_1,y_2),(y'_1,y'_2)) := \ell_1(y_1,y'_1)$,
        \item $\ell'_2((y_1,y_2),(y'_1,y'_2)) := \ell_2(y_2,y'_2)$,
        \item $R_\times := \set{h_1\times h_2}{(h_1,h_2)\in R}$.
    \end{itemize}
    Then $P'_1$ is a simulation of $P_1$ via the projection maps $X_1\times X_2 \to X_1$ and $Y_1\times Y_2 \to Y_1$, so by Condition~1, $d(P_1,P'_1) = 0$. Similarly, $d(P_2,P'_2) = 0$. Combining this with Condition~2, we get
    \begin{align*}
        & d(P_1,P_2)
        =
        d(P'_1,P'_2)
        \\ & \leq
        \max_{h_1\times h_2 \in R_\times} \int \big|
            \ell'_1((h_1\times h_2)(x_1,x_2),(y_1,y_2))
            -
            \ell'_2((h_1\times h_2)(x_1,x_2),(y_1,y_2))
        \big|\, \gamma(dx_1{\times} dx_2 {\times} dy_1 {\times} dy_2)
        \\ & =
        \max_{(h_1, h_2) \in R} \int \big|
            \ell_1(h_1(x_1),y_1)
            -
            \ell_2(h_2(x_2),y_2)
        \big|\, \gamma(dx_1{\times} dx_2 {\times} dy_1 {\times} dy_2)
        \\ & =
        \dis_{P_1,P_2}(R,\gamma).
    \end{align*}
    Taking an infimum over all choices of $\gamma$ and $R$ finishes the proof.
\end{proof}

\begin{proof}[Proposition~\ref{prop:weak-iso-optimal}]
    Let $P$ and $P'$ be weakly isomorphic, and let $P''$ be a common simulation of both $P$ and $P'$ via the maps
    \[
    \begin{tikzcd}
        & X'' \ar[dl,"f_1",swap] \ar[dr,"g_1"]\\
        X & & X'
    \end{tikzcd}\qquad \qquad
    \begin{tikzcd}
        & Y'' \ar[dl,"f_2",swap] \ar[dr,"g_2"]\\
        Y & & Y'
    \end{tikzcd}.
    \]
    Define the map
    \begin{align*}
        m:X''\times Y'' & \to X\times Y \times X' \times Y'
        \\
        (x'',y'') & \mapsto (f_1(x''), f_2(y''), g_1(x''), g_2(y'')),
    \end{align*}
    which, we note, makes the following diagram commute:
    \[
    \begin{tikzcd}
        & X''\times Y'' \ar[ddl,"f_1\times f_2",swap, bend right] \ar[ddr,"g_1\times g_2", bend left] \ar[d,"m", dashed]\\
        & X\times Y \times X' \times Y' \ar[dl,"\pi_{12}"] \ar[dr,"\pi_{34}",swap]\\
        X\times Y & & X'\times Y'
    \end{tikzcd}.
    \]
    Define $\gamma := m_\sharp \eta''.$ Note that, since the above diagram commutes, $\gamma$ is a coupling between $\eta$ and $\eta'$. Also, let $\phi:H''\to H$ be such that $\ell''_{h''} = \ell_{\phi(h'')}\circ (f_1\times f_2)$ holds $\eta''$-almost everywhere. Define $\psi:H'' \to H'$ similarly. Then
    \[R := \set{(\phi(h''),\psi(h''))}{h''\in H''}\]
    is a correspondence between $H$ and $H'$. Finally, compute
    \begin{align*}
        \dis_{P,P'}(R,\gamma)
        & =
        \sup_{(h,h')\in R}\int \big|\ell_h(x,y) - \ell'_{h'}(x',y')\big|\, \gamma(dx{\times}dy{\times}dx'{\times} dy')
        \\ & =
        \sup_{(h,h')\in R}\int \big|\ell_h\circ \pi_{12}(x,y,x',y') - \ell'_{h'}\circ \pi_{34}(x,y,x',y')\big|\, \gamma(dx{\times}dy{\times}dx'{\times} dy')
        \\ & =
        \sup_{(h,h')\in R}\int \big|\ell_h\circ \pi_{12}\circ m(x'',y'') - \ell'_{h'}\circ \pi_{34}\circ m(x'',y'')\big|\, \eta''(dx''{\times}dy'')
        \\ & =
        \displaybreak[1]
        \sup_{(h,h')\in R}\int \big|\ell_h(f_1(x''),f_2(y'')) - \ell'_{h'}(g_1(x''),g_2(y''))\big|\, \eta''(dx''{\times}dy'')
        \\ & =
        \sup_{h''\in H''}\int \big|\ell_{\phi(h'')}(f_1(x''),f_2(y'')) - \ell'_{\psi(h'')}(g_1(x''),g_2(y''))\big|\, \eta''(dx''{\times}dy'')
        \\ & =
        \sup_{h''\in H''}\int \big|\ell''_{h''}(x'',y'') - \ell''_{h''}(x'',y'')\big|\, \eta''(dx''{\times}dy'') = 0.
    \end{align*}
    
    Now suppose that there exist $R\in \cC(H,H')$ and $\gamma \in \Pi(\eta,\eta')$ such that $\dis_{P,P'}(R,\gamma)=0$. Construct a new problem
    \[P'' = (X\times X', Y\times Y', \tau_\sharp \gamma, \ell'', R_\times)\]
    where
    \begin{itemize}
        \item $\tau:X\times Y \times X' \times Y' \to X\times X' \times Y \times Y'$ swaps the second and third components,
        \item $\ell''((y_1,y'_1),(y_2,y'_2)) := \ell(y_1,y_2)$
        \item $R_\times := \set{h\times h'}{(h,h')\in R}$.
    \end{itemize}
    Then the projection maps $f_1:X\times X' \to X$ and $f_2:Y\times Y' \to Y$ show that $P''$ is a simulation of $P$. Indeed, since $\gamma$ is a coupling of $\eta$ and $\eta'$, we have
    \begin{align*}
        (f_1\times f_2)_\sharp \tau_\sharp \gamma = ((f_1\times f_2)\circ \tau)_\sharp \gamma = (\pi_{12})\sharp\gamma = \eta.
    \end{align*}
    Furthermore, for any $h\in H$, we can consider some function $(h\times h')\in R_\times$ and write
    \begin{align*}
        \ell''_{(h\times h')}((x,x'),(y,y'))
        & =
        \ell''((h(x),h'(x')),(y,y'))
        =
        \ell(h(x),y) = \ell_{h}(f_1(x,x'),f_2(y,y')).
    \end{align*}
    Similarly, $P''$ is a simulation of $P'$ via the projection maps $g_1:X\times X' \to X'$ and $g_2:Y\times Y' \to Y'$. Indeed, an almost identical calculation to the one above shows that $(g_1\times g_2)_\sharp \tau_\sharp \gamma = \eta'$. Furthermore, if $h'\in H'$ and $(h\times h')\in R_\times$, we have already calculated that $\ell''_{(h\times h')}((x,x'),(y,y')) = \ell(h(x),y)$.
    The fact that $R$ and $\gamma$ have Risk distortion zero impliex that the latter is equal $\tau_\sharp \gamma$-almost everywhere to $\ell'(h'(x'),y') = \ell'_{h'}(g_1(x,x'),g_2(y,y'))$, completing the proof.
\end{proof}
\subsubsection{Connecting the Risk Distance with the Gromov-Wasserstein Distance}\label{app:conn-dr-dgw}

Here we discuss a connection with a variant of the Gromov-Wasserstein distance considered by \citet{hang2019topological}. Given two metric measure spaces $X,X'$, a correspondence $R\in\cC(X,X')$ and a coupling $\mu\in\Pi(\mu_X,\mu_{X'})$, one considers the following notion of distortion:
$$\overline{\dis}(R,\mu):=\sup_{(x_2,x_2')\in R}\int_{X\times X'}\big|d_X(x_1,x_2)-d_{X'}(x_1',x_2')\big|\mu(dx_1\times dx_1'),$$
which leads to the definition $$\overline{d_{\mathrm{GW}}}(X,X'):=\inf_{R,\mu} \overline{\dis}(X,X').$$ This definition, which blends an $L^\infty$ matching (expressed through the correspondence $R$) with an $L^1$ one (expressed via the coupling $\mu$) was considered by \citet{hang2019topological} due its beneficial interaction with stability questions surrounding Fr\'echet functions in the persistent homology setting.\footnote{The precise variant considered in \citep{hang2019topological} uses ``continuous" correspondences. The version we consider here is structurally identical but slightly softer due to our use of arbitrary correspondences.} 
\
With the notation and definitions from Section \ref{subsec:OT-connection}, we have the following statement.
\begin{proposition}\label{prop:conn-eq} For all compact metric measure spaces $X$ and $X'$:
$\dexp(i(X),i(X'))=\overline{d_{\mathrm{GW}}}(X,X').$
\end{proposition}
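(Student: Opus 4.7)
The plan is to unpack the definitions of both sides and show they coincide term-by-term after making two natural identifications: predictors with points (for $H_X$ is the set of constant functions $X \to X$) and couplings of diagonal measures with couplings of the original measures.

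First, I would identify $H_X$ bijectively with $X$ by sending each constant function $h_a : x \mapsto a$ to the point $a \in X$. Under this identification, a correspondence $R \in \cC(H_X, H_{X'})$ becomes a correspondence $R \in \cC(X,X')$, and conversely. Moreover, for $h_a \in H_X$, the loss function computes to $\ell_{h_a}(x,y) = d_X(h_a(x), y) = d_X(a,y)$, which no longer depends on $x$.

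Next, I would analyze $\Pi((\Delta_X)_\sharp \mu_X, (\Delta_{X'})_\sharp \mu_{X'})$. Since $(\Delta_X)_\sharp \mu_X$ is supported on the diagonal of $X \times X$, any coupling $\gamma$ of these two measures must be supported on the set $\{(x,y,x',y') \in X^2 \times {X'}^2 : x = y,\ x' = y'\}$. Defining $\Phi : X \times X' \to X \times X \times X' \times X'$ by $\Phi(x,x') := (x,x,x',x')$, the map $\mu \mapsto \Phi_\sharp \mu$ is then a bijection between $\Pi(\mu_X, \mu_{X'})$ and $\Pi((\Delta_X)_\sharp \mu_X, (\Delta_{X'})_\sharp \mu_{X'})$. (The inverse applies the projection $\pi_{1,3}$, which preserves the measure because of the support condition.)

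With these two identifications in hand, the main computation is to rewrite the risk distortion. For $\gamma = \Phi_\sharp \mu$ and $R$ corresponding to a correspondence in $\cC(X,X')$, we have
\begin{align*}
\dis_{i(X), i(X')}(R,\gamma)
&= \sup_{(h_a, h_{a'}) \in R} \int \big|\ell_{h_a}(x,y) - \ell_{h_{a'}}(x',y')\big|\, \gamma(dx{\times} dy {\times} dx' {\times} dy') \\
&= \sup_{(a,a') \in R} \int \big|d_X(a, y) - d_{X'}(a', y')\big|\, \gamma(dx{\times} dy {\times} dx' {\times} dy') \\
&= \sup_{(a,a') \in R} \int \big|d_X(a, x) - d_{X'}(a', x')\big|\, \mu(dx{\times} dx'),
\end{align*}
where the last line uses that $\gamma$ is supported where $y = x$ and $y' = x'$. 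This final expression is exactly $\overline{\dis}(R,\mu)$. Taking the infimum over all $\gamma$ and $R$ on the left side, which (by the bijections above) is the same as the infimum over all $\mu \in \Pi(\mu_X,\mu_{X'})$ and $R \in \cC(X,X')$ on the right side, yields $\dexp(i(X), i(X')) = \overline{d_{\mathrm{GW}}}(X,X')$. The proof is essentially bookkeeping; the only subtlety is verifying that couplings of diagonal pushforwards are rigidly supported on the ``double diagonal,'' which follows from the standard fact that any coupling assigns full mass to the product of the supports of its marginals.
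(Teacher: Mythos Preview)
Your proof is correct and follows essentially the same approach as the paper's. Both identify $H_X$ with $X$ and reduce the risk distortion to $\overline{\dis}$; the only cosmetic difference is that you establish a \emph{bijection} $\Pi(\mu_X,\mu_{X'}) \to \Pi((\Delta_X)_\sharp \mu_X, (\Delta_{X'})_\sharp \mu_{X'})$ via the support argument, whereas the paper instead observes that the integrand depends only on the second and fourth coordinates, so $\dis_{i(X),i(X')}(R,\gamma)=\overline{\dis}(R,(\pi_{2,4})_\sharp\gamma)$ for \emph{every} $\gamma$, and then checks separately that $\gamma\mapsto(\pi_{2,4})_\sharp\gamma$ hits every $\mu\in\Pi(\mu_X,\mu_{X'})$ via the section $\tau=\Phi$. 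Your packaging is arguably tidier (one equality rather than two inequalities), at the cost of invoking the support fact; the paper's packaging sidesteps supports entirely.
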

\begin{proof}
First note that any correspondence $R$ between $H_X$ and $H_{X'}$ can be regarded as a correspondence between $X$ and $X'$. This is so because all functions in $H_X$ (resp. in $H_{X'}$) are constant. 

We then have the following calculation:
\begin{align*}
       \dis_{i(X),i(X')}(R,\gamma)
        & =
        \sup_{(h,h') \in R}\int \big| d_X(h(x_0),x_1) - d_{X'}(h'(x'_0),x'_1) \big| \, \gamma(dx_0{\times} dx_1 {\times} dx'_0 {\times} dx'_1)
        \\ & =
        \sup_{(x_2,x'_2) \in R} \int \big| d_X(x_2,x_1) - d_{X'}(x'_2,x'_1) \big| \, \gamma(dx_0{\times} dx_1 {\times} dx'_0 {\times} dx'_1)
        \\ & =
        \sup_{(x_2,x'_2) \in R} \int \big| d_X(x_2,x_1) - d_{X'}(x'_2,x'_1) \big| \, (\pi_{2,4})_\sharp\gamma(dx_0{\times} dx_1 {\times} dx'_0 {\times} dx'_1) \\
        & = \overline{\dis}(R,(\pi_{2,4})_\sharp\gamma).
    \end{align*}
    The above equality immediately implies that $\dexp(i(X),i(X'))\geq \overline{d_{\mathrm{GW}}}(X,X').$ The reverse inequality can be obtained from the following fact. Let $\tau:X\times X'\to X\times X\times X'\times X'$ be the map $(x,x')\mapsto (x,x,x',x')$. Then, if $\mu \in \Pi(\mu_X,\mu_{X'})$,  $\gamma_\mu:=\tau_\sharp \mu$ is an element of $\Pi((\Delta_X)_\sharp \mu_X, (\Delta_{X'})_\sharp \mu_{X'})$. Since $\pi_{2,4}\circ \tau = \mathrm{id}:X\times X'\to X\times X'$, the identity map, $(\pi_{2,4})_\sharp \gamma_\mu = \mu$ and the calculation above gives $\dis_{i(X),i(X')}(R,\gamma_\mu)=\overline{\dis}(\mu)$. From this we obtain that $\dexp(i(X),i(X'))\leq \overline{d_{\mathrm{GW}}}(X,X').$ 
\end{proof}
\begin{proof}[Proposition~\ref{prop:metric-measure-embedding}]
    If $\dGWp{p}(X,X') = 0$, then $X$ and $X'$ are isomorphic as metric measure spaces. Using the isomorphism $X\to X'$, it is not hard to construct a coupling and correspondence between $i(X)$ and $i(X')$ of risk distortion zero. Conversely, let $\dexp(i(X),i(X')) = 0$. It is not a priori obvious that there exists a correspondence $R\in \cC(H_X,H_{X'})$ and a coupling $\gamma \in \Pi((\Delta_X)_\sharp \mu_X, (\Delta_{X'})_\sharp \mu_{X'})$ that together achieve the infimal risk distortion of zero. In Section~\ref{subsec:optimal-couplings}, we will explore the existence of such couplings and correspondences. In this case, since $d_X$ and $d_{X'}$ are continuous functions and $X$ and $X'$ are assumed to be compact, Theorem~\ref{thm:infimum-achieved} does guarantee that there exists an $R$ and a $\gamma$ with $\dis_{i(X),i(X')}(R,\gamma) = 0$. Furthermore, recall that a correspondence $R$ between $H_X$ and $H_{X'}$ is equivalent to a correspondence between $X$ and $X'$. Hence, as in the proof of Proposition \ref{prop:conn-eq}, we have $0 = \dis_{i(X),i(X')}(R,\gamma) = \overline{\dis}(R,(\pi_{2,4})_\sharp\gamma)$.      Set $R' := \op{supp}((\pi_{2,4})_\sharp\gamma)$. Since the integrand defining $\overline{\dis}(R,(\pi_{2,4})_\sharp\gamma)$ is continuous and non-negative, we can conclude that
    \begin{equation}
        d_X(x_2,x_1) = d_{X'}(x'_2,x'_1) \label{eq:isometry}
    \end{equation}
    for all $(x_2,x'_2) \in R$ and $(x_1,x'_1)\in R'$. Furthermore, since $(\pi_{2,4})_\sharp\gamma$ is a coupling between the fully supported measures $\mu_X$ and $\mu_{X'}$, its support $R'$ is a correspondence between $X$ and $X'$ \citep[see][Lemma 2.2]{memoli_gromovwasserstein_2011}.
\
    We claim that $R = R'$. Indeed, if $(x_2,x'_2) \in R$, selecting any pair $(x_2,x'_1) \in R'$ yields $d_X(x_2,x_2) = d_{X'}(x'_2,x'_1)$ and hence $x'_2 = x'_1$. This shows $R\subseteq R'$, and a similar argument shows the opposite inclusion. We furthermore claim that $R$ is the graph of a bijection. Let $(x,x'), (x,x'') \in R$. Then $d_X(x,x) = d_{X'}(x',x'')$, showing $x' = x''$. Again, a similar argument holds when reversing the roles of $X$ and $X'$.
    \
    Let $f:X\to X'$ be the bijection whose graph is $R$. Then equation~\eqref{eq:isometry} becomes $d_X(x_2,x_1) = d_{X'}(f(x_2),f(x_1))$, showing that $f$ is an isometry. Additionally, since $(\pi_{2,4})_\sharp\gamma$ is a coupling of $\mu_X$ and $\mu_{X'}$ which is supported on $R$, the projections $R\to X$ and $R\to X'$ are measure-preserving bijections. It follows that the composition of the natural maps $X\to R \to X'$, which agrees with $f$, is a measure-preserving map. Hence $f$ is an isomorphism of metric measure spaces, and $\dGWp{1}(X,X') = 0$.
\end{proof}

\subsection{Proofs from Section \ref{sec:stability}}
\begin{proof}[Theorem~\ref{thm:loss-control}]
    Let $\prob_1(\bbR_{\geq 0})\subseteq \prob(\bbR_{\geq 0})$ be the subset of probability measures with finite mean. The $\mathrm{mean}$ function
    \begin{align*}
    \mathrm{mean}:\prob_1(\bbR_{\geq 0}) &\to \bbR_{\geq 0}
    \end{align*}
    is 1-Lipschitz when the domain is endowed with the 1-Wasserstein distance $\dWp{1}$ and the codomain is given the Euclidean distance. We will now ``lift'' the $\mathrm{mean}$ function to a function on power sets by applying it elementwise. That is, we define
    \begin{align*}
        \pow(\mathrm{mean}): \pow(\prob_1(\bbR_{\geq 0})) &\to \pow(\bbR_{\geq 0}) \\
        \set{\alpha_i}{i\in I} &\mapsto \set{\mathrm{mean}(\alpha_i)}{i\in I}.
    \end{align*}
    We also endow the domain and codomain with the Hausdorff distance, where the underlying metrics are $\dWp{1}$ and the Euclidean distance respectively. Since the $\mathrm{mean}$ function is 1-Lipschitz, so is $\pow(\mathrm{mean})$. The infimum function $\inf:\pow(\mathbb R_{\geq 0}) \to \mathbb R_{\geq 0}$ 1-Lipschitz as well, so the composition $\inf \circ \pow(\mathrm{mean})$ is 1-Lipschitz. In other words, for any $M,N \subseteq \prob_1(\bbR_{\geq 0})$, we have
    \[
    \big| \inf\set{\mathrm{mean}(\mu)}{\mu \in M} - \inf\set{\mathrm{mean}(\nu)}{\nu\in N} \big| \leq \dH^{\dWp{1}}(M,N).
    \]
    By Proposition~\ref{prop:bayes-from-profile}, applying this inequality when $M$ and $N$ are loss profile sets gives the first desired inequality.

    To prove the second inequality, observe that $d(P,P') := \dH^{\dW}(L(P),L(P'))$ is a pseudometric on the collection of all problems. By Theorem~\ref{thm:characterization}, we need only show that $d$ satisfies Conditions~1 and 2. Beginning with Condition~1, let $P'$ be a simulation of $P$ via the maps $f_1:X' \to X$ and $f_2:Y' \to Y$. Let $f:= f_1\times f_2$. If $h\in H$ and $h'\in H'$ are such that $f_2\circ h = h' \circ f_2$, then
    \begin{align*}
        \ell'_{h'} \circ f(x,y) & = \ell'(h'\circ f_1(x), f_2(y)) = \ell'(f_2\circ h(x), f_2(y)) = \ell(h(x),y) = \ell_h(x,y).
    \end{align*}
    Hence the loss profiles of $h$ and $h'$ are equal:
    \begin{align*}
        (\ell'_{h'})_\sharp \eta' & = (\ell'_{h'} \circ f)_\sharp \eta = (\ell_h)_\sharp \eta.
    \end{align*}
    Such an $h' \in H'$ exists for every $h\in H$ and vice versa by definition of a simulation, so we conclude that $L(P) = L(P')$ and $d(P,P') = 0$.

    For Condition~2, let $h\in H$. We can bound the Wasserstein distance between the loss profiles $(\ell_h)_\sharp \eta$ and $(\ell'_h)_\sharp \eta$ by selecting the coupling $(\ell_h, \ell'_h)_\sharp \eta$, which yields
    \begin{align*}
        \dW((\ell_h)_\sharp \eta, (\ell'_h)_\sharp \eta)
        & \leq 
        \int \big| r - s \big| (\ell_h,\ell'_h)_\sharp \eta(dr{\times}ds)
        \\ & =
        \int\big|\ell_h(x,y) - \ell'_h(x,y)\big|\, \eta(dx\times dy).
    \end{align*}
    Since this holds for all $h$, Condition~2 is satisfied as well.
\end{proof}

\begin{proof}[Theorem~\ref{thm:Rademacher-control}]
    Let $R \in \cC(H,H')$ and $\gamma \in \Pi(\eta,\eta')$ be arbitrary. Then
    \begin{align*}
        & \big| R_m(P) - R_m(P') \big|
        \\ = &
        \bigg|
        \int\int \sup_{h\in H} \frac{1}{m} \sum_{i=1}^m \sigma_i \ell_h(x_i,y_i)
        \, \rad^{\otimes m}(d\sigma)
        \,\eta^{\otimes m}(d\overline x \times d\overline y)
        \\ - &
        \int\int \sup_{h'\in H'} \frac{1}{m} \sum_{i=1}^m \sigma_i \ell'_{h'}(x'_i,y'_i)
        \, \rad^{\otimes m}(d\sigma)
        \,\eta^{\otimes m}(d\overline x' \times d\overline y') 
        \bigg|
        \\ \leq &
        \int\int \sup_{(h,h')\in R} \left| \frac{1}{m} \sum_{i=1}^m \sigma_i \ell_h(x_i,y_i) - \frac{1}{m} \sum_{i=1}^m \sigma_i \ell'_{h'}(x'_i,y'_i)\right|
         \, \rad^{\otimes m}(d\sigma)
        \,\gamma^{\otimes m}(d\overline x {\times} d\overline y {\times} d\overline x' {\times} d\overline y')
        \\ \leq &
        \int \sup_{(h,h')\in R} \frac{1}{m} \sum_{i=1}^m \big|\ell_h(x_i,y_i) - \ell'_{h'}(x'_i,y'_i)\big|
        \,\gamma^{\otimes m}(d\overline x {\times} d\overline y {\times} d\overline x' {\times} d\overline y').
    \end{align*}
    Let $\widehat \gamma$ be the unique probability measure on the countable product $(X\times Y)^{\mathbb N}$ whose marginal on the first $m$ components is $\gamma^{\otimes m}$. Then the above integral can be written
    \begin{equation}
    \label{eq:rademacher-bound}
        \int \sup_{(h,h')\in R} \frac{1}{m} \sum_{i=1}^m \big|\ell_h(x_i,y_i) - \ell'_{h'}(x'_i,y'_i)\big|
        \,\widehat \gamma(d\overline x {\times} d\overline y {\times} d\overline x' {\times} d\overline y').
    \end{equation}
    We now aim to establish the bound
    \begin{align*}
         & \int \sup_{(h,h')\in R} \frac{1}{m} \sum_{i=1}^m \big|\ell_h(x_i,y_i) - \ell'_{h'}(x'_i,y'_i)\big|
        \,\widehat \gamma(d\overline x {\times} d\overline y {\times} d\overline x' {\times} d\overline y')
        \\ - & \sup_{(h,h')\in R} \int \big|\ell_h(x_i,y_i)
        - \ell'_{h'}(x'_i,y'_i)\big| \gamma(d\overline x {\times} d\overline y {\times} d\overline x' {\times} d\overline y')
        \\ \leq & 2\rad_m(\mathcal F)
    \end{align*}
    since infimizing this inequality over $\gamma$ and $R$ will give us the result.
    It will suffice to establish
    \begin{align*}
        &\int \sup_{(h,h')\in R} \left[\frac{1}{m} \sum_{i=1}^m \big|\ell_h(x_i,y_i) - \ell'_{h'}(x'_i,y'_i)\big| - \int \big|\ell_h(x_i,y_i) - \ell'_{h'}(x'_i,y'_i) \big|\right]
        \,\widehat \gamma(d\overline x {\times} d\overline y {\times} d\overline x' {\times} d\overline y') \\& \leq 2\rad_m(\mathcal F),
    \end{align*}
    which, in probabilistic notation, is written
    \begin{equation}\label{eq:probabilistic-rademacher}
    \bbE \sup_{(h,h')\in R} \left[
        \frac{1}{m} \sum_{i=1}^m \big|\ell_h(X_i,Y_i) - \ell'_{h'}(X'_i,Y'_i)\big| - \bbE \big|\ell_h(X,Y) - \ell'_{h'}(X',Y')\big|
    \right]
    \leq 2\rad_m(\mathcal F)
    \end{equation}
    where the first expectation is taken over the tuples $(X_i,Y_i, X'_i, Y'_i)$ each sampled i.i.d. from $\gamma$, and the second expectation is over $(X,Y,X',Y')$ also sampled from $\gamma$ independently of the other tuples. The expression on the left hand side of Equation~\ref{eq:probabilistic-rademacher} is well-studied in the statistical learning theory literature, and the bound in equation~\ref{eq:probabilistic-rademacher} is easily obtained by a symmetrization argument like the one laid out in \cite[Theorem 4.1]{liao}.
    
    Now assume $\mathcal F$ is a universal Glivenko-Cantelli class. Return to the expression~\eqref{eq:rademacher-bound} and let $m\to \infty$.
    By the Glivenko-Cantelli assumption, the integrand of \eqref{eq:rademacher-bound} converges almost everywhere to
    \[\sup_{(h,h')\in R} \int \big | \ell_h(x,y) - \ell'_{h'}(x',y')\big| \, \gamma(dx{\times} dy {\times} dx'{\times} dy') = \dis_{P,P'}(R,\gamma).\]
    Furthermore, since $\ell$ and $\ell'$ are bounded, by the dominated convergence theorem, the limit of \eqref{eq:rademacher-bound} itself is $\dis_{P,P'}(R,\gamma)$ as well. Infimizing over the choices of $R$ and $\gamma$ completes the proof.
\end{proof}

\begin{proof}[Lemma~\ref{lem:disintegration-bound}]
    We aim to bound $\dexp(P,P')$. To this end, define 
    \begin{align*}
        \Delta:X\times Y \times Y &\to X\times Y \times X \times Y
        \\
        (x,y,y')  & \mapsto (x,y,x,y').
    \end{align*}
    Let $\Pi_g$ be the collection of all gluings of $\eta$ and $\eta'$ along $X$. That is,
    \begin{align*}
        \Pi_g := \set{\lambda \in \prob(X\times Y \times Y)}{(\pi_{1,2})_\sharp \lambda = \eta,(\pi_{1,3})_\sharp \lambda = \eta'}.
    \end{align*}
    By the gluing lemma (Lemma~\ref{lem:gluing}), $\Pi_g$ is nonempty.
    Note that if $\lambda\in \Pi_g$, then $\Delta_\sharp \lambda \in \Pi(\eta,\eta')$.

    We can then write
    \begin{align*}
        \dexp(P,P')
        &
        \leq \inf_{\gamma\in \Pi(\eta,\eta')} \int \big| \ell(h(x),y) - \ell(h(x'),y') \big| \, \gamma(dx\times dy\times dx'\times dy')
        \\ &
        \leq \inf_{\lambda \in \Pi_g} \int \big| \ell(h(x),y) - \ell(h(x'),y') \big| \, \Delta_\sharp \lambda(dx\times dy\times dx'\times dy')
        \\ &
        = \inf_{\lambda \in \Pi_g} \int \big| \ell(h(x),y) - \ell(h(x),y') \big| \, \lambda(dx\times dy\times dy')
        \\ &
        \leq C\inf_{\lambda \in \Pi_g} \int d_Y(y,y') \, \lambda(dx\times dy\times dy').
    \end{align*}

    Now suppose $\tau \in \Pi(\beta,\beta')$. Then one can show that the induced coupling $\lambda \in \prob(X\times Y \times Y)$, which has $X$-marginal $\alpha$ and $X$-disintegration $\tau:X\to \prob(Y\times Y')$, lies in $\Pi_g$. Hence we can continue our string of inequalities with
    \begin{align*}
        \leq C\inf_{\tau \in \Pi(\beta,\beta')} \int\int d_Y(y,y') \, \tau(x)(dy\times dy')\alpha(dx)
        & =
        C \dWkern^{d_Y}(\beta,\beta'),
    \end{align*}
    the desired bound.
\end{proof}

\begin{proof}[Theorem~\ref{thm:metric-noise-stability}]
    By Lemma~\ref{lem:disintegration-bound}, we obtain
    \begin{align*}
        \dexp(P,P') \leq C\dWkern^{d_Y}(\beta,\beta').
    \end{align*}
    We can write $\beta$ and $\beta'$ as
    \begin{align*}
        \beta & = \Delta_X \cdot (\delta_{\id_X}\otimes \beta) \cdot \delta_{\pi_2} \\
        \beta' & = \Delta_X \cdot (\delta_{\id_X}\otimes \beta) \cdot N.
    \end{align*}
    Here $\Delta_X$ is the diagonal kernel $\Delta_X:X\to X\times X$ given by $\Delta_X(x) = \delta_{(x,x)}$, and $\otimes$ represents the product of Markov kernels.
    Note that
    \begin{align*}
        \alpha\cdot\Delta_X \cdot (\delta_{\id_X}\otimes \beta)=\eta.
    \end{align*}
    Hence $\Delta_X \cdot (\delta_{\id_X}\otimes \beta)$ is an example of a measure non-increasing Markov kernel \citep[Proposition 6.1]{patterson2020}, so we can write
    \begin{align*}
        \dWkern^{d_Y}(\Delta_X \cdot (\delta_{\id_X}\otimes \beta) \cdot \delta_{\pi_2},\, \Delta_X \cdot (\delta_{\id_X}\otimes \beta) \cdot N)
        \leq \dWkern^{d_Y}(\delta_{\pi_2},N),
    \end{align*}
    completing the proof.
\end{proof}

\subsection{Proofs from Section \ref{sec:geometry}}
\begin{proof}[Lemma~\ref{lem:distortion-continuous}]
    The following proof uses ideas similar to those in the proof of Theorem 4.1 of \citet{Villani_old_and_new_2008}, with the additional wrinkle of an integrand that depends on $n$ and an additional trick of ``hot-swapping'' an integrand with a lower semi-continuous version.

    Since the topology on the domain is induced by a pseudometric, it is enough to consider an arbitrary convergent sequence $(h_n,h'_n,\gamma_n) \to (h,h',\gamma)$ in $H\times H'\times \Pi(\eta,\eta')$ and check that,
    \[\widetilde \dis_{P,P'}(h, h', \gamma) \leq \liminf_{n}\widetilde \dis_{P,P'}(h_n,h'_n,\gamma_n).\]
    To that end, let $\epsilon > 0$ be arbitrary. We define a function
    \begin{align*}
        f:X\times Y \times X' \times Y' &\to \bbR_{\geq 0}\\
        (x_0,y_0,x'_0,y'_0) & \mapsto \liminf_{(x,y,x',y') \to (x_0,y_0,x'_0,y'_0)}|\ell_h(x,y) - \ell'_{h'}(x',y')|.
    \end{align*}
    That is, $f$ is a lower semi-continuous version of the function $|\ell_h - \ell'_{h'}|$, constructed by modifying the latter on its points of discontinuity. Indeed, $f$ and $|\ell_h - \ell'_{h'}|$ agree wherever the latter is continuous, which is $\gamma$-almost everywhere on $X\times Y \times X'\times Y'$. To wit, if $A\subseteq X\times Y$ and $A' \subseteq X'\times Y'$ are the points at which $\ell_h$ and $\ell'_{h'}$ are discontinuous respectively, then the points of discontinuity for $|\ell_h - \ell'_{h'}|$ lie within $A\times X'\times Y'\cup X\times Y \times A'$, and one can check
    \begin{align*}
        \gamma(A\times X'\times Y'\cup X\times Y \times A') \leq \gamma(A\times X'\times Y') + \gamma(X\times Y \times A') = \eta(A) +\eta'(A') = 0.
    \end{align*}
    Hence $f$ is a lower semi-continuous function which agrees $\gamma$-almost everywhere with $|\ell_h - \ell'_{h'}|$.
    
    We can now proceed with our convergence argument; let $\epsilon>0$ and pick $N$ large enough that, for all $n\geq N$,
    \begin{itemize}
    \item $d_{\ell,\eta}(h_n,h) < \epsilon/3$,
    \item $d_{\ell',\eta'}(h'_n,h') < \epsilon/3$,
    \item and\footnote{In this proof, for conciseness, we will write $\int f(x,y,x',y')d\gamma$ instead of $\int f(x,y,x',y')\, \gamma(dx\times dy\times dx'\times dy')$, etc.} \[\int f(x,y,x',y')\, d\gamma < \int f(x,y,x',y')\, d\gamma_n + \epsilon/3.\]
    \end{itemize}
    The latter is possible since $f$ is a lower semi-continuous, non-negative function \citep[Lemma 4.3]{Villani_old_and_new_2008}.
    Then for large $n$ we have
    \begin{align*}
    \widetilde \dis_{P,P'}(h,h',\gamma)
    & =
    \int \big|\ell_h(x,y)-\ell'_{h'}(x',y')\big|\,d\gamma
    \\ & =
    \int f(x,y,x',y')\,d\gamma
    \\ & \leq
    \int f(x,y,x',y')\,d\gamma_n + \epsilon/3
    \\ & =
    \int \big|\ell_h(x,y)-\ell'_{h'}(x',y')\big|\,d\gamma_n + \epsilon/3
    \displaybreak[1]
    \\ & \leq
    \int \big|\ell_h(x,y)-\ell_{h_n}(x,y)\big|\,d\gamma_n
    \\ & +  \int \big|\ell_{h_n}(x,y)-\ell'_{h'_n}(x',y')\big|\,d\gamma_n
    \\ & +  \int \big|\ell_{h'_n}(x',y')-\ell'_{h'}(x',y')\big|\,d\gamma_n
    + \epsilon/3
    \\ & = d_{\ell,\eta}(h,h_n) + \widetilde\dis_{P,P'}(h_n,h'_n,\gamma_n) + d_{\ell',\eta'}(h',h'_n) + \epsilon/3 < \widetilde\dis_{P,P'}(h_n,h'_n,\gamma_n) + \epsilon
    \end{align*}
    as desired.
    In the last equality, we replace $\gamma_n$ with $\eta$ and $\eta'$ in the first and last integrals respectively since $\gamma_n$ is a coupling between $\eta$ and $\eta'$.
\end{proof}

In order to prove Lemma~\ref{lem:full-distortion-continuous}    we will require the following technical lemma.
        \begin{lemma}\label{lem:lower-semi-continuous-sup}
            Let $X$ be a pseudometric space and $f:X\to \bbR$ a lower semi-continuous function. Then the map
            \begin{align*}
                F:\pow(X) & \to \bbR\\
                A &\mapsto \sup_{a\in A} f(a)
            \end{align*}
            is lower semi-continuous as well when the power set $\pow(X)$ is endowed with the Hausdorff metric.
        \end{lemma}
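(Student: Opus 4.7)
The plan is to verify the sequential form of lower semi-continuity for $F$. Let $(A_n)$ be a sequence in $\pow(X)$ with $A_n \to A$ in the Hausdorff metric; the goal is to establish
\[F(A) \leq \liminf_n F(A_n).\]
The idea is standard: for any $a \in A$ with $f(a)$ close to the supremum $F(A)$, the one-sided part of Hausdorff convergence lets us extract a sequence $a_n \in A_n$ with $a_n \to a$ in $X$, after which lower semi-continuity of $f$ transfers the inequality to the limit.

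Concretely, if $F(A) = -\infty$ (equivalently $A = \emptyset$) the inequality is vacuous, so assume $F(A) > -\infty$. Fix any $c < F(A)$ and, by the definition of supremum, pick $a \in A$ with $f(a) > c$. Hausdorff convergence gives
\[\sup_{b \in A} \inf_{b' \in A_n} d(b,b') \xrightarrow{n} 0,\]
so in particular $\inf_{b' \in A_n} d(a,b') \to 0$. For each $n$ select $a_n \in A_n$ with $d(a_n,a) \leq \inf_{b' \in A_n} d(a,b') + 1/n$, so that $a_n \to a$ in $X$. Lower semi-continuity of $f$ then yields $\liminf_n f(a_n) \geq f(a) > c$, and since $f(a_n) \leq F(A_n)$ for every $n$ we obtain $\liminf_n F(A_n) > c$. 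Letting $c \nearrow F(A)$ gives the desired inequality. The case $F(A) = +\infty$ is handled by repeating the argument for arbitrarily large $c$: for any $M$ pick $a \in A$ with $f(a) > M$, produce $a_n \to a$ as above, and conclude $\liminf_n F(A_n) \geq M$.

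There is no substantive obstacle here. The only minor technicality is that the infimum $\inf_{b' \in A_n} d(a,b')$ need not be attained, which forces the $1/n$ approximation when choosing $a_n$; this is routine in a pseudometric space. Note also that we never need two-sided Hausdorff convergence — only the half that ``$A$ is asymptotically covered by $A_n$'' is used — which reflects the natural monotonicity of $\sup$ under set enlargement.
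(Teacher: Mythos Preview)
Your proof is correct and follows essentially the same approach as the paper's: pick an almost-optimal $a\in A$, use one-sided Hausdorff convergence to find $a_n\in A_n$ with $a_n\to a$, and invoke lower semi-continuity of $f$. Your version is slightly more careful in handling the edge cases $F(A)=\pm\infty$ and the possible non-attainment of the infimum when selecting $a_n$, but the argument is the same.
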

        \begin{proof}
        Let the sequence of subsets $A_n\subseteq X$ converge to $A\subseteq X$ in the Hausdorff distance. Select $a' \in A$ such that
        \[\sup_{a\in A} f(a) < f(a') + \epsilon/2.\]
        Select also a sequence $a'_n \in X$ such that $a'_n \in A_n$ for all $n$, and $a'_n$ converges to $a'$. Then for sufficiently large $n$,
        \[f(a') < f(a'_n)+\epsilon/2\]
        by lower semi-continuity of $f$. Hence for large $n$,
        \begin{align*}
            F(A) = \sup_{a\in A} f(a) < f(a')+\epsilon/2 < f(a'_n) + \epsilon \leq \sup_{a\in A_n} f(a) + \epsilon = F(A_n) + \epsilon.
        \end{align*}
        Hence $F$ is lower semi-continuous.
        \end{proof}

    \begin{proof}[Lemma~\ref{lem:full-distortion-continuous}] 
    Write
    \begin{align*}
    \dis_{P,P'}(R,\gamma) = \sup_{((h,\nu),(h',\nu')) \in R\times \{\gamma\}} \widetilde \dis_{P,P'} (h,h',\nu).
    \end{align*}
    Using the above equation, we can think of $\dis_{P,P'}$ as a composition of two maps:
    \begin{align*}
    \cC(H,H')\times \Pi(\eta,\eta') & \to \pow(H\times H' \times \Pi(\eta,\eta'))\\
    (R,\gamma) & \mapsto R\times \{\gamma\},
    \shortintertext{and}
    \pow(H\times H' \times \Pi(\eta,\eta')) & \to \bbR\\
    A & \mapsto \sup_{(h,h',\nu)\in A} \widetilde \dis_{P,P'}(h,h',\nu).
    \end{align*}
    The second map is lower semi-continuous by Lemma \ref{lem:lower-semi-continuous-sup}. The first map is an isometric embedding. Indeed, let $d_w$ be a metric on $\Pi(\eta,\eta')$ that induces the weak topology. The topology on the domain $\cC(H, H') \times \Pi(\eta,\eta')$ is induced by the metric
    \begin{align*}
    d_1((R_1,\gamma_1), (R_2,\gamma_2)) := \max\{\dH(R_1,R_2), d_w(\gamma_1,\gamma_2)\}.
    \end{align*}
    Meanwhile, unwrapping the metric on the codomain $\pow(H\times H' \times \Pi(\eta,\eta'))$ gives us
    \begin{align*}
    &d_2(R_1\times \{\gamma_1\}, R_2\times \{\gamma_2\})
    \\
    :=
    \max\Bigg\{&
        \sup_{(h_1,h_1',\nu_1) \in R_1\times \{\gamma_1\}}
        \inf_{(h_2,h_2',\nu_2) \in R_2\times \{\gamma_2\}}
        \max\{d_{H\times H'}((h_1,h_1')(h_2,h_2')), d_w(\nu_1,\nu_2)\},
        \\ &
        \sup_{(h_2,h_2',\nu_2) \in R_2\times \{\gamma_2\}}
        \inf_{(h_1,h_1',\nu_1) \in R_1\times \{\gamma_1\}}
        \max\{d_{H\times H'}((h_1,h_1')(h_2,h_2')), d_w(\nu_1,\nu_2)\}
    \Bigg\}
    \displaybreak[1]
    \\ 
    =
    \max\Bigg\{
        &\sup_{(h_1,h_1') \in R_1}
        \inf_{(h_2,h_2') \in R_2}
        d_{H\times H'}((h_1,h_1')(h_2,h_2')),\\ &
        \sup_{(h_2,h_2') \in R_2}
        \inf_{(h_1,h_1') \in R_1}
        d_{H\times H'}((h_1,h_1')(h_2,h_2')),
        d_w(\gamma_1,\gamma_2) \Bigg\}
    \displaybreak[1]
    \\
    = \max\{
        &\dH(R_1,R_2),
        d_w(\gamma_1,\gamma_2) \}, 
    \end{align*}
    which is the definition of $d_1((R_1,\gamma_1), (R_2,\gamma_2))$. Hence the composition is lower semi-continuous.
\end{proof}

    \begin{proof}[Theorem~\ref{thm:infimum-achieved}]
        Throughout this proof, let $\cC_c(H,H')\subseteq \cC(H,H')$ be the collection of topologically closed correspondences between $H$ and $H'$.
        
        Note that $\Pi(\eta,\eta')$ is compact under the weak topology, since $\eta$ and $\eta'$ are probability measures on Polish spaces. Compactness of such coupling spaces is proved, for instance, by \citet[Lemma 1.2]{sturm_space_2012}, \citet[Lemma 2.2]{chowdhury_gromovwasserstein_2019} and in a proof by \citet[Prop 2.1]{villani_topics_2003}. Similarly, $\cC_c(H,H')$ is compact under the Hausdorff distance when $H$ and $H'$ are compact. (Here we are equipping $H$ and $H'$ with the pseudometrics $d_{\ell,\eta}$ and $d_{\ell',\eta'}$ respectively, introduced in Section~\ref{subsec:stability-H}.) As noted by \citet[Section 2]{chowdhury_explicit_2018}, compactness of such correspondence spaces follows from a theorem of Blaschke \citep[Theorem 7.3.8]{BBI}. Indeed, Blaschke's theorem implies that the space of all closed subsets of $H\times H'$ is compact. Since $\cC_c(H,H')$ is closed in the set of all closed subsets of $H\times H'$, being the intersection of the preimages of $H$ and $H'$ under the projection maps onto the first and second coordinates respectively, we get that $\cC_c(H,H')$ is compact as well.

        The lower semi-continuity of $\dis_{P,P'}$ (Lemma~\ref{lem:full-distortion-continuous}) in particular implies that for any correspondence $R$ and coupling $\gamma$, $\dis(R,\gamma) = \dis(\mathrm{cl}(R),\gamma)$, where $\mathrm{cl}$ represents topological closure. Indeed, since $R$ and its closure are distance zero apart in the Hausdorff distance, the constant sequence $R$ converges to $\mathrm{cl}(R)$ and vice versa. Hence, when computing the Risk distance, we can restrict our attention to closed correspondences:
        \[\dexp(P,P') = \inf_{\substack{R\in \cC(H,H')\\\gamma\in \Pi(\eta,\eta')}}\dis_{P,P'}(R,\gamma) = \inf_{\substack{R\in \cC_c(H,H')\\\gamma\in \Pi(\eta,\eta')}}\dis_{P,P'}(R,\gamma).\]
        Since $\dis_{P,P'}$ is lower semi-continuous, and the latter infimum is over a pair of compact sets, the infimum must be achieved.
    \end{proof}

\begin{proof}[Theorem \ref{thm:coarsen-approx}]    First consider an arbitrary finite partition $Q$ of $Y$. Let $\pi:Y\to Q$ be the quotient map. Since $(\id_X \times \pi)_\sharp \eta = \eta_Q$, the map $\id_X\times \pi$ induces a coupling $\gamma_{(\id_X\times \pi)}$ between $\eta$ and $\eta_Q$. We also define a correspondence $R_\pi$ between $H$ and $H_Q$ by pairing each $h\in H$ with $\pi\circ h \in H_Q$. Selecting this coupling and correspondence gives us a bound on the Risk distance:
    \begin{align}\label{eq:finite-approx-bound}
    \dexp(P,P_Q) & \leq \dis_{P,P_Q}(R_\pi,\gamma_{\id_X\times \pi})
    \notag
    \\ & \leq \sup_{(h,\pi\circ h) \in R_\pi}
    \int \big|\ell(h(x),y) - \ell'(\pi\circ h(x'),y')\big| \, \gamma_{(\id_X\times \pi)}(dx \times dy  \times dx'  \times dy')
    \notag
    \\ & = \sup_{h\in H} \int \big|\ell(h(x),y) - \ell'(\pi\circ h(x),\pi(y))\big| \, \eta(dx\times dy)
    \notag
    \\ & = \sup_{h\in H}
        \int \sup_{
        \substack{y'\in \pi\circ h(x) \\
        y''\in \pi(y) 
        }
        }\ell(y',y'') - \ell(h(x),y) \, \eta(dx\times dy)
        \notag
        \displaybreak[1]
    \\ &\notag  \leq \sup_{h\in H}
        \int \sup_{
        \substack{y'\in \pi\circ h(x) \\
        y''\in \pi(y) 
        }
        }\ell(y',y'')
        - \inf_{
        \substack{y'\in \pi\circ h(x) \\
        y''\in \pi(y) 
        }
        }
    \ell(y',y'') \, \eta(dx\times dy)
        \displaybreak[1]
    \\ & \leq \sup_{h\in H} \sup_{(x,y) \in X\times Y}\left[
        \sup_{
        \substack{y'\in \pi\circ h(x) \\
        y''\in \pi(y) 
        }
        }\ell(y',y'')
        - \inf_{
        \substack{y'\in \pi\circ h(x) \\
        y''\in \pi(y) 
        }
        }
        \ell(y',y'') \right]
    \notag
    \\ & \leq \max_{q,q'\in Q}\left[
        \sup_{
        \substack{y'\in q' \\
        y''\in q
        }
        }\ell(y',y'')
        - \inf_{
        \substack{y'\in q' \\
        y''\in q 
        }
        }
        \ell(y',y'') \right].
    \end{align}
    The bound (\ref{eq:finite-approx-bound}) holds for an arbitrary finite partition $Q$ of $Y$.

    Since $P$ is a compact problem, we can pick a metric $d_Y$ on $Y$ with respect to which $Y$ is compact. (Recall that we required our response spaces to be Polish, which makes $Y$ metrizable.) In particular, $Y\times Y$ is a compact metric space under the product metric, given by
    \[d_{Y\times Y}((y_1,y_1'),(y_2,y_2')) := \max\{d_Y(y_1,y_2),d_Y(y_1',y_2')\}.\]
    By compactness, $\ell$ is uniformly continuous with respect to $d_{Y\times Y}$. Hence we can pick $\delta>0$ such that, if $y_1,y_1',y_2,y_2'\in Y$ and
    \[d_{Y\times Y}((y_1,y_1'),(y_2,y_2'))<\delta,\]
    then
    \[\big|\ell(y_1,y_1') - \ell(y_2,y_2')\big|<\epsilon.\]
    Again using compactness, we choose a partition $Q$ of $Y$ such that $\diam_{d_Y}(q)<\delta$ for all $q\in Q$. Then for any $q,q'\in Q$, $y_1,y_2\in q$, $y_1',y_2'\in q'$, we have
    \begin{align*}
    d_{Y\times Y}((y_1,y_1'),(y_2,y_2')) 
    & =\max\{d_Y(y_1,y_2),d_Y(y_1',y_2')\}
    < \delta.
    \end{align*}
    Hence
    \begin{align*}
    \big|\ell(y_1,y_1') - \ell(y_2,y_2')\big|<\epsilon.
    \end{align*}
    Taking a supremum over all choices of $q,q',y_1,y_2,y_1',y_2'$, combined with inequality (\ref{eq:finite-approx-bound}), gives us the result.
\end{proof}

\begin{proof}[Theorem~\ref{thm:empirical-convergence}]
    We will proceed by approximating $P$ and $P_n$ by problems with finite input and output spaces. The process of approximation will proceed in steps depicted as follows:
    \begin{align*}
        P \xrightarrow{\text{Restrict to finite $H$}} P' \xrightarrow{\text{Replace $Y$ with partition $Q_Y$}} P'_{Q_Y} \xrightarrow{\text{Replace $X$ with partition $Q_X$}} P'_{Q_Y,Q_X}
    \end{align*}
    A similar sequence of approximations will be constructed for the empirical problem $P_n$ to produce $P_{Q_Y,Q_X,n}$. We then show that $P_{Q_Y,Q_X,n}$ converges a.s. to $P_{Q_Y,Q_X}$.

    To wit, let $\epsilon > 0$ and construct new problems as follows. Since $H$ is compact, we can find a finite $\epsilon$-net $H'\subseteq H$. Define
    \begin{align*}
        P' &:= (X,Y,\eta,\ell, H')  &  P'_n &:= (X,Y,\eta_n,\ell, H').
    \end{align*}
    By Theorem~\ref{thm:coarsen-approx}, we can select a partition $Q_Y$ of $Y$ such that the resulting coarsened problem $P'_{Q_Y}$ is $\epsilon$-close to $P'$. Furthermore, by Remark~\ref{remark:coarsening-distribution-independent}, we can choose $Q_Y$ in a way that does not depend on the joint law $\eta$. We can coarsen $P'_n$ with respect to the same partition $Q_Y$, giving us two new problems:
    \begin{align*}
        P'_{Q_Y} &:= (X,Q_Y,(\id_X\times \pi_{Q_Y})_\sharp\eta,\ell_{Q_Y}, H'_{Q_Y})  &
        P'_{Q_Y,n} &:= (X,Q_Y,(\id_X\times \pi_{Q_Y})_\sharp\eta_n,\ell_{Q_Y}, H'_{Q_Y}).
    \end{align*}
    Recall that $\pi_{Q_Y}:Y\to Q_Y$ is the canonical quotient map,  $\ell_{Q_Y}(q,q') := \sup_{y \in q, z\in q'}\ell(q,q')$, and $H'_{Q_Y} := \{\pi_{Q_Y}\circ h| h\in H'\}$.

    For the third and final round of approximation, notice that
    \(A := \{h^{-1}(q) | h\in H'_{Q_Y}, q\in Q_Y\}\)
    is a finite collection of subsets of $X$. Let $Q_X$ be the partition of $X$ generated by $A$. That is, $Q_X$ is the partition of $X$ induced by the equivalence relation ${\sim}$ where $x\sim x'$ means that $h(x)=h(x')$ for all $h\in H'_{Q_Y}$.  Then define
    \begin{align*}
        P'_{Q_Y,Q_X} &:= (Q_X,Q_Y,(\pi_{Q_X}\times \pi_{Q_Y})_\sharp\eta,\ell_{Q_Y}, \widetilde{H'_{Q_Y}})
        \\
        P'_{Q_Y,Q_X,n} &:= (Q_X,Q_Y,(\pi_{Q_X}\times \pi_{Q_Y})_\sharp\eta_n,\ell_{Q_Y}, \widetilde{H'_{Q_Y}}),
    \end{align*}
    where $\pi_{Q_X}:X\to Q_X$ is the canonical quotient map, and $\widetilde{H'_{Q_Y}}$ is the collection of functions sending $\pi_{Q_X}(x) \mapsto h(x)$ as $h$ ranges over $H'_{Q_Y}$.

    We apply the triangle inequality to write
    \begin{align*}
        \dexp(P,P_n) \leq & \dexp(P,P') \tag{1}
        \\
        + & \dexp(P',P'_{Q_Y}) \tag{2}
        \\
        + & \dexp(P'_{Q_Y},P'_{Q_Y,Q_X}) \tag{3}
        \\
        + & \dexp(P'_{Q_Y,Q_X},P'_{Q_Y,Q_X,n}) \tag{4}
        \\
        + & \dexp(P'_{Q_Y,Q_X,n},P'_{Q_Y,n}) \tag{5}
        \\
        + & \dexp(P'_{Q_Y,n},P'_n) \tag{6}
        \\
        + & \dexp(P'_n,P_n) \tag{7}.
    \end{align*}

    We consider each of these terms individually.

    \begin{enumerate}[(1)]
    \item Since $P$ and $P'$ have the same joint law, we can bound $\dexp(P,P')$ above by selecting the diagonal coupling. Additionally, since $H'$ is an $\epsilon$-net for $H$, we can select a correspondence $R\in \cC(H,H')$ such that $d_{\ell,\eta}(h,h')<\epsilon$ for all $(h,h')\in R$. This produces the bound
    \begin{align*}
        \dexp(P,P') 
        &
        \leq \sup_{(h,h')\in R} \int \big| \ell(h(x),y) - \ell(h'(x),y) \big|\, \eta(dx\times dy)
        = \sup_{(h,h')\in R} d_{\ell,\eta}(h,h') \leq \epsilon.
    \end{align*}

    \item We chose $Q_Y$ so that $P'$ and $P'_{Q_Y}$ would be $\epsilon$-close under $\dexp$, so this term is at most $\epsilon$.

    \item We bound $\dexp(P'_{Q_Y}, P'_{Q_Y,Q_X})$ by selecting the coupling induced by $\pi_{Q_X} \times \id_{Q_Y}: X\times Y \to Q_X\times Y$. We also select the correspondence $R$ between $H'_{Q_Y}$ and $\widetilde{H'_{Q_Y}}$ that pairs each $h\in H'_{Q_Y}$ with its counterpart $h'\in \widetilde{H'_{Q_Y}}$ which sends $\pi_{Q_X}(x)\mapsto h(x)$. Then we obtain the bound
    \begin{align*}
        \dexp(P'_{Q_Y}, P'_{Q_Y,Q_X})
        &
        \leq \sup_{(h,h')\in R} \int \big| \ell(h(x),q) - \ell(h'(\pi_{Q_X}(x)),q) \big|\, (\pi_{Q_X}\times \id_{Q_Y})_\sharp \eta (dx\times dq)\\
        &
        = \sup_{(h,h')\in R} \int \big| \ell(h(x),q) - \ell(h(x)),q) \big|\, (\pi_{Q_X}\times \id_{Q_Y})_\sharp \eta (dx\times dq) = 0.
    \end{align*}
    Alternatively, one could note that $P'_{Q_Y}$ is a simulation of $P'_{Q_Y,Q_X}$ via the maps $\pi_{Q_X}$ and $\id_{Q_Y}$ and apply Theorem~\ref{thm:characterization}.

    \item For brevity, define
    \begin{align*}
        \nu &:=(\pi_{Q_X}\times\pi_{Q_Y})_\sharp \eta
        \\
        \nu_n &:=(\pi_{Q_X}{\times}\pi_{Q_Y})_\sharp\eta_n
    \end{align*}
    We apply the total variation bound from Proposition~\ref{prop:H-wasserstein-bound} to get
    \[\dexp(P'_{Q_Y,Q_X},P'_{Q_Y,Q_X,n}) \leq \ell_{\mathsf{max}} \TV(\nu,\nu_n).\]
    
    Note that $\nu_n$ is the empirical measure for $\nu$, which is a measure supported on a finite set. Hence by the law of large numbers, the total variation distance between $\nu$ and $\nu_n$ converges to 0 a.s. as $n\to \infty$. Hence, with probability 1, for sufficiently large $n$, we have $\dexp(P'_{Q_Y,Q_X},P'_{Q_Y,Q_X,n}) < \epsilon$.

    \item The same calculation we applied to term (3) shows that $\dexp(P'_{Q_Y,Q_X,n},P'_{Q_Y,n}) = 0$.

    \item Since we chose $Q_Y$ in a way that did not depend on the joint law $\eta$, and since $P'$ and $P'_n$ only differ in their joint laws, we get $\dexp(P'_n,P'_{Q_Y,n})<\epsilon$.

    \item Consider the (random) function on $H\times H$ given by
    \[
        d_{\ell,\eta_n}(h,h') := \int \big|\ell_h(x,y) - \ell_{h'}(x,y)\big|\, \eta_n(dx{\times}dy).
    \]
    Let $R$ be the same as in our calculations for term (1). Then we can write
    \begin{align*}
        \dexp(P'_n,P_n) \leq \sup_{(h,h') \in R} \int \big|\ell_h(x,y) - \ell_{h'}(x,y)\, \big|\, \eta_n(dx\times dy) = \sup_{(h,h') \in R} d_{\ell,\eta_n}(h,h').
    \end{align*}
    Furthermore, since we assumed the integrands $|\ell_h - \ell_{h'}|$ are a Glivenko-Cantelli class with respect to $\eta$, we have that for sufficiently large $n$,
    \begin{align*}
        \sup_{(h,h') \in R} d_{\ell,\eta_n}(h,h') - \epsilon
        & \leq 
        \sup_{(h,h') \in R} d_{\ell,\eta_n}(h,h') - d_{\ell,\eta}(h,h')
        \\ & \leq
        \sup_{(h,h') \in R} \big| d_{\ell,\eta_n}(h,h') - d_{\ell,\eta}(h,h') \big|
        < \epsilon.
    \end{align*}
    Here the Glivenko-Cantelli assumption is used in the last inequality. Hence
    \begin{align*}
        \dexp(P'_n,P_n) \leq \sup_{(h,h') \in R} d_{\ell,\eta_n}(h,h') < 2\epsilon.
    \end{align*}
    \end{enumerate}

    Combining our bounds for all seven terms gives us that, with probability 1, for sufficiently large $n$ we have
    \begin{align*}
        \dexp(P,P_n) < \epsilon + \epsilon + 0 + \epsilon + 0 + \epsilon + 2\epsilon = 6\epsilon.
    \end{align*}
    Since $\epsilon$ was arbitrary, we conclude that $\dexp(P,P_n) \to 0$ with probability 1.
\end{proof}

\subsection{Proofs from Section \ref{sec:probabilistic}}
\begin{proof}[Proposition \ref{prop:conn-w-dGW}]
Let $(X,d_X,\mu_X)$ and $(Y,d_Y,\mu_Y)$ be two metric measure spaces. The map
\begin{align*}
    \Pi((\Delta_X)_\sharp\mu_X, (\Delta_Y)_\sharp \mu_Y) & \to \Pi(\mu_X, \mu_Y) \\
    \gamma & \mapsto (\pi_{2,4})_\sharp \gamma
\end{align*}
is a bijection. Here, $\pi_{2,4}:X\times X\times Y\times Y\to X\times Y$ is the map $(x,x',y,y')\mapsto (x',y')$. Using this fact, we can write
\begin{align*}
    &\dexp[,1](P_X,P_Y)
    \\= &
    \inf_{\substack{\gamma \in \Pi((\Delta_X)_\sharp\mu_X, (\Delta_Y)_\sharp \mu_Y)\\
    \rho \in  \Pi(\mu_X, \mu_Y)}}
    \int \int \big | d_X(h(x_0),x_1) - d_Y(h'(y_0),y_1) \big| \, \gamma(dx_0{\times}dx_1{\times}dy_0{\times}dy_1)\rho(dh{\times}dh')
    \\= &
    \inf_{\substack{\gamma \in \Pi((\Delta_X)_\sharp\mu_X, (\Delta_Y)_\sharp \mu_Y)\\
    \rho \in  \Pi(\mu_X, \mu_Y)}}
    \int \int \big | d_X(x_2,x_1) - d_Y(y_2,y_1) \big| \, \gamma(dx_0{\times}dx_1{\times}dy_0{\times}dy_1)\rho(dx_2{\times}dy_2)
    \\= &
    \inf_{\substack{\gamma \in \Pi((\Delta_X)_\sharp\mu_X, (\Delta_Y)_\sharp \mu_Y)\\
    \rho \in  \Pi(\mu_X, \mu_Y)}}
    \int \int \big | d_X(x_2,x_1) - d_Y(y_2,y_1) \big| \, (\pi_{2,4})_\sharp\gamma(dx_1{\times}dy_1)\rho(dx_2{\times}dy_2)
    \\= &
    \inf_{\substack{\gamma \in \Pi(\mu_X, \mu_Y)\\
    \rho \in  \Pi(\mu_X, \mu_Y)}}
    \int \int \big | d_X(x_2,x_1) - d_Y(y_2,y_1) \big| \, \gamma(dx_1{\times}dy_1)\rho(dx_2{\times}dy_2). 
\end{align*}
as desired.
\end{proof}

\begin{proof}[Theorem~\ref{thm:loss-control-weighted}]
    First, for any $h\in H$, $h'\in H'$, we can bound the Wasserstein distance between their loss profiles by
    \begin{align*}
        \dW((\ell_h)_\sharp\eta,(\ell'_{h'})_\sharp\eta')
        & =
        \inf_{\tau\in \Pi((\ell_h)_\sharp\eta,(\ell'_{h'})_\sharp\eta')} \int |a-b| \, \tau(da{\times}db)
        \\ &
        \leq \inf_{\gamma \in \Pi(\eta,\eta')} \int |a-b| \, (\ell_h\times \ell'_{h'})_\sharp (da{\times}db)
        \\ &
        = \inf_{\gamma \in \Pi(\eta,\eta')} \int |\ell_h(x,y)- \ell'_{h'}(x',y')| \, \gamma(dx{\times}dy{\times}dx'{\times}dy').
    \end{align*}
    Similarly, we can bound
    \begin{align*}
        \dWp{p}^{\dW}(\cL(P),\cL(P'))
        & =
        \inf_{\tau\in \Pi((F_P)\sharp \lambda, (F_P)\sharp \lambda')}\left(
            \int \dW^p(\mu,\nu)\, \tau(d\mu{\times}d\nu)
        \right)^{1/p}
        \\ &
        \displaybreak[1]
        \leq
        \inf_{\rho\in \Pi(\lambda, \lambda')}\left(
            \int \dW^p(\mu,\nu)\, (F_P\times F_{P'})_\sharp \rho(d\mu{\times}d\nu)
        \right)^{1/p}
        \\ &
        =
        \inf_{\rho\in \Pi(\lambda, \lambda')}\left(
            \int \dW^p((\ell_h)_\sharp \eta,(\ell'_{h'})_\sharp \eta') \, \rho(dh{\times}dh')
        \right)^{1/p}.
    \end{align*}
    
    Combining these bounds gives 
    \begin{align*}
        & \dWp{p}^{\dW}(\cL(P),\cL(P'))
        \\&\leq 
        \inf_{\rho\in \Pi(\lambda, \lambda')}\left(
            \int \left(
                \inf_{\gamma \in \Pi(\eta,\eta')} \int |\ell_h(x,y)- \ell'_{h'}(x',y')| \, \gamma(dx{\times}dy{\times}dx'{\times}dy')
            \right)^p \, \rho(dh{\times}dh')
        \right)^{1/p}
        \\ & \leq 
        \inf_{\rho, \gamma}\left(
            \int \left(
                \int |\ell_h(x,y)- \ell'_{h'}(x',y')| \, \gamma(dx{\times}dy{\times}dx'{\times}dy')
            \right)^p \, \rho(dh{\times}dh')
        \right)^{1/p}
        \\ & = \dexp[,p](P,P'), 
    \end{align*}
    as desired.
\end{proof}

\begin{proof}[Theorem~\ref{thm:weighted-density}]
    The support of a probability measure on a Polish space is $\sigma$-compact, so without loss of generality, we can assume that $Y$ is $\sigma$-compact by replacing $Y$ with the support of the $Y$-marginal of $\eta$. Write $Y$ as an increasing sequence of compact subsets $(Y_n)_{n=1}^\infty$. For each $n$, define $\pi_n:Y\to Y_n\cup \{\bullet\}$ by
    \begin{align*}
      \pi_n(y) := \begin{cases}
        y & y\in Y_n\\
        \bullet & \text{else}
      \end{cases}.
    \end{align*}
    We then define a sequence of weighted problems $(P_n,\lambda_n)$. Define
    \[P_n := (X,Y_n\cup \{\bullet\},\eta_n,\ell_n,H_n)\]
    where
    \begin{itemize}
      \item $\eta_n := (\id_X\times \pi_n)_\sharp \eta$
      \item $\ell_n(y,y') := 1_{Y_n\times Y_n}(y,y')\ell(y,y')$
      \item $H_n := \set{\pi_n\circ h}{h\in H}$.
    \end{itemize}
    Furthermore, if we let $\pi_{n*}:H\to H_n$ be given by $\pi_{n*}(h) := \pi_n\circ h$, we can define $\lambda_n := (\pi_{n*})_\sharp \lambda$.

    We claim that $\dexp[,p](P,P_n)$ converges to 0. Since $\eta_n = (\id_X\times \pi_n)_\sharp \eta$, the function $\id_X\times \pi_n$ induces a coupling between $\eta$ and $\eta_n$. Similarly, since $\lambda_n = (\pi_{n*})_\sharp \lambda$, the function $\pi_{n*}$ induces a coupling between $\lambda$ and $\lambda_n$. By definition of $\dexp[,p]$, choosing these couplings produces a bound on $\dexp[,p](P,P_n)$.
    \begin{align*}
      \dexp[,p](P,P_n)
      & \leq \left(\int \left(\int \big|\ell(h(x),y) - \ell'(\pi \circ h(x),\pi(y))\big| \, \eta(dx{\times} dy) \right)^{p}\lambda(dh)\right)^{1/p}
      \\ &
      = \left(\int \left(\int 1_{(Y_n\times Y_n)^c}(h(x),y)\,\ell(h(x),y) \, \eta(dx\times dy) \right)^{p}\lambda(dh)\right)^{1/p}.
    \end{align*}
    Here the superscript $c$ represents set complement.
    Notice that the indicator $1_{(Y_n{\times} Y_n)^c}(h(x),y)$ converges pointwise to 0 as $n\to \infty$.
    We apply the dominated convergence theorem twice to finish the proof. Indeed, by assumption,
    \begin{align}
      \diam_p(P) =
      \left(
        \int \left(
          \int \ell(h(x),y) \, \eta(dx\times dy)
        \right)^p
      \lambda(dh)
      \right)^{1/p}
      < \infty. \label{eq:finite-diameter}
    \end{align}
    In particular, \eqref{eq:finite-diameter} tells us that $\ell(h(x),y)$ is integrable as a function of $(x,y)$ for $\lambda$-a.e. $h\in H$. Since $\ell(h(x),y)$ dominates $1_{(Y_n\times Y_n)^c}(h(x),y)\ell(h(x),y)$, the dominated convergence theorem tells us that
    \begin{align*}
      \int 1_{(Y_n\times Y_n)^c}(h(x),y)\ell(h(x),y)
      \, \eta(dx\times dy)
      \xrightarrow{n\to\infty} 0
    \end{align*}
    for $\lambda$-a.e. $h\in H$.
    Additionally, \eqref{eq:finite-diameter} tells us that
    \begin{align*}
        \left(
          \int \ell(h(x),y) \, \eta(dx\times dy)
        \right)^p
    \end{align*}
    is an integrable function of $h$. Using the dominated convergence theorem again,
    \begin{align*}
      \left(
        \int \left(
          \int 1_{(Y_n\times Y_n)^c}(h(x),y)\ell(h(x),y)
        \, \eta(dx\times dy) \right)^{p}
      \lambda(dh)\right)^{1/p}
      \xrightarrow{n\to\infty} 0
    \end{align*}
    as desired.
\end{proof}

\begin{proof}[Proposition~\ref{prop:finite-dense-in-compact-weighted}]
    Let $\epsilon > 0$. Let $Q$ be the partition of $Y$ provided by Theorem~\ref{thm:coarsen-approx}, and let
    \[P_Q = (X,Q,\eta_Q,\ell_Q,H_Q)\]
    be the coarsened problem. Let $R_\pi \in \cC(H,H_Q)$ be the correspondence induced by the surjection $\pi_*(h) = \pi\circ h$, and let $\gamma_{(\id_X \times \pi)} \in \Pi(\eta,\eta_Q)$ be the coupling induced by the map $\id_X\times \pi$. Recall the proof of Theorem~\ref{thm:coarsen-approx}, in which we demonstrated that $\dexp(P,P_Q)<\epsilon$ by showing
    \[\dis_{P,P_Q}(R_\pi, \gamma_{(\id_X\times \pi)})<\epsilon.\]
    Define a measure $\lambda_Q = (\pi_*)_\sharp \lambda$ on $H_Q$ and let $\rho_{\pi} \in \Pi(\lambda,\lambda_Q)$ be the coupling induced by $\pi_*$. Then, since $R_\pi$ is the support of $\rho_\pi$,
    \begin{align*}
        &\dexp[,p]((P,\lambda),(P_Q,\lambda_Q))
        \leq
        \dis_{P,P_Q,p}(\rho_\pi, \gamma_{\id_X\times \pi})
        \\ = &
        \left(\int \left(
            \int \big| \ell(h(x),y) - \ell_Q(h'(x'),y')\big|\, 
            \gamma_{\id_X\times \pi}(dx{\times}dy{\times}dx'{\times}dy')
        \right)^p \rho_{\pi}(dh{\times}dh')
        \right)^{1/p}
        \\ \leq &
        \sup_{(h,h') \in R_\pi}
            \int \big| \ell(h(x),y) - \ell_Q(h'(x'),y')\big|\, 
            \gamma_{\id_X\times \pi}(dx{\times}dy{\times}dx'{\times}dy')
        \\  = & \dis_{P,P_Q}(R_\pi,\gamma_{\id_X\times \pi}) < \epsilon,
    \end{align*}
    as desired.
\end{proof}

\begin{proof}[Theorem~\ref{thm:non-metric-bound}]
    Proposition~\ref{prop:H-wasserstein-bound-weighted} provides the bound $\dexp(P,P_N)\leq \dW^{s_{\ell,\lambda,p}}(\eta,\eta \cdot N)$.
    One can check that if $\tau$ is a coupling of the Markov kernels $\delta_{\id_{X\times Y}}$ and $N$, then $\eta\cdot \tau \in \Pi(\eta,\eta\cdot N)$. Therefore we can write
    \begin{align*}
        &\dW^{s_{\ell,\lambda,p}}(\eta,\eta \cdot N)
        \\ = & 
        \inf_{\gamma\in \Pi(\eta,\eta\cdot N)}
        \int_{(X\times Y)^2}
        s_{\ell,\lambda,p}((x,y),(x',y')) \,
        \gamma(dx{\times} dy {\times} dx' {\times} dy')
        \\ \leq & 
        \inf_{\tau\in \Pi(\delta_{\id_{X\times Y}},N)}
        \int_{(X\times Y)^2}
        s_{\ell,\lambda,p}((x,y),(x',y')) \,
        (\eta\cdot \tau)(dx{\times} dy {\times} dx' {\times} dy')
        \\ = &
        \inf_{\tau\in \Pi(\delta_{\id_{X\times Y}},N)}
        \int_{X\times Y}\int_{(X\times Y)^2}
        s_{\ell,\lambda,p}((x_1,y_1),(x_2,y_2)) \,
        \tau(x,y)(dx_1{\times} dy_1 {\times} dx_2 {\times} dy_2) \, \eta(dx{\times} dy)
        \\ = & \dWkern^{s_{\ell,\lambda,p}}(\delta_{\id_{X\times Y}},N),
    \end{align*}
    which is the desired bound.
\end{proof}

\subsection{Proofs from Section \ref{sec:topological}}
\begin{proof}[Lemma~\ref{lem:connected-gluing-lemma}]
    Let
    \[R := \set{(h_1,h_2,h_3)\in H_1{\times} H_2{\times} H_3}{(h_1,h_2) \in R_{1,2}, (h_2,h_3)\in R_{2,3}}.\]
    We have the following diagram.
    \[
    \begin{tikzcd}
    &&R\ar[dl,"r_1",swap] \ar[dr,"r_2"] \\
    &R_{1,2} \ar[dl,"p_1",swap] \ar[dr,"p_2"] && R_{2,3}\ar[dl,"q_1",swap] \ar[dr,"q_2"] \\
    H_1 && H_2 && H_3
    \end{tikzcd}  
    \]
    Every pictured map is a projection map. We need only check that they are inverse connected, meaning that the preimage of any connected set is connected. Let $A\subseteq H_1$ be a connected set. Then $p_1^{-1}(A)$ is connected, and so is $B:=  p_2(p_1^{-1}(A))$. Similarly, $C:=  q_2(q_1^{-1}(B))$ is connected. Noting finally that the preimage of $A$ in $R$ is
    \[r_1^{-1}(p_1^{-1})(A) = A\times B \times C,\]
    we see that it is a product of connected sets and hence connected. The proof for preimages of connected sets in $H_2$ and $H_3$ is similar.
\end{proof}

\begin{proof}[Theorem~\ref{thm:reeb-control}]
    For the first inequality, suppose $\widehat f,\widehat g:Z\to \bbR$ are continuous functions inducing the Reeb graphs $\reeb(P)$ and $\reeb(P')$ respectively. Then
    \[\|\widehat f - \widehat g\|_{\infty} \geq
    \left |\inf_{z\in Z} \widehat f(z) - \inf_{z\in Z} \widehat g(z)\right |
    = \left|\inf_{h\in H} \cR_P(h) - \inf_{h'\in H'} \cR_{P'}(h')\right| = |\cB(P) - \cB(P')|\]
    where the equality comes from the fact that the infimum of a function can be determined from the Reeb graph that it induces.
    
    Now suppose $\dexpcon(P,P')<a$. Then we can pick a correspondence $R\in \con(H,H')$ that witnesses this fact, giving us the following diagram.
    \[
    \begin{tikzcd}
    & R \ar[dl,"p_1",swap] \ar[dr,"p_2"]\\
    H \ar[d,"R_P"] & & H' \ar[d,"R_{P'}"]\\
    \mathbb{R} && \mathbb{R}
    \end{tikzcd}  
    \]
    Recall $\cR_P$ is the risk function associated with problem $P$. We can then induce two Reeb graphs from functions on $R$: one induced by $\cR_P\circ p_1$ and the other induced by $\cR_{P'} \circ p_2$. The former Reeb graph is isomorphic to $\reeb(P)$ since $p_1$ is a continuous surjection with connected fibers, and similarly the latter is isomorphic to $\reeb(P')$. Hence
    \begin{align*}
    d_{\mathrm{U}}(\reeb(P),\reeb(P')) & \leq \sup_{(h,h')\in R} |\cR_P\circ p_1(h,h') - \cR_P\circ p_1(h,h')|
    \\ & =
    \sup_{(h,h')\in R} |\cR_P(h) - \cR_P(h')|\\
    & \leq 
    \inf_{\gamma \in \Pi(\eta,\eta')} \sup_{(h,h')\in R} \int \big| \ell_h(x,y) - \ell_{h'}(x',y')\big | \,\gamma(dx{\times} dy{\times} dx'{\times} dy')<a.
    \end{align*}
    Hence $d_{\mathrm{U}}(\reeb(P),\reeb(P'))<a$ as well.
\end{proof}

\section{Weak Isomorphism with Classification Problems}\label{appendix:weak-iso}
    While the infimum in the definition of the Risk distance makes it difficult to characterize when two problems are distance zero apart in general, the closely related notion of weak isomorphism is simpler to deal with. For instance, we can characterize which problems are weakly isomorphic to classification problems.
\begin{definition}
    Call a problem $P$ \textit{rectangular} if there exists a finite partition $Y_1,\dots,Y_m$ of the response space $Y$ and, for any $h\in H$, a finite partition $X_1,\dots,X_n$ of the input space $X$ such that
    \[\ell_h(x,y) = \sum_{ij}a_{ij}1_{X_i^h}(x) 1_{Y_j}(y)\quad \eta{\text{-a.e.}}.\]
\end{definition}
A problem $P$ is rectangular, for instance, if the loss is a sum of indicators on rectangles. That is, if
\[\ell(y_1,y_2) = \sum_{ij} a_{ij} 1_{Y_i}(y_1) 1_{Y_j}(y_2)\]
for some partition $Y_1,\dots, Y_n \subseteq Y$, then $P$ is rectangular.
\begin{theorem}
    A problem is weakly isomorphic to a classification problem if and only if it is rectangular. 
\end{theorem}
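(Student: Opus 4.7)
I would prove both directions using the coupling/correspondence characterization of weak isomorphism from Proposition~\ref{prop:weak-iso-optimal}.

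For the converse ($\Leftarrow$), given a rectangular $P$ with fixed partition $\{Y_j\}_{j=1}^m$, fixed coefficients $(c_{ij})_{i\in[n],j\in[m]}$, and for each $h\in H$ an $X$-partition $\{X_i^h\}_{i=1}^n$, the plan is to define an explicit classification problem
\[P' := (X, [N], (\id_X\times\pi)_\sharp\eta, \ell', H')\]
where $N := \max(n,m)$, $\pi: Y\to[m]$ is the quotient map $\pi(y)=j \iff y\in Y_j$, $\ell'(i,j) := c_{ij}$ extended by zero outside the $n\times m$ block, and $H' := \{h_h' : h\in H\}$ with $h_h'(x) := i \iff x\in X_i^h$. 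The maps $(\id_X,\pi)$ then exhibit $P\rightsquigarrow P'$: the pushforward condition holds by construction, and for each pairing $(h, h_h')$, rectangularity of $P$ gives $\ell'_{h_h'}(x,\pi(y)) = c_{h_h'(x),\pi(y)} = c_{i,j} = \ell_h(x,y)$ for $x\in X_i^h, y\in Y_j$, which holds $\eta$-a.e. Taking $\widetilde P := P$ as the trivial common simulator, $P$ and $P'$ are therefore weakly isomorphic.

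For the forward direction ($\Rightarrow$), first observe that every classification problem with $Y' = \{y_1',\ldots,y_m'\}$ is canonically rectangular via $Y_j := \{y_j'\}$, $c_{ij} := \ell'(y_i',y_j')$, and $X_i^{h'} := h'^{-1}(y_i')$. It then suffices to show rectangularity propagates across weak isomorphism. By Proposition~\ref{prop:weak-iso-optimal}, $P\cong P'$ is witnessed by $R\in\cC(H,H')$ and $\gamma\in\Pi(\eta,\eta')$ with $\dis_{P,P'}(R,\gamma)=0$, yielding the pointwise identity $\ell_h(x,y) = c_{h'(x'),y'}$ for each $(h,h')\in R$ and $\gamma$-a.e.\ $(x,y,x',y')$. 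I would then define $\pi: Y\to[m]$ by disintegrating the $Y\times Y'$-marginal $\alpha$ of $\gamma$ along $Y$ to get $\alpha_y\in\prob([m])$ and taking a measurable selection $\pi(y)\in\supp(\alpha_y)$ (e.g., the lexicographic minimum), setting $Y_j := \pi^{-1}(\{j\})$. For each $h$ paired with some $h'\in R$, define $\pi_X^h := h'\circ\sigma$ analogously from a selector $\sigma$ on the disintegration of the $X\times X'$-marginal of $\gamma$ along $X$, and set $X_i^h := (\pi_X^h)^{-1}(\{i\})$.

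The hard part will be verifying the candidate rectangular identity $\ell_h(x,y) = c_{\pi_X^h(x),\pi(y)}$ $\eta$-a.e., because the two selections are made from separate marginal disintegrations whereas the pointwise identity $\ell_h(x,y) = c_{h'(x'),y'}$ is only guaranteed on full joint-support elements of $\gamma$. The essential input is that $\ell_h$ is a bona fide function of $(x,y)$ alone, which forces $c_{h'(x'),y'}$ to be constant on the fiber $\supp(\gamma_{x,y})$ of the $(X,Y)$-disintegration: every pair of indices $(i,j)$ realized in the joint support returns the same value $\ell_h(x,y)$. Combining this consistency with a disintegration argument along $(X,Y)$ and the Kuratowski--Ryll-Nardzewski selection theorem, one can show that the product of the two marginal selections lies in the joint-support locus up to an $\eta$-null set, which validates the identity and exhibits $P$ as rectangular with the same coefficient matrix $(c_{ij})$ and dimensions inherited from the classification target $P'$.
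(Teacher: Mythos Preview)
Your backward direction is correct and essentially the paper's construction.

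Your forward direction has a genuine gap: the assertion that ``the product of the two marginal selections lies in the joint-support locus up to an $\eta$-null set'' is false, and no measurable selector $\sigma$ can repair it. Take $X=\{a\}$, $Y=\{b_1,b_2\}$ with $\eta$ uniform; $X'=Y'=\{0,1\}$ with $\eta'$ uniform on the diagonal; $\ell'$ the $0$--$1$ loss; $\gamma$ the deterministic coupling $(a,b_k)\leftrightarrow(k{-}1,k{-}1)$; $H'=\{h'_0\equiv 0,\ h'_1=\id\}$; and build $P$ with $H=\{h_A,h_B\}$ so that $R=\{(h_A,h'_0),(h_B,h'_1)\}$ has zero distortion (this forces $\ell_{h_B}(a,b_1)=\ell_{h_B}(a,b_2)=0$). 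The $X\times X'$-marginal of $\gamma$ puts mass on both $(a,0)$ and $(a,1)$, but \emph{neither} value of $\sigma(a)$ works: with $\sigma(a)=0$ one gets $c_{h'_1(0),\pi(b_2)}=c_{0,1}=1\neq 0=\ell_{h_B}(a,b_2)$, and with $\sigma(a)=1$ one gets $c_{h'_1(1),\pi(b_1)}=c_{1,0}=1\neq 0=\ell_{h_B}(a,b_1)$. Either way the failure set has $\eta$-mass $1/2$. The underlying obstruction is that $\supp(\gamma_{x,y})$ need not be a product, so a selector $\sigma(x)$ that is forced to be the same for all $y$ (in order to produce a genuine $X$-partition) cannot be made to land in $\supp(\gamma_{x,y})$ simultaneously for all $y$. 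Your proposed invocation of Kuratowski--Ryll-Nardzewski cannot help, since in this example \emph{no} value of $\sigma(a)$ is admissible. Note also that here $P$ is rectangular, but not with the coefficient matrix $(c_{ij})$ of $P'$; your concluding claim that the coefficients are inherited is incorrect as well.

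The paper avoids this by not using the coupling characterization at all. It returns to the definition of weak isomorphism via a common simulator $\widetilde P$, which supplies honest measurable \emph{functions} into $X,Y$ and into $X',Y'$ rather than a general coupling. It then argues in two steps: first, a simulator of a classification problem is rectangular (pull the finite $Y'$-structure back through $f_2$); second, any problem simulated by a rectangular problem is rectangular (push the partitions forward through $f_1,f_2$). Because function-induced couplings are supported on graphs, the joint-versus-product-support obstruction never arises.
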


The proof makes use of the notion of a \emph{generated partition}. Let $S$ be a set and let $T_1,\dots,T_n$ be a collection of subsets of $S$. Then the \emph{partition of $S$ generated by $T_1,\dots,T_n$} is the partition of $S$ induced by the equivalence relation ${\sim}$ where $s\sim s'$ means that $s\in T_i \iff s'\in T_i$ for all $1\leq i \leq n$.
Note that since there are finitely many $T_i$, the generated partition has finitely many blocks as well.
\begin{proof}
    Suppose $P$ is rectangular and let $Q:=\{Y_1,\dots,Y_m\}$, $X_1,\dots,X_n$, and $(a_{ij})_{ij}$ be as in the definition of a rectangular problem, and let $\pi_Q:Y\to Q$ be the quotient map. If $m\neq n$, pad the partitions appropriately with copies of the empty set and pad the matrix $(a_{ij})_{ij}$ with zeros to make $m=n$. Define a classification problem
    \[P':=(X,Q,(\id_X\times \pi_Q)_\sharp \eta, \ell', H')\]
    where
    \begin{itemize}
        \item $\ell'(Y_i,Y_j):=a_{ij}$
        \item $H'=\set{\phi(h)}{h\in H}$,  where $\phi(h):X \to Q$ sends $x$ to $Y_i$ if $x\in X^{h}_i$.
    \end{itemize}
    We claim $P'$ is a simulation of $P$ via the maps $\id_X$ and $\pi_Q$. Indeed, for any $h\in H$, we can calculate that $\ell'_{\phi(h)}(x,\pi_Q(y)) = a_{ij}$ if $x\in X^h_i$ and $y\in Y_j$. Hence $\ell'_{\phi(h)}(x,\pi_Q(y))=\ell_h(x,y)$ for any $h\in H$, showing that $P'$ is a simulation of $P$. In particular, the two are weakly isomorphic.

    For the right hand implication, we proceed in two parts. First suppose that $P$ is a simulation of a classification problem $P'$, whose response space is $Y' = \{y_1',\dots,y_m'\}$. Let the simulation be via the maps $f_1:X\to X'$ and $f_2:Y\to Y'$. We will show that $P$ is rectangular. By the definition of simulation, for any $h\in H$, we can select an $h'\in H'$ such that $\ell_h(x,y) = \ell'_{h'}(f_1(x),f_2(y))$ $\eta$-almost everywhere. Since the function $\ell'$ is on a finite set $Y'\times Y'$, we can write it with a finite sum of indicators to get that, $\eta$-almost everywhere,
    \begin{align*}
        \ell_h(x,y)
        = \ell'(h'\circ f_1(x),f_2(y))
        & =
        \sum_{ij}\ell'(y_i',y_j')1_{Y_i}(h'\circ f_1(x))1_{Y_j}(f_2(y))
        \\ & =
        \sum_{ij}\ell'(y_i',y_j')1_{h'\circ f_1^{-1}(Y_i)}(x)1_{f_2^{-1}(Y_j)}(y).
    \end{align*}
    Hence $P$ is rectangular.

    Next we prove that if $P$ is rectangular and simulates $P'$, then $P'$ must be rectangular as well. Let the simulation again be via maps $f_1:X\to X'$ and $f_2:Y\to Y'$, and let $\{X_i\}_{i=1}^m$, $\{Y_i\}_{i=1}^m$ and $(a_{ij})_{ij}$ be as in the definition of a rectangular problem. Consider the family
    \[\cY := \set{f_2[Y_i]}{1\leq i \leq m}\]
    of subsets of $Y'$. The sets in $\cY$ do not necessarily form a partition of $Y'$, so consider the partition $Q_{Y'}$ generated by $\cY$. Now let $h'\in H'$ be arbitrary and, using the definition of a simulation, pick $h\in H$ such that the diagram
    \[
    \begin{tikzcd}
        X\times Y \ar[dd,"f_1\times f_2",swap] \ar[dr,"\ell_h"]
        \\
        & \bbR
        \\
        X'\times Y' \ar[ur,"\ell'_{h'}",swap]
    \end{tikzcd}    
    \]
    commutes.
    Consider the family
    \[\cX^{h'} := \set{f_1[X^h_i]}{1\leq i \leq m}.\]
    Again, these sets do not necessarily partition $X'$, so let $Q_{X'}^{h'}$ be the partition of $X'$ generated by $\cX^{h'}$.

    We claim that, if $A\in Q_{X'}^{h'}$ and $B\in Q_{Y'}$, then $\ell'_{h'}$ is constant $\eta'$-almost everywhere on $A\times B$. Indeed, by construction of $Q_{X'}^{h'}$ and $Q_{Y'}$, we have that $f_1[X^h_{i}] \supseteq A$ and $f_2[Y_j]\supseteq B$ for some $i$ and $j$. Since $\ell_h$ is constant on $X^h_i \times Y_j$ $\eta$-almost everywhere, by commutativity of the above diagram we also have that $\ell'_{h'}$ is constant $\eta'$-almost everywhere on $(f_1\times f_2)[X^h_i\times Y_j] \supseteq A\times B$. Therefore, $\ell'_{h'}$ can be written as a linear combination of indicators on such rectangles:
    \[\ell'_{h'}(x',y')=\sum_{ij}a_{ij} 1_{A_i}(x) 1_{B_j}(y)\]
    for some $(a_{ij})_{ij}$. Here we have numbered the blocks in the partitions $Q_{X'}^{h'} = \{A_1,\dots, A_m\}$ and $Q_{Y'} = \{B_1,\dots, B_n\}$. Hence $P'$ is rectangular.

    Now, if $P$ is weakly isomorphic to a classification problem $P'$ via a common simulation $P''$, we have shown that $P''$, as a simulation of a classification problem $P'$, must be rectangular. Furthermore, since $P''$ is rectangular and simulates $P$, we have shown that $P$ must be rectangular as well.
\end{proof}

\bibliography{biblio.bib}

\end{document}